\theoremstyle{plain}
\newtheorem{thm}{\protect\theoremname}
\theoremstyle{plain}
\newtheorem{lem}[thm]{\protect\lemmaname}
\theoremstyle{remark}
\theoremstyle{plain}
\theoremstyle{plain}
\theoremstyle{definition}
\theoremstyle{plain}
\providecommand{\corollaryname}{Corollary}
\providecommand{\lemmaname}{Lemma}
\providecommand{\remarkname}{Remark}
\providecommand{\theoremname}{Theorem}
\providecommand{\conjecturename}{Conjecture}
\providecommand{\definitionname}{Definition}
\providecommand{\propositionname}{Proposition}
\newcommand{\R}{\mathbb{R}}
\renewcommand{\P}{\mathbb{P}}
\newcommand{\E}{\mathbb{E}}
\newcommand{\Var}{\mathbb{V}}
\newcommand{\Phat}{\widehat{P}}
\newcommand{\Vhat}{\widehat{V}}
\newcommand{\Qhat}{\widehat{Q}}
\newcommand{\pistar}{\pi^\star}
\newcommand{\pihstar}{\widehat{\pi}^\star}
\newcommand{\one}{\mathbf{1}}
\newcommand{\pert}{\mathrm{p}}
\newcommand{\rpert}{\widetilde{r}}
\newcommand{\Rpert}{\widetilde{R}}
\newcommand{\ind}{\mathbb{I}}
\newcommand{\A}{\mathcal{A}}
\renewcommand{\S}{\mathcal{S}}
\newcommand{\Rec}{\mathcal{R}}
\newcommand{\Trans}{\mathcal{T}}
\newcommand{\Tb}{\mathsf{B}}
\renewcommand{\L}{L}
\newcommand{\M}{\mathcal{M}}
\newcommand{\OM}{\overline{\mathcal{M}}}
\newcommand{\gammared}{{\overline{\gamma}}}
\newcommand{\Otilde}{\widetilde{O}}
\newcommand{\tmix}{\tau_{\mathrm{unif}}}
\DeclareMathOperator*{\Clim}{C-lim}
\renewcommand{\H}{\mathsf{H}}
\global\long\def\infinfnorm#1{\left\Vert #1\right\Vert _{\infty \to \infty}}%
\global\long\def\onenorm#1{\left\Vert #1\right\Vert _{1}}%
\newcommand\cig[1]{\scalerel*[5.5pt]{\Big#1}{%
  \ensurestackMath{\addstackgap[1.5pt]{\big#1}}}}
\newcommand\cigl[1]{\mathopen{\cig{#1}}}
\newcommand\cigr[1]{\mathclose{\cig{#1}}}
\global\long\def\infnorm#1{\left\Vert #1\right\Vert _{\infty}}%
\global\long\def\cinfnorm#1{\cigl\Vert #1\cigr\Vert _{\infty}}%
\global\long\def\spannorm#1{\left\Vert #1\right\Vert _{\textnormal{span}}}%
\global\long\def\cspannorm#1{\cigl\Vert #1\cigr\Vert _{\textnormal{span}}}%
\title{Span-Based Optimal Sample Complexity for Weakly Communicating and General Average
Reward MDPs}
\date{}
\author{Matthew Zurek}
\author{Yudong Chen}
\affil{Department of Computer Sciences, University of Wisconsin-Madison\\\texttt{\{matthew.zurek,yudong.chen\}@wisc.edu}}
\begin{document}

\maketitle

\begin{abstract}
    We study the sample complexity of learning an $\varepsilon$-optimal policy in an average-reward Markov decision process (MDP) under a generative model. For weakly communicating MDPs, we establish the complexity bound $\widetilde{O}\left(SA\frac{\mathsf{H}}{\varepsilon^2} \right)$, where $\mathsf{H}$ is the span of the bias function of the optimal policy and $SA$ is the cardinality of the state-action space. Our result is the first that is minimax optimal (up to log factors) in all parameters $S,A,\mathsf{H}$, and $\varepsilon$, improving on existing work that either assumes uniformly bounded mixing times for all policies or has suboptimal dependence on the parameters.
    We also initiate the study of sample complexity in general (multichain) average-reward MDPs. We argue a new transient time parameter $\mathsf{B}$ is necessary, establish an $\widetilde{O}\left(SA\frac{\mathsf{B} + \mathsf{H}}{\varepsilon^2} \right)$ complexity bound, and prove a matching (up to log factors) minimax lower bound.
    Both results are based on reducing the average-reward MDP to a discounted MDP, which requires new ideas in the general setting. To optimally analyze this reduction, we develop improved bounds for $\gamma$-discounted MDPs, showing that $\widetilde{O}\cigl(SA\frac{\mathsf{H}}{(1-\gamma)^2\varepsilon^2} \cigr)$ and $\widetilde{O}\cigl(SA\frac{\mathsf{B} + \mathsf{H}}{(1-\gamma)^2\varepsilon^2} \cigr)$ samples suffice to learn $\varepsilon$-optimal policies in weakly communicating and in general MDPs, respectively. Both these results circumvent the well-known minimax lower bound of $\widetilde{\Omega}\cigl(SA\frac{1}{(1-\gamma)^3\varepsilon^2} \cigr)$ for $\gamma$-discounted MDPs, and establish a quadratic rather than cubic horizon dependence for a fixed MDP instance. 
\end{abstract}

\section{Introduction}

The paradigm of Reinforcement learning (RL) has demonstrated remarkable successes in various sequential learning and decision-making problems. Empirical successes have motivated extensive theoretical study of RL algorithms and their fundamental limits.
The RL environment is commonly modeled as a Markov decision process (MDP), where the objective is to find a policy $\pi$ that maximizes the expected cumulative rewards. Different reward criteria are considered, such as the finite horizon total reward $\E^\pi\big[ \sum_{t=0}^T R_t\big]$ and the infinite horizon total discounted reward $\E^\pi\left[ \sum_{t=0}^\infty \gamma^t R_t\right]$ with a discount factor $\gamma < 1$. The finite horizon criterion only measures performance for $T$ steps, and the discounted criterion is dominated by rewards from the first $\frac{1}{1-\gamma}$ time steps. In many situations where the long-term performance of the policy $\pi$ is of interest, we may prefer to evaluate policies by their long-run average reward $\lim_{T \to \infty} (1/T) \E^\pi \bigl[ \sum_{t=0}^{T-1} R_t\bigr]$.

A foundational theoretical problem in RL is the sample complexity for learning a near-optimal policy using a generative model of the MDP \citep{kearns_finite-sample_1998}, meaning the ability to obtain independent samples of the next state given any initial state and action. For the finite horizon and discounted reward criteria, the sample complexity of this task has been thoroughly studied (e.g., \citealt{azar_sample_2012, gheshlaghi_azar_minimax_2013, sidford_near-optimal_2018, wainwright_variance-reduced_2019, agarwal_model-based_2020, li_breaking_2020}). However, despite significant effort (reviewed in Section~\ref{sec:related}), the sample complexity of the average reward setting is unresolved in existing literature.

\textbf{Our contributions~~} 
In this paper, we resolve the sample complexity of weakly communicating Average-Reward MDPs (AMDP) in terms of $\H := \spannorm{h^\star}$,
the span of the bias (a.k.a.\ relative value function) of the optimal policy. We show that
$\Otilde\bigl(SA{\H}/{\varepsilon^2}\bigr)$
samples suffice to find an $\varepsilon$-optimal policy of a weakly communicating MDP with $S$ states and $A$ actions. This bound, presented in Theorem \ref{thm:main_theorem}, is the first that matches the minimax lower bound $\widetilde{\Omega}\bigl( SA\H /\varepsilon^2\bigr)$ up to log factors. 

Furthermore, we initiate the study of sample complexity for  average-reward \emph{general MDPs}, which refers to the class of all finite-space MDPs without any restrictions \citep{puterman_markov_2014}. General MDPs are not necessarily weakly communicating and all their optimal policies may be \emph{multichain}. In this general setting, we demonstrate the span $\H$ \emph{alone} cannot characterize the sample complexity, as the lower bound in Theorem \ref{thm:lower_bound_general2} exhibits instances which require $\gg$\ $\H SA/\varepsilon^2$ samples. This observation motivates our introduction of a new \textit{transient time bound} parameter $\Tb$, which in conjunction with $\H$ captures the sample complexity of general average-reward MDPs. Specifically, our Theorem \ref{thm:main_theorem_amdp_general} shows that
$\Otilde\left( SA\frac{\Tb + \H}{\varepsilon^2}\right)$
samples suffice to learn an $\varepsilon$-optimal policy, and Theorem \ref{thm:lower_bound_general2} provides a matching minimax lower bound of $\Omega \left( SA\frac{\Tb+\H}{\varepsilon^2}\right)$. We remark that it is trivially impossible to achieve low regret in standard online settings of general MDPs, since the agent may become trapped in a closed class of low reward states~\cite{bartlett_regal_2012}. 
The simulator setting is natural for studying general MDPs since it avoids this fatal issue, although the existence of multiple closed classes with different long-run rewards still plays a fundamental role in the minimax sample complexity, as reflected in the dependence on $\Tb$.

To establish the above upper bounds, we adopt the reduction-to-discounted-MDP approach \cite{jin_towards_2021, wang_near_2022}, and improve on prior work by developing enhanced sample complexity bounds for $\gamma$-discounted MDPs (DMDPs). We improve the analysis of variance parameters related to DMDPs using a new multistep variance Bellman equation, which is applied in a recursive manner to bound the variance of near-optimal policies. For general (multichain) MDPs, we further utilize law-of-total-variance ideas to bound the total variance contribution from transient states, which present new challenges significantly different to their behavior in the weakly communicating setting. Our average-to-discounted reduction also requires new techniques, because many structural properties used in earlier reduction arguments no longer hold for general MDPs.
Our analysis leads to DMDP sample complexities of $\Otilde\cigl(SA\frac{\H}{(1-\gamma)^2\varepsilon^2} \cigr)$ and $\Otilde\cigl(SA\frac{\Tb + \H}{(1-\gamma)^2\varepsilon^2} \cigr)$ to learn $\varepsilon$-optimal policies in weakly communicating and general MDPs, respectively.
Notably, the latter bound, valid for all MDPs, {circumvents the existing lower bound} $\widetilde{\Omega}\cigl(\frac{SA}{(1-\gamma)^3\varepsilon^2} \cigr)$ \cite{gheshlaghi_azar_minimax_2013, sidford_near-optimal_2018}. Whereas this minimax lower bound allows the adversary to choose the transition matrix $P$ based on $\gamma$ with $\Tb \approx \frac{1}{1-\gamma}$ \cite[Theorem 3]{gheshlaghi_azar_minimax_2013}, our result reflects the complexity of a \emph{fixed} MDP $P$ through its parameters $\H,\Tb$ and a quadratic dependence on the effective horizon $\frac{1}{1-\gamma}$. This fixed-$P$ complexity is essential for our particular algorithmic approach, where the reduction discount $\gamma$ is chosen depending on $P$. It is also a more relevant framework in general for many RL problems where the discount factor is tuned for best performance on a particular instance.

\subsection{Comparison with related work on average-reward MDPs}
\label{sec:related}



We summarize in Table \ref{table:AMDPs} existing sample complexity results for average reward MDPs. 

\setlength{\textfloatsep}{10pt plus 1.0pt minus 2.0pt}
\begin{table}[t]
{
\renewcommand{\arraystretch}{1.4} 
\centering
\begin{tabular}{cccc}
\toprule
Method & Sample Complexity & Reference & Comments \\ \midrule
Primal-Dual SMD & $\Otilde\cigl( SA \frac{\tmix^2}{\varepsilon^2}\cigr)$ & \cite{jin_efficiently_2020} & requires uniform mixing \\ 
Reduction to DMDP & $\Otilde\left( SA \frac{\tmix}{\varepsilon^3}\right)$ & \cite{jin_towards_2021} & requires uniform mixing \\ 
Policy Mirror Descent & $\Otilde\cigl( SA \frac{\tmix^3}{\varepsilon^2}\cigr)$ & \cite{li_stochastic_2022} & requires uniform mixing \\ 
Reduction to DMDP & $\Otilde\left(SA \frac{\tmix}{\varepsilon^2} \right)$ & \cite{wang_optimal_2023-1} & requires uniform mixing \\ \midrule
Reduction to DMDP & $\Otilde\left(SA\frac{\H}{\varepsilon^3} \right)$ & \cite{wang_near_2022} &  weakly communicating \\ 
Refined Q-Learning & $\Otilde\cigl(SA\frac{\H^2}{\varepsilon^2} \cigr)$ & \cite{zhang_sharper_2023} & weakly communicating \\ 
Reduction to DMDP & $\Otilde\left(SA\frac{\H}{\varepsilon^2} \right)$ & Our Theorem \ref{thm:main_theorem} & weakly communicating \\ \midrule
Reduction to DMDP & $\Otilde\cigl(SA\frac{\Tb + \H}{\varepsilon^2} \cigr)$ & Our Theorem \ref{thm:main_theorem_amdp_general} & general MDPs \\ \midrule
Lower Bound & $\widetilde{\Omega}\left( SA\frac{\tmix}{\varepsilon^2}\right)$ & \cite{jin_towards_2021} & implies $\widetilde{\Omega}\left( SA\frac{\H}{\varepsilon^2}\right)$\\ 
Lower Bound & $\widetilde{\Omega}\left( SA\frac{D}{\varepsilon^2}\right)$ & \cite{wang_near_2022} & implies $\widetilde{\Omega}\left( SA\frac{\H}{\varepsilon^2}\right)$ \\ 
Lower Bound & $\widetilde{\Omega}\left( SA\frac{\Tb+\H}{\varepsilon^2}\right)$ & Our Theorem \ref{thm:lower_bound_general2} & general MDPs \\ \bottomrule
\end{tabular}
\caption{\textbf{Algorithms and sample complexity bounds for average reward MDPs} with $S$ states and $A$ actions. The goal is finding an $\varepsilon$-optimal policy under a generative model. Here  $\H:= \spannorm{h^\star}$ is the span of the optimal bias,  $\tmix$ is a uniform upper bound on mixing times of all policies, and $D$ is the MDP diameter, with the relationships $\H\le 8\tmix$ and $\H\le D.$ $\Tb$ is the transient time parameter.}
\label{table:AMDPs}
}
\end{table}

Various parameters have been used to characterize the sample complexity of average reward MDPs, including the diameter $D$ of the MDP, the uniform mixing time bound $\tmix$ for all policies, and the span $\H$ of the optimal bias; formal definitions are provided in Section \ref{sec:formulation}. All sample complexity upper bounds involving $\tmix$ require the strong assumption that \emph{all} stationary deterministic policies have finite mixing times. Otherwise, $\tmix=\infty$ by definition, which for example occurs if some policy induces a periodic Markov chain. It is also possible to have $D = \infty$, while $\H$ and our newly introduced $\Tb$ are always finite for finite state-action spaces. As shown in \cite{wang_near_2022}, there is generally no relationship between $D$ and $\tmix$; they can each be arbitrarily larger than the other. On the other hand, it has been shown that $\H \leq D$ \citep{bartlett_regal_2012} and that $\H \leq 8 \tmix$ \citep{wang_near_2022}. Therefore, either of the first two minimax lower bounds in Table \ref{table:AMDPs} (which both use hard instances that are weakly communicating) imply a lower bound of $\widetilde{\Omega}\left( SA\frac{\H}{\varepsilon^2}\right)$ and thus the minimax optimality of our Theorem~\ref{thm:main_theorem}.

To the best of our knowledge, no prior work has considered the average-reward sample complexity of general (potentially multichain) MDPs. Existing results make assumptions at least as strong as weakly communicating or uniformly bounded mixing times. 

The work \cite{jin_towards_2021} was the first to develop an algorithm based on reduction to a discounted MDP with a discount factor of $\gamma = 1 - \frac{\varepsilon}{\tmix}$. Their argument was improved in \cite{wang_near_2022}, which improved the uniform mixing assumption to only assuming a weakly communicating MDP, and used a smaller discount factor $\gamma = 1 - \frac{\varepsilon}{\H}$. These arguments both make essential use of the fact that the optimal gain is independent of the starting state, which does not hold for general MDPs. After analyzing the reductions, both \cite{jin_towards_2021} and \cite{wang_near_2022} then solved the discounted MDPs by appealing to the algorithm from \cite{li_breaking_2020}. To the best of our knowledge, the algorithm of \cite{li_breaking_2020} is the only known algorithm for discounted MDPs which could work with either reduction, as the reductions each require a $\frac{\varepsilon}{1-\gamma}$-optimal policy from the discounted MDP, and other known algorithms for discounted MDPs do not permit such large suboptimality levels. (We discuss algorithms for discounted MDPs in more detail below.) Other algorithms for average-reward MDPs are considered in \cite{jin_towards_2021,li_stochastic_2022,zhang_sharper_2023}. The above results fall short of matching the minimax lower bounds.


While preparing this manuscript, we became aware of \cite{wang_optimal_2023-1}, which considers the uniform mixing setting and obtains a minimax optimal sample complexity $\Otilde\left(SA \frac{\tmix}{\varepsilon^2} \right)$ in terms of $\tmix$. Although developed independently, their work and ours have several similarities. We both utilize discounted reductions and observe that it is possible to improve the sample complexity of the resulting DMDP task by improving the analysis of variance parameters. They accomplish the improvement by leveraging the uniform mixing assumption, whereas we make use of the low span of the optimal policy. Note that $\H \leq 8 \tmix$ holds in general and there exist MDPs with $\H \ll \tmix=\infty$, so our Theorem \ref{thm:main_theorem} is strictly stronger than the result of \cite{wang_optimal_2023-1}.

\subsection{Comparison with related work on discounted MDPs}

We discuss a subset of results for discounted MDPs in the generative setting. Several works \cite{sidford_near-optimal_2018, wainwright_variance-reduced_2019, agarwal_model-based_2020, li_breaking_2020} obtain the minimax optimal sample complexity of $\Otilde\cig( SA\frac{1}{(1-\gamma)^3\varepsilon^2}\cig)$ for finding an $\varepsilon$-optimal policy w.r.t.\ the discounted reward. However, only \cite{li_breaking_2020} is able to show this bound for the full range of $\varepsilon \in (0, \frac{1}{1-\gamma}]$. 
As mentioned, the reduction from average reward MDPs requires a large $\varepsilon$ in the resulting discounted MDP, making it unsurprising that all of \cite{jin_towards_2021, wang_near_2022, wang_optimal_2023-1} as well as our Algorithm \ref{alg:DMDP_alg} essentially use their algorithm. The matching lower bound is established in \cite{sidford_near-optimal_2018, gheshlaghi_azar_minimax_2013}.

As mentioned earlier, both we and the authors of \cite{wang_optimal_2023-1, wang_optimal_2023} independently observed that the $\widetilde{\Omega}\cig( SA\frac{1}{(1-\gamma)^3\varepsilon^2}\cig)$ sample complexity lower bound can be circumvented in the settings that arise under the average-to-discounted reductions. The authors of \cite{wang_optimal_2023-1, wang_optimal_2023} assume uniform mixing and obtain a discounted MDP sample complexity of $\Otilde\cig(SA \frac{\tmix}{(1-\gamma)^2 \varepsilon^2} \cig)$, first in \cite{wang_optimal_2023} by modifying the algorithm of \cite{wainwright_variance-reduced_2019}, and then in \cite{wang_optimal_2023-1} under a wider range of $\varepsilon$ by instead modifying the analysis of \cite{li_breaking_2020}. The work \cite{wang_optimal_2023} also proves a matching lower bound. Our Theorem \ref{thm:DMDP_bound} for discounted MDPs attains a sample complexity of $\Otilde\cig(SA \frac{\H}{(1-\gamma)^2 \varepsilon^2} \cig)$ assuming only that the MDP is weakly communicating. Again, in light of the relationship that $\H \leq 8 \tmix$, our results are strictly better (ignoring constants), and their lower bound also establishes the optimality of our Theorem \ref{thm:DMDP_bound}.

\section{Problem setup and preliminaries}
\label{sec:formulation}

A Markov decision process (MDP) is given by a tuple $(\S, \A, P, r)$, where $\S$ is the finite set of states, $\A$ is the finite set of actions, $P : \S \times \A \to \Delta(\S)$ is the transition kernel with $\Delta(\S)$ denoting the probability simplex over $\S$, and $r : \S \times \A \to [0,1]$ is the reward function. Let $S := |\S|$ and $A := |\A|$ denote the cardinality of the state and action spaces, respectively. Unless otherwise noted, all policies considered are stationary Markovian policies of the form $\pi : \S \to \Delta(\A)$. For any initial state $s_0 \in \S$ and policy $\pi$, we let $\E^\pi_{s_0}$ denote the expectation with respect to the probability distribution over trajectories $(S_0, A_0, S_1, A_1, \dots)$ where $S_0 = s_0$, $A_t \sim \pi(S_t)$, and $S_{t+1} \sim P(\cdot \mid S_t, A_t)$. Equivalently, this is the expectation with respect to the Markov chain induced by $\pi$ starting in state $s_0$,  with the transition probability matrix $P_\pi$ given by  $\left(P_\pi\right)_{s,s'} := \sum_{a \in \A} \pi(a | s) P(s' \mid s, a)$. We also define $(r_\pi)_{s} := \sum_{a \in \A} \pi(a | s) r(s, a)$. We occasionally treat $P$ as an $(\S \times \A)$-by-$ \S$ matrix where $P_{sa, s'} = P(s, a, s')$. We also let $P_{sa}$ denote the row vector such that $P_{sa}(s') = P(s, a, s')$. For any $s \in \S$ and any bounded function $X$ of the trajectory, we define the variance $\Var^\pi_s\left[X\right] := \E^\pi_s \left(X - \E^\pi_s\left[X \right] \right)^2$, with its vector version $\Var^\pi\left[X\right] \in \R^\S$ given by  $\left(\Var^\pi\left[X\right] \right)_s = \Var^\pi_s\left[X\right]$.
For $s \in \S$, let $e_s \in \R^{\S}$ be the vector that is all $0$ except for a $1$ in entry $s$. Let $\one \in \R^{\S}$ be the all-one vector. For each $v \in \R^{\S} $, define the span semi-norm  $\spannorm{v} := \max_{s \in \S} v(s) - \min_{s \in \S} v(s).$

\textbf{Discounted reward criterion~~} A discounted MDP is a tuple $(\S, \A, P, r, \gamma)$, where $\gamma \in (0,1)$ is the discount factor. For a stationary policy $\pi$, the (discounted) value function $V^\pi_\gamma : \S \to [0, \infty)$ is defined, for each $s \in \S$, as $V^\pi_\gamma(s) := \E^\pi_s \left[\sum_{t=0}^\infty \gamma^t R_t \right]$,
where $R_t = r(S_t, A_t)$ is the reward received at time $t$. It is well-known that there exists an optimal policy $\pistar_\gamma$ that is deterministic and satisfies $V_\gamma^{\pistar_\gamma}(s) = V_\gamma^\star(s) := \sup_{\pi} V_\gamma^\pi(s)$ for all $s \in \S$ \citep{puterman_markov_2014}. In discounted MDPs the goal is to compute an $\varepsilon$-optimal policy,  which we define as a policy $\pi$ satisfying $\infnorm{V_\gamma^\pi - V_\gamma^\star} \leq \varepsilon$. We define one more variance parameter $\Var_{P_{\pi}}\left[V_\gamma^{\pi} \right] \in \R^{\S}$, specific to a given policy $\pi$, by
$\left(\Var_{P_{\pi}} \left[V_\gamma^{\pi} \right]\right)_s := \sum_{s' \in \S} \left(P_{\pi}\right)_{s, s'} \big[V_\gamma^{\pi}(s') - \sum_{s''}\left(P_{\pi}\right)_{s, s''}V_\gamma^{\pi}(s'')\big]^2.$

\smallskip
\textbf{Average-reward criterion~~}
In an MDP $(\S, \A, P, r)$, the average reward per stage or the \emph{gain} of a policy $\pi$ starting from state $s$ is defined as $\rho^\pi(s)  := \lim_{T \to \infty} \frac{1}{T} \E_s^\pi \big[\sum_{t=0}^{T-1} R_t \big].$
The \emph{bias function} of any stationary policy $\pi$ is
$h^\pi(s) := \Clim_{T \to \infty} \E_s^\pi \big[\sum_{t=0}^{T-1} \left(R_t - \rho^\pi(S_t)\right) \big]$,
where $\Clim$ denotes the Cesaro limit. When the Markov chain induced by $P_\pi$ is aperiodic, $\Clim$ can be replaced with the usual limit. For any policy $\pi$, its $\rho^\pi$ and $h^\pi$ satisfy $\rho^\pi = P_\pi \rho^\pi$ and $\rho^\pi + h^\pi = r_\pi + P_\pi h^\pi$.

A policy $\pistar$ is Blackwell-optimal if there exists some discount factor $\Bar{\gamma} \in (0,1)$ such that for all $\gamma \geq \Bar{\gamma}$ we have $V^{\pistar}_\gamma \geq V^{\pi}_\gamma$ for all policies $\pi$. Henceforth we let $\pistar$ denote some fixed Blackwell-optimal policy, which is guaranteed to exist when $S$ and $A$ are finite \citep{puterman_markov_2014}. We define the optimal gain $\rho^\star\in \R^\S$ by $\rho^\star(s) = \sup_{\pi} \rho^\pi(s)$ and note that we have $\rho^\star = \rho^{\pistar}$. For all $s \in \S$, $\rho^\star(s) \geq \max_{a \in \A} P_{sa} \rho^\star$, or equivalently $\rho^\star(s) \geq P_\pi \rho^\star$ for all policies $\pi$ (and this maximum is achieved by $\pistar$). We also define $h^\star = h^{\pistar}$ (and we note that this definition does not depend on which Blackwell-optimal $\pistar$ is used, if there are multiple). For all $s \in \S$, $\rho^\star$ and $h^\star$ satisfy
$\rho^\star(s) + h^\star(s) = \max_{a \in \A : P_{sa} \rho^\star = \rho^\star(s)} r_{sa} + P_{sa}h^\star$,
known as the (unmodified) Bellman equation.

A weakly communicating MDP is such that the states can be partitioned into two disjoint subsets $\S = \S_1 \cup \S_2$ such that all states in $\S_1$ are transient under any stationary policy and within $\S_2$, any state is reachable from any other state under some stationary policy. In weakly communicating MDPs $\rho^\star$ is a constant vector (all entries are equal), and thus $(\rho^\star, h^\star)$ are also a solution to the modified Bellman equation
$\rho^\star(s) + h^\star(s) = \max_{a \in \A} r_{sa} + P_{sa}h^\star.$
When discussing weakly communicating MDPs we occasionally abuse notation and treat $\rho^\star$ as a scalar. A stationary policy is multichain if it induces multiple closed irreducible recurrent classes, and an MDP is called multichain if it contains such a policy. Weakly-communicating MDPs always contain some gain-optimal policy which is unichain (not multichain), but in general MDPs, all gain-optimal policies may be multichain and $\rho^\star$ may not be a constant vector. All uniformly mixing MDPs are weakly communicating. In the average reward setting, our goal is find an $\varepsilon$-optimal policy, defined as a policy $\pi$ such that $\infnorm{\rho^\star - \rho^\pi} \leq \varepsilon$. 

\smallskip
\textbf{Complexity parameters~~}
Our most important complexity parameter is the span of the optimal bias function $\H := \spannorm{h^\star}$.
In addition, for general MDPs we introduce a new  \textit{transient time parameter} $\Tb$, defined as follows. Let $\Pi$ be the set of deterministic stationary policies. For each $\pi \in \Pi$, let $\Rec^\pi$ be the set of states which are recurrent in the Markov chain $P_\pi$, and let $\Trans^\pi = \S \setminus \Rec^\pi$ be the set of transient states. Let $T_{\Rec^\pi} = \inf\{t : S_t \in \Rec^\pi\}$ be the first hitting time of a state which is recurrent under $\pi$. We say an MDP satisfies the \textit{bounded transient time property with parameter $\Tb$} if for all policies $\pi$ and states $s \in \S$ we have
$\E^{\pi}_s \left[ T_{\Rec^\pi} \right] \leq \Tb$, or in words, the expected time spent in transient states (with respect to the Markov chain induced by $\pi$) is bounded by $\Tb$.

We recall several other parameters used in the literature to characterize sample complexity. The diameter is defined as
$D := \max_{s_1 \neq s_2} \inf_{\pi\in \Pi} \E^\pi_{s_1} \left[\eta_{s_2}\right]$, where $\eta_{s}$ denotes the hitting time of a state $s\in\S$. 
For each policy $\pi$, if the Markov chain induced by $P_\pi$ has a unique stationary distribution $\nu_\pi$, we define the mixing time of $\pi$ as
$\tau_\pi := \inf \cig\{t \geq 1 : \max_{s \in \S} \cigl\|e_s^\top \left(P_\pi \right)^t - \nu^\top_\pi \cigr\|_1 \leq \frac{1}{2} \cig\}.$
If all policies $\pi\in\Pi$ satisfy this assumption, we define the uniform mixing time $\tmix := \sup_{\pi\in\Pi} \tau_\pi$.
Note that $D$ and $\tmix$ are generally incomparable \citep{wang_near_2022}, while we always have $\H \leq D$ \citep{bartlett_regal_2012} and $\H \leq 8 \tmix$ \citep{wang_near_2022}. It is possible for $\tmix = \infty$, for instance if there are any policies which induce periodic Markov chains. Also, $D = \infty$ if there are any states which are transient under all policies. However, $\H$ and $\Tb$ are finite in any MDP with $S, A < \infty$. Also if $\tmix$ is finite, Lemma \ref{lem:transient_time_tmix_relationship} shows $\Tb \leq 4\tmix$.

We assume access to a generative model \citep{kearns_finite-sample_1998}, also known as a simulator. This means we can obtain independent samples from $P(\cdot \mid s, a)$ for any given $s \in \S, a \in \A$, but $P$ itself is unknown. We assume the reward function $r$ is deterministic and known, which is standard in generative settings (e.g., \cite{agarwal_model-based_2020, li_breaking_2020}) since otherwise estimating the mean rewards is relatively easy. Specifically, to learn an $\varepsilon$-optimal policy for the discounted MDP, we would need to estimate each entry of $r$ to accuracy $O((1-\gamma)\varepsilon)$, which requires a lower order number of samples $\Otilde\cig(\frac{SA}{(1-\gamma)^2\varepsilon^2}\cig)$. For this reason we assume (as in \cite{wang_near_2022}) that $\H \geq 1$. 
Using samples from the generative model, our Algorithm~\ref{alg:DMDP_alg} constructs an empirical transition kernel $\Phat$. For a policy $\pi$, we use $\Vhat^\pi_\gamma(s)$ to denote the value function computed with respect to the Markov chain with transition matrix $\Phat_\pi$ (as opposed to $P_\pi$). Our Algorithm \ref{alg:DMDP_alg} also utilizes a perturbed reward function $\rpert$, and we use the notation $V^\pi_{\gamma, \pert}(s)$ to denote a value function computed using this reward (and $P_\pi$); more concretely, we replace $R_t$ with $\Rpert_t = \rpert(S_t, A_t)$ in the definition above of $V^\pi_\gamma$. We use the notation $\Vhat^\pi_{\gamma, \pert}$ when using $\Phat$ and $\rpert$ simultaneously. 

\section{Main results for weakly communicating MDPs}
\label{sec:results-weakly}



Our approach is based on reducing the average-reward problem to a discounted problem. We first present our algorithm and guarantees for the discounted MDP setting.
As discussed in Subsection \ref{sec:related}, our algorithm of choice, Algorithm~\ref{alg:DMDP_alg}, is essentially the same as the one presented in \citet{li_breaking_2020}, with a slightly different perturbation level $\xi$. Algorithm \ref{alg:DMDP_alg} constructs an empirical transition kernel $\Phat$ using $n$ samples per state-action pair from the generative model, and then solves the resulting empirical (perturbed) MDP $(\Phat, \rpert, \gamma)$. As noted in \citet{li_breaking_2020}, the perturbation ensures $\pihstar_{\gamma, \pert}$ can be computed exactly in $\text{poly}(\frac{1}{1-\gamma}, S, A, \log(1/\delta \varepsilon))$ time by multiple standard MDP solvers. We remark in passing that the $SA$-by-$S$ transition matrix $\Phat$ has at most $nSA$ nonzero entries.

\begin{algorithm}[t]
\caption{Perturbed Empirical Model-Based Planning} \label{alg:DMDP_alg}
\begin{algorithmic}[1]
\Require Sample size per state-action pair $n$, target accuracy $\varepsilon$, discount factor $\gamma$
\For{each state-action pair $(s,a) \in \S \times \A$}
\State Collect $n$ samples $S^1_{s,a}, \dots, S^n_{s,a}$ from $P(\cdot \mid s,a)$
\State Form the empirical transition kernel $\Phat(s' \mid s, a) = \frac{1}{n}\sum_{i=1}^n \ind\{S^i_{s,a} = s'\}$, for all $s' \in \S$
\EndFor
\State Set perturbation level $\xi = {(1-\gamma)\varepsilon}/{6}$
\State Form perturbed reward $\rpert = r + Z$ where $Z(s,a) \stackrel{\text{i.i.d.}}{\sim} \text{Unif}(0, \xi)$
\State Compute a policy $\pihstar_{\gamma, \pert}$ which is optimal for the perturbed empirical discounted MDP $(\Phat, \rpert, \gamma)$
\State \Return $\pihstar_{\gamma, \pert}$
\end{algorithmic}
\end{algorithm}

Our Theorem \ref{thm:DMDP_bound} provides an improved sample complexity bound for Algorithm~\ref{alg:DMDP_alg} under the setting that the MDP is weakly communicating.

\begin{thm}[Sample Complexity of Weakly Communicating DMDP]
    \label{thm:DMDP_bound}
    Suppose the discounted MDP $(P, r, \gamma)$ is weakly communicating, $\H \leq \frac{1}{1-\gamma}$, and $\varepsilon \leq \H$. There exists a constant $C_2 > 0$ such that, for any $\delta \in (0,1)$, if $n \geq C_2\frac{\H}{(1-\gamma)^2\varepsilon^2} \log \big( \frac{S A}{(1-\gamma)\delta \varepsilon}\big)$, then with probability at least $1-\delta$, the policy $\pihstar_{\gamma, \pert}$ output by Algorithm \ref{alg:DMDP_alg} satisfies
    $\cig\|V_\gamma^\star - V_\gamma^{\pihstar_{\gamma, \pert}} \cigr\|_\infty \leq \varepsilon.$
\end{thm}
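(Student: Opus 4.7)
My plan is to adapt the perturbed model-based scheme and leave-one-out analysis of \citet{li_breaking_2020}, which attains the minimax-optimal $\Otilde(SA/(1-\gamma)^3\varepsilon^2)$ bound for general DMDPs using the very same algorithm. That analysis is structured so that its final sample complexity is linear in a ``variance parameter'' of the near-optimal policies---essentially $\Binfnorm{\Var^\pi[\sum_t \gamma^t R_t]}$, which in the worst case is $\tfrac{1}{(1-\gamma)^2}$. The strategy here is to inherit this reduction wholesale and improve the variance parameter to $\tfrac{\H}{1-\gamma}$, which saves exactly the extra factor of $(1-\gamma)^{-1}$ that distinguishes the desired $\tfrac{\H}{(1-\gamma)^2}$ rate from the generic $\tfrac{1}{(1-\gamma)^3}$.

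\textbf{Reduction.} The perturbation level $\xi = \Theta((1-\gamma)\varepsilon)$ makes $\pihstar_{\gamma,\pert}$ almost surely unique and shifts value functions by only $O(\varepsilon)$. Standard optimality-plus-triangle manipulations reduce the theorem to simultaneously establishing $\|V^\pi_\gamma - \Vhat^\pi_\gamma\|_\infty \lesssim \varepsilon$ for two $O(\varepsilon)$-near-optimal policies $\pi \in \{\pistar_{\gamma,\pert}, \pihstar_{\gamma,\pert}\}$. Applying Bernstein concentration to the identity $V^\pi_\gamma - \Vhat^\pi_\gamma = \gamma (I - \gamma \Phat_\pi)^{-1} (P - \Phat)_\pi V^\pi_\gamma$, combined with the law-of-total-variance identity $(I - \gamma^2 P_\pi)^{-1}\Var_P[V^\pi_\gamma] = \Var^\pi[\sum_t \gamma^t R_t]$ and the leave-one-out decoupling, yields a bound whose leading term scales with $\Binfnorm{\Var^\pi[\sum_t \gamma^t R_t]}$. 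So the task reduces to sharpening this total-variance quantity.

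\textbf{Span-based variance bound.} For the Blackwell-optimal $\pi^\star$ in a weakly communicating MDP, the Laurent expansion $V^{\pi^\star}_\gamma = \tfrac{\rho^\star}{1-\gamma}\one + h^\star + O(1-\gamma)$ yields $\spannorm{V^{\pi^\star}_\gamma} \lesssim \H$. Any policy $\pi$ with $\|V^\pi_\gamma - V^{\pi^\star}_\gamma\|_\infty \le \H$ then inherits $\spannorm{V^\pi_\gamma} \lesssim \H$ by the triangle inequality for the span semi-norm, which covers the near-optimal policies from the reduction under the hypothesis $\varepsilon \le \H$. Since constant shifts are invisible to variance, $\Var_P[V^\pi_\gamma] \le \tfrac{1}{4}\spannorm{V^\pi_\gamma}^2 \lesssim \H^2$ entrywise, which is the input to the variance Bellman machinery.

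\textbf{Main obstacle.} A naive iteration of the single-step variance Bellman equation from $\Var_P[V^\pi_\gamma] \lesssim \H^2$ would give only $\Binfnorm{\Var^\pi[\sum_t \gamma^t R_t]} \lesssim \tfrac{\H^2}{1-\gamma^2} \lesssim \tfrac{\H}{(1-\gamma)^2}$ after using $\H \le \tfrac{1}{1-\gamma}$---a factor of $(1-\gamma)^{-1}$ short of the $\tfrac{\H}{1-\gamma}$ the reduction demands. The missing factor is exactly what the paper's multistep variance Bellman equation is designed to supply: by unrolling the variance identity across $k \sim \tfrac{1}{1-\gamma}$ steps and exploiting that over a long block the cumulative-reward variance is controlled by $\spannorm{V^\pi_\gamma}$ rather than $\|V^\pi_\gamma\|_\infty$, one saves the extra $(1-\gamma)^{-1}$. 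Executing this recursion carefully---and verifying it applies uniformly over the near-optimal policy family produced by the leave-one-out argument, not just for $\pi^\star$---is the hardest technical step, and the one place where the weakly communicating hypothesis is used in a nontrivial, non-black-box manner.
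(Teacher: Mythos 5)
Your overall strategy matches the paper's: reuse the variance-dependent error bounds extractable from \citet{li_breaking_2020} (the paper's Lemma~\ref{lem:DMDP_error_bounds} together with Lemma~\ref{lem:var_params_relationship}) and improve the total-variance parameter from $\frac{1}{(1-\gamma)^2}$ to $\frac{\H}{1-\gamma}$ via a multistep variance Bellman recursion that exploits $\bigl|V_\gamma^{\pistar_\gamma}(s)-\frac{\rho^\star}{1-\gamma}\bigr|\le\H$. However, two concrete steps in your sketch do not go through as written. First, the block length in the multistep unrolling must be $T\asymp\H$, not $T\asymp\frac{1}{1-\gamma}$. The block variance $\Var^\pi_{s_0}\bigl[\sum_{t=0}^{T-1}\gamma^t R_t+\gamma^T V_\gamma^\pi(S_T)\bigr]$ contains the reward-sum term, whose worst-case fluctuation is $T$ (it is \emph{not} controlled by $\spannorm{V_\gamma^\pi}$); only the terminal value term is controlled by the span. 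With $T=\frac{1}{1-\gamma}$ the block variance can be $\Theta\bigl(\frac{1}{(1-\gamma)^2}\bigr)$ while the prefactor $\frac{1}{1-\gamma^{2T}}$ is $\Theta(1)$, so you recover only the trivial bound. With $T=\H$ the block variance is $O(\H^2)$ while $\frac{1}{1-\gamma^{2\H}}\lesssim\frac{1}{\H(1-\gamma)}$ by $\H\le\frac{1}{1-\gamma}$ (Lemma~\ref{lem:long_horizon_gamma_ineq}), giving the needed $O\bigl(\frac{\H}{1-\gamma}\bigr)$; this tradeoff is exactly what Lemma~\ref{lem:pistar_var_bound} optimizes.

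Second, your span-based variance bound for the empirically optimal policy is circular as stated: you derive $\spannorm{V_\gamma^{\pihstar_{\gamma,\pert}}}\lesssim\H$ from $\bigl\|V_\gamma^{\pihstar_{\gamma,\pert}}-V_\gamma^{\pistar_\gamma}\bigr\|_\infty\le\H$, but near-optimality of $\pihstar_{\gamma,\pert}$ in the \emph{true} MDP is the conclusion of the theorem, not something available a priori (it is optimal only for $\Phat$). You correctly flag this as the hardest step but do not supply the resolution, and without it the argument does not close. The paper's fix (Lemma~\ref{lem:pihatstar_var_bound} and the second half of the proof of Theorem~\ref{thm:DMDP_bound}) is a self-bounding recursion: using only $\Vhat_{\gamma,\pert}^{\pihstar_{\gamma,\pert}}\ge\Vhat_{\gamma,\pert}^{\pistar_\gamma}$ one shows
\begin{equation*}
\infnorm{V_\gamma^{\pihstar_{\gamma,\pert}}-V_\gamma^{\pistar_\gamma}}\le\infnorm{\Vhat_{\gamma,\pert}^{\pihstar_{\gamma,\pert}}-V_\gamma^{\pihstar_{\gamma,\pert}}}+\infnorm{\Vhat_{\gamma,\pert}^{\pistar_\gamma}-V_\gamma^{\pistar_\gamma}},
\end{equation*}
so the variance of $\pihstar_{\gamma,\pert}$ is bounded in terms of its own evaluation error; substituting into the Bernstein-type bound produces an inequality linear in $\infnorm{\Vhat_{\gamma,\pert}^{\pihstar_{\gamma,\pert}}-V_\gamma^{\pihstar_{\gamma,\pert}}}$ whose coefficient is at most $\frac12$ once $n\gtrsim\frac{\H}{(1-\gamma)^2\varepsilon^2}$, and one then rearranges to conclude. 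You would need to add both of these ingredients to make your proposal a complete proof.
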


Since we observe $n$ samples for each state-action pair, Theorem~\ref{thm:DMDP_bound} shows that a total number of $\Otilde \big(\frac{\H SA}{(1-\gamma)^2\varepsilon^2} \big)$ samples suffices to learn an $\varepsilon$-optimal policy. This bound improves on the $\Otilde\bigl(\frac{SA}{(1-\gamma)^3 \varepsilon^2} \bigr)$ complexity bound from  \cite{li_breaking_2020} when the span $\H$ is no larger than the effective horizon $\frac{1}{1-\gamma}$. This assumption holds in many situations, as can be seen by using the relationships $\H \leq D$ or $\H \leq 8\tmix$.  
On the other hand, in the regime with $\H>\frac{1}{1-\gamma}$, the existing bound $\Otilde\big( \frac{SA}{(1-\gamma)^3 \varepsilon^2}\big)$, also achieved by Algorithm~\ref{alg:DMDP_alg}, is superior. In this regime, the discounting effectively truncates the MDP at a short horizon $\frac{1}{1-\gamma}$ before the long-run behavior of the optimal policy (as captured by $\H$) kicks in.


\begin{proof}[Proof highlights for Theorem~\ref{thm:DMDP_bound}]
The key to obtaining this improved complexity is a careful analysis of certain instance-specific variance parameters. It suffices to bound $\cinfnorm{\Vhat_{\gamma, \pert}^{\pistar_\gamma} - V_\gamma^{\pistar_\gamma}}$ and $\cinfnorm{\Vhat_{\gamma, \pert}^{\pihstar_{\gamma, \pert}} - V_\gamma^{\pihstar_{\gamma, \pert}}}$ by $O(\varepsilon)$.
The prior DMDP complexity of $\frac{SA}{(1-\gamma)^3 \varepsilon^2}$ is obtained using the well-known law-of-total-variance argument \citep{gheshlaghi_azar_minimax_2013, agarwal_model-based_2020, li_breaking_2020}, which ultimately yields a sample complexity like $\Otilde\left(\sqrt{\frac{SA}{(1-\gamma) \varepsilon^2}\infnorm{\Var^{\pistar_{\gamma}}\left[ \sum_{t=0}^{\infty}\gamma^t R_t \right]}}\right)$ to bound $\cinfnorm{\Vhat_{\gamma, \pert}^{\pistar_\gamma} - V_\gamma^{\pistar_\gamma}} \leq O(\varepsilon)$. From here, the variance of the cumulative discounted reward $\cinfnorm{\Var^{\pistar_{\gamma}}\left[ \sum_{t=0}^{\infty}\gamma^t R_t \right]}$ is bounded by $\frac{1}{(1-\gamma)^2}$, since the total reward in a trajectory is within $[0, \frac{1}{1-\gamma}]$. We instead seek to bound $\cinfnorm{\Var^{\pistar_{\gamma}}\left[ \sum_{t=0}^{\infty}\gamma^t R_t \right]} \leq O\left(\frac{\H}{1-\gamma} \right)$. Assume $\H$ is an integer.
The first step is to decompose $\Var^{\pistar_{\gamma}}\left[ \sum_{t=0}^{\infty}\gamma^t R_t \right]$ recursively like
\[
    \Var^{\pistar_{\gamma}}\left[ \sum_{t=0}^{\infty}\gamma^t R_t \right] = \Var^{\pistar_{\gamma}}\left[ \sum_{t=0}^{\H-1} \gamma^t R_t + \gamma^\H V_\gamma^{\pistar_{\gamma}}(S_\H) \right] + \gamma^{2\H} \left(P_{\pistar_{\gamma}}\right)^\H \Var^{\pistar_{\gamma}} \left[ \sum_{t=0}^{\infty}\gamma^t R_t \right]
\]
(see our Lemma \ref{lem:multistep_variance_bellman_eqn}). This is a multi-step version of the standard variance Bellman equation (e.g., \citet[Theorem 1]{sobel_variance_1982}). Ordinarily an $\H$-step expansion would not be useful, since the term $V_\gamma^{\pistar_{\gamma}}(S_\H)$ by itself appears to have fluctuations on the order of $\frac{1}{1-\gamma}$ in the worst case depending on $S_\H$ (note $S_\H$ is the random state encountered at time $\H$). However, in our setting, we should have $V_\gamma^{\pistar_{\gamma}}(S_\H) \approx \frac{1}{1-\gamma}\rho^\star + h^\star(S_\H)$, reducing the magnitude of the random fluctuations to order $\H = \spannorm{h^\star}$. (See Lemma \ref{lem:discounted_value_span_bound} for a formalization of this approximation which first appeared in \cite{wei_model-free_2020}.) Therefore expansion to $\H$ steps achieves the optimal tradeoff between maintaining $\Var^{\pistar_{\gamma}}\cigl[ \sum_{t=0}^{\H-1} \gamma^t R_t + \gamma^\H V_\gamma^{\pistar_{\gamma}}(S_\H) \cigr] \leq O\left(\H^ 2\right)$ and minimizing $\gamma^{2\H}$. As desired this yields
$\cinfnorm{\Var^{\pistar_{\gamma}}\left[ \sum_{t=0}^{\infty}\gamma^t R_t \right]} \leq O\cigl( \frac{\H^ 2}{1-\gamma^{2\H}}\cigr) = O\cigl( \frac{\H}{1-\gamma}\cigr),$
where $\frac{1}{1-\gamma^{2\H}} \leq O\cigl(\frac{1}{\H(1-\gamma)}\cigr)$ requires $\frac{1}{1-\gamma} \geq \H$. See Lemma \ref{lem:pistar_var_bound} for the complete argument.

We would like to use a similar argument as above to bound the second term $\cinfnorm{\Vhat_{\gamma, \pert}^{\pihstar_{\gamma, \pert}} - V_\gamma^{\pihstar_{\gamma, \pert}}}$, which is the ``evaluation error'' of the \emph{empirically} optimal policy $\pihstar_{\gamma, \pert}$. However, applying the same argument would give a bound in terms of $\cspannorm{V^{\pihstar_{\gamma, \pert}}_\gamma}$, which, unlike for the analogous term involving the \emph{true} optimal policy $\pistar_\gamma$, is not a priori bounded in terms of $H$. (If we instead assumed uniform mixing, we could immediately bound this by $O(\tmix)$.) Thus, to control the variance associated with evaluating $\pihstar_{\gamma, \pert}$, we are able to recursively bound $\cspannorm{V^{\pihstar_{\gamma, \pert}}_\gamma} \leq O\cigl(H + \cinfnorm{\Vhat_{\gamma, \pert}^{\pihstar_{\gamma, \pert}} - V_\gamma^{\pihstar_{\gamma, \pert}}}\cigr)$, which can be shown to yield the desired sample complexity.
\end{proof}

Now we present our main result for the average-reward problem in the weakly communicating setting. Applied in this setting with a DMDP target accuracy of $\overline{\varepsilon} = \H$, our Algorithm~\ref{alg:AMDP_alg} reduces the problem to $\gammared$-discounted MDP with $\gammared = 1- \frac{\varepsilon}{12 \H}$ and then calls Algorithm~\ref{alg:DMDP_alg} with target accuracy $\H$.

\begin{algorithm}[H]
\caption{Average-to-Discount Reduction} \label{alg:AMDP_alg}
\begin{algorithmic}[1]
\Require Sample size per state-action pair $n$, target accuracy $\varepsilon \in (0, 1]$, DMDP target accuracy $\overline{\varepsilon}$
\State Set $\gammared = 1- \frac{\varepsilon}{12 \overline{\varepsilon}}$
\State Obtain $\pihstar$ from Algorithm \ref{alg:DMDP_alg} with sample size per state-action pair $n$, accuracy $\overline{\varepsilon}$, discount $\gammared$
\State \Return $\pihstar$
\end{algorithmic}
\end{algorithm}

We have the following sample complexity bound for Algorithm~\ref{alg:AMDP_alg}.
\begin{thm}[Sample Complexity of Weakly Communicating AMDP]
\label{thm:main_theorem}
    Suppose the MDP $(P, r)$ is weakly communicating. There exists a constant $C_1>0$ such that for any $\delta, \varepsilon \in (0,1)$, if $n \geq C_1 \frac{\H}{\varepsilon^2} \log \left( \frac{S A \H}{\delta \varepsilon}\right)$ and we call Algorithm~\ref{alg:AMDP_alg} with $\overline{\varepsilon} = \H$, then with probability at least $1-\delta$, the output policy $\pihstar$ satisfies the elementwise inequality $\rho^\star - \rho^{\pihstar} \leq \varepsilon \one.$
\end{thm}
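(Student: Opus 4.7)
The plan is to apply Theorem \ref{thm:DMDP_bound} to the reduced $\gammared$-discounted MDP and then convert the resulting discounted-value suboptimality into average-reward suboptimality via a structural identity. With Algorithm \ref{alg:AMDP_alg}'s choices $\gammared = 1 - \frac{\varepsilon}{12\H}$ and $\overline{\varepsilon} = \H$, the hypotheses $\H \leq \frac{1}{1-\gammared} = \frac{12\H}{\varepsilon}$ and $\overline{\varepsilon} \leq \H$ of Theorem \ref{thm:DMDP_bound} are immediate from $\varepsilon \leq 1$ and $\H \geq 1$. The sample-size requirement there becomes $C_2 \frac{\H}{(1-\gammared)^2 \overline{\varepsilon}^2}\log(\cdot) = 144 C_2 \frac{\H}{\varepsilon^2}\log(\cdot)$, which is satisfied for a suitably large $C_1$. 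Theorem \ref{thm:DMDP_bound} then yields, with probability at least $1-\delta$,
\[\cinfnorm{V_{\gammared}^\star - V_{\gammared}^{\pihstar}} \leq \H.\]

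The conversion from discounted values to gains relies on the identity $(1-\gammared)P_\pi^{\ast} V_\gammared^\pi = \rho^\pi$, where $P_\pi^{\ast} := \lim_{T \to \infty}\tfrac{1}{T}\sum_{t=0}^{T-1}(P_\pi)^t$ denotes the Cesaro limit. This follows by left-multiplying the discounted Bellman equation $(I - \gammared P_\pi)V_\gammared^\pi = r_\pi$ by $P_\pi^{\ast}$ and using $P_\pi^{\ast}(I - P_\pi) = 0$. Since each row of $P_\pi^{\ast}$ is a probability distribution over $\S$, we obtain, for any policy $\pi$ and state $s$,
\[(1-\gammared)V_\gammared^\pi(s) - (1-\gammared)\spannorm{V_\gammared^\pi} \leq \rho^\pi(s) \leq (1-\gammared)V_\gammared^\pi(s) + (1-\gammared)\spannorm{V_\gammared^\pi}.\]
Applying the upper inequality to a Blackwell-optimal $\pistar$ (so $\rho^{\pistar} = \rho^\star$ and $V_\gammared^{\pistar} \leq V_\gammared^\star$) gives $\rho^\star \leq (1-\gammared)V_\gammared^\star(s) + (1-\gammared)\spannorm{V_\gammared^\star}$, and applying the lower inequality to $\pi = \pihstar$ and subtracting yields
\[\rho^\star - \rho^{\pihstar}(s) \leq (1-\gammared)\cinfnorm{V_\gammared^\star - V_\gammared^{\pihstar}} + (1-\gammared)\bigl(\spannorm{V_\gammared^\star} + \spannorm{V_\gammared^{\pihstar}}\bigr).\]

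It remains to bound both spans by $O(\H)$. For $\spannorm{V_\gammared^\star}$, I invoke Lemma \ref{lem:discounted_value_span_bound} (from \cite{wei_model-free_2020} and already used in the proof of Theorem \ref{thm:DMDP_bound}), which yields $\spannorm{V_\gammared^\star} \leq O(\H)$ in the weakly communicating regime $\H \leq \frac{1}{1-\gammared}$. For $\spannorm{V_\gammared^{\pihstar}}$, the triangle inequality combined with the bound from the first step gives $\spannorm{V_\gammared^{\pihstar}} \leq \spannorm{V_\gammared^\star} + 2\cinfnorm{V_\gammared^{\pihstar} - V_\gammared^\star} \leq O(\H)$. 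Plugging $(1-\gammared) = \frac{\varepsilon}{12\H}$ into the displayed inequality produces $\rho^\star - \rho^{\pihstar}(s) \leq O(\varepsilon)$ for every $s$, and tuning the constants in Theorems \ref{thm:DMDP_bound} and \ref{thm:main_theorem} delivers the stated elementwise bound $\rho^\star - \rho^{\pihstar} \leq \varepsilon\one$.

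The main subtlety is the span bound for $V_\gammared^{\pihstar}$: the empirically near-optimal $\pihstar$ is not guaranteed to be Blackwell-optimal or even unichain, so no a priori bias-based bound applies. The workaround is to bootstrap from $\spannorm{V_\gammared^\star}$ via the algorithmic accuracy guarantee, which works cleanly because the DMDP target $\overline{\varepsilon} = \H$ is set to precisely the scale of the optimal span itself. Everything else in the reduction is essentially linear algebra once the Cesaro identity $\rho^\pi = (1-\gammared)P_\pi^{\ast} V_\gammared^\pi$ is isolated.
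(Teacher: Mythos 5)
Your proof is correct, and its first half coincides with the paper's: both invoke Theorem \ref{thm:DMDP_bound} with DMDP accuracy $\overline{\varepsilon}=\H$ and discount $\gammared = 1-\frac{\varepsilon}{12\H}$, with the same verification of hypotheses and sample-size bookkeeping. Where you diverge is the conversion back to average reward: the paper simply cites Lemma \ref{lem:DMDP_reduction} (the reduction of \cite{wang_near_2022}) with error parameter $\varepsilon/12$ to get $\rho^\star-\rho^{\pihstar}\le(8+3)\frac{\varepsilon}{12}\one$, whereas you reprove the reduction inline via the Cesaro identity $\rho^\pi=(1-\gammared)P_\pi^\infty V_\gammared^\pi$ together with the span bounds $\spannorm{V_\gammared^\star}\le 2\H$ (from Lemma \ref{lem:discounted_value_span_bound}) and $\spannorm{V_\gammared^{\pihstar}}\le\spannorm{V_\gammared^\star}+2\infnorm{V_\gammared^{\pihstar}-V_\gammared^\star}\le 4\H$. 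Your version is self-contained and in fact slightly sharper: it gives $\rho^\star-\rho^{\pihstar}\le\frac{\varepsilon}{12}\one+(1-\gammared)\cdot 6\H\,\one=\frac{7\varepsilon}{12}\one$, so no further constant tuning is needed beyond the $12$ already hard-wired into Algorithm \ref{alg:AMDP_alg}; the paper's route is shorter but leans on an external lemma. One small wrinkle: you justify $\rho^\star\le(1-\gammared)V_\gammared^\star(s)+(1-\gammared)\spannorm{V_\gammared^\star}$ by passing from $\pistar$ to the discounted-optimal policy using $V_\gammared^{\pistar}\le V_\gammared^\star$, but that substitution also silently replaces $\spannorm{V_\gammared^{\pistar}}$ by $\spannorm{V_\gammared^\star}$, a monotonicity you have not established. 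This is harmless --- Lemma \ref{lem:discounted_value_span_bound} directly yields $\frac{1}{1-\gammared}\rho^\star\le V_\gammared^\star(s)+\H$ for every $s$, which is exactly the upper bound you need without any detour through $\pistar$ --- but you should state it that way.
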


Again, since we observe $n$ samples for each state-action pair, this result shows that $\Otilde \left( \frac{\H SA}{\varepsilon^2} \right)$ total samples suffice to learn an $\varepsilon$-optimal policy for the average reward MDP. This bound matches the minimax lower bound in~\cite{wang_near_2022} and is superior to existing results for weakly communicating MDPs (see Table~\ref{table:AMDPs}). We note that the proof of Theorem \ref{thm:DMDP_bound} works so long as $\H$ is any upper bound of $\spannorm{h^\star}$, hence Algorithm~\ref{alg:AMDP_alg} also only needs an upper bound for $\spannorm{h^\star}$. 

We show in the following theorem that it is in general impossible to obtain a useful upper bound on $\spannorm{h^\star}$ with a sample complexity that is a function of only $\spannorm{h^\star}$. This suggests that it is not easy to remove the need for knowledge of $\spannorm{h^\star}$.
\begin{thm}
    \label{thm:impossibility_H_estimation}
    For any given $n, T \geq 1$, there exist two MDPs $\mathcal{M}_0$ and $\mathcal{M}_1$ with $S=4$, $A=1$ such that $\mathcal{M}_0$ has optimal bias span $1$, $\mathcal{M}_1$ has optimal bias span $T$, and it is impossible to distinguish between $\mathcal{M}_0$ and $\mathcal{M}_1$ with probability $\geq \frac{3}{4}$ with $n$ samples from each state-action pair.
\end{thm}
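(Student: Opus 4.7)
The plan is to construct two $4$-state, $1$-action MDPs that share most of their structure but differ only in a single low-probability transition, and then apply Le Cam's two-point inequality to show they are statistically indistinguishable from $n$ samples per state-action pair.

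For the construction I would take $\S = \{s_0, s_1, s_2, s_3\}$, a single action, rewards $r(s_0) = r(s_2) = 1$ and $r(s_1) = r(s_3) = 0$, and in both MDPs fix the transitions $s_1 \to s_1$, $s_2 \to s_3$, and $s_3 \to s_3$ with probability $1$. Setting $p := 1/T$, define $\mathcal{M}_0$ by $s_0 \to s_0$ with probability $1$, and $\mathcal{M}_1$ by $s_0 \to s_0$ with probability $1-p$ and $s_0 \to s_1$ with probability $p$.

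The next step is verifying the bias spans directly from the definitions (the policy is trivial since $A=1$). In $\mathcal{M}_0$ every state is absorbing, so $\rho^\star(s) = r(s)$, and all biases are zero except $h^\star(s_2) = 1$ (one step of reward $1$ before absorbing at $s_3$), giving $\spannorm{h^\star} = 1$. In $\mathcal{M}_1$ the state $s_0$ becomes transient with absorption time $\tau \sim \mathrm{Geom}(p)$ into $s_1$, so $\rho^\star(s_0) = 0$ and
\[
h^\star(s_0) = \E_{s_0}\Big[\sum_{t=0}^{\tau-1}\big(R_t - \rho^\star(S_t)\big)\Big] = \E_{s_0}[\tau] = 1/p = T,
\]
while the other biases are unchanged, yielding $\spannorm{h^\star} = T$.

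Finally, for the indistinguishability step I would bound the total variation of the $4n$ observations. Since the kernels at $s_1, s_2, s_3$ agree, tensorization of TV reduces the problem to the TV at $s_0$, where the $n$ samples are i.i.d.\ from $\delta_{s_0}$ in $\mathcal{M}_0$ versus $(1-p)\delta_{s_0} + p\delta_{s_1}$ in $\mathcal{M}_1$; the standard bound gives $\text{TV}(P_0^{\otimes n}, P_1^{\otimes n}) = 1 - (1-p)^n \leq np = n/T$, which is at most $1/2$ once $T \gtrsim n$. Le Cam's inequality $\Pr_{\mathcal{M}_0}[\psi = 0] + \Pr_{\mathcal{M}_1}[\psi = 1] \leq 1 + \text{TV}$ then rules out any test with both error probabilities below $1/4$. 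The only delicate balance is between the span requirement ($p \approx 1/T$ to force $h^\star(s_0) = T$ in $\mathcal{M}_1$) and the invisibility requirement ($np \ll 1$), which is simultaneously satisfiable precisely in the nontrivial regime $T \gtrsim n$; outside this regime the claim is either trivial or handled by a minor tweak. Everything else is a routine direct computation.
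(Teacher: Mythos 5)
Your construction and the Le Cam argument are sound as far as they go, and the bias computations check out: in $\mathcal{M}_0$ the span is $1$ (coming from $h^\star(s_2)=1$), and in $\mathcal{M}_1$ the transient state $s_0$ collects reward $1$ against gain $0$ for $\mathrm{Geom}(p)$ steps, giving $h^\star(s_0)=1/p=T$. The reduction of the total variation to the single perturbed row and the bound $1-(1-p)^n\le np$ are also correct. This is a genuinely more elementary route than the paper's (no KL/Pinsker computation is needed), and in the regime it covers it is arguably cleaner.

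The gap is the quantifier. The theorem asserts the conclusion \emph{for any} $n,T\ge 1$, and your construction only works when $T\gtrsim n$: the span of $\mathcal{M}_1$ and the detection probability are both controlled by the single parameter $p$ (span $=1/p$, detectability $\approx np$), so for, say, $T=2$ and $n=10^6$ you are forced to take $p=1/2$, which is trivially detectable. That regime is not trivial --- one still has to exhibit a span-$1$ MDP and a span-$2$ MDP that are indistinguishable from $10^6$ samples --- and it is not handled by your argument as written. The paper's construction avoids this by decoupling the two roles: the bias of its transient state is a \emph{product} $\Theta(B\varepsilon)$ of a large deterministic transient time $B$ and a small gain perturbation $\varepsilon$, so it can set $\varepsilon=\Theta(1/\sqrt{n})$ for indistinguishability and then choose $B=\Theta(T\sqrt{n})$ to hit any target span $T$, covering all $(n,T)$ simultaneously. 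Your construction can be repaired in the same spirit --- e.g., lower the reward at $s_0$ to $r(s_0)=Tp$ with $p=\Theta(1/n)$, so that $h^\star(s_0)=r(s_0)/p=T$ while $np=\Theta(1)$ can be made $\le 1/2$ --- but this requires $Tp\le 1$ and a separate case analysis, and you need to actually supply it rather than assert that a ``minor tweak'' exists. As submitted, the proof establishes the theorem only for $T\ge 2n$ (up to constants).
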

Thus even for an MDP with a small span, there exists another MDP that has an arbitrarily large span and is arbitrarily statistically close (that is, cannot be distinguished even with a large sample size $n$).
We emphasize that all previous algorithms in Table \ref{table:AMDPs} also require knowledge of their respective complexity parameters, and such assumptions are pervasive throughout the literature on average-reward RL. The only exception of which we are aware is the contemporaneous work \cite{jin_feasible_2024}, which achieves a suboptimal $\Otilde(SA \frac{\tmix^8}{\varepsilon^8})$ sample complexity without knowledge of $\tmix$ in the uniformly mixing setting. It is unclear if $\H$-based sample complexities are possible without knowing $\H$. Besides the evidence offered by Theorem~\ref{thm:impossibility_H_estimation}, in the online setting, it has been conjectured that knowledge of $\H$ is necessary to obtain an $\H$-dependent regret bound \citep{fruit_efficient_2018, fruit_near_2019, zhang_regret_2019}. Moreover, even with knowledge of $\H$, the only known online algorithm with optimal regret is computationally inefficient \citep{zhang_regret_2019}, making it somewhat surprising that our Theorem \ref{thm:main_theorem} uses a simple and efficient algorithm.

Nevertheless, when $\H$ is unknown, one can replace $\H$ with the diameter $D$ (since $\H \leq D$). The diameter is known to be estimable \cite{zhang_regret_2019, tarbouriech_provably_2021} and is often a more refined complexity parameter than $\tmix$. Our Theorem \ref{thm:main_theorem} is the first to imply the optimal diameter-based complexity $\Otilde(\frac{SAD}{\varepsilon^2})$, given knowledge of $D$ or using a constant-factor upper bound obtained from some estimation procedure. 

\section{Main results for general MDPs}
\label{sec:results-general}

Our starting point for general MDPs is that unlike the weakly communicating setting, their complexity \emph{cannot} be captured solely by $\spannorm{h^\star}$. 
We first argue this point informally using the simple example in Figure \ref{fig:hard_example_new}, which is parameterized by a value $T > 1$. Only state $1$ contains multiple actions, and action $2$ is optimal since it leads to state $2$ which collects reward $0.5$ forever, while taking action $1$ will always eventually lead to state $3$ where the reward is $0$ forever. We thus have $\rho^\star = [0.5, 0.5, 0]^\top$ and $\spannorm{h^\star} = 0$. However, clearly $\Omega(T)$ samples are required to even observe a transition $1 \to 3$, so the sample complexity must depend on $T \gg \H$ (without observing a transition $1 \to 3$, we cannot determine that action $1$ is not optimal). Taking action $1$ leads to a large reward of $1$ in the short term (for $T$ steps in expectation), so even if we had perfect knowledge of the environment, the optimal $\gamma$-discounted policy would not choose the optimal action $a=2$ until the effective horizon $\frac{1}{1-\gamma} \geq \Omega (T)$. Thus $\frac{1}{1-\gamma} \approx \H$ is insufficient for the reduction to discounted MDP. Note that this instance has its bounded transient time parameter $\Tb = T$. This example reflects that transient states play a categorically different role in general MDPs: in the weakly communicating setting, states which are transient under all policies can be completely ignored, whereas in this example our action at state $1$ fully determines our reward even though state $1$ is transient under all policies.

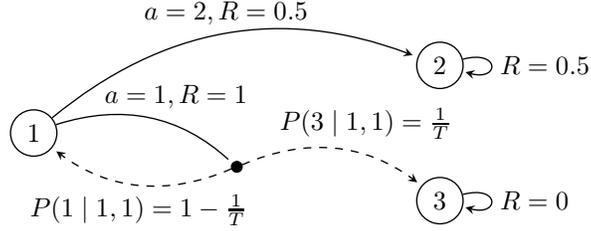
\begin{figure}[h]
\centering
\begin{tikzpicture}[ -> , >=stealth, shorten >=2pt , line width=0.5pt, node distance =2cm, scale=0.9]

\node [circle, draw] (one) at (-3 , 0) {1};
\node [circle, draw] (two) at (3 , 1) {2};
\node [circle, draw] (three) at (3 , -1) {3};
\node [circle, draw, fill, inner sep=0.05cm] (dot) at (0 , -0.5) {};

\path[-] (one) edge [bend left] node [above] {$~~~~~~a=1, R=1$} (dot) ;
\path[dashed] (dot) edge [bend left] node [below] {$P(1\mid1,1)=1-\frac{1}{T}$} (one);
\path[dashed] (dot) edge [bend left] node [above] {$~~~~~~~P(3\mid1,1)=\frac{1}{T}$} (three);
\path (one) edge [bend left] node [above] {$a=2, R=0.5$} (two) ;
\path (two) edge [loop right] node [right] {$R =0.5$}  (two) ;
\path (three) edge [loop right] node [right] {$R =0$}  (three) ;

\end{tikzpicture}
\caption{A general MDP where $\gamma$-discounted approximation fails  unless $\frac{1}{1-\gamma} = \Omega(T) \gg \spannorm{h^\star}$.}
\label{fig:hard_example_new}
\end{figure}

The statistical hardness is formally captured by the following theorem, which uses improved instances to obtain the correct dependence on $\varepsilon$.

\begin{thm}[Lower Bound for General AMDPs]
    \label{thm:lower_bound_general2}
    For any  $\varepsilon \in (0, 1/4)$, $B \geq 1$, $ A \geq 4$ and $S \in 8 \mathbb{N}$, for any algorithm $\texttt{Alg}$ which is guaranteed to return an $\varepsilon/3$-optimal policy for any input average-reward MDP with probability at least $\frac{3}{4}$, there exists an MDP $\mathcal{M} = (P, r)$ such that:
    \begin{enumerate}[itemsep=0pt, topsep=0pt]
        \item $\mathcal{M}$ has $S$ states and $A$ actions.
        \item Letting $h^\star$ be the bias of the Blackwell-optimal policy for $\mathcal{M}$, we have $\spannorm{h^\star} = 0$.
        \item $\mathcal{M}$ satisfies the bounded transient time assumption with parameter $B$.
        \item $\texttt{Alg}$ requires $\Omega\cigl(\frac{B\log(SA)}{\varepsilon^2} \cigr)$ samples per state-action pair on $\mathcal{M}$.
    \end{enumerate}
\end{thm}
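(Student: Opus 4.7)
My proof strategy is a Fano-type information-theoretic lower bound: I construct a family $\{\mathcal{M}_v\}_v$ of MDPs all satisfying properties 1--3 but pairwise statistically hard to distinguish, and then argue that $\varepsilon/3$-optimality forces identification of the hidden parameter $v$, yielding the sample lower bound. The witness $\mathcal{M}$ in the conclusion can then be taken to be any family member.

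\textbf{Construction.} Extending the example in Figure~\ref{fig:hard_example_new}, I take $K = \Theta(S)$ ``test'' states $s_1, \ldots, s_K$ plus absorbing states $g$ (reward $1$) and $b$ (reward $0$). Each test state has reward $(1+\varepsilon)/2$ regardless of action. The hidden parameter $v = (v_1, \ldots, v_K) \in [A]^K$ designates a ``good'' action at each $s_j$: under $\mathcal{M}_v$, the good action $v_j$ has transition distribution $(1-1/B,\,(1+\varepsilon)/(2B),\,(1-\varepsilon)/(2B))$ over $(s_j, g, b)$, while every other action uses the baseline $(1-1/B,\,1/(2B),\,1/(2B))$. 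Direct verification shows each $\mathcal{M}_v$ has $S$ states, $A$ actions, expected absorption time $B$ from every test state, and optimal gain $(1+\varepsilon)/2$ at each test state. Since the test-state reward matches this optimal gain, the optimal bias vanishes identically and $\spannorm{h^\star_v} = 0$. Thus any $\mathcal{M}_v$ serves as the witness MDP.

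\textbf{Identification and Fano.} A direct calculation gives $\rho^\pi(s_j) = [1 + \varepsilon\,\pi(v_j \mid s_j)]/2$, so the $\varepsilon/3$-optimality condition forces $\pi(v_j \mid s_j) \ge 1/3$ at every $j$. By Yao's principle I may restrict attention to deterministic algorithms, for which $\pi(s_j) = v_j$ and the estimator $\hat V_j := \pi(s_j)$ recovers $v_j$ whenever the algorithm succeeds. Taking $V$ uniform on $[A]^K$, Fano's inequality (with $\P[\hat V = V] \ge 3/4$) yields $I(V; X) \ge \Omega(K \log A)$. Using the fully-baseline MDP distribution $P_0$ as reference, the mutual information is upper bounded by $I(V; X) \le \E_V[\mathrm{KL}(P_{X\mid V} \,\|\, P_0)] = K \cdot n \cdot \mathrm{KL}(\text{perturbed} \,\|\, \text{baseline}) = \Theta(K n \varepsilon^2 / B)$, where the per-sample categorical KL evaluates to $\Theta(\varepsilon^2 / B)$ by a Taylor expansion around the baseline probabilities (the perturbation is an $O(\varepsilon/B)$ shift of a mass $\Theta(1/B)$). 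Combining yields $n \ge \Omega(B \log A / \varepsilon^2)$ samples per state-action pair.

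\textbf{Main obstacle.} The principal technical challenge is sharpening the log factor from $\log A$ to $\log(SA)$; the construction above is tight only when $A \gtrsim S$. My plan to close this gap is to enlarge the hypothesis class so that the hidden parameter encodes both a location (which of the test states is distinguished) and a value (which of the $A$ actions is good at that location), yielding a hypothesis class of cardinality $\Theta(SA)$. The $\spannorm{h^\star} = 0$ verification carries over unchanged because each test state retains the reward-matches-gain structure. The delicate point is preserving identifiability after this enlargement: unperturbed (``default'') states must admit a canonically unique optimal policy so that any deviation in the output policy uniquely signals the perturbed locations, which would be implemented via small secondary tie-breaking perturbations at default states carrying no information about the active location. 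Carrying out this identifiability argument carefully while keeping the KL per sample at $\Theta(\varepsilon^2/B)$ is the crux of the full proof.
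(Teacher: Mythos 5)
Your high-level strategy is the same as the paper's: a Fano argument over a family of MDPs in which a transient ``test'' state feeds two absorbing states of reward $1$ and $0$, the hidden action perturbs the absorption probabilities by $O(\varepsilon/B)$ so that the per-sample KL is $\Theta(\varepsilon^2/B)$, and the bias span is zero because the transient reward is matched to the gain. However, the part you defer as ``the crux'' is exactly the part the paper has to get right, and your proposal does not close it. Your primary construction (hypothesis class $[A]^K$ with an independent good action at each of $K=\Theta(S)$ test states) provably cannot give the stated bound: the KL to the reference measure scales as $Kn\varepsilon^2/B$ while the log-cardinality is $K\log A$, so the factor $K$ cancels and you only get $n\ge\Omega(B\log A/\varepsilon^2)$, as you note. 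The paper instead uses a hypothesis class of size $(A-1)S/4$ indexed by a \emph{single} pair $(s^\star,a^\star)$: master MDPs $\OM_{s^\star,a^\star}$ built from $S/4$ four-state sub-MDPs, with only the $s^\star$-th sub-MDP perturbed. Then only two state-action pairs differ between any two hypotheses, so the pairwise KL stays at $\Theta(n\varepsilon^2/B)$ while the log-cardinality is $\Theta(\log(SA))$.

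The identifiability mechanism you leave vague (``small secondary tie-breaking perturbations at default states'') is resolved in the paper by a structural device, not a perturbation: every test state also has a safe action $a=1$ that deterministically enters a reward-$\tfrac12$ absorbing state, and the rewards are set so that the good action has gain $\tfrac{1+2\varepsilon}{2}$, the safe action $\tfrac12$, and all remaining actions $\tfrac{1-2\varepsilon}{2}$. The minimum gap between the optimal action and any alternative is then $\varepsilon$ at every test state, so an $\varepsilon/3$-optimal policy must place probability at least $2/3$ on the correct action everywhere; since $2/3>1/2$ this action is unique, and the single state where an action other than $1$ receives mass $\ge 2/3$ reveals $(s^\star,a^\star)$. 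Your version has two related problems here: with a gap of only $\varepsilon/2$ you can only force $\pi(v_j\mid s_j)\ge 1/3$, which does not uniquely determine $v_j$ from a stochastic output policy, and the appeal to Yao's principle does not repair this, since derandomizing the algorithm does not make its \emph{output policy} deterministic. Both issues are fixable (widen the perturbation to $2\varepsilon$ and add the safe action), but as written the proposal stops short of a proof of item 4 with the $\log(SA)$ factor.
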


A similar minimax lower bound holds for the discounted setting.
\begin{thm}[Lower Bound for General DMDP]
\label{thm:lower_bound_general_discounted}
    For any $\varepsilon \in (0, 1/4)$, $B \geq 1$, $ A \geq 4$ and $S \in 8 \mathbb{N}$ for any algorithm $\texttt{Alg}$ which is guaranteed to return an $\varepsilon/3$-optimal policy for any input discounted MDP with probability at least $\frac{3}{4}$, there exists a discounted MDP $\mathcal{M} = (P, r, \gamma)$ such that:
    \begin{enumerate}[itemsep=0pt, topsep=0pt]
        \item $\mathcal{M}$ has $S$ states and $A$ actions.
        \item $\mathcal{M}$ satisfies the bounded transient time assumption with parameter $B$.
        \item $\texttt{Alg}$ requires $\Omega\cig(\frac{B\log(SA)}{(1-\gamma)^2\varepsilon^2} \cig)$ samples per state-action pair on $\mathcal{M}$.
    \end{enumerate}
\end{thm}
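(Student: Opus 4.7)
The plan is to adapt the construction behind Theorem \ref{thm:lower_bound_general2} to the discounted setting. The extra $(1-\gamma)^{-2}$ factor arises because, in the regime $\frac{1}{1-\gamma} \geq B$, a transition-probability perturbation of size $\alpha$ at a transient state with self-loop rate $p = 1-1/B$ produces a discounted value gap of order $\alpha/(1-\gamma)$, as opposed to a gap of order $\alpha$ in the average-reward setting. I would split into two regimes on $\gamma$ versus $B$. If $\frac{1}{1-\gamma} < B$, the classical DMDP minimax lower bound $\widetilde{\Omega}\cigl(\frac{SA}{(1-\gamma)^3\varepsilon^2}\cigr)$ of \cite{gheshlaghi_azar_minimax_2013,sidford_near-optimal_2018} already exceeds the target $\Omega\cigl(\frac{B}{(1-\gamma)^2\varepsilon^2}\cigr)$, and its hard instances involve (near-)absorbing states so that bounded transient time $\leq B$ holds trivially. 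The main work is in the regime $\frac{1}{1-\gamma} \geq B$, where I mirror the AMDP construction.

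Concretely, I would partition $\S$ into $S/8$ groups of $8$ states, each containing one transient decision state $s_i$ (with $A$ actions) plus a local good/bad absorbing pair. At $s_i$, every action self-loops with probability $p := 1 - 1/B$ and distributes the remaining mass $1-p$ between the good absorbing state (reward $1$, self-loop) and bad absorbing state (reward $0$, self-loop) with probabilities $(1-p)(1\pm\alpha)/2$; the sign pattern across actions and groups encodes a hidden parameter $\tau$ from a family of cardinality $\Omega(SA)$. The bounded transient time property with parameter $B$ holds by construction since every non-absorbing state self-loops at rate $1-1/B$. Solving the Bellman equation at each decision state gives $V^\pi(s_i) = \gamma(1-p)(1\pm\alpha)/[2(1-\gamma)(1-\gamma p)]$, so the suboptimality incurred by choosing the wrong action is
\[
\Delta = \frac{\gamma(1-p)\alpha}{(1-\gamma)(1-\gamma p)}.
\]
Using $1-\gamma p = (1-\gamma) + \gamma(1-p) \asymp 1/B$ when $1-\gamma \leq 1/B$, this simplifies to $\Delta \asymp \alpha/(1-\gamma)$, and calibrating $\alpha := c\varepsilon(1-\gamma)$ for a small absolute constant $c$ forces $\Delta \geq 2\varepsilon/3$. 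Hence every $(\varepsilon/3)$-optimal policy must correctly recover $\tau$.

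The KL divergence between any two possible transition distributions at a decision pair is $\asymp (1-p)\alpha^2 = \alpha^2/B$. Applying the same multi-hypothesis Fano-type argument as in the proof of Theorem \ref{thm:lower_bound_general2}---where $\Omega(SA)$ hypotheses, each differing from a reference in a single decision pair, force any procedure to extract $\Omega(\log(SA))$ bits of information per pair---yields $n \cdot \alpha^2/B \gtrsim \log(SA)$ for the minimax error to be at most $1/4$. Substituting $\alpha = c\varepsilon(1-\gamma)$ delivers the claimed $n = \Omega\cigl(\frac{B\log(SA)}{\varepsilon^2(1-\gamma)^2}\cigr)$ samples per state-action pair.

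The main obstacle lies in verifying the regime split on $\gamma$ versus $B$ and in confirming that both constructions---the one for $\frac{1}{1-\gamma} \geq B$ and the classical one for $\frac{1}{1-\gamma} < B$---simultaneously satisfy all conclusions of the theorem, in particular bounded transient time $\leq B$. Everything else---the Bellman solution at $s_i$, the estimate $1-\gamma p \asymp 1/B$, the binomial KL bound, and the constraint $\alpha \leq 1$ (valid because $\varepsilon < 1/4$)---reduces to elementary calculation, and the multi-hypothesis information-theoretic argument can be imported essentially verbatim from the proof of Theorem \ref{thm:lower_bound_general2}.
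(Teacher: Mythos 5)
Your route is genuinely different from the paper's. The paper proves Theorem~\ref{thm:lower_bound_general_discounted} in one sentence, by combining the AMDP lower bound (Theorem~\ref{thm:lower_bound_general2}) with the average-to-discount reduction (Theorem~\ref{thm:DMDP_reduction_general}); you instead rebuild the hard instances directly in the discounted setting and recalibrate the perturbation to $\alpha \asymp \varepsilon(1-\gamma)$. Your central computation is right: with self-loop rate $p = 1-1/B$ one has $1-\gamma p = (1-\gamma)+\gamma/B \asymp 1/B$ when $\frac{1}{1-\gamma}\ge B$, so the value gap is $\Delta \asymp \alpha/(1-\gamma)$ while the per-sample KL is $\asymp \alpha^2/B$, and Fano then yields $n \gtrsim B\log(SA)/\bigl(\varepsilon^2(1-\gamma)^2\bigr)$. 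This direct argument has the virtue of making the $(1-\gamma)^{-2}$ dependence completely transparent; the paper's reduction-based proof is shorter but hides nontrivial accuracy bookkeeping (an $\varepsilon/3$-accurate DMDP solver fed through Theorem~\ref{thm:DMDP_reduction_general} at $\gamma = 1-\varepsilon_A/B$ produces an AMDP policy whose suboptimality is dominated by the additive $3\varepsilon_A$ term of the reduction, so transferring the AMDP lower bound at face value gives a rate governed by $\varepsilon_A = B(1-\gamma)$ rather than by $\varepsilon$; your construction avoids having to resolve this).

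Two concrete issues to fix. First, your regime split is both unnecessary and, as stated, backwards: since $\gamma$ is existentially quantified in the theorem, you may simply choose $\frac{1}{1-\gamma}\ge B$ and never enter the other regime; and when $\frac{1}{1-\gamma} < B$ the classical bound $\frac{1}{(1-\gamma)^3\varepsilon^2}$ is \emph{smaller} than $\frac{B}{(1-\gamma)^2\varepsilon^2}$, not larger (indeed in that regime the target bound would exceed the known $\Otilde\bigl(\frac{SA}{(1-\gamma)^3\varepsilon^2}\bigr)$ upper bound and so cannot hold), so that case cannot be dispatched the way you claim. Second, your construction omits the analogue of the ``safe'' action ($a=1$ leading to the absorbing reward-$1/2$ state in Figure~\ref{fig:lower_bound_inst2}); without it, in the reference instance where every risky action carries the ``$-$'' sign, all actions at a decision state have equal value and an $\varepsilon/3$-optimal policy need not reveal anything, so the $(A-1)S/4$-way identification argument degenerates. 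Including the safe action with value sitting halfway between the ``$+$'' and ``$-$'' risky values, exactly as in the proof of Theorem~\ref{thm:lower_bound_general2}, restores the argument, and since you state you import that proof essentially verbatim this is recoverable --- but it must be said explicitly.
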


\noindent The lower bounds of $\Otilde\left(\frac{\H}{\varepsilon^2} \right)$ from the weakly communicating setting still apply in the general setting. Together with Theorem \ref{thm:lower_bound_general2} they imply a $\Otilde\left(\frac{\H +   \Tb}{\varepsilon^2} \right)$ lower bound for general average-reward MDPs.

Figure \ref{fig:hard_example_new} demonstrates that, unlike the weakly communicating setting, discounted reduction with $\frac{1}{1-\gamma}$ set in terms of only $\H$ cannot succeed for general MDPs. (Contrast with Lemma \ref{lem:DMDP_reduction} for the analogous theorem from \cite{wang_near_2022} for weakly communicating MDPs.) We remedy this issue and lay the foundation for our matching upper bound by proving a new reduction theorem in terms of $\H$ \emph{and} $\Tb$; in particular, $\Tb$ measures how much farther ahead we must look in order to determine which closed communicating class will be reached. By Lemma \ref{lem:transient_time_tmix_relationship} $\Tb \leq 4 \tmix$, although $\Tb$ is always finite unlike $\tmix$.

\begin{thm}
[Average-to-Discount Reduction for General MDP]
\label{thm:DMDP_reduction_general}
    Suppose $(P, r)$ is a general MDP, has an optimal bias function $h^{\star}$ satisfying $\spannorm{h^{\star}} \leq \H$, and satisfies the bounded transient time assumption with parameter $\Tb$. Fix $\varepsilon \in (0, 1]$ and set $\gamma = 1 - \frac{\varepsilon}{  \Tb + \H}$. For any $\varepsilon_{\gamma} \in [0, \frac{1}{1-\gamma}]$, if $\pi$ is any $\varepsilon_{\gamma}$-optimal policy for the discounted MDP $(P, r, \gamma)$, then
    $\rho^{\star} - \rho^\pi \leq \cig(3 + 2\frac{\varepsilon_{\gamma}}{  \Tb + \H} \cig)\varepsilon \one.$
\end{thm}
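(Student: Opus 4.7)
The plan is to sandwich the discounted value functions against their long-run approximations. Specifically, I will prove two inequalities valid at every state $s$: (i) $V^\star_\gamma(s) \geq \rho^\star(s)/(1-\gamma) - (\Tb + 2\H)$, and (ii) $V^\pi_\gamma(s) \leq \rho^\pi(s)/(1-\gamma) + O(\Tb + \H + \varepsilon_\gamma)$. Combining these with the hypothesis $\binfnorm{V^\star_\gamma - V^\pi_\gamma} \leq \varepsilon_\gamma$ yields $(\rho^\star(s) - \rho^\pi(s))/(1-\gamma) \leq O(\Tb + \H + \varepsilon_\gamma)$; multiplying by $1-\gamma = \varepsilon/(\Tb + \H)$ and tracking constants carefully produces the claimed $(3 + 2\varepsilon_\gamma/(\Tb + \H))\varepsilon$ bound.

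For (i), I work with a Blackwell-optimal $\pi^\star$ (so $V^\star_\gamma = V^{\pi^\star}_\gamma$) and decompose the trajectory under $\pi^\star$ at the hitting time $\tau := T_{\Rec^{\pi^\star}}$. The pre-$\tau$ reward sum is bounded in magnitude by $\E^{\pi^\star}_s[\tau] \leq \Tb$, and on any recurrent class $C^\star$ of $\pi^\star$ the Poisson/bias equation for the irreducible restriction yields $|V^\star_\gamma(s') - \rho^\star(s')/(1-\gamma)| \leq 2\H$ for $s' \in C^\star$. Combining the two contributions requires controlling $\E^{\pi^\star}_s[\gamma^\tau \rho^\star(S_\tau)]$; crucially, $\rho^\star = P_{\pi^\star}\rho^\star$ makes $\rho^\star(S_t)$ a $P_{\pi^\star}$-martingale, so optional stopping gives $\E^{\pi^\star}_s[\rho^\star(S_\tau)] = \rho^\star(s)$, and the bound $|1 - \gamma^\tau| \leq (1-\gamma)\tau$ contributes only an $(1-\gamma)\Tb$ correction.

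For (ii), I decompose analogously under $\pi$. On each recurrent class $C$ of $\pi$ with stationary distribution $\nu_C$, multiplying the Bellman equation for $V^\pi_\gamma|_C$ on the left by $\nu_C^\top$ yields $(1-\gamma) \nu_C^\top V^\pi_\gamma|_C = \rho^\pi|_C$, so $V^\pi_\gamma(s') \leq \rho^\pi|_C/(1-\gamma) + \spannorm{V^\pi_\gamma|_C}$ for any $s' \in C$. The $\varepsilon_\gamma$-optimality sandwich $V^\star_\gamma - \varepsilon_\gamma \leq V^\pi_\gamma \leq V^\star_\gamma$ gives $\spannorm{V^\pi_\gamma|_C} \leq \spannorm{V^\star_\gamma|_C} + \varepsilon_\gamma$. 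The key new observation is that $\rho^\star$ is constant on $C$: since $\rho^\star(s) \geq P_{sa}\rho^\star$ for every action $a$, $\rho^\star$ is $P_\pi$-super-harmonic, and a super-harmonic function on a finite irreducible closed class must be constant. Therefore $\spannorm{V^\star_\gamma|_C}$ equals the span on $C$ of $V^\star_\gamma - \rho^\star/(1-\gamma)$, which is $O(\Tb + \H)$ by (i); plugging this back gives the desired upper bound on $V^\pi_\gamma(s)$.

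The main obstacle I anticipate is precisely (ii): for an arbitrary $\varepsilon_\gamma$-optimal $\pi$, neither $\binfnorm{h^\pi}$ nor the span of $V^\pi_\gamma$ on its recurrent classes is a priori controlled by $\Tb$ and $\H$ alone. A naive route---bounding the asymptotic gap $\rho^\star - \Clim_{T \to \infty} P_\pi^T \rho^\star$ by iterating a per-step Bellman-error inequality across the expected transient horizon of length $\Tb$---loses an extra factor of $\Tb$ and only produces an $O(\Tb \varepsilon)$ bound. Circumventing this via the constancy of $\rho^\star$ on each $\pi$-recurrent class, which lets us transfer span bounds from $V^\star_\gamma$ to $V^\pi_\gamma$ at a cost of only $\varepsilon_\gamma$, is the new ingredient compared to the weakly communicating case where $\rho^\star$ is globally constant and this complication never arises.
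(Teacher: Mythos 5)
Your overall architecture mirrors the paper's: a lower bound $V_\gamma^\star \geq \frac{1}{1-\gamma}\rho^\star - O(\H)\one$ from the Blackwell-optimal policy's Poisson equation, an upper bound $V_\gamma^\pi \leq \frac{1}{1-\gamma}\rho^\pi + O(\Tb + \H + \varepsilon_\gamma)\one$ obtained by splitting at the hitting time of $\Rec^\pi$, and a chain through $\varepsilon_\gamma$-optimality. Your argument that $\rho^\star$ is constant on each $\pi$-recurrent class via super-harmonicity (averaging $\rho^\star \geq P_\pi\rho^\star$ against the stationary distribution of the class forces harmonicity, hence constancy on an irreducible closed class) is actually cleaner than the paper's Lemma \ref{lem:same_block_equal_rhostar}, which goes through a history-dependent policy and a stopping-time argument.

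However, there is a genuine gap in your step (ii). To bound $\spannorm{V_\gamma^\star|_C}$ on a recurrent class $C$ of $\pi$ you need a \emph{two-sided} bound on $V_\gamma^\star - \frac{1}{1-\gamma}\rho^\star$ over $C$, but you justify it "by (i)," which is only a lower bound. The missing upper bound $V_\gamma^\star(s) \leq \frac{1}{1-\gamma}\rho^\star(s) + O(\Tb+\H)$ is the technical heart of the reduction, and it does not follow from the Poisson equation of a Blackwell-optimal policy: your parenthetical "$V_\gamma^\star = V_\gamma^{\pistar}$" is false at the relevant discount $\gamma = 1 - \frac{\varepsilon}{\Tb+\H}$, which generally lies below the Blackwell threshold --- this is exactly the phenomenon in the paper's Figure \ref{fig:hard_example_new}, where the $\gamma$-optimal policy prefers a gain-suboptimal but transiently rewarding action, so $V_\gamma^\star$ can strictly exceed $V_\gamma^{\pistar}$ and a priori could exceed $\frac{1}{1-\gamma}\rho^\star$ by far more than $\H$. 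The paper closes this by showing that $\pistar_\gamma$ uses only gain-conserving actions at its \emph{recurrent} states (so the unmodified Bellman equation applies there) and then paying $\Tb$ for the transient states (Lemmas \ref{lem:discounted_opt_approx_err_recurrent}--\ref{lem:discounted_opt_approx_err}). Your gap is repairable within your own framework --- since $\rho^\star$ is constant on $C$ and $C$ is closed under $\pi$, every action in the support of $\pi$ on $C$ satisfies $P_{sa}\rho^\star = \rho^\star(s)$, so the unmodified Bellman equation gives $r_\pi|_C \leq \rho^\star|_C + (I - P_{\pi})h^\star|_C$ and hence $V_\gamma^\pi|_C \leq \frac{1}{1-\gamma}\rho^\star|_C + \spannorm{h^\star}\one$ directly, bypassing $V_\gamma^\star$ entirely --- but as written the proposal asserts the crucial two-sided estimate without a valid argument. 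The claimed final constants also depend on tightening (i) from $\Tb + 2\H$ to $\H$, which the direct Poisson-equation computation (the paper's Lemma \ref{lem:avg_opt_approx_err}) provides.
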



\begin{proof}[Proof highlights]
    Letting $\pistar_\gamma$ be the optimal policy for the $\gamma$-discounted MDP, our first key observation is that $\rho^\star$ is constant within any irreducible closed recurrent block of the Markov chain $P_{\pistar_\gamma}$, essentially because all states in this block must be reachable from each other with probability one (see Lemma \ref{lem:same_block_equal_rhostar}). Leveraging the optimality of $\pistar_\gamma$, this enables us to bound both $\big| V_\gamma^{\pistar_\gamma}(s) - \frac{1}{1-\gamma}\rho^\star(s) \big|$ and $\big| V_\gamma^{\pistar_\gamma}(s) - \frac{1}{1-\gamma}\rho^{\pistar_\gamma}(s) \big|$ by $O\bigl(\spannorm{h^\star}\bigr)$ for any $s$ which is recurrent under $\pistar_\gamma$, which when combined demonstrate that the gain $\rho^{\pistar_\gamma}(s)$ of $\pistar_\gamma$ is near-optimal for its recurrent states. See Lemma \ref{lem:discounted_opt_approx_err_recurrent}. We then leverage the bounded transient time assumption to guarantee that for transient $s$, $V_\gamma^{\pistar_\gamma}(s)$ is dominated by the expected returns from recurrent states, since at most $O(  \Tb)$ time is spent in transient states. We complete the proof of Theorem~\ref{thm:DMDP_reduction_general} by combining these facts, as well as extending them to accommodate approximately optimal policies.
\end{proof}

Next we establish an improved sample complexity for the discounted problem in the setting relevant to this reduction. This bound matches the lower bound in Theorem~\ref{thm:lower_bound_general_discounted} up to log factors.

\begin{thm}[Sample Complexity of General DMDP]
    \label{thm:DMDP_bound_general}
    Suppose $  \Tb + \H \leq \frac{1}{1-\gamma}$ and $\varepsilon \leq   \Tb + \H$. There exists a constant $C_3 > 0$ such that, for any $\delta \in (0,1)$, if $n \geq C_3\frac{  \Tb + \H}{(1-\gamma)^2\varepsilon^2} \log \cig( \frac{S A}{(1-\gamma)\delta \varepsilon}\cig)$, then with probability $1-\delta$, the policy $\pihstar_{\gamma, \pert}$ output by Algorithm \ref{alg:DMDP_alg} satisfies
    $\big\|V_\gamma^\star - V_\gamma^{\pihstar_{\gamma, \pert}}\big\|_\infty \leq \varepsilon.$
\end{thm}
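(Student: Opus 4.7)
The plan is to follow the proof architecture of Theorem~\ref{thm:DMDP_bound} but with a variance analysis tailored to general (multichain) MDPs. By the same reduction used there, it suffices to bound the two evaluation errors $\cinfnorm{\hat V^{\pistar_\gamma}_{\gamma,\pert} - V^{\pistar_\gamma}_\gamma}$ and $\cinfnorm{\hat V^{\pihstar_{\gamma,\pert}}_{\gamma,\pert} - V^{\pihstar_{\gamma,\pert}}_\gamma}$ by $O(\varepsilon)$. Applying the Bernstein-style law-of-total-variance framework of \cite{li_breaking_2020, agarwal_model-based_2020}, this reduces to establishing that $\cinfnorm{(I-\gamma P_\pi)^{-1}\sqrt{\Var_{P_\pi}[V^\pi_\gamma]}} \lesssim \sqrt{(\Tb+\H)/(1-\gamma)^2}$ for $\pi \in \{\pistar_\gamma, \pihstar_{\gamma,\pert}\}$, which after squaring yields the target sample complexity.

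The core new ingredient is this variance bound, obtained by splitting each trajectory under $P_\pi$ at the hitting time $\tau := T_{\Rec^\pi}$ of a recurrent state. For the recurrent phase $t\geq\tau$, the chain is confined to a single recurrent block $C$ of $P_\pi$; since $\rho^\pi$ is constant on $C$ and the restriction $h^\pi|_C$ satisfies $\spannorm{h^\pi|_C} \lesssim \H$ (via arguments analogous to those underlying Lemma~\ref{lem:discounted_opt_approx_err_recurrent} in the proof of Theorem~\ref{thm:DMDP_reduction_general}), applying the weakly communicating variance analysis of Theorem~\ref{thm:DMDP_bound} to the restricted chain on $C$ yields a recurrent contribution of $O(\sqrt{\H}/(1-\gamma))$. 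For the transient phase $t<\tau$, use the Laurent decomposition $V^\pi_\gamma = \frac{1}{1-\gamma}\rho^\pi + h^\pi + \epsilon$ with $\cinfnorm{\epsilon}=O(1)$ (under the standing assumption $\Tb+\H \leq \frac{1}{1-\gamma}$), which splits $\sqrt{\Var_{P_\pi}[V^\pi_\gamma]}$ at transient states into a $\frac{1}{1-\gamma}\sqrt{\Var_{P_\pi}[\rho^\pi]}$ piece and a piece of magnitude $O(\H)$. The $\rho^\pi$ piece exploits the martingale identity $\rho^\pi = P_\pi\rho^\pi$: by optional stopping, $\sum_{t<\tau}\E^\pi_s\,\Var_{P_\pi}[\rho^\pi](S_t) = \Var^\pi_s[\rho^\pi(S_\tau)] \leq \cinfnorm{\rho^\pi}^2 \leq 1$, and combining with $\E^\pi_s[\tau]\leq \Tb$ via Cauchy-Schwarz (per-trajectory and then across trajectories) yields a $\rho^\pi$-contribution of $O(\sqrt{\Tb}/(1-\gamma))$. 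The $h^\pi$ piece similarly uses that $h^\pi(S_t) + \sum_{s<t}(r(S_s)-\rho^\pi(S_s))$ is a $P_\pi$-martingale; optional stopping plus the bound $\E^\pi_s[\tau^2] = O(\Tb^2)$ (derivable from $\E^\pi_s[\tau]\leq \Tb$ and the Markov property, giving geometric tails for $\tau$) combined with Cauchy-Schwarz yields an $h^\pi$-contribution of $O(\sqrt{\Tb+\H}/(1-\gamma))$.

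The evaluation error of the data-dependent policy $\pihstar_{\gamma,\pert}$ requires a recursive self-bounding argument as in Theorem~\ref{thm:DMDP_bound}: analogous span and variance quantities for $\pihstar_{\gamma,\pert}$ are expressed in terms of $O(\Tb+\H)$ plus the evaluation error itself, producing a self-consistent inequality whose resolution implies the target. This step also requires verifying that the transient/recurrent block structure of the empirical kernel $\Phat$ approximates that of $P$ with high probability, which allows the above variance analysis to transfer to $\Phat$.

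The main obstacle is the transient-phase analysis. Naively, at a transient state $s$ the one-step variance $\Var_{P_\pi}[V^\pi_\gamma](s)$ can be as large as $\Theta(1/(1-\gamma)^2)$ (consider a state whose successors lie in distinct absorbing classes with very different gains), and propagated through $(I-\gamma P_\pi)^{-1}$ this would produce only the suboptimal $\widetilde\Omega(1/(1-\gamma)^3)$ sample complexity. The resolution is the martingale cancellation above: the total accumulated one-step variance over the transient phase is $O(1)$ for $\rho^\pi$ and $O(\H^2+\Tb^2)$ for the $h^\pi$-based martingale, neither scaling with $1/(1-\gamma)^2$; Cauchy-Schwarz with $\E^\pi_s[\tau]\leq \Tb$ then converts these into contributions $O(\sqrt{\Tb}/(1-\gamma))$ and $O(\sqrt{\Tb+\H}/(1-\gamma))$ to the propagated LoV quantity, matching the target.
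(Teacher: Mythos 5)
Your overall architecture matches the paper's: reduce to bounding $\gamma\infnorm{(I-\gamma P_\pi)^{-1}\sqrt{\Var_{P_\pi}[V_\gamma^\pi]}}$ by $O(\sqrt{\Tb+\H}/(1-\gamma))$ for $\pi\in\{\pistar_\gamma,\pihstar_{\gamma,\pert}\}$ via the Bernstein-style bounds of \cite{li_breaking_2020}, split states into recurrent and transient under $P_\pi$, handle the recurrent block by the weakly communicating multistep-variance argument (using that $\rho^\star$ is constant on each recurrent block), and handle the transient phase by a law-of-total-variance argument combined with $\E^\pi_s[\tau]\leq\Tb$ and Cauchy--Schwarz, closing the loop for the data-dependent policy with a recursive self-bounding inequality. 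Your $\rho^\pi$-martingale treatment of the transient phase is a correct, genuinely different implementation of the paper's step: the paper instead applies Jensen to get $\gamma\underline{e_s}^\top\sum_t\gamma^tZ_\pi^t\sqrt{\underline{v}}\leq\sqrt{\Tb}\cdot\sqrt{\gamma^2 e_s^\top(I-\gamma P_\pi)^{-1}v}$ and bounds the second factor by $\sqrt{2}/(1-\gamma)$ directly from the variance Bellman equation (Lemmas \ref{lem:trans_visitation_dist_decomposition}, \ref{lem:transient_time_bound}, \ref{lem:var_params_relationship}), with no decomposition of $V_\gamma^\pi$ and no optional stopping. Also, your final step of verifying that the recurrent/transient structure of $\Phat$ approximates that of $P$ is unnecessary: the variance parameters in the error bound are taken with respect to the \emph{true} kernel $P_{\pihstar_{\gamma,\pert}}$, so the bounded transient time property applies to the data-dependent policy directly.

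The genuine gap is your reliance on the policy-specific bias $h^\pi$ with an implicit span bound. You assert $\spannorm{h^\pi|_C}\lesssim\H$ on recurrent blocks and a Laurent remainder $\infnorm{\epsilon}=O(1)$ on transient states, for $\pi=\pistar_\gamma$ and $\pi=\pihstar_{\gamma,\pert}$. Neither is established, and neither is true in general: the discount-optimal policy need not be Blackwell-optimal, and its bias can have arbitrarily large span (e.g., a slowly mixing or nearly periodic recurrent class) even when $\H=\spannorm{h^{\pistar}}$ is small; the exact identity $V_\gamma^\pi=\frac{1}{1-\gamma}\rho^\pi+(I-\gamma P_\pi)^{-1}(I-P_\pi)h^\pi$ only controls a resolvent-smoothed version of $h^\pi$, not $\spannorm{h^\pi}$ itself. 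What discounted optimality does give you (Lemmas \ref{lem:discounted_opt_approx_err_recurrent}, \ref{lem:discounted_opt_approx_err}, \ref{lem:approx_discounted_opt_approx_err}) is $\infnorm{V_\gamma^\pi-\frac{1}{1-\gamma}\rho^\pi}\leq O(\Tb+\H)$; but $g:=V_\gamma^\pi-\frac{1}{1-\gamma}\rho^\pi$ does not satisfy the exact Poisson equation, so your optional-stopping identity for the $h^\pi$-compensated martingale does not apply to it, and the crude pointwise substitute $\E\bigl[\sum_{t<\tau}\sqrt{\Var_{P_\pi}[g](S_t)}\bigr]\leq\Tb\cdot O(\Tb+\H)$ is quantitatively insufficient (it exceeds the target $\sqrt{\Tb+\H}/(1-\gamma)$ in the regime $\frac{1}{1-\gamma}\approx\Tb+\H$ that the average-to-discount reduction actually uses). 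This cancellation in the accumulated transient-phase variance is exactly what the paper's variance-Bellman-equation bound $e_s^\top(I-\gamma P_\pi)^{-1}\Var_{P_\pi}[V_\gamma^\pi]\leq 2\infnorm{\Var^\pi[\sum_t\gamma^tR_t]}\leq 2/(1-\gamma)^2$ delivers for free; you would need either to adopt that route or to prove an approximate-martingale version of your optional-stopping argument for $g$ with explicit control of the accumulated drift.
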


%
Finally, we present our result for the sample complexity of general average-reward MDPs, matching the lower bound in Theorem~\ref{thm:lower_bound_general2} up to log factors. We again use the reduction Algorithm \ref{alg:AMDP_alg}, this time with the larger DMDP target accuracy $\overline{\varepsilon} = \Tb + \H$, leading to a discount factor of $\gammared = 1 - \frac{\varepsilon}{12(\Tb  +\H)}$.

\begin{thm}[Sample Complexity of General AMDP]
\label{thm:main_theorem_amdp_general}
    There exists a constant $C_4>0$ such that for any $\delta, \varepsilon \in (0,1)$, if $n \geq C_4 \frac{  \Tb + \H}{\varepsilon^2} \log \left( \frac{S A (\Tb + \H)}{\delta \varepsilon}\right)$ and we call Algorithm \ref{alg:AMDP_alg} with $\varepsilon = \Tb + \H$, then with probability at least $1-\delta$, the output policy $\pihstar$ satisfies the elementwise inequality
    $\rho^\star - \rho^{\pihstar} \leq \varepsilon \one.$
\end{thm}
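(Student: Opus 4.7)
The plan is to chain Theorem~\ref{thm:DMDP_bound_general} (sample complexity for general DMDPs) with Theorem~\ref{thm:DMDP_reduction_general} (the average-to-discounted reduction), then verify that the parameter choices made by Algorithm~\ref{alg:AMDP_alg} with $\overline{\varepsilon} = \Tb + \H$ line up for both results. This $\overline{\varepsilon}$ fixes the reduction discount to $\gammared = 1 - \frac{\varepsilon}{12(\Tb + \H)}$ and asks Algorithm~\ref{alg:DMDP_alg} for a policy that is $\overline{\varepsilon}$-optimal for the $\gammared$-discounted MDP $(P, r, \gammared)$.

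First I would verify the hypotheses of Theorem~\ref{thm:DMDP_bound_general} for this subproblem: $\Tb + \H \leq \frac{1}{1-\gammared} = \frac{12(\Tb + \H)}{\varepsilon}$ is immediate from $\varepsilon \leq 1$, and the target accuracy $\overline{\varepsilon} \leq \Tb + \H$ holds with equality. Substituting these choices into the theorem's sample bound gives
\[
  n \;\geq\; C_3 \frac{\Tb + \H}{(1-\gammared)^2 \overline{\varepsilon}^2} \log\cigl(\frac{SA}{(1-\gammared)\delta \overline{\varepsilon}}\cigr)
    \;=\; 144\, C_3\, \frac{\Tb + \H}{\varepsilon^2} \log\cigl(\frac{12\, SA}{\delta \varepsilon}\cigr),
\]
which is majorized by the hypothesis $n \geq C_4 \frac{\Tb + \H}{\varepsilon^2} \log\bigl(\frac{SA(\Tb + \H)}{\delta \varepsilon}\bigr)$ once $C_4$ is chosen as a sufficiently large constant multiple of $C_3$ (absorbing the factor $144$ and the additive $\log 12$ using $\Tb + \H \geq 1$, which follows from the paper's standing assumption $\H \geq 1$). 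Theorem~\ref{thm:DMDP_bound_general} then yields that, with probability at least $1-\delta$, the returned policy $\pihstar$ satisfies $\cinfnorm{V^\star_{\gammared} - V^{\pihstar}_{\gammared}} \leq \overline{\varepsilon} = \Tb + \H$.

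Next I would invoke Theorem~\ref{thm:DMDP_reduction_general} to convert this DMDP guarantee into an average-reward guarantee. Because that theorem pairs its target parameter $\varepsilon'$ with discount $1 - \varepsilon'/(\Tb + \H)$, I apply it with $\varepsilon' = \varepsilon/12$ so that its discount coincides exactly with $\gammared$. The DMDP suboptimality $\varepsilon_\gamma = \overline{\varepsilon} = \Tb + \H$ lies in the admissible range $\bigl[0, \tfrac{1}{1-\gammared}\bigr] = \bigl[0, \tfrac{12(\Tb+\H)}{\varepsilon}\bigr]$ (again using $\varepsilon \leq 1$), so the hypotheses of the reduction are met. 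The conclusion of Theorem~\ref{thm:DMDP_reduction_general} then gives
\[
  \rho^\star - \rho^{\pihstar} \;\leq\; \cigl(3 + 2\tfrac{\varepsilon_\gamma}{\Tb + \H}\cigr)\tfrac{\varepsilon}{12}\,\one \;=\; \tfrac{5\varepsilon}{12}\,\one \;\leq\; \varepsilon\,\one,
\]
exactly as claimed.

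There is no major analytical obstacle here: the heavy lifting sits entirely in the invoked Theorems~\ref{thm:DMDP_reduction_general} and~\ref{thm:DMDP_bound_general}. The only place worth sanity-checking is the interaction between the $12\times$ slack built into $\gammared$ and the two admissibility conditions, namely ensuring (a) that $\varepsilon_\gamma = \Tb + \H$ remains within $[0, \tfrac{1}{1-\gammared}]$ required by the reduction, and (b) that squaring $(1-\gammared)$ in the DMDP bound does not inflate the final rate beyond $\Otilde\bigl(SA(\Tb + \H)/\varepsilon^2\bigr)$; both follow directly from the parameter calculation above.
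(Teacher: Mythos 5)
Your proposal is correct and follows essentially the same route as the paper: it chains Theorem~\ref{thm:DMDP_bound_general} (with DMDP target accuracy $\overline{\varepsilon}=\Tb+\H$ and $\gammared = 1-\frac{\varepsilon}{12(\Tb+\H)}$) with the reduction Theorem~\ref{thm:DMDP_reduction_general} applied at error parameter $\varepsilon/12$, yielding the same $(3+2)\frac{\varepsilon}{12}=\frac{5\varepsilon}{12}\le\varepsilon$ bound and the same sample-size bookkeeping. Your explicit verification of the admissibility conditions ($\Tb+\H\le\frac{1}{1-\gammared}$ and $\varepsilon_\gamma\in[0,\frac{1}{1-\gammared}]$) is a welcome addition the paper leaves implicit.
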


\begin{proof}[Proof highlights]
    Similarly to Theorem \ref{thm:main_theorem}, we seek to bound certain variance parameters, and this time it would suffice to bound the variance of the cumulative discounted reward starting from any state $s$ like $\big|\Var_s^{\pistar_{\gamma}}\left[ \sum_{t=0}^{\infty}\gamma^t R_t \right]\big| \leq O\big(\frac{\H +   \Tb}{1-\gamma} \big)$. Such a bound indeed holds for states $s$ that are recurrent under $\pistar_\gamma$, because $\rho^\star(S_t)$ will remain constant to $\rho^\star(s)$ for all $t$, since, as mentioned above, $\rho^\star$ is constant on closed irreducible recurrent blocks, and all $(S_t)_{t \geq 0}$ will stay in the same block as $s$. Therefore, we can almost reuse our argument from the weakly communicating case. However, if $s$ is transient, it is easy to see that $\cig|\Var_s^{\pistar_{\gamma}}\left[ \sum_{t=0}^{\infty}\gamma^t R_t \right]\cig| = \Omega\cigl(\big(\frac{1}{1-\gamma}\big)^2\cigr)$ in general (even under the bounded transient time assumption), as we can consider an example where from $s$ we transition to either an absorbing reward $1$ state or an absorbing reward $0$ state. Thus, when $s$ is transient, instead of bounding $\big|\Var_s^{\pistar_{\gamma}}\left[ \sum_{t=0}^{\infty}\gamma^t R_t \right] \big|$, we directly work with the sharper variance parameter $\Big| e_s^\top(I - \gamma P_{\pistar_\gamma})^{-1} \sqrt{\Var_{P_{\pistar_\gamma}} \big[V_\gamma^{\pistar_\gamma} \big]} \Big|$, which is also common to the analysis of DMDPs \citep{gheshlaghi_azar_minimax_2013, agarwal_model-based_2020, li_breaking_2020} (and in these previous works is bounded in terms of $\cinfnorm{\Var^{\pistar_{\gamma}}\left[ \sum_{t=0}^{\infty}\gamma^t R_t \right]}$; see Lemma \ref{lem:var_params_relationship} for this relationship).
    We instead develop a novel law-of-total-variance-style argument which limits the total contribution of transient states to this sharper variance parameter. See Lemma \ref{lem:var_params_bounds_general} for details.
\end{proof}

\section{Conclusion}
\label{sec:conclusion}
In this paper we obtained optimal sample complexities for weakly communicating and general average reward MDPs by improving the analysis of discounted MDPs, revealing a quadratic rather than cubic dependence on the effective horizon for a fixed instance. 
A limitation of our results (as well as of all previous results) is that the average-to-discounted reduction requires prior knowledge of parameters for optimal complexity, and an interesting open question is whether it is possible to remove this assumption. 
In conclusion, we believe our results shed greater light on the relationship between the discounted and average reward settings as well as the fundamental complexity of the discounted setting, and we hope that our technical developments can be useful in future work, such as leading to efficient optimal algorithms in the online setting.


\section*{Acknowledgement}

Y.\ Chen and M.\ Zurek were supported in part by National Science Foundation CCF-2233152 and DMS-2023239.

\bibliographystyle{plainnat}
\bibliography{refs, ref_yc}

\begin{thebibliography}{26}
\providecommand{\natexlab}[1]{#1}
\providecommand{\url}[1]{\texttt{#1}}
\expandafter\ifx\csname urlstyle\endcsname\relax
  \providecommand{\doi}[1]{doi: #1}\else
  \providecommand{\doi}{doi: \begingroup \urlstyle{rm}\Url}\fi

\bibitem[Agarwal et~al.(2020)Agarwal, Kakade, and Yang]{agarwal_model-based_2020}
Alekh Agarwal, Sham Kakade, and Lin~F. Yang.
\newblock Model-{Based} {Reinforcement} {Learning} with a {Generative} {Model} is {Minimax} {Optimal}, April 2020.
\newblock URL \url{http://arxiv.org/abs/1906.03804}.
\newblock arXiv:1906.03804 [cs, math, stat] version: 3.

\bibitem[Azar et~al.(2012)Azar, Munos, and Kappen]{azar_sample_2012}
Mohammad~Gheshlaghi Azar, Remi Munos, and Bert Kappen.
\newblock On the {Sample} {Complexity} of {Reinforcement} {Learning} with a {Generative} {Model}, June 2012.
\newblock URL \url{http://arxiv.org/abs/1206.6461}.
\newblock arXiv:1206.6461 [cs, stat].

\bibitem[Bartlett and Tewari(2012)]{bartlett_regal_2012}
Peter~L. Bartlett and Ambuj Tewari.
\newblock {REGAL}: {A} {Regularization} based {Algorithm} for {Reinforcement} {Learning} in {Weakly} {Communicating} {MDPs}, May 2012.
\newblock URL \url{https://arxiv.org/abs/1205.2661v1}.

\bibitem[Fruit et~al.(2018)Fruit, Pirotta, Lazaric, and Ortner]{fruit_efficient_2018}
Ronan Fruit, Matteo Pirotta, Alessandro Lazaric, and Ronald Ortner.
\newblock Efficient {Bias}-{Span}-{Constrained} {Exploration}-{Exploitation} in {Reinforcement} {Learning}, July 2018.
\newblock URL \url{http://arxiv.org/abs/1802.04020}.
\newblock arXiv:1802.04020 [cs, stat].

\bibitem[Fruit et~al.(2019)Fruit, Pirotta, and Lazaric]{fruit_near_2019}
Ronan Fruit, Matteo Pirotta, and Alessandro Lazaric.
\newblock Near {Optimal} {Exploration}-{Exploitation} in {Non}-{Communicating} {Markov} {Decision} {Processes}, March 2019.
\newblock URL \url{http://arxiv.org/abs/1807.02373}.
\newblock arXiv:1807.02373 [cs, stat].

\bibitem[Gheshlaghi~Azar et~al.(2013)Gheshlaghi~Azar, Munos, and Kappen]{gheshlaghi_azar_minimax_2013}
Mohammad Gheshlaghi~Azar, Rémi Munos, and Hilbert~J. Kappen.
\newblock Minimax {PAC} bounds on the sample complexity of reinforcement learning with a generative model.
\newblock \emph{Machine Learning}, 91\penalty0 (3):\penalty0 325--349, June 2013.
\newblock ISSN 1573-0565.
\newblock \doi{10.1007/s10994-013-5368-1}.
\newblock URL \url{https://doi.org/10.1007/s10994-013-5368-1}.

\bibitem[Jin et~al.(2024)Jin, Gummadi, Zhou, and Blanchet]{jin_feasible_2024}
Ying Jin, Ramki Gummadi, Zhengyuan Zhou, and Jose Blanchet.
\newblock Feasible \${Q}\$-{Learning} for {Average} {Reward} {Reinforcement} {Learning}.
\newblock In \emph{Proceedings of {The} 27th {International} {Conference} on {Artificial} {Intelligence} and {Statistics}}, pages 1630--1638. PMLR, April 2024.
\newblock URL \url{https://proceedings.mlr.press/v238/jin24b.html}.
\newblock ISSN: 2640-3498.

\bibitem[Jin and Sidford(2020)]{jin_efficiently_2020}
Yujia Jin and Aaron Sidford.
\newblock Efficiently {Solving} {MDPs} with {Stochastic} {Mirror} {Descent}, August 2020.
\newblock URL \url{https://arxiv.org/abs/2008.12776v1}.

\bibitem[Jin and Sidford(2021)]{jin_towards_2021}
Yujia Jin and Aaron Sidford.
\newblock Towards {Tight} {Bounds} on the {Sample} {Complexity} of {Average}-reward {MDPs}, June 2021.
\newblock URL \url{http://arxiv.org/abs/2106.07046}.
\newblock arXiv:2106.07046 [cs, math].

\bibitem[Kearns and Singh(1998)]{kearns_finite-sample_1998}
Michael Kearns and Satinder Singh.
\newblock Finite-{Sample} {Convergence} {Rates} for {Q}-{Learning} and {Indirect} {Algorithms}.
\newblock In \emph{Advances in {Neural} {Information} {Processing} {Systems}}, volume~11. MIT Press, 1998.
\newblock URL \url{https://proceedings.neurips.cc/paper/1998/hash/99adff456950dd9629a5260c4de21858-Abstract.html}.

\bibitem[Levin and Peres(2017)]{levin_markov_2017}
David~A. Levin and Yuval Peres.
\newblock \emph{Markov {Chains} and {Mixing} {Times}}.
\newblock American Mathematical Soc., October 2017.
\newblock ISBN 978-1-4704-2962-1.

\bibitem[Li et~al.(2020)Li, Wei, Chi, Gu, and Chen]{li_breaking_2020}
Gen Li, Yuting Wei, Yuejie Chi, Yuantao Gu, and Yuxin Chen.
\newblock Breaking the {Sample} {Size} {Barrier} in {Model}-{Based} {Reinforcement} {Learning} with a {Generative} {Model}.
\newblock In \emph{Advances in {Neural} {Information} {Processing} {Systems}}, volume~33, pages 12861--12872. Curran Associates, Inc., 2020.
\newblock URL \url{https://proceedings.neurips.cc/paper/2020/hash/96ea64f3a1aa2fd00c72faacf0cb8ac9-Abstract.html}.

\bibitem[Li et~al.(2022)Li, Wu, and Lan]{li_stochastic_2022}
Tianjiao Li, Feiyang Wu, and Guanghui Lan.
\newblock Stochastic first-order methods for average-reward {Markov} decision processes, May 2022.
\newblock URL \url{https://arxiv.org/abs/2205.05800v5}.

\bibitem[Puterman(2014)]{puterman_markov_2014}
Martin~L. Puterman.
\newblock \emph{Markov {Decision} {Processes}: {Discrete} {Stochastic} {Dynamic} {Programming}}.
\newblock John Wiley \& Sons, August 2014.
\newblock ISBN 978-1-118-62587-3.

\bibitem[Sidford et~al.(2018)Sidford, Wang, Wu, Yang, and Ye]{sidford_near-optimal_2018}
Aaron Sidford, Mengdi Wang, Xian Wu, Lin Yang, and Yinyu Ye.
\newblock Near-{Optimal} {Time} and {Sample} {Complexities} for {Solving} {Markov} {Decision} {Processes} with a {Generative} {Model}.
\newblock In \emph{Advances in {Neural} {Information} {Processing} {Systems}}, volume~31. Curran Associates, Inc., 2018.
\newblock URL \url{https://proceedings.neurips.cc/paper/2018/hash/bb03e43ffe34eeb242a2ee4a4f125e56-Abstract.html}.

\bibitem[Sobel(1982)]{sobel_variance_1982}
Matthew~J. Sobel.
\newblock The variance of discounted {Markov} decision processes.
\newblock \emph{Journal of Applied Probability}, 19\penalty0 (4):\penalty0 794--802, December 1982.
\newblock ISSN 0021-9002, 1475-6072.
\newblock \doi{10.2307/3213832}.
\newblock URL \url{https://www.cambridge.org/core/journals/journal-of-applied-probability/article/abs/variance-of-discounted-markov-decision-processes/AA4549BFA70081B27C0092F4BF9C661A}.
\newblock Publisher: Cambridge University Press.

\bibitem[Tarbouriech et~al.(2021)Tarbouriech, Pirotta, Valko, and Lazaric]{tarbouriech_provably_2021}
Jean Tarbouriech, Matteo Pirotta, Michal Valko, and Alessandro Lazaric.
\newblock A {Provably} {Efficient} {Sample} {Collection} {Strategy} for {Reinforcement} {Learning}, November 2021.
\newblock URL \url{http://arxiv.org/abs/2007.06437}.
\newblock arXiv:2007.06437 [cs, stat].

\bibitem[Wainwright(2019{\natexlab{a}})]{wainwright_high-dimensional_2019}
Martin~J. Wainwright.
\newblock \emph{High-{Dimensional} {Statistics}: {A} {Non}-{Asymptotic} {Viewpoint}}.
\newblock Cambridge University Press, 1 edition, February 2019{\natexlab{a}}.
\newblock ISBN 978-1-108-62777-1 978-1-108-49802-9.
\newblock \doi{10.1017/9781108627771}.
\newblock URL \url{https://www.cambridge.org/core/product/identifier/9781108627771/type/book}.

\bibitem[Wainwright(2019{\natexlab{b}})]{wainwright_variance-reduced_2019}
Martin~J. Wainwright.
\newblock Variance-reduced \${Q}\$-learning is minimax optimal, August 2019{\natexlab{b}}.
\newblock URL \url{http://arxiv.org/abs/1906.04697}.
\newblock arXiv:1906.04697 [cs, math, stat].

\bibitem[Wang et~al.(2022)Wang, Wang, and Yang]{wang_near_2022}
Jinghan Wang, Mengdi Wang, and Lin~F. Yang.
\newblock Near {Sample}-{Optimal} {Reduction}-based {Policy} {Learning} for {Average} {Reward} {MDP}, December 2022.
\newblock URL \url{http://arxiv.org/abs/2212.00603}.
\newblock arXiv:2212.00603 [cs].

\bibitem[Wang et~al.(2023{\natexlab{a}})Wang, Blanchet, and Glynn]{wang_optimal_2023}
Shengbo Wang, Jose Blanchet, and Peter Glynn.
\newblock Optimal {Sample} {Complexity} of {Reinforcement} {Learning} for {Mixing} {Discounted} {Markov} {Decision} {Processes}, February 2023{\natexlab{a}}.
\newblock URL \url{https://arxiv.org/abs/2302.07477v3}.

\bibitem[Wang et~al.(2023{\natexlab{b}})Wang, Blanchet, and Glynn]{wang_optimal_2023-1}
Shengbo Wang, Jose Blanchet, and Peter Glynn.
\newblock Optimal {Sample} {Complexity} for {Average} {Reward} {Markov} {Decision} {Processes}, October 2023{\natexlab{b}}.
\newblock URL \url{https://arxiv.org/abs/2310.08833v1}.

\bibitem[Wei et~al.(2020)Wei, Jafarnia-Jahromi, Luo, Sharma, and Jain]{wei_model-free_2020}
Chen-Yu Wei, Mehdi Jafarnia-Jahromi, Haipeng Luo, Hiteshi Sharma, and Rahul Jain.
\newblock Model-free {Reinforcement} {Learning} in {Infinite}-horizon {Average}-reward {Markov} {Decision} {Processes}, February 2020.
\newblock URL \url{http://arxiv.org/abs/1910.07072}.
\newblock arXiv:1910.07072 [cs, stat].

\bibitem[Yu(1997)]{yu1997assouad}
Bin Yu.
\newblock Assouad, {Fano}, and {Le Cam}.
\newblock In \emph{Festschrift for Lucien Le Cam}, pages 423--435. Springer, 1997.

\bibitem[Zhang and Ji(2019)]{zhang_regret_2019}
Zihan Zhang and Xiangyang Ji.
\newblock Regret {Minimization} for {Reinforcement} {Learning} by {Evaluating} the {Optimal} {Bias} {Function}, December 2019.
\newblock URL \url{http://arxiv.org/abs/1906.05110}.
\newblock arXiv:1906.05110 [cs, stat] version: 3.

\bibitem[Zhang and Xie(2023)]{zhang_sharper_2023}
Zihan Zhang and Qiaomin Xie.
\newblock Sharper {Model}-free {Reinforcement} {Learning} for {Average}-reward {Markov} {Decision} {Processes}, June 2023.
\newblock URL \url{http://arxiv.org/abs/2306.16394}.
\newblock arXiv:2306.16394 [cs].

\end{thebibliography}

\clearpage

\appendix

\section{Proofs for weakly communicating MDPs}
\label{sec:proof-weakly-mdps}

In this section, we provide the proofs for our main results in Section~\ref{sec:results-weakly} for weakly communicating MDPs.
Before beginning, we note that given that $\H \geq 1$, we may assume that $\H$ is an integer by setting $\H \gets \lceil \H \rceil$, which only affects the sample complexity by a constant multiple $<2$ relative to the original parameter $\H$. Let $\infinfnorm{M} := \sup_{v: \infnorm{v} \leq 1} \infnorm{Mv}$ denote the $\ell_\infty$ operator norm of a matrix $M$. We record the standard and useful fact that $\infinfnorm{(I - \gamma P')^{-1}} \leq \frac{1}{1-\gamma}$ for any transition probability matrix $P'$, which follows from the Neumann series $(I - \gamma P')^{-1} = \sum_{t \geq 0} \left(\gamma P' \right)^t$ and the elementary fact that $\infinfnorm{P'}\le 1$.

\subsection{Technical lemmas}

First we formally state the main theorem from \cite{wang_near_2022}, which gives a reduction from weakly communicating average-reward problems to discounted problems.
\begin{lem}
\label{lem:DMDP_reduction}
    Suppose $(P, r)$ is an MDP which is weakly communicating and has an optimal bias function $h^{\star}$ satisfying $\spannorm{h^{\star}} \leq \H$. Fix $\varepsilon \in (0, 1]$ and set $\gamma = 1 - \frac{\varepsilon}{\H}$. For any $\varepsilon_{\gamma} \in [0, \frac{1}{1-\gamma}]$, if $\pi$ is any $\varepsilon_{\gamma}$-optimal policy for the discounted MDP $(P, r, \gamma)$, then
    \begin{align*}
        \rho^{\star} - \rho^\pi \leq \left(8 + 3\frac{\varepsilon_{\gamma}}{\H} \right)\varepsilon \one.
    \end{align*}
\end{lem}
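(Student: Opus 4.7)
My plan is to upper bound $\rho^\star \one - \rho^\pi$ elementwise via the telescoping identity
\begin{equation*}
\rho^\star\one - \rho^\pi \;=\; \underbrace{\bigl[\rho^\star\one - (1-\gamma)V^\star_\gamma\bigr]}_{\text{(I)}} + \underbrace{(1-\gamma)\bigl[V^\star_\gamma - V^\pi_\gamma\bigr]}_{\text{(II)}} + \underbrace{\bigl[(1-\gamma)V^\pi_\gamma - \rho^\pi\bigr]}_{\text{(III)}},
\end{equation*}
bounding each of the three terms separately. Term (II) is immediately at most $(1-\gamma)\varepsilon_\gamma\one = \tfrac{\varepsilon \varepsilon_\gamma}{\H}\one$ by the $\varepsilon_\gamma$-optimality of $\pi$ for the discounted MDP together with the choice $\gamma = 1 - \varepsilon/\H$.

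For term (III), I would use the Cesaro projection $P_\pi^{\ast} := \lim_{T\to\infty}\tfrac{1}{T}\sum_{t=0}^{T-1} P_\pi^t$, which satisfies $P_\pi^{\ast} P_\pi = P_\pi^{\ast}$ and $P_\pi^{\ast} r_\pi = \rho^\pi$. Left-multiplying the Bellman equation $(I - \gamma P_\pi) V^\pi_\gamma = r_\pi$ by $P_\pi^{\ast}$ yields $P_\pi^{\ast} V^\pi_\gamma = \tfrac{1}{1-\gamma}\rho^\pi$. Since each row of $P_\pi^{\ast}$ is a probability distribution over $\S$, this gives the elementwise estimate $\binfnorm{V^\pi_\gamma - \tfrac{1}{1-\gamma}\rho^\pi} \leq \spannorm{V^\pi_\gamma}$, and hence (III) $\leq (1-\gamma)\spannorm{V^\pi_\gamma}\one$. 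I would then control $\spannorm{V^\pi_\gamma} \leq \spannorm{V^\star_\gamma} + \varepsilon_\gamma$ via $V^\pi_\gamma \in [V^\star_\gamma - \varepsilon_\gamma \one, V^\star_\gamma]$, and invoke Lemma~\ref{lem:discounted_value_span_bound} to conclude $\spannorm{V^\star_\gamma} = O(\H)$ in the weakly communicating setting, so that (III) $\leq O\bigl(\varepsilon + \tfrac{\varepsilon \varepsilon_\gamma}{\H}\bigr)\one$.

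For term (I), since $V^\star_\gamma \geq V^{\pistar}_\gamma$ pointwise, it suffices to lower bound $(1-\gamma)V^{\pistar}_\gamma$ where $\pistar$ is Blackwell-optimal. Using the bias Bellman equation $\rho^\star\one + h^\star = r_{\pistar} + P_{\pistar}h^\star$ (valid since $\rho^\star$ is a constant vector for weakly communicating MDPs) and the eigenvector identity $(I - \gamma P_{\pistar})^{-1}\rho^\star\one = \tfrac{1}{1-\gamma}\rho^\star\one$, I would derive the explicit representation
\begin{equation*}
V^{\pistar}_\gamma = (I-\gamma P_{\pistar})^{-1} r_{\pistar} = \frac{1}{1-\gamma}\rho^\star\one + h^\star - (1-\gamma)(I-\gamma P_{\pistar})^{-1} P_{\pistar} h^\star.
\end{equation*}
Since $P_{\pistar}\one = \one$, this identity is invariant under shifts $h^\star \mapsto h^\star + c\one$, so I may center $h^\star$ to have $\infnorm{h^\star} \leq \spannorm{h^\star}/2 \leq \H/2$. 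Combining this with the standard bound $\infinfnorm{(I-\gamma P_{\pistar})^{-1}} \leq \tfrac{1}{1-\gamma}$ yields $\binfnorm{(1-\gamma)V^{\pistar}_\gamma - \rho^\star\one} \leq (1-\gamma)\H$, so (I) $\leq (1-\gamma)\H\one = \varepsilon\one$.

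Summing the three bounds and plugging in $\gamma = 1 - \varepsilon/\H$ gives an inequality of the form $\rho^\star\one - \rho^\pi \leq (C_1 + C_2 \varepsilon_\gamma/\H)\varepsilon\one$ with explicit small constants. The main obstacle I anticipate is carefully tracking constants to match the specific $(8 + 3\varepsilon_\gamma/\H)\varepsilon$ bound in the statement; in particular, the precise constant from Lemma~\ref{lem:discounted_value_span_bound} (which bounds the span of $V^\star_\gamma$ rather than just $V^{\pistar}_\gamma$) feeds directly into term (III), and the published bound may accommodate looser intermediate estimates than my rough calculation, which appears to give a smaller absolute constant than $8$.
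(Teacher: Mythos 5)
Your proof is correct. Note that the paper does not actually prove Lemma~\ref{lem:DMDP_reduction} itself --- it imports it from \citet{wang_near_2022} --- so the natural comparison is to the paper's proof of the general-MDP analogue, Theorem~\ref{thm:DMDP_reduction_general}, which follows exactly the skeleton you propose: write $\frac{1}{1-\gamma}(\rho^\star - \rho^\pi)$ as (approximation error of $V_\gamma^{\pistar}$ to $\frac{1}{1-\gamma}\rho^\star$, your term (I), handled there by Lemma~\ref{lem:avg_opt_approx_err} via the same identity $V_\gamma^{\pistar} = \frac{1}{1-\gamma}\rho^\star\one + (I-\gamma P_{\pistar})^{-1}(I-P_{\pistar})h^\star$) plus (discounted suboptimality, your term (II)) plus (approximation error of $V_\gamma^{\pi}$ to $\frac{1}{1-\gamma}\rho^\pi$, your term (III), handled there by Lemma~\ref{lem:approx_discounted_opt_approx_err} using the same projection $P_\pi^\infty V_\gamma^\pi = \frac{1}{1-\gamma}\rho^\pi$). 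Your term (III) estimate $\spannorm{V_\gamma^\pi}\le \spannorm{V_\gamma^\star}+\varepsilon_\gamma \le 2\H+\varepsilon_\gamma$ is valid, and your bookkeeping yields $\rho^\star-\rho^\pi \le \bigl(3+2\tfrac{\varepsilon_\gamma}{\H}\bigr)\varepsilon\one$, which is strictly stronger than the stated $\bigl(8+3\tfrac{\varepsilon_\gamma}{\H}\bigr)\varepsilon\one$; the looser constants in the statement simply reflect the form in which the result was proved in the cited work, so there is no gap to close.
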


From here, we will first establish lemmas which are useful for proving Theorem \ref{thm:DMDP_bound} on discounted MDPs, and then we will apply the reduction approach of Lemma \ref{lem:DMDP_reduction} to prove Theorem \ref{thm:main_theorem} on average-reward MDPs. As mentioned in the introduction, a key technical component of our approach is to establish superior bounds on a certain instance-dependent variance quantity which replace a factor of $\frac{1}{1-\gamma}$ with a factor of $\H$. Before reaching this step however, to make use of such a bound, we require an algorithm for discounted MDPs which enjoys a variance-dependent guarantee.

The work \cite{li_breaking_2020} obtains bounds with variance dependence that suffice for our purposes. However, they do not directly present said variance-dependent bounds, so we must slightly repackage their arguments in the form we require.
\begin{lem}
\label{lem:DMDP_error_bounds}
    There exist absolute constants $c_1, c_2$ such that for any $\delta \in (0,1)$, if $n \geq \frac{c_2}{1-\gamma}\log \left(\frac{S A}{(1-\gamma)\delta \varepsilon}\right) $, then with probability at least $1-\delta$, after running Algorithm \ref{alg:DMDP_alg}, we have
    \begin{equation}
    \label{eq:pistar_error}
    \begin{aligned}
        \infnorm{\Vhat_{\gamma, \pert}^{\pistar_\gamma} - V_\gamma^{\pistar_\gamma}} \leq 
        & \gamma \sqrt{\frac{c_1 \log \left( \frac{S A}{(1-\gamma)\delta \varepsilon}\right)}{n}} \infnorm{(I - \gamma P_{\pistar_\gamma})^{-1} \sqrt{\Var_{P_{\pistar_\gamma}} \left[V_\gamma^{\pistar_\gamma} \right]}} \\
        &+ c_1 \gamma \frac{\log \left( \frac{S A}{(1-\gamma)\delta \varepsilon}\right)}{(1-\gamma)n} \infnorm{V_\gamma^{\pistar_\gamma}} + \frac{\varepsilon}{6} 
    \end{aligned}
    \end{equation}
    and
    \begin{equation}
    \label{eq:pihatstar_error}
    \begin{aligned}
        \infnorm{\Vhat_{\gamma, \pert}^{\pihstar_{\gamma, \pert}} - V_\gamma^{\pihstar_{\gamma, \pert}}} 
        \leq&  \gamma \sqrt{\frac{c_1\log \left( \frac{S A}{(1-\gamma)\delta \varepsilon}\right)}{n}} \infnorm{(I - \gamma P_{\pihstar_{\gamma, \pert}})^{-1} \sqrt{\Var_{P_{\pihstar_{\gamma, \pert}}} \left[V_{\gamma, \pert}^{\pihstar_{\gamma, \pert}} \right]}} \\
        &+ c_1 \gamma\frac{\log \left( \frac{S A}{(1-\gamma)\delta \varepsilon}\right)}{(1-\gamma)n} \infnorm{V_{\gamma, \pert}^{\pihstar_{\gamma, \pert}}} + \frac{\varepsilon}{6}. 
    \end{aligned}
    \end{equation}
\end{lem}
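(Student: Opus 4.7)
The plan is to port the key intermediate concentration bounds from the proof of \cite{li_breaking_2020}, stopping before the step where they aggregate variance factors via a total-variance argument to obtain the final $1/(1-\gamma)^3$ rate. Both displayed inequalities are policy-evaluation error bounds in the empirical MDP: \eqref{eq:pistar_error} concerns the deterministic target $\pistar_\gamma$, while \eqref{eq:pihatstar_error} concerns the data-dependent $\pihstar_{\gamma, \pert}$. In both cases the reward perturbation $Z \sim \mathrm{Unif}(0,\xi)^{SA}$ shifts any discounted value function by at most $\xi/(1-\gamma) = \varepsilon/6$ in $\ell_\infty$ norm, so I would absorb this shift into the stated additive $\varepsilon/6$ term and reduce to the unperturbed analogues below.

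For \eqref{eq:pistar_error}, I would exploit that $\pistar_\gamma$ and $V^{\pistar_\gamma}_\gamma$ are independent of the samples used to form $\Phat$. Starting from the exact identity $\Vhat^{\pistar_\gamma}_\gamma - V^{\pistar_\gamma}_\gamma = \gamma (I - \gamma \Phat_{\pistar_\gamma})^{-1} (\Phat_{\pistar_\gamma} - P_{\pistar_\gamma}) V^{\pistar_\gamma}_\gamma$, a row-wise Bernstein bound yields the entrywise inequality $|(\Phat_{\pistar_\gamma} - P_{\pistar_\gamma}) V^{\pistar_\gamma}_\gamma| \lesssim \sqrt{\log(SA/\delta)/n}\,\sqrt{\Var_{P_{\pistar_\gamma}}[V^{\pistar_\gamma}_\gamma]} + (\log(SA/\delta)/n)\,\infnorm{V^{\pistar_\gamma}_\gamma}\one$ uniformly in $(s,a)$ with probability $1-\delta$. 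Since $(I - \gamma \Phat_{\pistar_\gamma})^{-1}$ has nonnegative entries with row sums at most $1/(1-\gamma)$, applying it entrywise and then taking $\infnorm{\cdot}$ yields the right-hand side of \eqref{eq:pistar_error} with $\Phat_{\pistar_\gamma}$ in place of $P_{\pistar_\gamma}$ inside the variance-term norm. To swap the two, I would use the self-bounding identity $(I - \gamma \Phat_\pi)^{-1} = (I - \gamma P_\pi)^{-1}[I + \gamma(\Phat_\pi - P_\pi)(I - \gamma \Phat_\pi)^{-1}]$ and verify that, for $n$ satisfying the stated hypothesis, the correction factor has $\ell_\infty$ operator norm bounded by a constant, which can be absorbed into $c_1$.

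For \eqref{eq:pihatstar_error}, the obstruction is that $\pihstar_{\gamma, \pert}$, and hence $P_{\pihstar_{\gamma, \pert}}$ and $V^{\pihstar_{\gamma, \pert}}_{\gamma, \pert}$, depend on $\Phat$, so direct Bernstein is invalid due to statistical coupling. I would invoke the absorbing-MDP leave-one-out technique of \cite{agarwal_model-based_2020, li_breaking_2020}: for each state $s$ and each scalar $u$ on a grid of size polynomial in $1/((1-\gamma)\varepsilon)$ over $[0, 1/(1-\gamma)]$, form the auxiliary empirical MDP obtained by replacing the $s$-th row of $\Phat$ with an absorbing self-loop paying per-step reward $u$. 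Its optimal policy and value function are independent of the $n$ samples drawn at $s$, so the fixed-policy argument above applies within each auxiliary MDP; choosing the grid point nearest to $\Vhat^{\pihstar_{\gamma, \pert}}_{\gamma, \pert}(s)$ makes the auxiliary optimal policy agree with $\pihstar_{\gamma, \pert}$ at every state other than $s$, and a union bound over $(s,u)$ incurs only logarithmic overhead. The decoupling step is the main obstacle; once it is in place, the remainder (entrywise Bernstein followed by the self-bounding swap of $\Phat_\pi$ for $P_\pi$) proceeds exactly as in the fixed-policy case, and the perturbation $\xi$ additionally ensures uniqueness of the empirical arg-max across the grid so that $\pihstar_{\gamma, \pert}$ is unambiguously defined.
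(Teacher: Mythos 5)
Your overall route is essentially the paper's: both extract the variance-dependent error bounds that sit inside the analysis of \cite{li_breaking_2020} just before the total-variance aggregation that produces the $1/(1-\gamma)^3$ rate, and both handle the reward perturbation by noting it shifts any value function by at most $\xi/(1-\gamma)=\varepsilon/6$. The difference is that the paper simply cites the proofs of Lemmas 1, 2, 5, and 6 of \cite{li_breaking_2020} (verifying that their intermediate displays are exactly \eqref{eq:pistar_error} and \eqref{eq:pihatstar_error} and tracking the constants and failure probabilities), whereas you propose to re-derive those lemmas from scratch. Your fixed-policy decomposition, the Bernstein step, and the absorbing-MDP leave-one-out decoupling for $\pihstar_{\gamma,\pert}$ (including the role of the perturbation in forcing a stable empirical arg-max, which is Lemma 6 of \cite{li_breaking_2020}) are all faithful to what actually happens inside the cited proofs.

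The one step in your sketch that does not go through as written is the swap of $(I-\gamma \Phat_{\pistar_\gamma})^{-1}$ for $(I-\gamma P_{\pistar_\gamma})^{-1}$ in front of $\sqrt{\Var_{P_{\pistar_\gamma}}[V_\gamma^{\pistar_\gamma}]}$. Your resolvent identity is correct, but the correction operator $\gamma(\Phat_\pi-P_\pi)(I-\gamma\Phat_\pi)^{-1}$ does \emph{not} have $\ell_\infty$ operator norm bounded by a constant under the stated hypothesis $n\gtrsim (1-\gamma)^{-1}\log(\cdot)$: crudely $\infinfnorm{\Phat_\pi-P_\pi}\le 2$ and $\infinfnorm{(I-\gamma\Phat_\pi)^{-1}}\le (1-\gamma)^{-1}$ give only $O((1-\gamma)^{-1})$, and even with concentration the best entrywise control on $(\Phat_\pi-P_\pi)w$ for a $\Phat$-dependent vector $w$ leaves you a factor on the order of $(1-\gamma)^{-1/2}$, so a self-bounding inequality of the form $X\le A+\frac{c}{1-\gamma}X$ does not close. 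The corresponding step in \cite{li_breaking_2020} (and in \citet[Lemma 4ff]{agarwal_model-based_2020}) is handled by a more delicate argument that exploits the $\sqrt{\Var}$ structure, Jensen's inequality, and the variance Bellman relation rather than a raw operator-norm bound on the correction; you would need to import that argument to complete the swap. Everything else in your plan is sound.
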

\begin{proof}
    First we establish equation~\eqref{eq:pistar_error}. The proof of \citet[Lemma 1]{li_breaking_2020} shows that when $n \geq \frac{16 e^2}{1-\gamma}2\log \left( \frac{4S \log \frac{e}{1-\gamma}}{\delta}\right)$, with probability at least $1-\delta$ we have
    \begin{equation}
    \label{eq:almost_pistar_error}
    \begin{aligned}
        \infnorm{\Vhat_{\gamma}^{\pistar_\gamma} - V_\gamma^{\pistar_\gamma}} 
        & \leq 4 \gamma \sqrt{\frac{2\log \left( \frac{4S \log \frac{e}{1-\gamma}}{\delta }\right)}{n}} \infnorm{(I - \gamma P_{\pistar_\gamma})^{-1} \sqrt{\Var_{P_{\pistar_\gamma}} \left[V_\gamma^{\pistar_\gamma} \right]}} \\
        &\quad +  \gamma \frac{2\log \left( \frac{4S \log \frac{e}{1-\gamma}}{\delta }\right)}{(1-\gamma)n} \infnorm{V_\gamma^{\pistar_\gamma}}.
    \end{aligned}
    \end{equation}
    Now since
    \begin{align*}
        \infnorm{\Vhat_{\gamma, \pert}^{\pistar_\gamma} - \Vhat_{\gamma}^{\pistar_\gamma}} &= \infnorm{(I- \gamma \Phat_{\pistar_\gamma})^{-1}{\rpert}_{\pistar_\gamma} - (I- \gamma \Phat_{\pistar_\gamma})^{-1}{r}_{\pistar_\gamma}}\\
        &\leq \infinfnorm{(I- \gamma \Phat_{\pistar_\gamma})^{-1}} \infnorm{\rpert - r}\\
        &\leq \frac{\xi}{1-\gamma} = \frac{\varepsilon}{6},
    \end{align*}
    we can obtain equation~\eqref{eq:pistar_error} by triangle inequality (although we will choose the constant $c_1$ below).

    Next we establish equation~\eqref{eq:pihatstar_error}. Using \citet[Lemma 6]{li_breaking_2020}, with probability at least $1-\delta$ we have that
    \begin{align}
        \left| \Qhat^\star_{\gamma, \pert} (s, \pihstar_{\gamma, \pert}(s)) - \Qhat^\star_{\gamma, \pert} (s, a)\right| > \frac{\xi \delta (1-\gamma)}{3 S A^2} = \frac{\varepsilon \delta (1-\gamma)^2}{18 S A^2} \label{eq:separation_cond}
    \end{align}
    uniformly over all $s$ and all $a \neq \pihstar_{\gamma, \pert}(s)$. From this separation condition~\eqref{eq:separation_cond}, the assumptions of \citet[Lemma 5]{li_breaking_2020} hold (with $\omega = \frac{\varepsilon \delta (1-\gamma)^2}{18 S A^2}$ in their notation) for the MDP with the perturbed reward $\rpert$. The proof of \citet[Lemma 5]{li_breaking_2020} shows that under the event~\eqref{eq:separation_cond} holds, the conditions for \citet[Lemma 2]{li_breaking_2020} are satisfied (with, in their notation, $\beta_1 = 2 \log \left( \frac{32}{(1-\gamma)^2 \omega \delta} SA \log \frac{e}{1-\gamma}\right) = 2 \log \left( \frac{576 S^2A^3}{(1-\gamma)^4 \delta^2 \varepsilon} \log \frac{e}{1-\gamma}\right)$) with additional failure probability $\leq \delta$. The proof of \citet[Lemma 2]{li_breaking_2020} then shows that, assuming $n > \frac{16 e^2}{1-\gamma}2 \log \left( \frac{576 S^2A^3}{(1-\gamma)^4 \delta^2 \varepsilon}  \log \frac{e}{1-\gamma}\right)$, we have
    \begin{align}
        \infnorm{\Vhat_{\gamma, \pert}^{\pihstar_{\gamma, \pert}} - V_{\gamma, \pert}^{\pihstar_{\gamma, \pert}}} & \leq 4\gamma\sqrt{\frac{\beta_1}{n}} \infnorm{(I - \gamma P_{\pihstar_{\gamma, \pert}})^{-1}\sqrt{\Var_{P_{\pihstar_{\gamma, \pert}}}}\left[ V_{\gamma, \pert}^{\pihstar_{\gamma, \pert}} \right]  } + \frac{\gamma \beta_1}{(1-\gamma)n} \infnorm{V_{\gamma, \pert}^{\pihstar_{\gamma, \pert}}} \label{eq:almost_pihatstar_error}
    \end{align}
    where we abbreviated $\beta_1 = 2 \log \left( \frac{576 S^2A^3}{(1-\gamma)^4 \delta^2 \varepsilon} \log \frac{e}{1-\gamma}\right)$ for notational convenience.

    We can again calculate that
    \begin{align*}
        \infnorm{V_{\gamma, \pert}^{\pihstar_{\gamma, \pert}} - V_{\gamma}^{\pihstar_{\gamma, \pert}} } &= \infnorm{(I - \gamma P_{\pihstar_{\gamma, \pert}})^{-1} {\rpert}_{\pihstar_{\gamma, \pert}} - (I - \gamma P_{\pihstar_{\gamma, \pert}})^{-1} {r}_{\pihstar_{\gamma, \pert}}} \\
        & \leq \infinfnorm{(I - \gamma P_{\pihstar_{\gamma, \pert}})^{-1}} \infnorm{{\rpert} - {r}}\\
        & \leq \frac{\xi}{1-\gamma} = \frac{\varepsilon}{6},
    \end{align*}
    so $ \infnorm{\Vhat_{\gamma, \pert}^{\pihstar_{\gamma, \pert}} - V_\gamma^{\pihstar_{\gamma, \pert}}} \leq \infnorm{\Vhat_{\gamma, \pert}^{\pihstar_{\gamma, \pert}} - V_{\gamma, \pert}^{\pihstar_{\gamma, \pert}}} + \frac{\varepsilon}{6}$ by triangle inequality, essentially giving~\eqref{eq:pihatstar_error}.

    Finally, to choose the constants $c_1$ and $c_2$, we first note that $2\log \left( \frac{4S \log \frac{e}{1-\gamma}}{\delta }\right) \leq \beta_1 < c_1' \log \left(\frac{SA}{(1-\gamma) \delta \varepsilon} \right)$ for some absolute constant $c_1'$, and therefore also all our requirements on $n$ are fulfilled when $n \geq \frac{16e^2}{1-\gamma}c_1' \log \left(\frac{SA}{(1-\gamma) \delta \varepsilon} \right) = \frac{c_2'}{1-\gamma}\log \left(\frac{SA}{(1-\gamma) \delta \varepsilon} \right)$ for another absolute constant $c_2'$. Lastly we note that by the union bound the total failure probability is at most $3\delta$, so to obtain a failure probability of $\delta'$ we may set $\delta = \delta'/3$ and absorb the additional constant when defining $c_1, c_2$ in terms of $c_1', c_2'$, and we also then increase $c_1$ by a factor of $4$ to absorb the factor of $4$ appearing in the first terms within~\eqref{eq:almost_pistar_error} and~\eqref{eq:almost_pihatstar_error}.
\end{proof}

Now we can analyze the variance parameters 
\begin{equation*}
\infnorm{(I - \gamma P_{\pistar_\gamma})^{-1} \sqrt{\Var_{P_{\pistar_\gamma}} \left[V_\gamma^{\pistar_\gamma} \right]}}
\quad\text{ and }\quad
\infnorm{(I - \gamma P_{\pihstar_{\gamma, \pert}})^{-1} \sqrt{\Var_{P_{\pihstar_{\gamma, \pert}}} \left[V_{\gamma, \pert}^{\pihstar_{\gamma, \pert}} \right]}},
\end{equation*}
which appear in the error bounds in Lemma \ref{lem:DMDP_error_bounds}.
We begin by reproducing the following inequality from \citet[Lemma 2]{wei_model-free_2020}.
\begin{lem}
\label{lem:discounted_value_span_bound}
In a weakly communicating MDP, for all $\gamma\in[0,1)$, it holds that
    \[\sup_s \left|V_\gamma^{\pistar_\gamma}(s) - \frac{1}{1-\gamma}\rho^\star \right| \leq \H.\]
\end{lem}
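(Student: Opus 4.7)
The plan is to prove the lemma via a two-sided sandwich bound on $V_\gamma^{\pistar_\gamma}(s) - \frac{\rho^\star}{1-\gamma}$, exploiting the (modified) Bellman equation available in the weakly communicating setting and the shift-invariance of the bias. Let $\tilde h := h^\star - \big(\min_{s'} h^\star(s')\big)\one$, so that $0 \le \tilde h \le \H\one$; since $\tilde h$ differs from $h^\star$ by a constant, $(I - P_\pi)\tilde h = (I - P_\pi) h^\star$ for every policy $\pi$. The key algebraic identity driving both bounds is
\begin{equation*}
(I - \gamma P_\pi)^{-1}(I - P_\pi) \;=\; I - (1-\gamma)(I - \gamma P_\pi)^{-1} P_\pi,
\end{equation*}
together with the facts that $(I-\gamma P_\pi)^{-1}$ has nonnegative entries and the row sums of $(I - \gamma P_\pi)^{-1} P_\pi$ equal $\tfrac{1}{1-\gamma}$.

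For the upper bound, I would use the Bellman inequality $\rho^\star + h^\star(s) \ge r(s,a) + P_{sa} h^\star$ (valid for all $s,a$ in a weakly communicating MDP by the modified Bellman equation). Applied to any policy $\pi$, this gives $r_\pi \le \rho^\star\one + (I - P_\pi) h^\star = \rho^\star\one + (I - P_\pi)\tilde h$ entrywise. Left-multiplying by the nonnegative matrix $(I-\gamma P_\pi)^{-1}$ and applying the identity yields
\begin{equation*}
V_\gamma^\pi \;\le\; \frac{\rho^\star}{1-\gamma}\one + \tilde h - (1-\gamma)(I - \gamma P_\pi)^{-1} P_\pi \tilde h \;\le\; \frac{\rho^\star}{1-\gamma}\one + \H \one,
\end{equation*}
where the last step uses $\tilde h \le \H\one$ and $(1-\gamma)(I - \gamma P_\pi)^{-1} P_\pi \tilde h \ge 0$. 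Specializing to $\pi = \pistar_\gamma$ gives the upper bound.

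For the lower bound, I would apply $V_\gamma^{\pistar_\gamma} \ge V_\gamma^{\pistar}$ where $\pistar$ is the Blackwell-optimal policy (or any gain-optimal policy). For $\pistar$ the Bellman equation is an equality, so $r_{\pistar} = \rho^\star\one + (I - P_{\pistar})\tilde h$. Multiplying by $(I - \gamma P_{\pistar})^{-1}$ and applying the identity gives
\begin{equation*}
V_\gamma^{\pistar} \;=\; \frac{\rho^\star}{1-\gamma}\one + \tilde h - (1-\gamma)(I - \gamma P_{\pistar})^{-1} P_{\pistar} \tilde h \;\ge\; \frac{\rho^\star}{1-\gamma}\one - \H\one,
\end{equation*}
using $\tilde h \ge 0$ and the row-sum bound to get $(1-\gamma)(I - \gamma P_{\pistar})^{-1} P_{\pistar} \tilde h \le \H\one$. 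Combining the two bounds yields $|V_\gamma^{\pistar_\gamma}(s) - \frac{\rho^\star}{1-\gamma}| \le \H$ for all $s$.

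The only real subtlety is remembering to replace $h^\star$ by its shift $\tilde h$: without this, the direct bounds would involve $\infnorm{h^\star}$ rather than $\spannorm{h^\star} = \H$. Once the shift is in place, every step reduces to linear algebra on the resolvent $(I - \gamma P_\pi)^{-1}$, and neither the existence of a Blackwell-optimal policy nor the weak-communication assumption (which ensures $\rho^\star$ is a constant so that $\rho^\star\one$ is annihilated by $I - P_\pi$) causes difficulty.
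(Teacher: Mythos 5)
Your proof is correct. The paper imports this lemma from \citet[Lemma 2]{wei_model-free_2020} without reproving it, but your argument is essentially the same as the one the paper uses for its closely analogous Lemmas~\ref{lem:avg_opt_approx_err} and~\ref{lem:discounted_opt_approx_err_recurrent}: the upper bound via the Bellman inequality for arbitrary $\pi$ and the lower bound via $V_\gamma^{\pistar_\gamma} \geq V_\gamma^{\pistar}$ plus the exact Bellman equation for $\pistar$, with the only cosmetic difference being that you obtain the span bound by shifting $h^\star$ to a nonnegative $\tilde h$, whereas the paper writes $(I-\gamma P_{\pistar})^{-1}(I-P_{\pistar})$ as a difference of two stochastic matrices and applies it directly to $h^\star$.
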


The following relates the variance parameter of interest to another parameter, the variance of the total discounted rewards.
This result essentially appears in \citet[Lemma 4]{agarwal_model-based_2020} (which was in turn inspired by \citet[Lemma 8]{gheshlaghi_azar_minimax_2013}), but since their result pertains to objects slightly different than $P_\pi$ and $\Var_{P_{\pi}} \left[V_\gamma^{\pi} \right]$, we provide the full argument for completeness.
\begin{lem}
\label{lem:var_params_relationship}
    For any deterministic stationary policy $\pi$, we have
    \begin{align*}
         \gamma \infnorm{(I - \gamma P_{\pi})^{-1} \sqrt{\Var_{P_{\pi}} \left[V_\gamma^{\pi} \right] }} & \leq \sqrt{\frac{2}{1-\gamma}} \sqrt{\infnorm{\Var^\pi\left[ \sum_{t=0}^{\infty} \gamma^t R_t  \right]}} .
    \end{align*}
\end{lem}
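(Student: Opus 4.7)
The plan is to combine an entrywise Jensen / Cauchy--Schwarz step (to convert the one-step conditional variances inside $(I-\gamma P_\pi)^{-1}$ into a sum of squared expected values) with the classical variance Bellman equation for the total discounted reward.

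First I would use the Neumann expansion $(I-\gamma P_\pi)^{-1} = \sum_{t\geq 0} \gamma^t P_\pi^t$ to write, for each state $s$,
\[
\bigl((I-\gamma P_\pi)^{-1}\sqrt{\Var_{P_\pi}[V_\gamma^\pi]}\bigr)(s) = \sum_{t\geq 0}\gamma^t \E_s^\pi\Bigl[\sqrt{(\Var_{P_\pi}[V_\gamma^\pi])(S_t)}\Bigr].
\]
Concavity of $\sqrt{\cdot}$ (Jensen) moves the square root outside the expectation, giving $(P_\pi^t\sqrt{\Var_{P_\pi}[V_\gamma^\pi]})(s) \leq \sqrt{(P_\pi^t\Var_{P_\pi}[V_\gamma^\pi])(s)}$. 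Applying Cauchy--Schwarz to the resulting geometric sum $\sum_t\gamma^t x_t = \sum_t\sqrt{\gamma^t}\cdot\sqrt{\gamma^t}x_t$ then yields
\[
\bigl((I-\gamma P_\pi)^{-1}\sqrt{\Var_{P_\pi}[V_\gamma^\pi]}\bigr)(s) \leq \sqrt{\tfrac{1}{1-\gamma}}\,\sqrt{\bigl((I-\gamma P_\pi)^{-1}\Var_{P_\pi}[V_\gamma^\pi]\bigr)(s)}.
\]
Taking $\infty$-norms and multiplying by $\gamma$ gives
\[
\gamma\infnorm{(I-\gamma P_\pi)^{-1}\sqrt{\Var_{P_\pi}[V_\gamma^\pi]}} \leq \sqrt{\tfrac{1}{1-\gamma}}\,\sqrt{\infnorm{\gamma^2(I-\gamma P_\pi)^{-1}\Var_{P_\pi}[V_\gamma^\pi]}}.
\]

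Next I would relate the right-hand side to $\Var^\pi[\sum_t \gamma^t R_t]$ using the standard variance Bellman equation. Writing $W=\sum_{t\geq 0}\gamma^t R_t$ and conditioning on the first step (rewards are deterministic), one obtains
\[
\Var^\pi[W] = \gamma^2\Var_{P_\pi}[V_\gamma^\pi] + \gamma^2 P_\pi \Var^\pi[W],
\]
equivalently $\gamma^2\Var_{P_\pi}[V_\gamma^\pi] = (I-\gamma^2 P_\pi)\Var^\pi[W]$. The key algebraic observation is the factorization
\[
(I-\gamma P_\pi)^{-1}(I-\gamma^2 P_\pi) = (I-\gamma P_\pi)^{-1}(I-\gamma P_\pi)(I+\gamma P_\pi) = I+\gamma P_\pi,
\]
which collapses the inverse and yields $\gamma^2(I-\gamma P_\pi)^{-1}\Var_{P_\pi}[V_\gamma^\pi] = (I+\gamma P_\pi)\Var^\pi[W]$. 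Since $P_\pi$ is a stochastic matrix, $\infinfnorm{I+\gamma P_\pi}\leq 1+\gamma\leq 2$, so $\infnorm{(I+\gamma P_\pi)\Var^\pi[W]}\leq 2\infnorm{\Var^\pi[W]}$, and combining with the displayed inequality above gives the claimed bound.

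There is no real obstacle; the only place that requires care is the algebraic identity $(I-\gamma P_\pi)^{-1}(I-\gamma^2 P_\pi) = I+\gamma P_\pi$, which is the step that converts the naive factor $\frac{1}{(1-\gamma)^2}$ one would get from bounding $(I-\gamma P_\pi)^{-1}$ and $(I-\gamma^2 P_\pi)$ separately into the much sharper factor of $2$. Everything else is routine Jensen plus Cauchy--Schwarz.
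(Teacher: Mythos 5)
Your first half (Neumann expansion, entrywise Jensen, Cauchy--Schwarz on the geometric sum) is fine and is essentially equivalent to the paper's route of normalizing $(1-\gamma)(I-\gamma P_\pi)^{-1}$ into a stochastic matrix and applying Jensen once; both arrive at $\gamma\infnorm{(I-\gamma P_\pi)^{-1}\sqrt{v}} \leq \frac{\gamma}{\sqrt{1-\gamma}}\sqrt{\infnorm{(I-\gamma P_\pi)^{-1}v}}$ with $v = \Var_{P_\pi}[V_\gamma^\pi]$. However, the step you yourself flag as the key algebraic observation is false: $(I-\gamma P_\pi)(I+\gamma P_\pi) = I - \gamma^2 P_\pi^2$, not $I - \gamma^2 P_\pi$, so the claimed factorization $(I-\gamma P_\pi)^{-1}(I-\gamma^2 P_\pi) = I+\gamma P_\pi$ does not hold unless $P_\pi^2 = P_\pi$ (e.g.\ take $P_\pi$ a $2\times 2$ permutation matrix to see it fail). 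Consequently the identity $\gamma^2(I-\gamma P_\pi)^{-1}\Var_{P_\pi}[V_\gamma^\pi] = (I+\gamma P_\pi)\Var^\pi[W]$ is not justified, and as written the proof breaks at exactly the step that was supposed to rescue you from the naive $\frac{1}{(1-\gamma)^2}$ factor.

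The conclusion of that step is nonetheless recoverable, and the paper's proof shows how: use the additive decomposition $I - \gamma^2 P_\pi = (1-\gamma)I + \gamma(I-\gamma P_\pi)$ instead of a multiplicative factorization, which gives
\begin{align*}
(I-\gamma P_\pi)^{-1}(I-\gamma^2 P_\pi) = (1-\gamma)(I-\gamma P_\pi)^{-1} + \gamma I,
\end{align*}
a matrix whose $\ell_\infty$ operator norm is at most $(1-\gamma)\cdot\frac{1}{1-\gamma} + \gamma = 1+\gamma \leq 2$. Combining this with the variance Bellman equation in the form $\gamma^2 v = (I-\gamma^2 P_\pi)\Var^\pi[W]$, i.e.\ $\Var^\pi[W] = \gamma^2(I-\gamma^2 P_\pi)^{-1}v$, yields $\infnorm{(I-\gamma P_\pi)^{-1}v} \leq 2\infnorm{(I-\gamma^2 P_\pi)^{-1}v}$ and hence the stated bound with the factor $\sqrt{2/(1-\gamma)}$. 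So your architecture matches the paper's; the defect is the single incorrect matrix identity, which must be replaced by the additive one above.
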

\begin{proof}
    First we note the well-known variance Bellman equation (see for instance \citet[Theorem 1]{sobel_variance_1982}):
    \begin{align}
        \Var^\pi\left[ \sum_{t=0}^{\infty} \gamma^t R_t  \right] = \gamma^2\Var_{P_{\pi}} \left[V_\gamma^{\pi} \right] + \gamma^2 P_\pi \Var^\pi\left[ \sum_{t=0}^{\infty} \gamma^t R_t  \right]. \label{eq:variance_bellman_eqn}
    \end{align}
    Now we can basically identically follow the argument of \citet[Lemma 4]{agarwal_model-based_2020}. The matrix $(1-\gamma)(I - \gamma P_{\pi})^{-1}$ has rows which are each probability distributions (are non-negative and sum to $1$). Therefore, by Jensen's inequality and the concavity of the function $x\mapsto \sqrt{x}$, for each row $s\in \S$  we have
    \[
    \left|(1-\gamma)e_s^\top(I - \gamma P_{\pi})^{-1} \sqrt{\Var_{P_{\pi}} \left[V_\gamma^{\pi} \right] } \right|\leq \sqrt{ \left|(1-\gamma)e_s^\top(I - \gamma P_{\pi})^{-1} \Var_{P_{\pi}} \left[V_\gamma^{\pi} \right]  \right|}.
    \]
    Using this fact we can calculate that, abbreviating $v = \Var_{P_{\pi}} \left[V_\gamma^{\pi} \right]$,
    \begin{align*}
        \gamma \infnorm{(I - \gamma P_{\pi})^{-1} \sqrt{v }} 
        &= \gamma \frac{1}{1-\gamma} \infnorm{(1-\gamma)(I - \gamma P_{\pi})^{-1} \sqrt{v }} \\
        &\leq \gamma \frac{1}{1-\gamma} \sqrt{\infnorm{(1-\gamma)(I - \gamma P_{\pi})^{-1} v }} \\
        &= \gamma \frac{1}{\sqrt{1-\gamma}} \sqrt{\infnorm{(I - \gamma P_{\pi})^{-1} v }}.
    \end{align*}

    In order to relate $\infnorm{(I - \gamma P_{\pi})^{-1} v }$ to $\infnorm{(I - \gamma^2 P_{\pi})^{-1} v }$ in order to apply the variance Bellman equation~\eqref{eq:variance_bellman_eqn}, we calculate
    \begin{align*}
        \infnorm{(I - \gamma P_{\pi})^{-1} v } &= \infnorm{(I - \gamma P_{\pi})^{-1} (I - \gamma^2 P_{\pi}) (I - \gamma^2 P_{\pi})^{-1}v } \\
        &=  \infnorm{(I - \gamma P_{\pi})^{-1} \left((1-\gamma)I + \gamma(I - \gamma P_\pi) \right) (I - \gamma^2 P_{\pi})^{-1}v } \\
        &=  \infnorm{\left((1-\gamma) (I - \gamma P_\pi)^{-1} + \gamma I \right) (I - \gamma^2 P_{\pi})^{-1}v } \\
        &\leq  \infnorm{(1-\gamma) (I - \gamma P_\pi)^{-1} (I - \gamma^2 P_{\pi})^{-1}v } + \gamma \infnorm{(I - \gamma^2 P_\pi)^{-1} v } \\
        &\leq (1-\gamma) \infinfnorm{(I - \gamma P_\pi)^{-1}} \infnorm{(I - \gamma^2 P_{\pi})^{-1}v } + \gamma \infnorm{(I - \gamma^2 P_\pi)^{-1} v } \\
        &\leq  (1+\gamma) \infnorm{(I - \gamma^2 P_{\pi})^{-1}v }  \\
        &\leq  2 \infnorm{(I - \gamma^2 P_{\pi})^{-1}v }
    \end{align*}
    Combining these calculations with the variance Bellman equation~\eqref{eq:variance_bellman_eqn}, we conclude that
    \begin{align*}
        \gamma \infnorm{(I - \gamma P_{\pi})^{-1} \sqrt{v }} \leq \gamma \frac{1}{\sqrt{1-\gamma}} \sqrt{2 \infnorm{(I - \gamma^2 P_{\pi})^{-1}v }} \leq \sqrt{\frac{2}{1-\gamma}} \sqrt{\infnorm{\Var^\pi\left[ \sum_{t=0}^{\infty} \gamma^t R_t  \right]}}
    \end{align*}
    as desired.
\end{proof}

The following is a multi-step version of the variance Bellman equation, which we will later apply with $T = \H$ but holds for arbitrary $T$.
\begin{lem}
\label{lem:multistep_variance_bellman_eqn}
    For any integer $T \geq 1$, for any deterministic stationary policy $\pi$, we have
    \begin{align*}
        \Var^\pi\left[ \sum_{t=0}^{\infty} \gamma^t R_t  \right] &= \Var^\pi\left[ \sum_{t=0}^{T-1} \gamma^t R_t + \gamma^T V_\gamma^\pi(S_T) \right] + \gamma^{2T} P_\pi^T  \Var^\pi\left[ \sum_{t=0}^{\infty} \gamma^t R_t  \right]
    \end{align*}
    and consequently
    \begin{align*}
        \infnorm{\Var^\pi\left[ \sum_{t=0}^{\infty} \gamma^t R_t  \right]} & \leq \frac{\infnorm{\Var^\pi\left[ \sum_{t=0}^{T-1} \gamma^t R_t + \gamma^T V_\gamma^\pi(S_T) \right]} }{1 - \gamma^{2T}}.
    \end{align*}
\end{lem}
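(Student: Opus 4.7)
My plan is to derive the identity by applying the law of total variance conditional on the $\sigma$-algebra $\mathcal{F}_T := \sigma(S_0, A_0, \ldots, S_T)$ generated by the trajectory through time $T$, and then deducing the $\ell_\infty$ bound by a simple rearrangement. The key algebraic decomposition is
\begin{equation*}
\sum_{t=0}^\infty \gamma^t R_t \;=\; \sum_{t=0}^{T-1} \gamma^t R_t \;+\; \gamma^T \sum_{t=0}^{\infty} \gamma^t R_{T+t},
\end{equation*}
where the first sum is $\mathcal{F}_T$-measurable and, by the Markov property, the conditional distribution of $\sum_{t=0}^{\infty} \gamma^t R_{T+t}$ given $\mathcal{F}_T$ depends only on $S_T$ and equals (in distribution) that of $\sum_{t=0}^\infty \gamma^t R_t$ under $\pi$ started from $S_T$.

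First, I would apply the law of total variance $\Var_s^\pi[X] = \Var_s^\pi[\E^\pi[X\mid\mathcal{F}_T]] + \E_s^\pi[\Var^\pi[X\mid \mathcal{F}_T]]$ to $X = \sum_{t\ge 0}\gamma^t R_t$. The conditional expectation is $\E^\pi[X\mid \mathcal{F}_T] = \sum_{t=0}^{T-1}\gamma^t R_t + \gamma^T V_\gamma^\pi(S_T)$, which gives the outer-variance term on the right-hand side of the claimed identity. For the conditional-variance term, since $\sum_{t=0}^{T-1}\gamma^t R_t$ is $\mathcal{F}_T$-measurable it drops out, leaving $\Var^\pi[X\mid \mathcal{F}_T] = \gamma^{2T}\Var_{S_T}^\pi\bigl[\sum_{t\ge 0}\gamma^t R_t\bigr]$ by the Markov property. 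Taking expectation then yields $\gamma^{2T}\bigl(P_\pi^T \Var^\pi\bigl[\sum_{t\ge 0}\gamma^t R_t\bigr]\bigr)_s$, which is exactly the second term in the stated identity, applied coordinatewise in $s$.

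For the $\ell_\infty$ consequence, I would take $\infnorm{\cdot}$ of both sides of the identity, use the operator-norm bound $\infinfnorm{P_\pi^T}\le 1$ to get
\begin{equation*}
\infnorm{\Var^\pi\bigl[\textstyle\sum_{t\ge 0}\gamma^t R_t\bigr]} \;\le\; \infnorm{\Var^\pi\bigl[\textstyle\sum_{t=0}^{T-1}\gamma^t R_t + \gamma^T V_\gamma^\pi(S_T)\bigr]} + \gamma^{2T}\infnorm{\Var^\pi\bigl[\textstyle\sum_{t\ge 0}\gamma^t R_t\bigr]},
\end{equation*}
and then rearrange. The rearrangement is legal because $\infnorm{\Var^\pi\bigl[\sum_{t\ge 0}\gamma^t R_t\bigr]}$ is finite (in fact bounded by $1/(1-\gamma)^2$ since the summand lies in $[0,1/(1-\gamma)]$), so subtracting $\gamma^{2T}$ times this finite quantity from both sides and dividing by $1-\gamma^{2T}>0$ produces the stated bound.

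There is no real obstacle here: the only subtlety is keeping the vector/coordinatewise structure straight when passing from the scalar law-of-total-variance identity to the vectorized form involving $P_\pi^T$, and confirming that the two conditional distributions (of $\sum_{t\ge 0}\gamma^t R_{T+t}$ given $\mathcal{F}_T$, and of $\sum_{t\ge 0}\gamma^t R_t$ under $\pi$ from state $S_T$) coincide, both of which are immediate from the Markov property and time-homogeneity of $P_\pi$.
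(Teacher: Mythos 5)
Your proposal is correct and is essentially the paper's argument: the paper derives the same identity by expanding the square of the decomposition $X - \E[X] = \bigl(\E[X\mid\mathcal{F}_T] - \E[X]\bigr) + \bigl(X - \E[X\mid\mathcal{F}_T]\bigr)$ and showing the cross term vanishes, which is precisely the law of total variance you invoke directly, followed by the same Markov-property identification of the conditional variance and the same H\"older/stochastic-row bound and rearrangement for the $\ell_\infty$ consequence. No substantive differences.
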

\begin{proof}
    Fix a state $s_0 \in \S$. Letting $\mathcal{F}_T$ be the $\sigma$-algebra generated by $(S_1, \dots, S_T)$, we calculate that
    \begin{align*}
    \Var^\pi_{s_0}\left[ \sum_{t=0}^{\infty} \gamma^t R_t  \right] 
    &= \E^\pi_{s_0} \left( \sum_{t=0}^{\infty} \gamma^t R_t - V_\gamma^\pi(s_0) \right)^2 \\
    &= \E^\pi_{s_0} \left( \sum_{t=0}^{T-1} \gamma^t R_t + \gamma^T V_\gamma^\pi(S_T)   - V_\gamma^\pi(s_0) + \sum_{t=T}^{\infty} \gamma^t R_t - \gamma^T V_\gamma^\pi(S_T) \right)^2 \\
    &= \E^\pi_{s_0} \Bigg[\E^\pi_{s_0} \Bigg[  \Bigg( \underbrace{\sum_{t=0}^{T-1} \gamma^t R_t + \gamma^T V_\gamma^\pi(S_T)   - V_\gamma^\pi(s_0)}_\text{$A$} + \underbrace{\sum_{t=T}^{\infty} \gamma^t R_t - \gamma^T V_\gamma^\pi(S_T)}_\text{$B$} \Bigg)^2 \Bigg| \mathcal{F}_{T}\Bigg] \Bigg]
    \end{align*}
Using the above shorthands and opening the square, we obtain
    \begin{align*}
    \Var^\pi_{s_0}\left[ \sum_{t=0}^{\infty} \gamma^t R_t  \right] 
    &= \E^\pi_{s_0} \left[\E^\pi_{s_0} \left[  A^2 + B^2 + 2AB \middle| \mathcal{F}_{T}\right] \right]\\
    &= \E^\pi_{s_0} \left[A^2 + \E^\pi_{s_0} \left[ B^2  \middle| \mathcal{F}_{T}\right] + 2A\E^\pi_{s_0} \left[ B \middle| \mathcal{F}_{T}\right] \right] \\
    &= \E^\pi_{s_0} \left[A^2 + \E^\pi_{S_T} \left[ B^2 \right] \right] \\
    &= \E^\pi_{s_0} \left[\left( \sum_{t=0}^{T-1} \gamma^t R_t + \gamma^T V_\gamma^\pi(S_T)   - V_\gamma^\pi(s_0) \right)^2 + \E^\pi_{S_T} \left[ \left( \sum_{t=T}^{\infty} \gamma^t R_t - \gamma^T V_\gamma^\pi(S_T) \right)^2 \right] \right] \\
    &= \E^\pi_{s_0} \left[\left( \sum_{t=0}^{T-1} \gamma^t R_t + \gamma^T V_\gamma^\pi(S_T)   - V_\gamma^\pi(s_0) \right)^2 + \gamma^{2T} \E^\pi_{S_T} \left[ \left( \sum_{t=0}^{\infty} \gamma^t R_t - V_\gamma^\pi(S_T) \right)^2 \right] \right] \\
    &= \Var^\pi_{s_0}\left[ \sum_{t=0}^{T-1} \gamma^t R_t + \gamma^T V_\gamma^\pi(S_T) \right] + \gamma^{2T} e_{s_0}^\top P_\pi^T  \Var^\pi\left[ \sum_{t=0}^{\infty} \gamma^t R_t  \right] ,
\end{align*}
where we used the tower property, the Markov property, and the fact that $\E^\pi_{s_0} \left[ B \middle| \mathcal{F}_{T}\right] = 0$ (which is immediate from the definition of $V_\gamma^\pi$).
Since $e_{s_0}^\top P_\pi^T$ is a probability distribution, it follows from Holder's inequality that $\left|e_{s_0}^\top P_\pi^T  \Var^\pi\left[ \sum_{t=0}^{\infty} \gamma^t R_t  \right]\right| \leq \infnorm{\Var^\pi\left[ \sum_{t=0}^{\infty} \gamma^t R_t  \right]}$. Therefore, it holds that
\begin{align*}
    \infnorm{\Var^\pi_{s_0}\left[ \sum_{t=0}^{\infty} \gamma^t R_t  \right]} & \leq \infnorm{\Var^\pi\left[ \sum_{t=0}^{T-1} \gamma^t R_t + \gamma^T V_\gamma^\pi(S_T) \right]} + \gamma^{2T}\infnorm{\Var^\pi_{s_0}\left[ \sum_{t=0}^{\infty} \gamma^t R_t  \right]}
\end{align*}
and we can obtain the desired conclusion after rearranging terms.
\end{proof}

We also need the following elemetary inequality.
\begin{lem}
    \label{lem:long_horizon_gamma_ineq}
    If $\gamma \geq 1 - \frac{1}{T}$ for some integer $T \geq 1$, then
    \begin{align*}
        \frac{1-\gamma^{2T}}{1-\gamma} \geq \left(1-\frac{1}{e^2} \right)T \geq \frac{4}{5}T.
    \end{align*}
\end{lem}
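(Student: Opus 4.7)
The plan is to observe that $\frac{1-\gamma^{2T}}{1-\gamma}$ has a particularly tractable geometric-series form, $\sum_{k=0}^{2T-1}\gamma^k$, and is therefore an increasing function of $\gamma$ on $[0,1)$. This monotonicity immediately reduces the problem to evaluating (or rather, lower-bounding) the expression at the left endpoint of the allowed range $\gamma \in [1 - 1/T, 1)$, namely at $\gamma = 1 - 1/T$. At that point the quantity becomes $T\bigl(1 - (1 - 1/T)^{2T}\bigr)$, so the whole task reduces to giving a clean upper bound on $(1-1/T)^{2T}$.

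For that upper bound I would invoke the standard inequality $(1-1/T)^T \leq e^{-1}$ valid for every integer $T \geq 1$ (the $T=1$ case is $0 \leq e^{-1}$, which is fine), and then square it to get $(1-1/T)^{2T} \leq e^{-2}$. Substituting back yields
\[
\frac{1-\gamma^{2T}}{1-\gamma} \;\geq\; T\bigl(1 - (1-1/T)^{2T}\bigr) \;\geq\; T\bigl(1 - e^{-2}\bigr),
\]
which is the first desired inequality. The final inequality $1 - e^{-2} \geq 4/5$ reduces to the elementary numerical check $e^2 \geq 5$, which holds since $e^2 \approx 7.389$.

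There is no real obstacle here: the only step that requires a moment of thought is recognizing the monotonicity of $\gamma \mapsto (1-\gamma^{2T})/(1-\gamma)$ via the geometric-series identity, which eliminates what would otherwise look like a two-parameter optimization over $(\gamma, T)$ with $\gamma \geq 1 - 1/T$. Once that reduction is made, the remainder is a one-line application of the familiar bound on $(1-1/T)^T$.
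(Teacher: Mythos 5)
Your proof is correct and follows essentially the same route as the paper: both reduce to $\gamma = 1-1/T$ via the geometric-series identity $\frac{1-\gamma^{2T}}{1-\gamma} = \sum_{k=0}^{2T-1}\gamma^k$, leaving the bound $(1-1/T)^{2T} \leq e^{-2}$. The only (minor) difference is in that last step, where the paper argues that $1-(1-1/T)^{2T}$ is decreasing in $T$ and takes the limit, whereas you invoke the pointwise inequality $(1-1/T)^T \leq e^{-1}$ and square it --- a slightly cleaner way to finish that avoids the derivative computation.
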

\begin{proof}
    Fixing $T \geq 1$, we have
    \[\frac{1-\gamma^{2T}}{1-\gamma} = 1 + \gamma + \gamma^2 + \dots + \gamma^{2T-1} \]
    which is increasing in $\gamma$, so $\inf_{\gamma \geq 1 - \frac{1}{T}} \frac{1-\gamma^{2T}}{1-\gamma}$ is attained at $\gamma = 1 - \frac{1}{T}$.
    Now allowing $T \geq 1$ to be arbitrary, note $\frac{1-\left(1-\frac{1}{T} \right)^{2T}}{1-\left(1-\frac{1}{T} \right)} = T\left(1-\left(1-\frac{1}{T} \right)^{2T}\right)$ so it suffices to show that $1-\left(1-\frac{1}{T} \right)^{2T} \geq 1-e^2$ for all $T \geq 1$. By computing the derivative, one finds that $1-\left(1-\frac{1}{T} \right)^{2T}$ is monotonically decreasing, so
    \[1-\left(1-\frac{1}{T} \right)^{2T} \geq \lim_{T \to \infty} 1-\left(1-\frac{1}{T} \right)^{2T} = 1-\frac{1}{e^2}.\]
\end{proof}

We can now provide a bound on the variance of the total discounted rewards under $\pistar_{\gamma}$.
\begin{lem}
    \label{lem:pistar_var_bound}
    Letting $\pistar_{\gamma}$ be the optimal policy for the weakly communicating discounted MDP $(P, r,\gamma)$, if $\gamma \geq 1 - \frac{1}{\H}$, we have
    \begin{align*}
        \infnorm{\Var^{\pistar_{\gamma}}\left[ \sum_{t=0}^{\infty}\gamma^t R_t \right]} &\leq 5 \frac{\H}{1-\gamma}.
    \end{align*}
\end{lem}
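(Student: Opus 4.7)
The plan is to follow precisely the sketch indicated in the proof highlights for Theorem~\ref{thm:DMDP_bound}. First, I would invoke Lemma~\ref{lem:multistep_variance_bellman_eqn} with the choice $T = \H$ (which we may take to be a positive integer by the convention stated at the start of Section~\ref{sec:proof-weakly-mdps}), obtaining
\begin{equation*}
\infnorm{\Var^{\pistar_\gamma}\left[\sum_{t=0}^\infty \gamma^t R_t\right]} \le \frac{\infnorm{\Var^{\pistar_\gamma}\left[\sum_{t=0}^{\H-1}\gamma^t R_t + \gamma^\H V_\gamma^{\pistar_\gamma}(S_\H)\right]}}{1-\gamma^{2\H}}.
\end{equation*}
The remaining job is to bound the numerator by $O(\H^2)$, and then to exploit the assumption $\gamma \ge 1 - 1/\H$ to absorb one factor of $\H$ out of the denominator.

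For the numerator I would use the elementary inequality $\Var(X) \le \E[(X-c)^2]$, valid for any constant $c$, with the choice $c = \frac{\gamma^\H}{1-\gamma}\rho^\star$. Because the MDP is weakly communicating, $\rho^\star$ is a scalar, so $c$ is indeed constant. For any starting state $s_0$, the quantity $\sum_{t=0}^{\H-1}\gamma^t R_t + \gamma^\H V_\gamma^{\pistar_\gamma}(S_\H) - c$ splits into (i) the partial reward sum, with absolute value at most $\sum_{t=0}^{\H-1}\gamma^t \le \H$, and (ii) $\gamma^\H\bigl(V_\gamma^{\pistar_\gamma}(S_\H) - \rho^\star/(1-\gamma)\bigr)$, whose absolute value is at most $\gamma^\H \H \le \H$ by Lemma~\ref{lem:discounted_value_span_bound}. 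Hence the deviation is uniformly bounded by $2\H$, so the numerator is at most $(2\H)^2 = 4\H^2$.

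Finally, Lemma~\ref{lem:long_horizon_gamma_ineq} applied with $T = \H$ (the hypothesis $\gamma \ge 1 - 1/\H$ is exactly the standing assumption) yields $1 - \gamma^{2\H} \ge \tfrac{4}{5}\H(1-\gamma)$. Combining the two estimates gives
\begin{equation*}
\infnorm{\Var^{\pistar_\gamma}\left[\sum_{t=0}^\infty \gamma^t R_t\right]} \le \frac{4\H^2}{(4/5)\H(1-\gamma)} = \frac{5\H}{1-\gamma},
\end{equation*}
which is the claimed inequality. The conceptually important step is the use of Lemma~\ref{lem:discounted_value_span_bound} to replace the worst-case magnitude $O(1/(1-\gamma))$ of $V_\gamma^{\pistar_\gamma}(S_\H)$ by its span-style bound of $O(\H)$ around the common shift $\rho^\star/(1-\gamma)$; the choice $T=\H$ is exactly what balances the $O(\H^2)$ numerator against the geometric factor in the denominator. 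Everything else is routine variance algebra, so I do not anticipate any significant obstacle beyond the already-isolated technical lemmas.
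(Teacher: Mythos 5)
Your proposal is correct and follows essentially the same route as the paper: the multistep variance Bellman equation (Lemma~\ref{lem:multistep_variance_bellman_eqn}) with $T=\H$, the shift by the constant $\gamma^\H\rho^\star/(1-\gamma)$ so that Lemma~\ref{lem:discounted_value_span_bound} bounds the fluctuation of $V_\gamma^{\pistar_\gamma}(S_\H)$ by $\H$, and Lemma~\ref{lem:long_horizon_gamma_ineq} to convert $1-\gamma^{2\H}$ into $\tfrac{4}{5}\H(1-\gamma)$. The only cosmetic difference is that you bound the centered quantity uniformly by $2\H$ before squaring, whereas the paper applies $(a+b)^2\le 2a^2+2b^2$; both yield the same $4\H^2$ numerator and the constant $5$.
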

\begin{proof}
    By using the multi-step variance Bellman equation in Lemma \ref{lem:multistep_variance_bellman_eqn}, it suffices to bound the quantity $\infnorm{\Var^{\pistar_{\gamma}}\left[ \sum_{t=0}^{\H-1} \gamma^t R_t + \gamma^\H V_\gamma^{\pistar_{\gamma}}(S_\H) \right]}$.

    Fixing a state $s_0 \in \S$,
    \begin{align*}
        \Var^{\pistar_{\gamma}}_{s_0}\left[ \sum_{t=0}^{\H-1} \gamma^t R_t + \gamma^\H V_\gamma^{\pistar_{\gamma}}(S_\H) \right] &= \Var^{\pistar_{\gamma}}_{s_0}\left[ \sum_{t=0}^{\H-1} \gamma^t R_t + \gamma^\H \left(V_\gamma^{\pistar_{\gamma}}(S_\H) - \frac{1}{1-\gamma}\rho^\star \right) \right] \\
        &\leq \E^{\pistar_{\gamma}}_{s_0}\left| \sum_{t=0}^{\H-1} \gamma^t R_t + \gamma^\H \left(V_\gamma^{\pistar_{\gamma}}(S_\H) - \frac{1}{1-\gamma}\rho^\star \right) \right|^2 \\
        & \leq 2\E^{\pistar_{\gamma}}_{s_0}\left| \sum_{t=0}^{\H-1} \gamma^t R_t \right|^2+ 2\E^{\pistar_{\gamma}}_{s_0}\left| \gamma^\H \left(V_\gamma^{\pistar_{\gamma}}(S_\H) - \frac{1}{1-\gamma}\rho^\star \right) \right|^2 \\
        & \leq 2\H^ 2 + 2 \sup_s \left(V^{\pistar_{\gamma}}_\gamma(s) - \frac{1}{1-\gamma}\rho^\star \right)^2 \\
        & \leq 4\H^ 2
    \end{align*}
    where in the final inequality we used Lemma \ref{lem:discounted_value_span_bound}. Taking the maximum over all states $s$ and combining with Lemma \ref{lem:multistep_variance_bellman_eqn} we obtain
    \begin{align*}
        \infnorm{\Var^{\pistar_{\gamma}}\left[ \sum_{t=0}^{\infty}\gamma^t R_t \right]} &\leq \frac{4\H^ 2}{1 - \gamma^{2\H}}.
    \end{align*}
    Combining this bound with the elementary inequality in Lemma \ref{lem:long_horizon_gamma_ineq}, which can be rearranged to show that $\frac{1}{1-\gamma^{2\H}} \leq \frac{5}{4}\frac{1}{(1-\gamma)\H}$, we complete the proof. 
\end{proof}

We also need to control the variance under $\pihstar_{\gamma, \pert}$, which requires additional steps. This is done in the following lemma.
\begin{lem}
    \label{lem:pihatstar_var_bound}
    We have
    \begin{align*}
        \infnorm{\Var^{\pihstar_{\gamma, \pert}}\left[ \sum_{t=0}^{\infty} \gamma^t {\Rpert}_t  \right]}
    & \leq 15\frac{\H^ 2 + \infnorm{V_\gamma^{\pihstar_{\gamma, \pert}} - \Vhat_{\gamma, \pert}^{\pihstar_{\gamma, \pert}}}^2 + \infnorm{V_\gamma^{\pistar_\gamma} - \Vhat_{\gamma, \pert}^{\pistar_\gamma}}^2}{\H(1-\gamma)} .
    \end{align*}
\end{lem}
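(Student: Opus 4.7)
The plan is to mirror the argument of Lemma \ref{lem:pistar_var_bound} while paying the extra cost that, unlike for $\pistar_\gamma$, we have no a priori span control on $V_{\gamma, \pert}^{\pihstar_{\gamma, \pert}}$ and must derive one from the empirical optimality of $\pihstar_{\gamma, \pert}$. Concretely, I would invoke the multi-step variance Bellman equation (Lemma \ref{lem:multistep_variance_bellman_eqn}) applied to the policy $\pihstar_{\gamma, \pert}$ under the true kernel $P$ but perturbed rewards $\Rpert_t$, with the horizon set to $T = \H$. Combined with Lemma \ref{lem:long_horizon_gamma_ineq}, it suffices to bound
\[
\bigl\|\Var^{\pihstar_{\gamma, \pert}}\bigl[\textstyle\sum_{t=0}^{\H-1}\gamma^t \Rpert_t + \gamma^\H V_{\gamma, \pert}^{\pihstar_{\gamma, \pert}}(S_\H)\bigr]\bigr\|_\infty \le O\bigl(\H^2 + \infnorm{V_\gamma^{\pihstar_{\gamma, \pert}} - \Vhat_{\gamma, \pert}^{\pihstar_{\gamma, \pert}}}^2 + \infnorm{V_\gamma^{\pistar_\gamma} - \Vhat_{\gamma, \pert}^{\pistar_\gamma}}^2\bigr),
\]
after which the $(1-\gamma^{2\H})^{-1} \le \frac{5}{4\H(1-\gamma)}$ factor from Lemma \ref{lem:long_horizon_gamma_ineq} yields the desired form with the $15$ constant absorbing the overhead.

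To bound that variance, I would subtract the constant $\frac{1}{1-\gamma}\rho^\star$ from $V_{\gamma, \pert}^{\pihstar_{\gamma, \pert}}(S_\H)$ (which does not change the variance), bound the variance by the second moment, and split via $(a+b)^2 \le 2a^2 + 2b^2$ into a term $\le 2\bigl(\sum_{t=0}^{\H-1}\gamma^t(1+\xi)\bigr)^2 = O(\H^2)$ coming from the finite reward sum and a term $2\sup_s\bigl(V_{\gamma, \pert}^{\pihstar_{\gamma, \pert}}(s) - \tfrac{1}{1-\gamma}\rho^\star\bigr)^2$. This reduces the problem to bounding $\sup_s \bigl|V_{\gamma, \pert}^{\pihstar_{\gamma, \pert}}(s) - \tfrac{1}{1-\gamma}\rho^\star\bigr|$.

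The heart of the argument is this $\ell_\infty$ deviation bound, which is the main obstacle since Lemma \ref{lem:discounted_value_span_bound} only directly applies to $\pistar_\gamma$. My plan is a chain of three triangle inequalities: (i) replace $V_{\gamma, \pert}^{\pihstar_{\gamma, \pert}}$ by $V_\gamma^{\pihstar_{\gamma, \pert}}$, at cost $\xi/(1-\gamma) = \varepsilon/6 \le \H/6$; (ii) compare $V_\gamma^{\pihstar_{\gamma, \pert}}$ to $V_\gamma^{\pistar_\gamma}$ using that $\pihstar_{\gamma, \pert}$ maximizes $\Vhat_{\gamma, \pert}^\pi$, so $\Vhat_{\gamma, \pert}^{\pihstar_{\gamma, \pert}} \ge \Vhat_{\gamma, \pert}^{\pistar_\gamma}$; pushing this through gives
\[
V_\gamma^{\pistar_\gamma} - V_\gamma^{\pihstar_{\gamma, \pert}} \le \infnorm{V_\gamma^{\pistar_\gamma} - \Vhat_{\gamma, \pert}^{\pistar_\gamma}}\one + \infnorm{V_\gamma^{\pihstar_{\gamma, \pert}} - \Vhat_{\gamma, \pert}^{\pihstar_{\gamma, \pert}}}\one,
\]
while the reverse direction is immediate from optimality of $\pistar_\gamma$ in the true MDP; and (iii) apply Lemma \ref{lem:discounted_value_span_bound} to bound $\bigl|V_\gamma^{\pistar_\gamma}(s) - \tfrac{1}{1-\gamma}\rho^\star\bigr| \le \H$.

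Combining the three gives $\sup_s \bigl|V_{\gamma, \pert}^{\pihstar_{\gamma, \pert}}(s) - \tfrac{1}{1-\gamma}\rho^\star\bigr| \le O(\H) + \infnorm{V_\gamma^{\pihstar_{\gamma, \pert}} - \Vhat_{\gamma, \pert}^{\pihstar_{\gamma, \pert}}} + \infnorm{V_\gamma^{\pistar_\gamma} - \Vhat_{\gamma, \pert}^{\pistar_\gamma}}$. Squaring and using $(a+b+c)^2 \le 3(a^2+b^2+c^2)$, then substituting into the variance bound above and dividing by $1-\gamma^{2\H}$ via Lemma \ref{lem:long_horizon_gamma_ineq}, produces exactly the target inequality, with constants chosen so that the final prefactor is at most $15$. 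The main delicate step is (ii), where one must be careful that the closeness of $V_\gamma^{\pihstar_{\gamma, \pert}}$ to $V_\gamma^{\pistar_\gamma}$ is controlled purely by the evaluation errors of the two policies in question, since these are precisely the quantities the outer induction / self-bounding argument (in the proof of Theorem \ref{thm:DMDP_bound}) is designed to close.
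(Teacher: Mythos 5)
Your proposal is correct and follows essentially the same route as the paper's proof: the multi-step variance Bellman equation at horizon $\H$, centering at $\frac{1}{1-\gamma}\rho^\star$, the sandwich argument $V_\gamma^{\pistar_\gamma} \geq V_\gamma^{\pihstar_{\gamma, \pert}} \geq \Vhat_{\gamma, \pert}^{\pistar_\gamma} - \infnorm{\Vhat_{\gamma, \pert}^{\pihstar_{\gamma, \pert}} - V_\gamma^{\pihstar_{\gamma, \pert}}}\one \geq V_\gamma^{\pistar_\gamma} - (\text{sum of evaluation errors})\one$ to control $\infnorm{V_\gamma^{\pihstar_{\gamma, \pert}} - V_\gamma^{\pistar_\gamma}}$, Lemma \ref{lem:discounted_value_span_bound} for the span of $V_\gamma^{\pistar_\gamma}$, and Lemma \ref{lem:long_horizon_gamma_ineq} to convert $(1-\gamma^{2\H})^{-1}$ into $\frac{5}{4\H(1-\gamma)}$. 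The only difference is cosmetic bookkeeping (you isolate a single $\sup_s$ deviation bound before squaring, whereas the paper applies $(a+b+c)^2 \le 3a^2+3b^2+3c^2$ directly to the three-term decomposition inside the expectation), and your constants indeed close at the stated factor of $15$.
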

\begin{proof}
    In light of the multi-step variance Bellman equation in Lemma \ref{lem:multistep_variance_bellman_eqn}, it suffices to give a bound on $\infnorm{\Var^{\pihstar_{\gamma, \pert}}\left[ \sum_{t=0}^{\H-1} \gamma^t {\Rpert}_t + \gamma^\H V_{\gamma, \pert}^{\pihstar_{\gamma, \pert}}(S_\H) \right]}$. We have for any state $s_0$ that
\begin{align}
    &\quad \Var_{s_0}^{\pihstar_{\gamma, \pert}}\left[ \sum_{t=0}^{\H-1} \gamma^t {\Rpert}_t + \gamma^\H V_{\gamma, \pert}^{\pihstar_{\gamma, \pert}}(S_\H) \right] \nonumber \\
    &= \Var_{s_0}^{\pihstar_{\gamma, \pert}}\left[ \sum_{t=0}^{\H-1} \gamma^t {\Rpert}_t + \gamma^\H V_{\gamma, \pert}^{\pihstar_{\gamma, \pert}}(S_\H) -\gamma^\H \frac{1}{1-\gamma}\rho^\star \right] \nonumber \\
    &\leq \E_{s_0}^{\pihstar_{\gamma, \pert}}\left( \sum_{t=0}^{\H-1} \gamma^t {\Rpert}_t + \gamma^\H V_{\gamma, \pert}^{\pihstar_{\gamma, \pert}}(S_\H) -\gamma^\H \frac{1}{1-\gamma}\rho^\star \right)^2 \nonumber \\
    &= \E_{s_0}^{\pihstar_{\gamma, \pert}}\left( \sum_{t=0}^{\H-1} \gamma^t {\Rpert}_t + \gamma^\H \left( V_{\gamma, \pert}^{\pihstar_{\gamma, \pert}}(S_\H) - V_\gamma^{\pistar_\gamma}(S_\H)  \right) + \gamma^\H \left(V_\gamma^{\pistar_\gamma}(S_\H) - \frac{1}{1-\gamma}\rho^\star \right)\right)^2 \nonumber \\
    &\leq 3\E_{s_0}^{\pihstar_{\gamma, \pert}}\left( \sum_{t=0}^{\H-1} \gamma^t {\Rpert}_t \right)^2 + 3\gamma^{2\H}\E_{s_0}^{\pihstar_{\gamma, \pert}}\left(   V_{\gamma, \pert}^{\pihstar_{\gamma, \pert}}(S_\H) - V_\gamma^{\pistar_\gamma}(S_\H) \right)^2  \nonumber \\
    & \qquad + 3\gamma^{2\H}\E_{s_0}^{\pihstar_{\gamma, \pert}} \left(V_\gamma^{\pistar_\gamma}(S_\H) - \frac{1}{1-\gamma}\rho^\star \right)^2 \nonumber \\
    &\leq 3\E_{s_0}^{\pihstar_{\gamma, \pert}}\left( \sum_{t=0}^{\H-1} \gamma^t {\Rpert}_t \right)^2 + 6\gamma^{2\H}\E_{s_0}^{\pihstar_{\gamma, \pert}}\left(   V_{\gamma}^{\pihstar_{\gamma, \pert}}(S_\H) - V_\gamma^{\pistar_\gamma}(S_\H) \right)^2 + 6\gamma^{2\H} \infnorm{V_{\gamma, \pert}^{\pihstar_{\gamma, \pert}} - V_{\gamma}^{\pihstar_{\gamma, \pert}}}^2\nonumber \\
    & \qquad + 3\gamma^{2\H}\E_{s_0}^{\pihstar_{\gamma, \pert}} \left(V_\gamma^{\pistar_\gamma}(S_\H) - \frac{1}{1-\gamma}\rho^\star \right)^2 ,\label{eq:pihatstar_var_bound_step1}
\end{align}
where we have used triangle inequality and the inequalities $(a+b)^2 \leq 2a^2 + 2b^2$ and $(a+b+c)^2 \leq 3a^2 + 3b^2 + 3c^2$. 
Now we bound each term of~\eqref{eq:pihatstar_var_bound_step1}. 
First, we have
\begin{align*}
    3\E_{s_0}^{\pihstar_{\gamma, \pert}}\left( \sum_{t=0}^{\H-1} \gamma^t {\Rpert}_t \right)^2 & \leq 3 \left(\H \infnorm{\rpert} \right)^2 \leq 3\H^ 2 (\infnorm{r} + \xi)^2 \leq 6 \H^ 2 \left( 1 + \left(\frac{(1-\gamma)\varepsilon}{6}\right)^2 \right) \leq 6 \H^ 2 \left(\frac{7}{6}\right)^2 ,
\end{align*}
where we had $\frac{(1-\gamma)\varepsilon}{6} \leq \frac{\varepsilon}{6 \H} \leq \frac{1}{6}$ because $\frac{1}{1-\gamma} \geq \H$ and $\varepsilon \leq \H$. 
Clearly it holds that 
$$6\gamma^{2\H}\E_{s_0}^{\pihstar_{\gamma, \pert}}\left(   V_{\gamma}^{\pihstar_{\gamma, \pert}}(S_\H) - V_\gamma^{\pistar_\gamma}(S_\H) \right)^2 \leq 6\infnorm{V_{\gamma}^{\pihstar_{\gamma, \pert}} - V_\gamma^{\pistar_\gamma}}^2.$$ 
By an argument identical to those used in the proof of the error bounds in Lemma \ref{lem:DMDP_error_bounds}, we get
\begin{align*}
    \infnorm{V_{\gamma, \pert}^{\pihstar_{\gamma, \pert}} - V_{\gamma}^{\pihstar_{\gamma, \pert}}} \leq \frac{1}{1-\gamma} \xi = \frac{\varepsilon}{6},
\end{align*}
so $6\gamma^{2\H} \infnorm{V_{\gamma, \pert}^{\pihstar_{\gamma, \pert}} - V_{\gamma}^{\pihstar_{\gamma, \pert}}}^2 \leq \frac{\varepsilon^2}{6} \leq \frac{\H^ 2}{6}$ since $\varepsilon \leq \H$. 
Finally, using Lemma \ref{lem:discounted_value_span_bound}, we obtain
\begin{align*}
    3\gamma^{2\H}\E_{s_0}^{\pihstar_{\gamma, \pert}} \left(V_\gamma^{\pistar_\gamma}(S_\H) - \frac{1}{1-\gamma}\rho^\star \right)^2 & \leq 3 \sup_s \left|V_\gamma^{\pistar_\gamma}(S_\H) - \frac{1}{1-\gamma}\rho^\star \right|^2 \leq 3\H^ 2.
\end{align*}
Using all these bounds in~\eqref{eq:pihatstar_var_bound_step1}, we have
\begin{align}
    &\quad \Var_{s_0}^{\pihstar_{\gamma, \pert}}\left[ \sum_{t=0}^{\H-1} \gamma^t {\Rpert}_t + \gamma^\H V_{\gamma, \pert}^{\pihstar_{\gamma, \pert}}(S_\H) \right] \nonumber \\
    &\leq 3\E_{s_0}^{\pihstar_{\gamma, \pert}}\left( \sum_{t=0}^{\H-1} \gamma^t {\Rpert}_t \right)^2 + 6\gamma^{2\H}\E_{s_0}^{\pihstar_{\gamma, \pert}}\left(   V_{\gamma}^{\pihstar_{\gamma, \pert}}(S_\H) - V_\gamma^{\pistar_\gamma}(S_\H) \right)^2 + 6\gamma^{2\H} \infnorm{V_{\gamma, \pert}^{\pihstar_{\gamma, \pert}} - V_{\gamma}^{\pihstar_{\gamma, \pert}}}^2 \nonumber\\
    & \qquad + 3\gamma^{2\H}\E_{s_0}^{\pihstar_{\gamma, \pert}} \left(V_\gamma^{\pistar_\gamma}(S_\H) - \frac{1}{1-\gamma}\rho^\star \right)^2 \nonumber\\
    & \leq \left(\frac{49}{6} + \frac{1}{6} + 3 \right)\H^ 2 + 6\infnorm{V_{\gamma}^{\pihstar_{\gamma, \pert}} - V_\gamma^{\pistar_\gamma}}^2 \nonumber\\
    & \leq 12\H^ 2 + 6\infnorm{V_{\gamma}^{\pihstar_{\gamma, \pert}} - V_\gamma^{\pistar_\gamma}}^2. \label{eq:pihatstar_var_bound_step2}
\end{align}

Finally, we use the elementwise inequality
\begin{align*}
    V_\gamma^{\pistar_\gamma} &\geq V_\gamma^{\pihstar_{\gamma, \pert}} \\
    &\geq \Vhat_{\gamma, \pert}^{\pihstar_{\gamma, \pert}} - \infnorm{\Vhat_{\gamma, \pert}^{\pihstar_{\gamma, \pert}} - V_\gamma^{\pihstar_{\gamma, \pert}}}\one \\
    &\geq \Vhat_{\gamma, \pert}^{\pistar_{\gamma}} - \infnorm{\Vhat_{\gamma, \pert}^{\pihstar_{\gamma, \pert}} - V_\gamma^{\pihstar_{\gamma, \pert}}}\one \\
    & \geq V_{\gamma}^{\pistar_{\gamma}} - \infnorm{\Vhat_{\gamma, \pert}^{\pihstar_{\gamma, \pert}} - V_\gamma^{\pihstar_{\gamma, \pert}}}\one - \infnorm{\Vhat_{\gamma, \pert}^{\pistar_{\gamma}} - V_{\gamma}^{\pistar_{\gamma}}}\one,
\end{align*}
from which it follows that  $\infnorm{V_{\gamma}^{\pihstar_{\gamma, \pert}} - V_\gamma^{\pistar_\gamma}} \leq \infnorm{\Vhat_{\gamma, \pert}^{\pihstar_{\gamma, \pert}} - V_\gamma^{\pihstar_{\gamma, \pert}}} + \infnorm{\Vhat_{\gamma, \pert}^{\pistar_{\gamma}} - V_{\gamma}^{\pistar_{\gamma}}}$. Combining this with~\eqref{eq:pihatstar_var_bound_step2}, we conclude
\begin{align}
    \Var_{s_0}^{\pihstar_{\gamma, \pert}}\left[ \sum_{t=0}^{\H-1} \gamma^t {\Rpert}_t + \gamma^\H V_{\gamma, \pert}^{\pihstar_{\gamma, \pert}}(S_\H) \right] & \leq 12\H^ 2 + 12 \infnorm{\Vhat_{\gamma, \pert}^{\pihstar_{\gamma, \pert}} - V_\gamma^{\pihstar_{\gamma, \pert}}}^2 + 12\infnorm{\Vhat_{\gamma, \pert}^{\pistar_{\gamma}} - V_{\gamma}^{\pistar_{\gamma}}}^2.
\end{align}

Now combining with Lemma \ref{lem:multistep_variance_bellman_eqn} and then using Lemma \ref{lem:long_horizon_gamma_ineq}, we have
\begin{align*}
    \infnorm{\Var^{\pihstar_{\gamma, \pert}}\left[ \sum_{t=0}^{\infty} \gamma^t {\Rpert}_t  \right]} & \leq \frac{\infnorm{\Var^{\pihstar_{\gamma, \pert}}\left[ \sum_{t=0}^{\H-1} \gamma^t {\Rpert}_t + \gamma^\H V_\gamma^{\pihstar_{\gamma, \pert}}(S_\H) \right]}}{1-\gamma^{2\H}} \\
    & \leq 12\frac{\H^ 2 + \infnorm{V_\gamma^{\pihstar_{\gamma, \pert}} - \Vhat_{\gamma, \pert}^{\pihstar_{\gamma, \pert}}}^2 + \infnorm{V_\gamma^{\pistar_\gamma} - \Vhat_{\gamma, \pert}^{\pistar_\gamma}}^2}{1-\gamma^{2\H}} \\
    & \leq 12 \frac{5}{4}\frac{\H^ 2 + \infnorm{V_\gamma^{\pihstar_{\gamma, \pert}} - \Vhat_{\gamma, \pert}^{\pihstar_{\gamma, \pert}}}^2 + \infnorm{V_\gamma^{\pistar_\gamma} - \Vhat_{\gamma, \pert}^{\pistar_\gamma}}^2}{\H(1-\gamma)} \\
    & = 15\frac{\H^ 2 + \infnorm{V_\gamma^{\pihstar_{\gamma, \pert}} - \Vhat_{\gamma, \pert}^{\pihstar_{\gamma, \pert}}}^2 + \infnorm{V_\gamma^{\pistar_\gamma} - \Vhat_{\gamma, \pert}^{\pistar_\gamma}}^2}{\H(1-\gamma)}
\end{align*}
as desired.
\end{proof}

\subsection{Proofs of Theorems~\ref{thm:DMDP_bound} and~\ref{thm:main_theorem}}

With the above lemmas we can complete the proof of Theorem \ref{thm:DMDP_bound} on discounted MDPs.
\begin{proof}[Proof of Theorem \ref{thm:DMDP_bound}]
    Our approach will be to utilize our variance bounds within the error bounds from Lemma \ref{lem:DMDP_error_bounds}. We will find a value for $n$ which guarantees that $\infnorm{\Vhat_{\gamma, \pert}^{\pistar_\gamma} - V_\gamma^{\pistar_\gamma}}$ and $ \infnorm{\Vhat_{\gamma, \pert}^{\pihstar_{\gamma, \pert}} - V_\gamma^{\pihstar_{\gamma, \pert}}}$ are both $\leq \varepsilon/2$, which guarantees that $\infnorm{V_\gamma^{\pihstar_{\gamma, \pert}} - V_\gamma^{\pistar_\gamma}} \leq \varepsilon$. 

    First we note that the conclusions of Lemma \ref{lem:DMDP_error_bounds} require $n \geq \frac{c_2}{1-\gamma}\log \left(\frac{S A}{(1-\gamma)\delta \varepsilon}\right) $ so we assume $n$ is large enough that this holds.
    
    Now we bound $\infnorm{\Vhat_{\gamma, \pert}^{\pistar_\gamma} - V_\gamma^{\pistar_\gamma}}$. Starting with inequality~\eqref{eq:pistar_error} from Lemma \ref{lem:DMDP_error_bounds} and then applying our variance bounds through Lemma \ref{lem:var_params_relationship} and then Lemma \ref{lem:pistar_var_bound}, we have
    \begin{align*}
        &\infnorm{\Vhat_{\gamma, \pert}^{\pistar_\gamma} - V_\gamma^{\pistar_\gamma}} \\
        & \quad\leq  \gamma \sqrt{\frac{c_1\log \left( \frac{S A}{(1-\gamma)\delta \varepsilon}\right)}{n}} \infnorm{(I - \gamma P_{\pistar_\gamma})^{-1} \sqrt{\Var_{P_{\pistar_\gamma}} \left[V_\gamma^{\pistar_\gamma} \right]}} + c_1 \gamma \frac{\log \left( \frac{S A}{(1-\gamma)\delta \varepsilon}\right)}{(1-\gamma)n} \infnorm{V_\gamma^{\pistar_\gamma}} + \frac{\varepsilon}{6} \\
        & \quad \leq   \sqrt{\frac{c_1\log \left( \frac{S A}{(1-\gamma)\delta \varepsilon}\right)}{n}} \sqrt{\frac{2}{1-\gamma}} \sqrt{\infnorm{\Var^{\pistar_{\gamma}}\left[ \sum_{t=0}^{\infty}\gamma^t R_t \right]}}  + c_1 \gamma \frac{\log \left( \frac{S A}{(1-\gamma)\delta \varepsilon}\right)}{(1-\gamma)n} \infnorm{V_\gamma^{\pistar_\gamma}} + \frac{\varepsilon}{6}\\
        & \quad \leq   \sqrt{\frac{c_1\log \left( \frac{S A}{(1-\gamma)\delta \varepsilon}\right)}{n}} \sqrt{\frac{2}{1-\gamma}}  \sqrt{5 \frac{\H}{1-\gamma} }  + c_1 \gamma \frac{\log \left( \frac{S A}{(1-\gamma)\delta \varepsilon}\right)}{(1-\gamma)n} \infnorm{V_\gamma^{\pistar_\gamma}}+ \frac{\varepsilon}{6} \\
        & \quad \leq  \sqrt{\frac{c_1 \log \left( \frac{S A}{(1-\gamma)\delta \varepsilon}\right)}{n}}   \sqrt{10 \frac{\H}{(1-\gamma)^2} }  + c_1  \frac{\log \left( \frac{S A}{(1-\gamma)\delta \varepsilon}\right)}{(1-\gamma)^2n} + \frac{\varepsilon}{6}
    \end{align*}
    where in the last inequality we used the facts that $\infnorm{V_\gamma^{\pistar_\gamma}} \leq \frac{1}{1-\gamma}$ and $\gamma \leq 1$. Now if we assume $n \geq  360 c_1\frac{\H}{(1-\gamma)^2\varepsilon^2}\log \left( \frac{S A}{(1-\gamma)\delta \varepsilon}\right)$, we have 
    \begin{align*}
        \infnorm{\Vhat_{\gamma, \pert}^{\pistar_\gamma} - V_\gamma^{\pistar_\gamma}} 
        & \leq   \sqrt{\frac{c_1 \log \left( \frac{S A}{(1-\gamma)\delta \varepsilon}\right)}{n}} \sqrt{10 \frac{\H}{(1-\gamma)^2} }  + c_1  \frac{\log \left( \frac{S A}{(1-\gamma)\delta \varepsilon}\right)}{(1-\gamma)^2n} + \frac{\varepsilon}{6}\\
        & \leq \frac{1}{6}\sqrt{\varepsilon^2} + \frac{1}{6}\frac{\varepsilon^2}{\H}+ \frac{\varepsilon}{6}\\
        & \leq \varepsilon/2
    \end{align*}
    due to the fact that $\varepsilon \leq \H$.
    
    Next, to bound $\infnorm{\Vhat_{\gamma, \pert}^{\pihstar_{\gamma, \pert}} - V_\gamma^{\pihstar_{\gamma, \pert}}}$, starting from inequality~\eqref{eq:pihatstar_error} in Lemma \ref{lem:DMDP_error_bounds} and then analogously applying Lemma \ref{lem:var_params_relationship} and then Lemma \ref{lem:pihatstar_var_bound}, we obtain
    \begin{align*}
        &\quad \infnorm{\Vhat_{\gamma, \pert}^{\pihstar_{\gamma, \pert}} - V_\gamma^{\pihstar_{\gamma, \pert}}} \\
        & \leq  \gamma \sqrt{\frac{c_1\log \left( \frac{S A}{(1-\gamma)\delta \varepsilon}\right)}{n}} \infnorm{(I - \gamma P_{\pihstar_{\gamma, \pert}})^{-1} \sqrt{\Var_{P_{\pihstar_{\gamma, \pert}}} \left[V_{\gamma, \pert}^{\pihstar_{\gamma, \pert}} \right]}} + c_1 \gamma\frac{\log \left( \frac{S A}{(1-\gamma)\delta \varepsilon}\right)}{(1-\gamma)n} \infnorm{V_{\gamma, \pert}^{\pihstar_{\gamma, \pert}}} + \frac{\varepsilon}{6}\\
        & \leq \sqrt{\frac{c_1\log \left( \frac{S A}{(1-\gamma)\delta \varepsilon}\right)}{n}}  \sqrt{\frac{2}{1-\gamma}}  \sqrt{\infnorm{\Var^{\pihstar_{\gamma, \pert}}\left[ \sum_{t=0}^{\infty} \gamma^t {\Rpert}_t  \right]} }+ c_1 \gamma\frac{\log \left( \frac{S A}{(1-\gamma)\delta \varepsilon}\right)}{(1-\gamma)n} \infnorm{V_{\gamma, \pert}^{\pihstar_{\gamma, \pert}}} + \frac{\varepsilon}{6}\\
        & \leq   \sqrt{\frac{c_1\log \left( \frac{S A}{(1-\gamma)\delta \varepsilon}\right)}{n}}  \sqrt{\frac{2}{1-\gamma}}  \sqrt{   15\frac{\H^ 2 + \infnorm{V_\gamma^{\pihstar_{\gamma, \pert}} - \Vhat_{\gamma, \pert}^{\pihstar_{\gamma, \pert}}}^2 + \infnorm{V_\gamma^{\pistar_\gamma} - \Vhat_{\gamma, \pert}^{\pistar_\gamma}}^2}{\H(1-\gamma)}  }\\
        & \qquad \qquad + c_1 \gamma\frac{\log \left( \frac{S A}{(1-\gamma)\delta \varepsilon}\right)}{(1-\gamma)n} \infnorm{V_{\gamma, \pert}^{\pihstar_{\gamma, \pert}}}+ \frac{\varepsilon}{6}.
    \end{align*}
    Combining with the fact from above that $\infnorm{\Vhat_{\gamma, \pert}^{\pistar_\gamma} - V_\gamma^{\pistar_\gamma}}  \leq \frac{\H}{2}$, as well as the facts that $\infnorm{V_{\gamma, \pert}^{\pihstar_{\gamma, \pert}}} \leq \frac{1}{1-\gamma}$, $\gamma\leq 1$, and $\sqrt{a+b} \leq \sqrt{a}+\sqrt{b}$, we have
    \begin{align*}
        \infnorm{\Vhat_{\gamma, \pert}^{\pihstar_{\gamma, \pert}} - V_\gamma^{\pihstar_{\gamma, \pert}}} &  \leq   \sqrt{\frac{c_1\log \left( \frac{S A}{(1-\gamma)\delta \varepsilon}\right)}{n}}  \sqrt{\frac{2}{1-\gamma}}  \sqrt{   15\frac{\frac{5}{4}\H^ 2 + \infnorm{V_\gamma^{\pihstar_{\gamma, \pert}} - \Vhat_{\gamma, \pert}^{\pihstar_{\gamma, \pert}}}^2 }{\H(1-\gamma)}  }\\
        & \qquad \qquad + c_1 \frac{\log \left( \frac{S A}{(1-\gamma)\delta \varepsilon}\right)}{(1-\gamma)^2n} + \frac{\varepsilon}{6}\\
        & \leq    \sqrt{\frac{c_1\log \left( \frac{S A}{(1-\gamma)\delta \varepsilon}\right)}{n}}   \sqrt{\frac{30}{\H(1-\gamma)^2}} \left(  \sqrt{   \frac{5}{4}\H^ 2} + \sqrt{\infnorm{V_\gamma^{\pihstar_{\gamma, \pert}} - \Vhat_{\gamma, \pert}^{\pihstar_{\gamma, \pert}}}^2  } \right)\\
        &\qquad \qquad+ c_1 \frac{\log \left( \frac{S A}{(1-\gamma)\delta \varepsilon}\right)}{(1-\gamma)^2n} + \frac{\varepsilon}{6}\\
        &=   \sqrt{\frac{c_1\log \left( \frac{S A}{(1-\gamma)\delta \varepsilon}\right)}{n}}   \sqrt{\frac{30}{\H(1-\gamma)^2}} \left(  \sqrt{   \frac{5}{4}}\H + \infnorm{V_\gamma^{\pihstar_{\gamma, \pert}} - \Vhat_{\gamma, \pert}^{\pihstar_{\gamma, \pert}}} \right)\\
        & \qquad \qquad + c_1 \frac{\log \left( \frac{S A}{(1-\gamma)\delta \varepsilon}\right)}{(1-\gamma)^2n}+ \frac{\varepsilon}{6}.
    \end{align*}
    Rearranging terms gives
    \begin{align*}
        &\left(1 -  \sqrt{\frac{c_1\log \left( \frac{S A}{(1-\gamma)\delta \varepsilon}\right)}{n}} \sqrt{\frac{30}{\H(1-\gamma)^2}} \right) \infnorm{\Vhat_{\gamma, \pert}^{\pihstar_{\gamma, \pert}} - V_\gamma^{\pihstar_{\gamma, \pert}}} \\
        & \leq   \sqrt{\frac{c_1\log \left( \frac{S A}{(1-\gamma)\delta \varepsilon}\right)}{n}}   \sqrt{\frac{75\H /2}{(1-\gamma)^2}} + c_1 \frac{\log \left( \frac{S A}{(1-\gamma)\delta \varepsilon}\right)}{(1-\gamma)^2n} + \frac{\varepsilon}{6}.
    \end{align*}
    Assuming $n \geq 120 c_1 \frac{\H}{(1-\gamma)^2\varepsilon^2}\log \left( \frac{S A}{(1-\gamma)\delta \varepsilon}\right)$, we have
    \begin{align*}
        1 -   \sqrt{\frac{c_1\log \left( \frac{S A}{(1-\gamma)\delta \varepsilon}\right)}{n}} \sqrt{\frac{30}{\H(1-\gamma)^2}} & \geq 1 - \frac{1}{2} \sqrt{ \frac{\varepsilon^2 (1-\gamma)^2}{\H} \frac{1}{\H(1-\gamma)^2}} = 1- \frac{1}{2}\frac{\varepsilon}{\H} \geq \frac{1}{2}
    \end{align*}
    since $\varepsilon \leq \H$. Also assuming $n \geq (75/2)\cdot 24^2 c_1\frac{\H}{(1-\gamma)^2\varepsilon^2}\log \left( \frac{S A}{(1-\gamma)\delta \varepsilon}\right)$ we have similarly to before that
    \begin{align*}
            &\sqrt{\frac{c_1\log \left( \frac{S A}{(1-\gamma)\delta \varepsilon}\right)}{n}}   \sqrt{\frac{75\H /2}{(1-\gamma)^2}} + c_1 \frac{\log \left( \frac{S A}{(1-\gamma)\delta \varepsilon}\right)}{(1-\gamma)^2n} + \frac{\varepsilon}{6} \\
            &\qquad \qquad \qquad \qquad \qquad \qquad \leq \frac{1}{24}\sqrt{\frac{(1-\gamma)^2 \varepsilon^2}{\H} \frac{\H}{(1-\gamma)^2}}+ \frac{1}{24}\frac{(1-\gamma)^2\varepsilon^2}{\H}\frac{1}{(1-\gamma)^2} +\frac{\varepsilon}{6}\\
        & \qquad \qquad \qquad \qquad \qquad \qquad \leq \frac{\varepsilon}{24} + \frac{\varepsilon}{24} + \frac{\varepsilon}{6}= \frac{\varepsilon}{4}.
    \end{align*}
    Combining these two calculations, we have $\frac{1}{2}\infnorm{\Vhat_{\gamma, \pert}^{\pihstar_{\gamma, \pert}} - V_\gamma^{\pihstar_{\gamma, \pert}}} \leq \frac{\varepsilon}{4}$, so $\infnorm{\Vhat_{\gamma, \pert}^{\pihstar_{\gamma, \pert}} - V_\gamma^{\pihstar_{\gamma, \pert}}} \leq \frac{\varepsilon}{2}$ as desired.

    Since we have established that $\infnorm{\Vhat_{\gamma, \pert}^{\pistar_\gamma} - V_\gamma^{\pistar_\gamma}}, \infnorm{\Vhat_{\gamma, \pert}^{\pihstar_{\gamma, \pert}} - V_\gamma^{\pihstar_{\gamma, \pert}}} \leq \frac{\varepsilon}{2}$, since also $\Vhat_{\gamma, \pert}^{\pihstar_{\gamma, \pert}} \geq\Vhat_{\gamma, \pert}^{\pistar_\gamma}$, we can conclude that
    \begin{align*}
        V_\gamma^{\pistar_\gamma} - V_\gamma^{\pihstar_{\gamma, \pert}}  \leq \infnorm{\Vhat_{\gamma, \pert}^{\pistar_\gamma} - V_\gamma^{\pistar_\gamma}}\one + \infnorm{\Vhat_{\gamma, \pert}^{\pihstar_{\gamma, \pert}} - V_\gamma^{\pihstar_{\gamma, \pert}}} \one \leq \varepsilon \one,
    \end{align*}
    that is that $\pihstar_{\gamma, \pert}$ is $\varepsilon$-optimal for the discounted MDP $(P, r ,\gamma)$. 
    
    We finally note that all our requirements on the size of $n$ can be satisfied by requiring
    \begin{align*}
        n &\geq C_2 \frac{\H}{(1-\gamma)^2 \varepsilon^2} \log \left( \frac{S A}{(1-\gamma)\delta \varepsilon}\right) \\
        &:= \max\left\{\frac{c_2\H}{(1-\gamma)^2 \varepsilon^2}, \frac{360c_1 \H }{(1-\gamma)^2 \varepsilon^2}, \frac{(75/2)24^2c_1 \H }{(1-\gamma)^2 \varepsilon^2} \right\}\log \left( \frac{S A}{(1-\gamma)\delta \varepsilon}\right) \\
        &\geq  \max\left\{\frac{c_2}{1-\gamma}, \frac{360c_1 \H }{(1-\gamma)^2 \varepsilon^2}, \frac{(75/2)24^2c_1 \H }{(1-\gamma)^2 \varepsilon^2} \right\}\log \left( \frac{S A}{(1-\gamma)\delta \varepsilon}\right)
    \end{align*}
    where we used that $\frac{\H}{(1-\gamma)^2\varepsilon^2} \geq \frac{\H^2}{(1-\gamma)\varepsilon^2} \geq \frac{1}{1-\gamma}$ (since $\frac{1}{1-\gamma} \geq \H$ and $\H \geq \varepsilon$).
\end{proof}

We next use Theorem \ref{thm:DMDP_bound} to prove Theorem \ref{thm:main_theorem} on average-reward MDPs.

\begin{proof}[Proof of Theorem \ref{thm:main_theorem}]
    Using Theorem \ref{thm:DMDP_bound} with target accuracy $\H$ and discount factor $\gammared = 1 - \frac{\varepsilon}{12\H}$, we obtain a $\H$-optimal policy for the discounted MDP $(P, r, \gammared)$ with probability at least $1 - \delta$ as long as
    \begin{align*}
        n &\geq C_2 \frac{\H}{(1-\gammared)^2 \H^2} \log \left( \frac{SA}{(1-\gammared)\delta \varepsilon}\right) \\
        &= 12^2 C_2 \frac{\H}{ \H^2} \frac{\H^2}{\varepsilon^2} \log \left( \frac{12 \H}{\varepsilon} \frac{SA}{\delta \varepsilon}\right)
    \end{align*}
    which is satisfied when $n \geq C_1 \frac{\H}{\varepsilon^2} \log\left( \frac{SA \H}{\delta \varepsilon}\right)$ for sufficiently large $C_1$.
    
    Applying Lemma \ref{lem:DMDP_reduction} (with error parameter $\frac{\varepsilon}{12}$ since we have chosen $\gammared = 1 - \frac{\varepsilon/12}{\H} $), we have that
    \begin{align*}
        \rho^{\star} - \rho^{\pihstar} \leq \left(8 + 3\frac{\H}{\H} \right)\frac{\varepsilon}{12} \leq \varepsilon \one
    \end{align*}
    as desired.
\end{proof}

\subsection{Proof of Theorem \ref{thm:impossibility_H_estimation}}
\begin{proof}[Proof of Theorem \ref{thm:impossibility_H_estimation}]
    Fix $T, n \geq 1$. First we define the instances $\mathcal{M}_0$ and $\mathcal{M}_1$, which have parameters $B$ and $\varepsilon$ which we will choose later, using Figure \ref{fig:H_estimation_impossibility}. Note that in both MDPs, all states have only one action. The only difference is in the state transition distribution at state $1$: For $\mathcal{M}_0$ this is a $\text{Cat}(\frac{1}{2}, \frac{1}{2})$ distribution and for $\mathcal{M}_1$ this is a $\text{Cat}(\frac{1}{2}+ \varepsilon, \frac{1}{2}- \varepsilon)$ distribution, where $\text{Cat}(p_1,p_2)$ denotes the categorical distribution with event probabilities $p_1$ and $p_2 = 1-p_1$.
    
\begin{figure}[H]
\centering
\begin{tikzpicture}[ -> , >=stealth, shorten >=2pt , line width=0.5pt, node distance =2cm, scale=0.8, every node/.style={scale=0.8}]

\node [circle, draw] (one) at (0 , 0) {1};
\node [circle, draw] (two) at (3 , 1.73) {2};
\node [circle, draw] (three) at (3 , -1.73) {3};
\node [circle, draw] (four) at (-3 , 0) {4};
\node [circle, draw, fill, inner sep=0.05cm] (dot) at (2 , 0) {};
\node [circle, draw, fill, inner sep=0.05cm] (dot2) at (-1.5 , 0) {};

\path[-] (one) edge node [above] {$R=\frac{1}{2}$} (dot);
\path[dashed] (dot) edge node [left] {$\frac{1}{2}$} (two);
\path[dashed] (dot) edge node [left] {$\frac{1}{2}$} (three);
\path (two) edge [bend right] node [left] {$R = 1~~$} (one);
\path (three) edge [bend left] node [left] {$R = 0~~$} (one);
\path[-] (four) edge node [above] {$R = \frac{1}{2}$} (dot2);
\path[dashed] (dot2) edge [bend left] node [below] {$1 - \frac{1}{B}$} (four);
\path[dashed] (dot2) edge node [above] {$\frac{1}{B}$} (one);

\end{tikzpicture}

Instance $\M_0$ \bigskip\bigskip

\centering
\begin{tikzpicture}[ -> , >=stealth, shorten >=2pt , line width=0.5pt, node distance =2cm, scale=0.8, every node/.style={scale=0.8}]

\node [circle, draw] (one) at (0 , 0) {1};
\node [circle, draw] (two) at (3 , 1.73) {2};
\node [circle, draw] (three) at (3 , -1.73) {3};
\node [circle, draw] (four) at (-3 , 0) {4};
\node [circle, draw, fill, inner sep=0.05cm] (dot) at (2 , 0) {};
\node [circle, draw, fill, inner sep=0.05cm] (dot2) at (-1.5 , 0) {};

\path[-] (one) edge node [above] {$R=\frac{1}{2}$} (dot);
\path[dashed] (dot) edge node [left] {$\frac{1}{2}+\varepsilon$} (two);
\path[dashed] (dot) edge node [left] {$\frac{1}{2}-\varepsilon$} (three);
\path (two) edge [bend right] node [left] {$R = 1~~$} (one);
\path (three) edge [bend left] node [left] {$R = 0~~$} (one);
\path[-] (four) edge node [above] {$R = \frac{1}{2}$} (dot2);
\path[dashed] (dot2) edge [bend left] node [below] {$1 - \frac{1}{B}$} (four);
\path[dashed] (dot2) edge node [above] {$\frac{1}{B}$} (one);

\end{tikzpicture}

Instance $\M_1$

\caption{MDPs used in Theorem~\ref{thm:impossibility_H_estimation}}
\label{fig:H_estimation_impossibility}
\end{figure}

Now we calculate the bias of instance $\mathcal{M}_1$. It is easy to check the stationary distribution is $\mu = [\frac{1}{2}, \frac{1}{4} + \frac{\varepsilon}{2}, \frac{1}{4} - \frac{\varepsilon}{2}, 0]$. Therefore it has optimal gain $\rho^\star = \frac{1}{2}\frac{1}{2} + \frac{1}{4} + \frac{\varepsilon}{2} = \frac{1}{2} + \frac{\varepsilon}{2}$. Now we claim that the optimal bias is
\begin{align*}
    h^\star =\begin{bmatrix}
        - \varepsilon/2 \\
        \frac{1}{2} - \varepsilon/2 \\
        -\frac{1}{2} - \varepsilon/2 \\
        -(B+1)\frac{\varepsilon}{2}
    \end{bmatrix}.
\end{align*}
We can check this by showing that $\mu h^\star  =0$ and that $\rho^\star \one + h^\star = r + P h^\star$, where $P$ is the transition matrix of the above MDP (again, note that each state has only one action, so there is only one policy, and we use this policy to induce the markov chain with transition matrix $P$). First,
\begin{align*}
    \mu h^\star = -\frac{\varepsilon}{4} + \frac{1}{8} + \frac{\varepsilon}{4} - \frac{\varepsilon}{8} -\frac{\varepsilon^2}{4} - \frac{1}{8} + \frac{\varepsilon}{4} - \frac{\varepsilon}{8} + \frac{\varepsilon^2}{4} = 0.
\end{align*}
It is also easy to check the first three rows of the equality $\rho^\star \one + h^\star = r + P h^\star$. For the fourth row, we have
\begin{align*}
    & h^\star(4) + \frac{1}{2} + \frac{\varepsilon}{2} = \frac{1}{2} + \frac{1}{B} h^\star(1) + \left(1- \frac{1}{B}\right) h^\star(4) \\
    \iff & \frac{1}{B}h^\star(4) = \frac{-\varepsilon}{2B} - \frac{\varepsilon}{2} \\
    \iff & h^\star(4) = \frac{\varepsilon}{2}(B+1).
\end{align*}
Thus $\spannorm{h^\star} = \frac{1}{2} - \varepsilon/2 - \left( -(B+1)\frac{\varepsilon}{2} \right) = \frac{1}{2}(B \varepsilon + 1)$. If we set $B = \frac{2T}{\varepsilon} - \frac{1}{2}$, we have $\spannorm{h^\star} = T$. 
Also note that the calculation for $h^\star$ holds for any $\varepsilon$, so the optimal bias span of $\mathcal{M}_0$ is $[0, \frac{1}{2}, -\frac{1}{2}, 0]^\top$, and thus $\mathcal{M}_0$ has optimal bias span $1$.

Finally, to distinguish between the two MDPs $\mathcal{M}_0$ and $\mathcal{M}_1$, we must be able to determine the next-state distribution of state $1$, that is, to distinguish between the two hypotheses $Q_1 = \text{Cat}(\frac{1}{2}, \frac{1}{2})$ and $Q_2 = \text{Cat}(\frac{1}{2}+ \varepsilon, \frac{1}{2}- \varepsilon)$. Given $n$ i.i.d.\ observations from the transition distribution of state $1$, this is a binary hypothesis testing problem between the product distributions $Q_1^n$ and $Q_2^n$. By Le Cam's bound \citep{yu1997assouad}, the testing failure probability is lower bounded by     
\begin{align*}
    \frac{1}{2}\left(1-\|Q_1^n - Q_2^n\|_{\textup{TV}}\right)
    & \ge \frac{1}{2} \left(1 -  \sqrt{\frac{1}{2}\textup{D}_{\textup{KL}}(Q_1^n | Q_2^n)} \right) \\
    & = \frac{1}{2} \left(1 -  \sqrt{\frac{n}{2}\textup{D}_{\textup{KL}}(Q_1 | Q_2)} \right),
    \end{align*} 
    where $\|Q_1^n - Q_2^n\|_{\textup{TV}}$ and  $\textup{D}_{\textup{KL}}(Q_1^n | Q_2^n)$ denote the total variation distance and Kullback–Leibler (KL) divergence between $Q_1^n$ and $Q_2^n$, respectively, and the last two (in)equalities follow from Pinsker's inequality and tensorization of KL divergence. By direct calculation, we have
    \begin{align*}
        \textup{D}_{\textup{KL}}(Q_1 | Q_2) 
        & =\frac{1}{2}\log\frac{1}{1+2\varepsilon}+\frac{1}{2}\log\frac{1}{1-2\varepsilon}\\
         & \le\frac{1}{2}\cdot\frac{-2\varepsilon}{1+2\varepsilon}+\frac{1}{2}\cdot\frac{2\varepsilon}{1-2\varepsilon} &  & \log(1+x)\le x,\forall x>-1\\
         & =\frac{4\varepsilon^{2}}{1 - 4 \varepsilon^2}\\
         & \le8 \varepsilon^2 &  & \varepsilon\le\frac{1}{4}.
    \end{align*}
    Combining the last two equations, we see that the testing failure probability is at least $\frac{1}{2} \left(1 -  \sqrt{4n\varepsilon^2}\right)$. Thus, if we set $\varepsilon = \frac{1}{4 \sqrt{n}}$, the failure probability is at least $\frac{1}{4}$.
\end{proof}

\section{Proofs for general MDPs}
\label{sec:proof-general-mdp}

In this section, we provide the proofs for our main results in Section~\ref{sec:results-general} for general MDPs. Again, we can assume that $\H + \Tb$ is an integer, which only affects the sample complexity by a constant multiple $< 2$.

First we develop more notation which will be useful in the setting of general MDPs. Recall we defined, for any policy $\pi$, that $\Rec^\pi$ is the set of states which are recurrent in the Markov chain $P_\pi$, and $\Trans^\pi = \S \setminus \Rec^\pi$ is the set of transient states. We now present a standard decomposition of Markov chains \cite[][Appendix A]{puterman_markov_2014}. For any policy $\pi$, possibly after reordering states so that the recurrent states appear first (and are grouped into disjoint irreducible closed sets), we can decompose
\begin{align}
    P_\pi &= \begin{bmatrix}
        X_\pi & 0 \\
        Y_\pi & Z_\pi
    \end{bmatrix} \label{eq:markov_chain_decomposition}
\end{align}
such that $X_\pi$ are probabilities of transitions between states which are recurrent under $\pi$, $Y_\pi$ are probabilities of transitions from $\Trans^\pi$ into $\Rec^\pi$, and $Z_\pi$ are probabilities of transitions between states within $\Trans^\pi$. Furthermore, supposing there are $k$ irreducible closed blocks within $\Rec^\pi$, $X_\pi$ is block-diagonal of the form
\begin{align*}
    X_\pi &= \begin{bmatrix}
        X_{\pi, 1} & 0 & \cdots & 0 \\
        0 & X_{\pi, 2} & \cdots & 0 \\
        \vdots & \vdots  & \ddots & \vdots \\
        0 & 0 & \cdots & X_{\pi, k} \\
    \end{bmatrix}.
\end{align*}
The limiting matrix of the Markov chain induced by policy $\pi$ is defined as the matrix
\begin{align*}
    P_\pi^\infty &= \Clim_{T \to \infty} P_\pi^T = \lim_{T \to \infty}\frac{1}{T}\sum_{t=0}^{T-1} P_\pi^t.
\end{align*}
$P_\pi^\infty$ is a stochastic matrix (all rows positive and sum to $1$) since $\S$ is finite. We also have $P_\pi P_\pi^\infty = P_\pi^\infty = P_\pi^\infty P_\pi$. Additionally, $\rho^\pi = P_\pi^\infty r_{\pi}$.
In terms of our decomposition, we have
\begin{align}
    P_\pi^\infty &= \begin{bmatrix}
        X_\pi^\infty & 0 \\
        Y_\pi^\infty & 0
    \end{bmatrix} \label{eq:markov_chain_decomposition_limiting}
\end{align}
where
\begin{align*}
    X^\infty_\pi &= \begin{bmatrix}
        X^\infty_{\pi, 1} & 0 & \cdots & 0 \\
        0 & X^\infty_{\pi, 2} & \cdots & 0 \\
        \vdots & \vdots  & \ddots & \vdots \\
        0 & 0 & \cdots & X^\infty_{\pi, k} \\
    \end{bmatrix},
\end{align*}
each $X^\infty_{\pi, i} = \one x_{\pi,i}^\top$ for some stochastic row vector $x_{\pi,i}^\top$, and $Y_\pi^\infty = (I - Z_\pi)^{-1} Y_\pi X_\pi^\infty$. Also we have $(I - Z_\pi)^{-1} = \sum_{t=0}^\infty Z_\pi^t$, and $\sum_{t  = 0}^\infty Z_\pi^t Y_\pi = (I - Z_\pi)^{-1} Y_\pi$ has stochastic rows (each row is a probability distribution, that is all entries are positive and sum to $1$).

With the same arrangement of states as within the above decomposition of $P_\pi$~\eqref{eq:markov_chain_decomposition}, let
\begin{align*}
    V_\gamma^\pi &= \begin{bmatrix}
        \overline{V_\gamma^\pi} \\
        \underline{V_\gamma^\pi}
    \end{bmatrix}
\end{align*}
decompose $V_\gamma^\pi$ into recurrent and transient states, and generally we use this same notation for any vector $x \in \R^\S$: we let $\overline{x}$ list the values of $x_s$ for recurrent $x \in \Rec^\pi$, $\underline{x}$ contain $x_s$ for $s \in \Trans^\pi$, and we assume the entire $x$ has been rearranged so that $x = [\overline{x} ~ \underline{x}]^\top$. Note that the rearrangement of states depends on the policy $\pi$ so this notation has potential for confusion if applied to objects relating to multiple policies at once, but the policy determining the rearrangement will always be clear from context in our arguments.


The main reason we decompose $P_\pi$ into recurrent and transient states is the following key observation.
\begin{lem}
\label{lem:same_block_equal_rhostar}
    For any policy $\pi$, if $s, s'$ are in the same recurrent block of the Markov chain with transition matrix $P_\pi$, then $\rho^\star(s) = \rho^\star(s')$.
\end{lem}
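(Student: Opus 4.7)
The plan is to exploit the super-harmonicity of $\rho^\star$ with respect to $P_\pi$ together with the full-support property of the stationary distribution on an irreducible recurrent block. The argument is purely linear-algebraic once these two ingredients are combined, so no probabilistic construction of auxiliary policies is needed.

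First I would iterate the stated inequality $\rho^\star \ge P_\pi \rho^\star$ (which holds elementwise for every policy $\pi$). Since $P_\pi$ is order-preserving, induction gives $\rho^\star \ge P_\pi^t \rho^\star$ for every $t \ge 0$; averaging these inequalities over $t \in \{0,\dots,T-1\}$ and letting $T \to \infty$ yields $\rho^\star \ge P_\pi^\infty \rho^\star$ elementwise, where $P_\pi^\infty$ is the Cesaro-limit matrix. Next I would invoke the block decomposition of $P_\pi^\infty$ recalled in~\eqref{eq:markov_chain_decomposition_limiting}: if $s$ and $s'$ both lie in the same $i$-th irreducible closed recurrent block of $P_\pi$, their corresponding rows of $P_\pi^\infty$ coincide and equal the stochastic vector $x_{\pi,i}^\top$ (supported on block $i$). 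Consequently $(P_\pi^\infty \rho^\star)(\tilde s)$ equals the same constant $c_i := x_{\pi,i}^\top \rho^\star$ for every $\tilde s$ in block $i$, so the previous step specializes to $\rho^\star(\tilde s) \ge c_i$ for all such $\tilde s$.

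The final step is to promote this pointwise inequality to an equality. Taking the $x_{\pi,i}$-weighted average of $\rho^\star(\tilde s) \ge c_i$ over the block gives
\[
c_i \;=\; x_{\pi,i}^\top \rho^\star \;\ge\; \sum_{\tilde s \text{ in block } i} x_{\pi,i}(\tilde s)\, c_i \;=\; c_i,
\]
so equality must hold termwise. Since $x_{\pi,i}(\tilde s) > 0$ for every $\tilde s$ in the block (the standard full-support property of stationary distributions of irreducible finite chains), this forces $\rho^\star(\tilde s) = c_i$ uniformly on the block, and in particular $\rho^\star(s) = \rho^\star(s')$. I do not anticipate any substantive obstacle beyond correctly tracking the block structure of $P_\pi^\infty$ and invoking the strict positivity of $x_{\pi,i}$ on its block.
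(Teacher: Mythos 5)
Your proof is correct, and it takes a genuinely different route from the paper's. The paper argues probabilistically: it constructs a history-dependent policy that follows $\pi$ until first hitting $s'$ and then switches to the Blackwell-optimal policy $\pistar$, and it uses the characterization of $\rho^\star(s)$ as the optimal gain over history-dependent policies together with the strong Markov property and dominated convergence to conclude $\rho^\star(s) \geq \rho^\star(s')$, then symmetrizes. You instead start from the superharmonicity fact $\rho^\star \geq P_\pi \rho^\star$ (stated in the paper's preliminaries as the first multichain optimality inequality), iterate and Cesaro-average to get $\rho^\star \geq P_\pi^\infty \rho^\star$, and then exploit that all rows of $P_\pi^\infty$ indexed by a given irreducible closed recurrent block coincide with the strictly positive stationary vector $x_{\pi,i}^\top$ of that block; the tightness of the averaged inequality then forces $\rho^\star$ to be constant (equal to $x_{\pi,i}^\top\rho^\star$) on the block. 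Each step checks out: positivity of $P_\pi$ preserves the elementwise inequality under iteration, the Cesaro limit exists for finite chains, and full support of the stationary distribution of a finite irreducible chain is standard. What your approach buys is the elimination of all measure-theoretic machinery (stopping times, stopped $\sigma$-algebras, dominated convergence) in favor of a short linear-algebraic argument riding on the block structure of $P_\pi^\infty$ already set up in equation~\eqref{eq:markov_chain_decomposition_limiting}; what the paper's approach buys is that it only needs the interpretation of $\rho^\star$ as a supremum over history-dependent policies rather than the optimality inequality $\rho^\star \geq \max_a P_{sa}\rho^\star$ — though both of these inputs are cited from \cite{puterman_markov_2014} rather than proved, so neither proof is more self-contained than the other in any essential way.
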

\begin{proof}
    Define the history-dependent policy $\Tilde{\pi}$ which follows $\pi$ until its history first contains $s'$, after which point it follows $\pistar$. Since $\rho^\star(s)$ is the optimal gain achievable starting at $s$ by following any history-dependent policy \citep{puterman_markov_2014}, we have $\rho^\star(s) \geq \rho^{\Tilde{\pi}}(s) := \lim_{T \to \infty} \frac{1}{T} \E_s^{\Tilde{\pi}} \sum_{t=0}^{T-1} R_t$ (where $\E_s^{\Tilde{\pi}}$ is defined in the natural way from the distribution over trajectories $(S_0, A_0, \dots)$ where $A_t \sim \Tilde{\pi}(S_0, A_0, \dots, S_t)$ and $S_{t+1} \sim P(\cdot \mid S_t, A_t)$). Let $T_{s'} = \inf \{t \geq 1 : S_t = s'\}$ be the hitting time of state $s'$ and let $\mathcal{F}_{T_{s'}}$ be the stopped $\sigma$-algebra (with respect to the filtration where for all nonnegative integers $t$, $\mathcal{F}_t$ is the $\sigma$-algebra generated by $S_0, A_0, \dots, S_t, A_t$). Then 
    \begin{align*}
        \lim_{T \to \infty} \frac{1}{T} \E_s^{\Tilde{\pi}} \sum_{t=0}^{T-1} R_t &= \lim_{T \to \infty} \frac{1}{T} \E_s^{\Tilde{\pi}} \left[ \E_s^{\Tilde{\pi}} \left[\sum_{t=0}^{T-1} R_t  \middle| \mathcal{F}_{T_{s'}} \right]\right] \\
        &= \lim_{T \to \infty} \frac{1}{T} \E_s^{\Tilde{\pi}} \left[ \sum_{t=0}^{T_{s'}-1} R_t + \E_s^{\Tilde{\pi}} \left[\sum_{t=T_{s'}}^{T-1} R_t  \middle| \mathcal{F}_{T_{s'}} \right]\right] \\
        &= \lim_{T \to \infty} \frac{1}{T} \E_s^{\Tilde{\pi}} \left[ \sum_{t=0}^{T_{s'}-1} R_t + g(T, T_{s'})\right] \\
        &= \lim_{T \to \infty} \frac{1}{T} \E_s^{\pi} \left[ \sum_{t=0}^{T_{s'}-1} R_t + g(T, T_{s'})\right] \\
        & \geq \lim_{T \to \infty} \frac{1}{T} \E_s^{\pi} \left[  g(T, T_{s'})\right]
    \end{align*}
    where $g(T, k) := \E^{\pistar}_{s'}\left[\sum_{t=0}^{T-k-1} R_t \right]$, and we used the tower property, $\mathcal{F}_{T_{s'}}$-measurability of $\sum_{t=0}^{T_{s'}-1} R_t$, the strong Markov property, and the definition of $\Tilde{\pi}$. Now note that $T_{s'} < \infty$ almost surely since $s$ and $s'$ are in the same recurrent block, and on the event $\{T_{s'} = k\}$ for any natural number $k$, we have that
    \begin{align*}
        \lim_{T \to \infty} \frac{1}{T} g(T, k) = \lim_{T \to \infty} \frac{1}{T} \E^{\pistar}_{s'}\left[\sum_{t=0}^{T-k-1} R_t \right] = \rho^\star(s')
    \end{align*}
    because we can bound
    \begin{align*}
        \frac{1}{T}\E^{\pistar}_{s'}\left[\sum_{t=0}^{T-1} R_t \right] -\frac{k}{T} \leq \frac{1}{T}\E^{\pistar}_{s'}\left[\sum_{t=0}^{T-k-1} R_t \right] \leq \frac{1}{T}\E^{\pistar}_{s'}\left[\sum_{t=0}^{T-1} R_t \right]
    \end{align*}
    and both sides converge to $\rho^\star(s')$. Therefore $\frac{g(T, T_{s'})}{T}$ converges almost surely to the constant $\rho^\star(s')$, and also this random variable is bounded by $1$, so by the dominated convergence theorem we have
    \[ \lim_{T \to \infty} \frac{1}{T} \E_s^{\pi} \left[  g(T, T_{s'})\right] =  \E_s^{\pi} \left[ \lim_{T \to \infty} \frac{1}{T}    g(T, T_{s'})\right] = \rho^\star(s').\]
    Thus we have shown that $\rho^\star(s) \geq \rho^\star(s')$. Since $s$ and $s'$ were arbitrary states in the same recurrent block we also have $\rho^\star(s') \geq \rho^\star(s)$, and thus $\rho^\star(s)= \rho^\star(s')$ as desired.
\end{proof}

\begin{lem}
\label{lem:transient_time_bound}
    For any state $s$ which is transient under a policy $\pi$, if the MDP satisfies the bounded transient time assumption with parameter $\Tb$, we have
    \begin{align*}
        \onenorm{\sum_{t=0}^\infty e_s^\top Z_\pi^t} &\leq   \Tb.
    \end{align*}
\end{lem}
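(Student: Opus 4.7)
The plan is to interpret the vector $\sum_{t=0}^\infty e_s^\top Z_\pi^t$ probabilistically and then apply the standard tail-sum formula for expectations of non-negative integer random variables, which directly ties the quantity to the expected time spent in transient states.

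First I would note that all entries of $Z_\pi$ are non-negative, so $e_s^\top Z_\pi^t$ is a non-negative row vector (indexed by $\Trans^\pi$) and therefore $\onenorm{\sum_{t=0}^\infty e_s^\top Z_\pi^t} = \sum_{t=0}^\infty \sum_{s' \in \Trans^\pi} \left(e_s^\top Z_\pi^t\right)_{s'}$, where the interchange of sums is justified by Tonelli's theorem. The next step is to identify the combinatorial meaning of the entries: by unrolling the matrix product, $\left(e_s^\top Z_\pi^t\right)_{s'}$ equals $\P^\pi_s\left(S_t = s',\ S_0, S_1, \ldots, S_t \in \Trans^\pi\right)$, the probability of being at $s'$ at time $t$ having stayed in $\Trans^\pi$ throughout. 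Since $s$ itself is transient, $S_0 = s \in \Trans^\pi$, so summing over $s' \in \Trans^\pi$ gives exactly $\P^\pi_s\left(T_{\Rec^\pi} > t\right)$ with $T_{\Rec^\pi} = \inf\{t : S_t \in \Rec^\pi\}$ as in the problem setup.

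Combining these two steps yields
\begin{align*}
\onenorm{\sum_{t=0}^\infty e_s^\top Z_\pi^t} &= \sum_{t=0}^\infty \P^\pi_s\left(T_{\Rec^\pi} > t\right) = \E^\pi_s\left[T_{\Rec^\pi}\right],
\end{align*}
where the last equality is the standard tail-sum identity for the expectation of a non-negative integer-valued random variable. The bounded transient time assumption then immediately gives $\E^\pi_s\left[T_{\Rec^\pi}\right] \leq \Tb$, completing the proof.

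I do not foresee any real obstacle here; the only subtlety to be careful about is making sure the probabilistic identification of $\left(Z_\pi^t\right)_{s,s'}$ is stated cleanly (namely, that restricting $P_\pi$ to $\Trans^\pi \times \Trans^\pi$ yields the sub-Markov kernel tracking the chain until it first exits $\Trans^\pi$), since this is what links the algebraic expression $\sum_{t=0}^\infty e_s^\top Z_\pi^t$ to the expected hitting time $\E^\pi_s[T_{\Rec^\pi}]$ appearing in the definition of $\Tb$.
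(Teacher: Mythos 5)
Your proposal is correct and follows essentially the same route as the paper's proof: identify $\onenorm{e_s^\top Z_\pi^t}$ with $\P^\pi_s(T_{\Rec^\pi} > t)$, apply the tail-sum formula for the expectation of a nonnegative integer-valued random variable, and invoke the bounded transient time assumption. The only cosmetic difference is that you justify the sum interchange as an equality via Tonelli, whereas the paper simply uses the (here non-lossy) triangle inequality.
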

\begin{proof}
    Let $T = \inf\{t : S_t \in \Rec^\pi\}$. Notice that $\onenorm{e_s^\top Z_\pi^t} = \P^\pi_s(T > t)$. Therefore, we have
    \begin{align*}
        \onenorm{\sum_{t=0}^\infty e_s^\top Z_\pi^t} &\leq \sum_{t=0}^\infty \onenorm{e_s^\top Z_\pi^t} \\
        & = \sum_{t=0}^\infty \P^\pi_s(T > t) \\
        & = \E^\pi_s \left[ T \right] \\
        &\leq   \Tb,
    \end{align*}
    where we used a well-known formula for the expectation of nonnegative-integer-valued random variables, and the bounded transient time assumption.
\end{proof}

\begin{lem}
    \label{lem:trans_visitation_dist_decomposition}
    Let $s$ be a transient state under $P_\pi$. Then
    \begin{align*}
        e_s^\top (I - \gamma P_\pi)^{-1} &= \begin{bmatrix}
            \underline{e_s}^\top \sum_{k=1}^\infty  \gamma^k Z_\pi^{k-1} Y_\pi (I - \gamma X_\pi)^{-1} & \underline{e_s}^\top \sum_{t = 0}^\infty \gamma^t Z_\pi^t
        \end{bmatrix} .
    \end{align*}
\end{lem}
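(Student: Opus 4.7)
The natural approach is to expand the resolvent as a Neumann series $(I - \gamma P_\pi)^{-1} = \sum_{t=0}^\infty \gamma^t P_\pi^t$ (valid since $\gamma < 1$ and $P_\pi$ is stochastic) and exploit the block triangular structure of $P_\pi$ from~\eqref{eq:markov_chain_decomposition}. First I would establish by induction on $t \geq 0$ the formula
\begin{align*}
    P_\pi^t &= \begin{bmatrix}
        X_\pi^t & 0 \\
        \sum_{k=0}^{t-1} Z_\pi^k Y_\pi X_\pi^{t-1-k} & Z_\pi^t
    \end{bmatrix},
\end{align*}
where the sum in the lower-left block is interpreted as zero when $t = 0$. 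The induction step is a routine block-matrix multiplication using the fact that the upper-right block of $P_\pi$ vanishes.

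Next, since $s$ is transient under $\pi$, we have $e_s^\top = [\mathbf{0}^\top,\ \underline{e_s}^\top]$ in the recurrent/transient decomposition, so
\begin{align*}
    e_s^\top (I - \gamma P_\pi)^{-1} = \sum_{t=0}^\infty \gamma^t e_s^\top P_\pi^t = \begin{bmatrix}
        \underline{e_s}^\top \sum_{t=1}^\infty \gamma^t \sum_{k=0}^{t-1} Z_\pi^k Y_\pi X_\pi^{t-1-k} & \underline{e_s}^\top \sum_{t=0}^\infty \gamma^t Z_\pi^t
    \end{bmatrix}.
\end{align*}
The transient (right) block already matches the target expression. For the recurrent (left) block, I would reindex the double sum by letting $(k, j)$ with $j = t-1-k$ range independently over $\{0,1,2,\dots\}$, giving
\begin{align*}
    \sum_{t=1}^\infty \gamma^t \sum_{k=0}^{t-1} Z_\pi^k Y_\pi X_\pi^{t-1-k} = \gamma \left(\sum_{k=0}^\infty \gamma^k Z_\pi^k \right) Y_\pi \left( \sum_{j=0}^\infty \gamma^j X_\pi^j \right) = \sum_{k=1}^\infty \gamma^k Z_\pi^{k-1} Y_\pi (I - \gamma X_\pi)^{-1},
\end{align*}
where in the last step I shifted the index $k \mapsto k-1$ and absorbed the geometric series in $X_\pi$ into the resolvent $(I - \gamma X_\pi)^{-1}$, which exists since $X_\pi$ is a stochastic matrix and $\gamma < 1$.

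The main subtlety, though not really an obstacle, is justifying the interchange of the two sums so as to decouple $k$ and $j$; this is immediate from absolute convergence, since $\gamma < 1$, $\infinfnorm{X_\pi}, \infinfnorm{Y_\pi}, \infinfnorm{Z_\pi} \le 1$, and hence the double series is dominated by $\sum_{k,j \geq 0} \gamma^{k+j+1} < \infty$ entry-wise. Putting the two blocks together yields the claimed identity.
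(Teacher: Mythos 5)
Your proposal is correct and follows essentially the same route as the paper's proof: compute the block form of $P_\pi^t$ from the triangular decomposition~\eqref{eq:markov_chain_decomposition}, expand the resolvent as a Neumann series, and reindex the resulting double sum to factor out $(I - \gamma X_\pi)^{-1}$ (the paper justifies the interchange by nonnegativity of the summands, you by absolute convergence; both are valid). No gaps.
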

\begin{proof}
    Using the decomposition of $P_\pi$, we can calculate for any integer $t \geq 1$ that
    \begin{align*}
        P_\pi^t &= \begin{bmatrix}
            X_\pi^t & 0 \\
            \sum_{k=1}^t Z_\pi^{k-1} Y_\pi X_\pi^{t-k} & Z_\pi^t
        \end{bmatrix}.
    \end{align*}
    Therefore, we have
    \begin{align*}
       e_s^\top (I - \gamma P_\pi)^{-1} &= e_s^\top \sum_{t=0}^\infty \gamma^t P_\pi^t \\
        &= \begin{bmatrix}
            \underline{e_s}^\top \sum_{t = 0}^\infty \gamma^t \sum_{k=1}^t Z_\pi^{k-1} Y_\pi X_\pi^{t-k} & \underline{e_s}^\top \sum_{t = 0}^\infty \gamma^t Z_\pi^t
        \end{bmatrix} \\
        &= \begin{bmatrix}
            \underline{e_s}^\top \sum_{k=1}^\infty  \sum_{t=k}^\infty \gamma^t Z_\pi^{k-1} Y_\pi X_\pi^{t-k} & \underline{e_s}^\top \sum_{t = 0}^\infty \gamma^t Z_\pi^t
        \end{bmatrix} \\
        &= \begin{bmatrix}
            \underline{e_s}^\top \sum_{k=1}^\infty  \gamma^k Z_\pi^{k-1} Y_\pi \sum_{t=k}^\infty   \gamma^{t-k} X_\pi^{t-k} & \underline{e_s}^\top \sum_{t = 0}^\infty \gamma^t Z_\pi^t
        \end{bmatrix} \\
        &= \begin{bmatrix}
            \underline{e_s}^\top \sum_{k=1}^\infty  \gamma^k Z_\pi^{k-1} Y_\pi (I - \gamma X_\pi)^{-1} & \underline{e_s}^\top \sum_{t = 0}^\infty \gamma^t Z_\pi^t
        \end{bmatrix} .
    \end{align*}
    Note that we are able to rearrange the order of the summation in the third equality because all summands are (elementwise) positive.
\end{proof}

\subsection{Proof of Theorem \ref{thm:DMDP_reduction_general}}

Theorem \ref{thm:DMDP_reduction_general}, our result which helps reduce general average reward MDPs to discounted MDPs, is proven as a straightforward consequence of the following sequence of lemmas, some of which will also be needed for the proof of our discounted MDP sample complexity bound Theorem \ref{thm:DMDP_bound_general}.

\begin{lem}
    \label{lem:avg_opt_approx_err}
    We have
    \[\infnorm{V_\gamma^{\pistar} - \frac{1}{1-\gamma} \rho^\star} \leq \spannorm{h^\star}.\]
\end{lem}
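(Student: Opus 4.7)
The plan is to algebraically manipulate $V_\gamma^{\pistar}$ using the average-reward Bellman equation and the fact that $\rho^\star$ is a $P_{\pistar}$-invariant vector, then bound the residual using shift-invariance plus a telescoping argument.

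First I would invoke the fact that $(\rho^\star,h^\star)$ solves the Bellman equation for $\pistar$: $\rho^\star + h^\star = r_{\pistar} + P_{\pistar} h^\star$, which rearranges to $r_{\pistar} = \rho^\star + (I - P_{\pistar})h^\star$. Substituting into the standard resolvent expression $V_\gamma^{\pistar} = (I - \gamma P_{\pistar})^{-1} r_{\pistar}$, I obtain
\[
V_\gamma^{\pistar} = (I - \gamma P_{\pistar})^{-1}\rho^\star + (I - \gamma P_{\pistar})^{-1}(I - P_{\pistar})h^\star.
\]
Since $\pistar$ is gain-optimal, $P_{\pistar}\rho^\star = \rho^\star$, so $P_{\pistar}^t \rho^\star = \rho^\star$ for all $t \geq 0$, giving $(I - \gamma P_{\pistar})^{-1}\rho^\star = \sum_{t\geq 0}\gamma^t \rho^\star = \frac{1}{1-\gamma}\rho^\star$. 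This isolates the residual
\[
V_\gamma^{\pistar} - \tfrac{1}{1-\gamma}\rho^\star = (I - \gamma P_{\pistar})^{-1}(I - P_{\pistar})h^\star,
\]
and the task reduces to bounding the $\ell_\infty$ norm of the right-hand side by $\H = \spannorm{h^\star}$.

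For this bound, the key observation is that the right-hand side is shift-invariant in $h^\star$: since $(I - P_{\pistar})\one = 0$, we may replace $h^\star$ by $h' := h^\star - c\one$ for any $c \in \R$, and I would pick $c = \frac{1}{2}(\max_s h^\star(s) + \min_s h^\star(s))$ so that $\infnorm{h'} = \frac{1}{2}\H$. Expanding the Neumann series and telescoping (Abel summation), using $\gamma^T P_{\pistar}^{T+1} h' \to 0$ because $\gamma<1$ and $\infnorm{P_{\pistar}^{T+1}h'}\leq \infnorm{h'}$, yields
\[
(I - \gamma P_{\pistar})^{-1}(I - P_{\pistar})h' = \sum_{t=0}^\infty \gamma^t (P_{\pistar}^t h' - P_{\pistar}^{t+1} h') = h' - (1-\gamma)\sum_{t=1}^\infty \gamma^{t-1} P_{\pistar}^t h'.
\]
Applying triangle inequality together with $\infinfnorm{P_{\pistar}^t} \leq 1$ bounds the $\ell_\infty$ norm by $\infnorm{h'} + (1-\gamma)\cdot\frac{1}{1-\gamma}\infnorm{h'} = 2\infnorm{h'} = \H$, as desired.

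No step appears to pose a real obstacle: the Bellman manipulation is routine, and the only mildly delicate point is remembering to shift $h^\star$ by a constant (without shifting, one would only get a bound of $2\infnorm{h^\star}$, which does not suffice when $h^\star$ is uniformly large but has small span).
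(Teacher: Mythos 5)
Your proposal is correct and follows essentially the same route as the paper: substitute the Bellman identity $r_{\pistar} = \rho^\star + (I-P_{\pistar})h^\star$ into the resolvent, use $P_{\pistar}\rho^\star = \rho^\star$ to extract $\frac{1}{1-\gamma}\rho^\star$, and telescope the Neumann series to see that the residual operator $(I-\gamma P_{\pistar})^{-1}(I-P_{\pistar})$ is a difference of two stochastic matrices. The only cosmetic difference is that you make the final span bound explicit via the midrange-centering trick, whereas the paper invokes directly that a difference of stochastic matrices applied to $h^\star$ has $\ell_\infty$ norm at most $\spannorm{h^\star}$.
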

\begin{proof}
    We begin by observing that $\pistar$ satisfies
        \[\rho^\star + h^\star  = r_{\pistar} + P_{\pistar}h^\star.\]
        Therefore, it holds that
        \begin{align*}
            V_\gamma^{\pistar} &= (I-\gamma P_{\pistar})^{-1} r_{\pistar} \\
            &= (I-\gamma P_{\pistar})^{-1} \left(\rho^\star + h^\star - P_{\pistar}h^\star \right) \\
            &= (I-\gamma P_{\pistar})^{-1} \rho^\star + (I-\gamma P_{\pistar})^{-1} \left( I - P_{\pistar} \right)h^\star.
        \end{align*}
        Since $P_{\pistar} \rho^\star = \rho^\star$, we can calculate that
        \[
        (I-\gamma P_{\pistar})^{-1} \rho^\star = \sum_{t \geq 0} \gamma^t P_{\pistar}^t \rho^\star = \sum_{t \geq 0} \gamma^t  \rho^\star = \frac{1}{1-\gamma} \rho^\star.
        \]
        It also holds that 
        \begin{align}
            (I-\gamma P_{\pistar})^{-1} \left( I - P_{\pistar} \right) &= \sum_{t \geq 0} \gamma^t P_{\pistar}^t (I - P_{\pistar}) \nonumber\\
            &= \sum_{t \geq 0} \gamma^t P_{\pistar}^t - \sum_{t \geq 0} \gamma^t P_{\pistar}^{t+1} \nonumber \\
            &= P_{\pistar} + \sum_{t \geq 0}(\gamma^{t+1}-\gamma^t) P_{\pistar}^{t+1} \label{eq:span_calculation_opt_policy}
        \end{align}
        and $\sum_{t \geq 0} \gamma^{t+1} - \gamma^t = (\gamma-1) \sum_{t \geq 0}\gamma^t = -1$. Therefore~\eqref{eq:span_calculation_opt_policy} is the difference of two stochastic matrices, and so it follows that
        \[
        \infnorm{(I-\gamma P_{\pistar})^{-1} \left( I - P_{\pistar} \right)h^\star} \leq \spannorm{h^\star}.
        \]
\end{proof}

\begin{lem}
    \label{lem:discounted_opt_approx_err_recurrent}
    If $\pistar_\gamma$ is optimal for the discounted MDP $(P, r, \gamma)$ and $s$ is recurrent under $\pistar_\gamma$, then
        \[\left | V_\gamma^{\pistar_\gamma}(s) - \frac{1}{1-\gamma} \rho^\star(s)\right| \leq \spannorm{h^\star}\]
        and
        \[\left | V_\gamma^{\pistar_\gamma}(s) - \frac{1}{1-\gamma} \rho^{\pistar_\gamma}(s)\right| \leq 2\spannorm{h^\star}.\]

        These facts can be written as $\infnorm{\overline{V_\gamma^{\pistar_\gamma}} - \frac{1}{1-\gamma} \overline{\rho^\star} }\leq \spannorm{h^\star}$ 
        and
        $\infnorm{\overline{V_\gamma^{\pistar_\gamma}} - \frac{1}{1-\gamma} \overline{ \rho^{\pistar_\gamma}} }\leq 2\spannorm{h^\star}$
        respectively.
\end{lem}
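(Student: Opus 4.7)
I will handle the two inequalities separately, exploiting the block decomposition $P_{\pistar_\gamma} = \begin{bmatrix} X_{\pistar_\gamma} & 0 \\ Y_{\pistar_\gamma} & Z_{\pistar_\gamma}\end{bmatrix}$ from \eqref{eq:markov_chain_decomposition} and the fact that from any recurrent $s$, trajectories under $\pistar_\gamma$ remain in the recurrent class $C$ of $s$. In particular, $\overline{V_\gamma^{\pistar_\gamma}} = (I - \gamma X_{\pistar_\gamma})^{-1} \overline{r_{\pistar_\gamma}}$.

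For the first inequality, the lower bound is immediate: by optimality of $\pistar_\gamma$ for the $\gamma$-discounted problem and the Blackwell-optimality of $\pistar$, we have $V_\gamma^{\pistar_\gamma} \geq V_\gamma^{\pistar}$, so by Lemma \ref{lem:avg_opt_approx_err}, $V_\gamma^{\pistar_\gamma}(s) \geq \frac{1}{1-\gamma}\rho^\star(s) - \spannorm{h^\star}$. The main work is the matching upper bound. The key observation is that for any state $s'$ that is recurrent under $\pistar_\gamma$, the action $\pistar_\gamma(s')$ only transitions within the recurrent class $C$ of $s'$, and by Lemma \ref{lem:same_block_equal_rhostar} $\rho^\star$ is constant on $C$; hence $P_{s',\pistar_\gamma(s')} \rho^\star = \rho^\star(s')$. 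Therefore $\pistar_\gamma(s')$ attains the maximum $\max_a P_{s'a}\rho^\star$, so the unmodified Bellman equation for $(\rho^\star, h^\star)$ yields $r_{s',\pistar_\gamma(s')} + P_{s',\pistar_\gamma(s')} h^\star \leq \rho^\star(s') + h^\star(s')$. Stacking this across recurrent states (and noting that $P_{\pistar_\gamma}$ from recurrent states only places mass within the recurrent block, so $P_{\pistar_\gamma}h^\star$ restricted to recurrent coordinates is $X_{\pistar_\gamma}\overline{h^\star}$) gives
\[
\overline{r_{\pistar_\gamma}} \leq \overline{\rho^\star} + (I - X_{\pistar_\gamma})\overline{h^\star}.
\]

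Applying the non-negative operator $(I - \gamma X_{\pistar_\gamma})^{-1} = \sum_{t \geq 0} \gamma^t X_{\pistar_\gamma}^t$ preserves the inequality. Since $\overline{\rho^\star}$ is constant on each irreducible block $X_{\pistar_\gamma, i}$ (and each block is stochastic), $X_{\pistar_\gamma} \overline{\rho^\star} = \overline{\rho^\star}$, so $(I - \gamma X_{\pistar_\gamma})^{-1}\overline{\rho^\star} = \frac{1}{1-\gamma}\overline{\rho^\star}$. Moreover, running the same computation as in \eqref{eq:span_calculation_opt_policy},
\[
(I - \gamma X_{\pistar_\gamma})^{-1}(I - X_{\pistar_\gamma}) = X_{\pistar_\gamma} + \sum_{t \geq 0}(\gamma^{t+1} - \gamma^t)X_{\pistar_\gamma}^{t+1}
\]
is a difference of two stochastic matrices, hence acting on $\overline{h^\star}$ produces a vector of $\ell_\infty$-norm at most $\spannorm{\overline{h^\star}} \leq \spannorm{h^\star}$. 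Combining these yields $\overline{V_\gamma^{\pistar_\gamma}}(s) \leq \frac{1}{1-\gamma}\rho^\star(s) + \spannorm{h^\star}$, completing the first inequality.

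For the second inequality, the first inequality together with the constancy of $\rho^\star$ on $C$ gives $\spannorm{V_\gamma^{\pistar_\gamma}|_C} \leq 2\spannorm{h^\star}$ by applying the triangle inequality to two points of $C$. Now let $\mu$ be the stationary distribution of the irreducible block $X_{\pistar_\gamma, C}$ on $C$. Since $\mu^\top X_{\pistar_\gamma, C} = \mu^\top$, we have $\mu^\top (I - \gamma X_{\pistar_\gamma, C})^{-1} = \frac{1}{1-\gamma}\mu^\top$, so
\[
\mu^\top V_\gamma^{\pistar_\gamma}|_C = \tfrac{1}{1-\gamma}\mu^\top \overline{r_{\pistar_\gamma}}|_C = \tfrac{1}{1-\gamma}\rho^{\pistar_\gamma}(s),
\]
where the last equality uses that $\rho^{\pistar_\gamma}$ is constant on $C$ and equal to the stationary reward. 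Hence $v := V_\gamma^{\pistar_\gamma}|_C - \frac{1}{1-\gamma}\rho^{\pistar_\gamma}(s)\one$ satisfies $\mu^\top v = 0$, which forces $\min v \leq 0 \leq \max v$, so $\infnorm{v} \leq \spannorm{v} = \spannorm{V_\gamma^{\pistar_\gamma}|_C} \leq 2\spannorm{h^\star}$. Specializing to coordinate $s$ proves the second inequality.

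The main obstacle is the upper bound in the first inequality: recognizing that the unmodified Bellman equation can still be applied to $\pistar_\gamma$ at recurrent states (despite $\pistar_\gamma$ not being gain-optimal in general), which relies crucially on Lemma \ref{lem:same_block_equal_rhostar} and the fact that $\pistar_\gamma$ cannot leave a recurrent class. The remaining steps are essentially matrix-algebra analogues of the proof of Lemma \ref{lem:avg_opt_approx_err}, restricted to recurrent blocks.
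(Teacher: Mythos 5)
Your proposal is correct and follows essentially the same route as the paper: the upper bound in the first inequality comes from Lemma \ref{lem:same_block_equal_rhostar} plus the unmodified Bellman equation applied to the actions of $\pistar_\gamma$ at recurrent states, giving $\overline{r_{\pistar_\gamma}} \leq \overline{\rho^\star} + (I - X_{\pistar_\gamma})\overline{h^\star}$, followed by the same resolvent/span calculation as Lemma \ref{lem:avg_opt_approx_err}, and the lower bound from $V_\gamma^{\pistar_\gamma} \geq V_\gamma^{\pistar}$. Your second part, phrased via the stationary distribution $\mu$ of the recurrent class and the centering identity $\mu^\top v = 0$, is just the paper's argument with $P_{\pistar_\gamma}^\infty$ restricted to the block (where it equals $\one\mu^\top$), so it is a repackaging of the same bound $\infnorm{v}\le\spannorm{v}\le 2\spannorm{h^\star}$ rather than a genuinely different proof.
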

\begin{proof}
    First note that if $s$ is recurrent for the Markov chain $P_{\pistar_\gamma}$, then all states in the support of $e_s^\top P_{\pistar_\gamma}$ are in the same recurrent block as state $s$, and $\rho^\star$ is constant (and equal to $\rho^\star(s)$) within this recurrent block by Lemma \ref{lem:same_block_equal_rhostar}. The (unmodified) Bellman equation states that
        \begin{align*}
            \rho^\star(s) + h^\star(s) &= \max_{a : P_{sa} \rho^\star = \rho^\star(s)} r_{sa} + P_{sa}h^\star.
        \end{align*}
        Since we established that $e_s^\top P_{\pistar_\gamma} \rho^\star = \rho^\star(s)$, all actions $a$ in the support of $\pistar_\gamma(a \mid s)$ satisfy $P_{sa} \rho^\star = \rho^\star(s)$, and therefore
        \begin{align*}
            \rho^\star(s) + h^\star(s) &= \max_{a : P_{sa} \rho^\star = \rho^\star(s)} r_{sa} + P_{sa}h^\star \\
            &\geq  \sum_{a \in \A} \pistar_\gamma(a \mid s) \left(r_{sa} + P_{sa}h^\star \right) \\
            &= e_s^\top \left(r_{\pistar_\gamma} + P_{\pistar_\gamma} h^\star \right).
        \end{align*}
        Since this holds for all $s \in \Rec^{\pistar_\gamma}$, we can rearrange to obtain that
        \begin{align*}
            \overline{r_{\pistar_\gamma}} & \leq \overline{\rho^\star} + \overline{h^\star} - \overline{P_{\pistar_\gamma} h^\star} = \overline{\rho^\star} + \overline{h^\star} - X_{\pistar_\gamma} \overline{ h^\star}.
        \end{align*}
        Now we can follow an argument which is similar to that of \cite[Lemma 2]{wei_model-free_2020}. We have
        \begin{align*}
            \overline{V_\gamma^{\pistar_\gamma}} &= \overline{(I - \gamma P_{\pistar_\gamma})^{-1} r_{\pistar_\gamma}} \\
            &= (I - X_{\pistar_\gamma})^{-1} \overline{r_{\pistar_\gamma}} \\
            &\leq (I - X_{\pistar_\gamma})^{-1} \left( \overline{\rho^\star} + \overline{h^\star} - X_{\pistar_\gamma} \overline{ h^\star} \right)
        \end{align*}
        using monotonicity of $(I - X_{\pistar_\gamma})^{-1}$ in the final inequality. Due to the observation above that for all $s \in \Rec^{\pistar_\gamma}$, all actions $a$ in the support of $\pistar_\gamma(a \mid s)$ satisfy $P_{sa} \rho^\star = \rho^\star(s)$, we have $X_{\pistar_\gamma} \overline{\rho^\star} = \overline{\rho^\star}$. Therefore we have
        \[
            (I - X_{\pistar_\gamma})^{-1}  \overline{\rho^\star} = \sum_{t = 0}^\infty \gamma^t X_{\pistar_\gamma} \overline{\rho^\star} = \sum_{t = 0}^\infty \gamma^t \overline{\rho^\star} = \frac{1}{1-\gamma} \overline{\rho^\star}.
        \]
        For the second term, by using an argument which is completely analogous to that used in Lemma \ref{lem:avg_opt_approx_err} we have $\infnorm{(I - X_{\pistar_\gamma})^{-1} \left( \overline{h^\star} - X_{\pistar_\gamma} \overline{ h^\star} \right)} \leq \spannorm{h^\star}$. Combining these steps we obtain that
        \[
        \overline{V_\gamma^{\pistar_\gamma}} - \frac{1}{1-\gamma} \overline{\rho^\star} \leq \spannorm{h^\star} \one.
        \]
        To obtain a lower bound, we can combine the optimality of $\pistar_\gamma$ for the $\gamma$-discounted problem with Lemma \ref{lem:avg_opt_approx_err} to obtain the bound
        \begin{align*}
            \overline{V_\gamma^{\pistar_\gamma}} - \frac{1}{1-\gamma} \overline{\rho^\star} \geq \overline{V_\gamma^{\pistar}} - \frac{1}{1-\gamma} \overline{\rho^\star} \geq \spannorm{h^\star} \one.
        \end{align*}
        Therefore we can conclude that $\infnorm{\overline{V_\gamma^{\pistar_\gamma}} - \frac{1}{1-\gamma} \overline{\rho^\star}} \leq \spannorm{h^\star}$.
        
        For the second bound in the lemma statement, we first note that, as observed in \cite{wang_near_2022},
        \[
        P_{\pistar_\gamma}^\infty V_\gamma^{\pistar_\gamma} = P_{\pistar_\gamma}^\infty \sum_{t=0}^\infty \gamma^t P_{\pistar_\gamma}^t r_{\pistar_\gamma} = \sum_{t=0}^\infty \gamma^t P_{\pistar_\gamma}^\infty r_{\pistar_\gamma} = \frac{1}{1-\gamma} \rho^{\pistar_\gamma}.
        \]
        Also, as discussed previously, if $s \in \Rec^{\pistar_\gamma}$ then $e_s^\top P_{\pistar_\gamma} \rho^\star = \rho^\star(s)$, so then we also have $e_s^\top P_{\pistar_\gamma}^\infty \rho^\star = \rho^\star(s)$ (which can be seen directly from the definition of the limiting matrix $P_{\pistar_\gamma}^\infty$). Equivalently, $e_s^\top (I -P_{\pistar_\gamma}^\infty )\rho^\star = 0$. Using both of these two observations, we have
        \begin{align*}
            V_\gamma^{\pistar_\gamma}(s) - \frac{1}{1-\gamma}\rho^{\pistar_\gamma}(s) &= e_s^\top (I - P_{\pistar_\gamma}^\infty) V_\gamma^{\pistar_\gamma} \\
            &= e_s^\top (I - P_{\pistar_\gamma}^\infty) (V_\gamma^{\pistar_\gamma} - \frac{1}{1-\gamma}\rho^\star ) \\
            &= \overline{e_s}^\top (I - X_{\pistar_\gamma}^\infty) (\overline{V_\gamma^{\pistar_\gamma}} - \frac{1}{1-\gamma}\overline{\rho^\star} ).
        \end{align*}
        Therefore, we obtain
        \begin{align*}
            \infnorm{\overline{V_\gamma^{\pistar_\gamma}} - \frac{1}{1-\gamma} \overline{ \rho^{\pistar_\gamma}} } &\leq \infnorm{(I - X_{\pistar_\gamma}^\infty) (\overline{V_\gamma^{\pistar_\gamma}} - \frac{1}{1-\gamma}\overline{\rho^\star} )} \\
            & \leq \spannorm{\overline{V_\gamma^{\pistar_\gamma}} - \frac{1}{1-\gamma}\overline{\rho^\star}} \\
            & \leq 2 \infnorm{\overline{V_\gamma^{\pistar_\gamma}} - \frac{1}{1-\gamma}\overline{\rho^\star}} \\
            & \leq 2 \spannorm{h^\star}
        \end{align*}
        using the first bound from the lemma statement in the final inequality.
\end{proof}

\begin{lem}
\label{lem:discounted_opt_approx_err}
    We have
    \[
        \infnorm{V_\gamma^{\pistar_\gamma} - \frac{1}{1-\gamma}\rho^\star} \leq  \Tb + \spannorm{h^\star}
    \]
    and
    \[
        \infnorm{V_\gamma^{\pistar_\gamma} - \frac{1}{1-\gamma}\rho^{\pistar_\gamma}} \leq  \Tb + 2\spannorm{h^\star}.
    \]
\end{lem}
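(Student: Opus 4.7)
The plan is to handle the easy case of $s$ recurrent under $\pistar_\gamma$ first, which is immediate from Lemma~\ref{lem:discounted_opt_approx_err_recurrent} (since $\Tb \geq 0$), and then do the real work for $s \in \Trans^{\pistar_\gamma}$. For transient $s$, I will let $T = \inf\{t \geq 0 : S_t \in \Rec^{\pistar_\gamma}\}$ be the first hitting time of the recurrent states and apply the strong Markov property to decompose
\[
V_\gamma^{\pistar_\gamma}(s) = \E_s^{\pistar_\gamma}\Bigl[\sum_{t=0}^{T-1}\gamma^t R_t\Bigr] + \E_s^{\pistar_\gamma}\bigl[\gamma^T V_\gamma^{\pistar_\gamma}(S_T)\bigr].
\]
The first summand is at most $\E_s^{\pistar_\gamma}[T] \leq \Tb$ in absolute value by the bounded transient time assumption and $R_t \in [0,1]$, while the second summand can be analyzed using Lemma~\ref{lem:discounted_opt_approx_err_recurrent} since $S_T \in \Rec^{\pistar_\gamma}$.

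For the first inequality (against $\rho^\star$), my approach for the upper direction is to first use Lemma~\ref{lem:discounted_opt_approx_err_recurrent} to get $V_\gamma^{\pistar_\gamma}(S_T) \leq \tfrac{1}{1-\gamma}\rho^\star(S_T) + \spannorm{h^\star}$, and then exploit the key observation that $(\rho^\star(S_t))_{t\ge 0}$ is a supermartingale under \emph{any} policy because $\rho^\star \geq P_\pi \rho^\star$ componentwise for every $\pi$. Since $\rho^\star \geq 0$, $\gamma \leq 1$, and $\E_s^{\pistar_\gamma}[T]$ is finite, optional stopping then yields $\E_s^{\pistar_\gamma}[\gamma^T \rho^\star(S_T)] \leq \E_s^{\pistar_\gamma}[\rho^\star(S_T)] \leq \rho^\star(s)$, which delivers the upper bound $V_\gamma^{\pistar_\gamma}(s) \leq \tfrac{1}{1-\gamma}\rho^\star(s) + \Tb + \spannorm{h^\star}$. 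The matching lower bound is easier: the optimality of $\pistar_\gamma$ for the $\gamma$-discounted problem combined with Lemma~\ref{lem:avg_opt_approx_err} gives $V_\gamma^{\pistar_\gamma}(s) \geq V_\gamma^{\pistar}(s) \geq \tfrac{1}{1-\gamma}\rho^\star(s) - \spannorm{h^\star}$.

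For the second inequality (against $\rho^{\pistar_\gamma}$), the supermartingale trick is not directly available, so the plan is a more algebraic route. Using that $\rho^{\pistar_\gamma}$ is constant on each recurrent class of $P_{\pistar_\gamma}$ (readable off the block structure of $P_{\pistar_\gamma}^\infty$ given at the start of this appendix) together with $T < \infty$ almost surely, I would first establish $\rho^{\pistar_\gamma}(s) = \E_s^{\pistar_\gamma}[\rho^{\pistar_\gamma}(S_T)]$. Combining this identity with $\tfrac{1 - \gamma^T}{1-\gamma} = \sum_{t=0}^{T-1} \gamma^t$ and rearranging then gives the clean telescoping
\[
V_\gamma^{\pistar_\gamma}(s) - \tfrac{1}{1-\gamma}\rho^{\pistar_\gamma}(s) = \E_s^{\pistar_\gamma}\Bigl[\sum_{t=0}^{T-1} \gamma^t (R_t - \rho^{\pistar_\gamma}(S_T))\Bigr] + \E_s^{\pistar_\gamma}\Bigl[\gamma^T\Bigl(V_\gamma^{\pistar_\gamma}(S_T) - \tfrac{1}{1-\gamma}\rho^{\pistar_\gamma}(S_T)\Bigr)\Bigr].
\]
The first term is bounded by $\E_s^{\pistar_\gamma}[T] \leq \Tb$ since $|R_t - \rho^{\pistar_\gamma}(S_T)| \leq 1$, and the second by $2\spannorm{h^\star}$ via the second inequality of Lemma~\ref{lem:discounted_opt_approx_err_recurrent}.

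The hard part will be keeping the error terms linear in $\Tb$ rather than blowing up by a factor of $\tfrac{1}{1-\gamma}$: naively substituting $\tfrac{1}{1-\gamma}\rho^\star(S_T)$ for $V_\gamma^{\pistar_\gamma}(S_T)$ and then comparing $\rho^\star(S_T)$ to $\rho^\star(s)$ would retain a $\tfrac{1}{1-\gamma}$ amplification of the mismatch between these two (generally unequal) constants. The supermartingale/optional-stopping argument for the $\rho^\star$ bound, and the centered-reward telescoping for the $\rho^{\pistar_\gamma}$ bound, are each precisely what is needed to cancel this dangerous factor.
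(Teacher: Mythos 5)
Your proposal is correct, and it reaches the same three ingredients as the paper (bound the pre-hitting contribution by $\Tb$, control the post-hitting contribution via Lemma~\ref{lem:discounted_opt_approx_err_recurrent}, and get the lower bound from $V_\gamma^{\pistar_\gamma}\geq V_\gamma^{\pistar}$ plus Lemma~\ref{lem:avg_opt_approx_err}) but packages them probabilistically rather than algebraically. The paper works with the explicit block decomposition of $e_s^\top(I-\gamma P_{\pistar_\gamma})^{-1}$ from Lemma~\ref{lem:trans_visitation_dist_decomposition}, bounds $\onenorm{\sum_t \underline{e_s}^\top Z_{\pistar_\gamma}^t}\leq\Tb$ via Lemma~\ref{lem:transient_time_bound}, and compares the hitting distribution $\sum_t\underline{e_s}^\top Z_{\pistar_\gamma}^t Y_{\pistar_\gamma}$ against $\rho^\star$ and $\rho^{\pistar_\gamma}$ using $e_s^\top P_{\pistar_\gamma}^\infty\rho^\star\leq\rho^\star(s)$ and $\rho^{\pistar_\gamma}=P_{\pistar_\gamma}^\infty r_{\pistar_\gamma}$; your strong-Markov split at $T=\inf\{t:S_t\in\Rec^{\pistar_\gamma}\}$ is exactly that decomposition in stopping-time language, your supermartingale/optional-stopping step (valid here: $\rho^\star\in[0,1]^\S$ is a bounded supermartingale under any policy and $T<\infty$ a.s.) is the probabilistic counterpart of $e_s^\top P_{\pistar_\gamma}^\infty\rho^\star\leq\rho^\star(s)$, and your identity $\rho^{\pistar_\gamma}(s)=\E_s^{\pistar_\gamma}[\rho^{\pistar_\gamma}(S_T)]$ follows even more directly from optional stopping applied to the bounded martingale $\rho^{\pistar_\gamma}(S_t)$ (since $P_{\pistar_\gamma}\rho^{\pistar_\gamma}=\rho^{\pistar_\gamma}$), so you need not appeal to the block structure of $P_{\pistar_\gamma}^\infty$ at all. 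What your route buys is a shorter, coordinate-free argument that avoids the $(X,Y,Z)$ machinery and, for the second inequality, a genuinely two-sided bound in one pass via the centered telescoping $\frac{1}{1-\gamma}=\sum_{t=0}^{T-1}\gamma^t+\frac{\gamma^T}{1-\gamma}$, whereas the paper proves the upper bound and then supplies the lower bound separately; what the paper's route buys is that the same block decomposition is reused verbatim in Lemmas~\ref{lem:approx_discounted_opt_approx_err} and~\ref{lem:var_params_bounds_general}, so the algebraic setup pays for itself later. Your closing diagnosis is also exactly right: the one step that would ruin the bound is comparing $\frac{1}{1-\gamma}\rho^\star(S_T)$ to $\frac{1}{1-\gamma}\rho^\star(s)$ termwise, and both the optional-stopping inequality and the centering by $\rho^{\pistar_\gamma}(S_T)$ correctly avoid it.
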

\begin{proof}
    Note that by combining with Lemma \ref{lem:discounted_opt_approx_err_recurrent}, it suffices to prove for any transient state $s \in \Trans^{\pistar_\gamma}$ that
    \[
        \left|V_\gamma^{\pistar_\gamma}(s) - \frac{1}{1-\gamma}\rho^{\star}(s) \right| \leq  \Tb +  \spannorm{h^\star}
    \]
    and
    \[
        \left|V_\gamma^{\pistar_\gamma}(s) - \frac{1}{1-\gamma}\rho^{\pistar_\gamma}(s) \right| \leq  \Tb +  2\spannorm{h^\star}.
    \]
    
 Let $s$ be transient under $\pistar_\gamma$. Then starting by using Lemma \ref{lem:trans_visitation_dist_decomposition}, we can calculate
        \begin{align}
            V_\gamma^{\pistar_\gamma}(s) &= e_s^\top (I - \gamma P_{\pistar_\gamma})^{-1} r_{\pistar_\gamma} \nonumber \\
            &= \sum_{t=0}^\infty \gamma^t \underline{e_s}^\top Z^t_{\pistar_\gamma} \underline{r_{\pistar_\gamma}} + \gamma \sum_{t=0}^\infty \gamma^t \underline{e_s}^\top Z_{\pistar_\gamma}^t Y_{\pistar_\gamma} (I - \gamma X_{\pistar_\gamma})^{-1} \overline{r_{\pistar_\gamma}} \nonumber \\
            & = \sum_{t=0}^\infty \gamma^t \underline{e_s}^\top Z^t_{\pistar_\gamma} \underline{r_{\pistar_\gamma}} + \gamma \sum_{t=0}^\infty \gamma^t \underline{e_s}^\top Z_{\pistar_\gamma}^t Y_{\pistar_\gamma} \overline{V_\gamma^{\pistar_\gamma}} \nonumber \\
            & \leq \sum_{t=0}^\infty \underline{e_s}^\top Z^t_{\pistar_\gamma} \underline{r_{\pistar_\gamma}} + \left(\sum_{t=0}^\infty  \underline{e_s}^\top Z_{\pistar_\gamma}^t Y_{\pistar_\gamma} \right)\overline{V_\gamma^{\pistar_\gamma}}. \label{eq:transient_approx_err_bound_pistar}
        \end{align}
    By Lemma \ref{lem:transient_time_bound} we have that
    \[
        \sum_{t=0}^\infty \underline{e_s}^\top Z^t_{\pistar_\gamma} \underline{r_{\pistar_\gamma}} \leq \onenorm{\sum_{t=0}^\infty \underline{e_s}^\top Z^t_{\pistar_\gamma}} \infnorm{\underline{r_{\pistar_\gamma}}} \leq   \Tb.
    \]
    Now we can obtain the two bounds in the lemma statement by bounding the second term of~\eqref{eq:transient_approx_err_bound_pistar} in two different ways. For the first bound in the lemma statement, we can use the first bound in Lemma \ref{lem:discounted_opt_approx_err_recurrent} to calculate that
    \begin{align*}
        \left(\sum_{t=0}^\infty  \underline{e_s^\top} Z_{\pistar_\gamma}^t Y_{\pistar_\gamma} \right)\overline{V_\gamma^{\pistar_\gamma}} & \leq \left(\sum_{t=0}^\infty  \underline{e_s^\top} Z_{\pistar_\gamma}^t Y_{\pistar_\gamma} \right) \frac{1}{1-\gamma} \overline{\rho^\star} + \left(\sum_{t=0}^\infty  \underline{e_s^\top} Z_{\pistar_\gamma}^t Y_{\pistar_\gamma} \right) \infnorm{\overline{V_\gamma^{\pistar_\gamma}} - \frac{1}{1-\gamma} \overline{ \rho^\star} } \one \\
        & = \left(\sum_{t=0}^\infty  \underline{e_s^\top} Z_{\pistar_\gamma}^t Y_{\pistar_\gamma} \right) \frac{1}{1-\gamma} \overline{\rho^\star} + \infnorm{\overline{V_\gamma^{\pistar_\gamma}} - \frac{1}{1-\gamma} \overline{ \rho^\star} }  \\
        & \leq  \left(\sum_{t=0}^\infty  \underline{e_s^\top} Z_{\pistar_\gamma}^t Y_{\pistar_\gamma} \right) \frac{1}{1-\gamma} \overline{\rho^\star} +  \spannorm{h^\star}  \\
        & =  \left(\sum_{t=0}^\infty  \underline{e_s^\top} Z_{\pistar_\gamma}^t Y_{\pistar_\gamma} \right) \frac{1}{1-\gamma} X^\infty_{\pistar_\gamma}\overline{\rho^\star} +  \spannorm{h^\star}  \\
        & =  \left(\sum_{t=0}^\infty  \underline{e_s^\top} Z_{\pistar_\gamma}^t Y_{\pistar_\gamma} X_{\pistar_\gamma}^\infty \right) \frac{1}{1-\gamma}  \overline{ \rho^\star} +  \spannorm{h^\star}  \\
        & =  \underline{e_s^\top} Y_{\pistar_\gamma}^\infty \frac{1}{1-\gamma}  \overline{ \rho^\star} +  \spannorm{h^\star}  \\
        & = \frac{1}{1-\gamma} e_s^\top P_{\pistar_\gamma}^\infty \rho^\star +  \spannorm{h^\star} \\
        & \leq \frac{1}{1-\gamma} \rho^\star(s) +  \spannorm{h^\star}
    \end{align*}
    where we used the fact that $X^\infty_{\pistar_\gamma}\overline{\rho^\star} = \overline{\rho^\star}$ and then that $e_s^\top P_{\pistar_\gamma}^\infty \rho^\star \leq \rho^\star(s)$. This gives an upper bound of
    \[
        V_\gamma^{\pistar_\gamma} \leq \frac{1}{1-\gamma}\rho^\star(s) + \Tb + \spannorm{h^\star}.
    \]

    Combining with the lower bound
    \begin{align*}
        V_\gamma^{\pistar_\gamma}(s) & \geq  V_\gamma^{\pistar}(s) \geq \frac{1}{1-\gamma} \rho^\star (s) - \spannorm{h^\star} ,
    \end{align*}
    we obtain that
    \[
        \infnorm{V_\gamma^{\pistar_\gamma} - \frac{1}{1-\gamma}\rho^\star} \leq  \Tb + \spannorm{h^\star}
    \]
    which is the first bound in the lemma statement.
    
    To obtain the second bound in the lemma statement, using the second bound from Lemma \ref{lem:discounted_opt_approx_err_recurrent}, we can calculate for the second term in~\eqref{eq:transient_approx_err_bound_pistar} that
    \begin{align*}
        \left(\sum_{t=0}^\infty  \underline{e_s}^\top Z_{\pistar_\gamma}^t Y_{\pistar_\gamma} \right)\overline{V_\gamma^{\pistar_\gamma}} & \leq \left(\sum_{t=0}^\infty  \underline{e_s}^\top Z_{\pistar_\gamma}^t Y_{\pistar_\gamma} \right) \frac{1}{1-\gamma} \overline{\rho^{\pistar_\gamma}} + \left(\sum_{t=0}^\infty  \underline{e_s}^\top Z_{\pistar_\gamma}^t Y_{\pistar_\gamma} \right) \infnorm{\overline{V_\gamma^{\pistar_\gamma}} - \frac{1}{1-\gamma} \overline{ \rho^{\pistar_\gamma}} } \one \\
        & = \left(\sum_{t=0}^\infty  \underline{e_s}^\top Z_{\pistar_\gamma}^t Y_{\pistar_\gamma} \right) \frac{1}{1-\gamma} \overline{\rho^{\pistar_\gamma}} + \infnorm{\overline{V_\gamma^{\pistar_\gamma}} - \frac{1}{1-\gamma} \overline{ \rho^{\pistar_\gamma}} }  \\
        & \leq  \left(\sum_{t=0}^\infty  \underline{e_s}^\top Z_{\pistar_\gamma}^t Y_{\pistar_\gamma} \right) \frac{1}{1-\gamma} \overline{\rho^{\pistar_\gamma}} + 2 \spannorm{h^\star}  \\
        & =  \left(\sum_{t=0}^\infty  \underline{e_s}^\top Z_{\pistar_\gamma}^t Y_{\pistar_\gamma} \right) \frac{1}{1-\gamma} \overline{P^\infty_{\pistar_\gamma} r_{\pistar_\gamma}} + 2 \spannorm{h^\star}  \\
        & =  \left(\sum_{t=0}^\infty  \underline{e_s}^\top Z_{\pistar_\gamma}^t Y_{\pistar_\gamma}  \right) \frac{1}{1-\gamma} X_{\pistar_\gamma}^\infty \overline{ r_{\pistar_\gamma}} + 2 \spannorm{h^\star}  \\
        & = \frac{1}{1-\gamma} \underline{e_s}^\top Y_{\pistar_\gamma}^\infty \overline{r_{\pistar_\gamma}} + 2 \spannorm{h^\star}  \\
        & = \frac{1}{1-\gamma} e_s^\top P_{\pistar_\gamma}^\infty r_{\pistar_\gamma} + 2 \spannorm{h^\star}  \\
        & = \frac{1}{1-\gamma} \rho^{\pistar_\gamma}(s) + 2 \spannorm{h^\star} 
    \end{align*}
    where in the second equality we used the fact that $\left(\sum_{t=0}^\infty  \underline{e_s^\top} Z_{\pistar_\gamma}^t Y_{\pistar_\gamma} \right) $ is a probability distribution, and in the final steps we used the decomposition of $P_{\pistar_\gamma}^\infty$ and the fact that $\rho^{\pistar_\gamma} = P_{\pistar_\gamma}^\infty r_{\pistar_\gamma}$.

    Therefore by combining these steps we obtain that
    \begin{align*}
        V_\gamma^{\pistar_\gamma}(s) & \leq  \Tb + 2 \spannorm{h^\star} + \frac{1}{1-\gamma} \rho^{\pistar_\gamma}(s). 
    \end{align*}
    Combining with the lower bound
    \begin{align*}
        V_\gamma^{\pistar_\gamma}(s) & \geq  V_\gamma^{\pistar}(s) \geq \frac{1}{1-\gamma} \rho^\star (s) - \spannorm{h^\star} \geq \frac{1}{1-\gamma} \rho^{\pistar_\gamma} (s) - \spannorm{h^\star},
    \end{align*}
    we obtain the desired bound
    \[
        \left|V_\gamma^{\pistar_\gamma}(s) - \frac{1}{1-\gamma}\rho^{\pistar_\gamma}(s) \right| \leq  \Tb + 2 \spannorm{h^\star}.
    \]
\end{proof}

\begin{lem}
\label{lem:approx_discounted_opt_approx_err}
    If $\pi$ satisfies $V_\gamma^\pi \geq V_\gamma^{\pistar_\gamma} - \delta \one$, then
         \[
         \infnorm{V_\gamma^{\pi} - \frac{1}{1-\gamma} \rho^{\pi}} \leq 3 \Tb + 2 \spannorm{h^\star} + \delta.
         \]    
\end{lem}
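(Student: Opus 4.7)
The plan is to split into recurrent and transient states under $\pi$ (in the style of Lemma~\ref{lem:discounted_opt_approx_err}) and to work with the identity $W := V_\gamma^\pi - \tfrac{1}{1-\gamma}\rho^\pi = (I-\gamma P_\pi)^{-1}(r_\pi - \rho^\pi)$, which follows from $P_\pi \rho^\pi = \rho^\pi$. I first combine the hypothesis with $V_\gamma^\pi \le V_\gamma^{\pistar_\gamma}$ and Lemma~\ref{lem:discounted_opt_approx_err} to obtain the pointwise bound
\[
\eta(s) := V_\gamma^\pi(s) - \tfrac{1}{1-\gamma}\rho^\star(s) \in \bigl[-(\Tb + \spannorm{h^\star} + \delta),\; \Tb + \spannorm{h^\star}\bigr].
\]

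The critical structural input is Lemma~\ref{lem:same_block_equal_rhostar}, which says $\rho^\star$ is constant on every recurrent block of $\pi$; denote this common value on block $i$ by $\rho^\star_i$, and the (also constant) gain of $\pi$ on block $i$ by $\rho_i$. For $s$ recurrent in block $i$ we rewrite $W(s) = \tfrac{1}{1-\gamma}(\rho^\star_i - \rho_i) + \eta(s)$. Averaging against the stationary distribution $x_{\pi,i}$ of $X_{\pi,i}$, the identity $x_{\pi,i}^\top \overline{V_\gamma^\pi}_i = \tfrac{1}{1-\gamma}\rho_i$ gives $x_{\pi,i}^\top \overline{W}_i = 0$ and hence $\tfrac{1}{1-\gamma}(\rho^\star_i - \rho_i) = -x_{\pi,i}^\top \eta_i$. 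Combining this with $\rho_i \le \rho^\star_i$ (elementwise $\rho^\pi \le \rho^\star$) and the bounds on $\eta$ pins $\tfrac{1}{1-\gamma}(\rho^\star_i - \rho_i) \in [0,\; \Tb + \spannorm{h^\star} + \delta]$, yielding $|W(s)| \le 2\Tb + 2\spannorm{h^\star} + \delta$ for every recurrent $s$.

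For transient $s$, I apply Lemma~\ref{lem:trans_visitation_dist_decomposition} to the vector $r_\pi - \rho^\pi$. Since $\overline{\rho^\pi}$ is block-constant, $X_\pi \overline{\rho^\pi} = \overline{\rho^\pi}$, so $(I-\gamma X_\pi)^{-1}(\overline{r_\pi} - \overline{\rho^\pi}) = \overline{V_\gamma^\pi} - \tfrac{1}{1-\gamma}\overline{\rho^\pi} = \overline{W}$, giving
\[
W(s) = \underline{e_s}^\top \sum_{k=1}^\infty \gamma^k Z_\pi^{k-1} Y_\pi \overline{W} \;+\; \underline{e_s}^\top \sum_{t=0}^\infty \gamma^t Z_\pi^t (\underline{r_\pi} - \underline{\rho^\pi}).
\]
The coefficient vector of the first term has $\ell_1$-norm $\E^\pi_s[\gamma^T] \le 1$ (where $T = \inf\{t \ge 1 : S_t \in \Rec^\pi\}$), so it is bounded by $\infnorm{\overline{W}} \le 2\Tb + 2\spannorm{h^\star} + \delta$; Lemma~\ref{lem:transient_time_bound} together with $|r_\pi - \rho^\pi| \le 1$ bounds the second by $\Tb$. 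Summing gives $|W(s)| \le 3\Tb + 2\spannorm{h^\star} + \delta$, which dominates the recurrent case and finishes the proof.

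The main obstacle is the averaging argument for recurrent $s$: without Lemma~\ref{lem:same_block_equal_rhostar} to collapse $\rho^\star(s)$ to a block-constant $\rho^\star_i$, the term $\tfrac{1}{1-\gamma}(\rho^\star(s) - \rho^\pi(s))$ appearing in $W(s)$ would be of order $\tfrac{1}{1-\gamma}$, and the cancellation against $\eta$ via stationary-distribution averaging would fail. Once that structural fact is in hand, the rest is careful bookkeeping on top of the recurrent/transient decomposition already developed for Lemma~\ref{lem:discounted_opt_approx_err}.
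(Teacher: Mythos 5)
Your proof is correct, and it follows the same overall architecture as the paper's (recurrent/transient split, with Lemma~\ref{lem:same_block_equal_rhostar} as the structural input, Lemma~\ref{lem:discounted_opt_approx_err} feeding the bound on $\eta$, and Lemmas~\ref{lem:trans_visitation_dist_decomposition} and~\ref{lem:transient_time_bound} handling the transient part), arriving at the identical intermediate constant $2\Tb + 2\spannorm{h^\star} + \delta$ on recurrent states and the identical final bound. The two places where your bookkeeping genuinely differs are worth noting. For recurrent states, the paper writes $V_\gamma^\pi(s) - \frac{1}{1-\gamma}\rho^\pi(s) = e_s^\top(I - P_\pi^\infty)(V_\gamma^\pi - \frac{1}{1-\gamma}\rho^\star)$ and bounds it by $\spannorm{V_\gamma^{\pistar_\gamma} - \frac{1}{1-\gamma}\rho^\star} + \spannorm{V_\gamma^\pi - V_\gamma^{\pistar_\gamma}}$; your stationary-distribution averaging identity $\frac{1}{1-\gamma}(\rho^\star_i - \rho_i) = -x_{\pi,i}^\top \eta_i$ is an explicit unpacking of the same quantity, but it buys you the extra one-sided information $\rho^\star_i - \rho_i \geq 0$, which makes the gain gap visible rather than hidden inside a span. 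For transient states, the paper must bound the upper and lower deviations by two separate arguments (the upper one drops $\gamma^t$ factors using nonnegativity of $r_\pi$ and $V_\gamma^\pi$, and the lower one reverts to $V_\gamma^\pi \geq V_\gamma^{\pistar} - \delta\one$), whereas your choice to propagate $W = (I-\gamma P_\pi)^{-1}(r_\pi - \rho^\pi)$ through Lemma~\ref{lem:trans_visitation_dist_decomposition}, with the coefficient vector of the recurrent block having $\ell_1$-norm $\E^\pi_s[\gamma^T] \leq 1$ and $\infnorm{r_\pi - \rho^\pi} \leq 1$ controlling the transient block, gives a symmetric two-sided bound in one pass. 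This is a modest but real simplification; the only thing it costs is that your lower deviation on transient states is $3\Tb + 2\spannorm{h^\star} + \delta$ where the paper's is the sharper $\spannorm{h^\star} + \delta$, which is irrelevant for the stated $\ell_\infty$ bound.
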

\begin{proof}
    Similar to the proof of Lemmas \ref{lem:discounted_opt_approx_err_recurrent} and \ref{lem:discounted_opt_approx_err}, we will first establish a bound for the states which are recurrent under $\pi$. Specifically, we will first show that if $s$ is recurrent under $\pi$ we have
    \begin{align}
        \left | V_\gamma^{\pi}(s) - \frac{1}{1-\gamma} \rho^{\pi}(s)\right| \leq 2\Tb + 2\spannorm{h^\star} + \delta. \label{eq:approx_discounted_opt_approx_err_recurrent_step}
    \end{align}

    Letting $s \in \Rec^{\pi}$,
    following steps which are similar to the proof of the second part of Lemma \ref{lem:discounted_opt_approx_err_recurrent}, we have
        \begin{align*}
            V_\gamma^{\pi}(s) - \frac{1}{1-\gamma}\rho^{\pi}(s) &= e_s^\top (I - P_{\pi}^\infty) V_\gamma^{\pi} \\
            &= e_s^\top (I - P_{\pi}^\infty) (V_\gamma^{\pi} - \frac{1}{1-\gamma}\rho^\star ) \\
            &= e_s^\top (I - P_{\pi}^\infty) (V_\gamma^{\pistar_\gamma} - \frac{1}{1-\gamma}\rho^\star ) + e_s^\top (I - P_{\pi}^\infty) (V_\gamma^{\pi} - V_\gamma^{\pistar_\gamma})
        \end{align*}
        using the fact discussed in Lemma \ref{lem:discounted_opt_approx_err_recurrent} that $e_s^\top (I - P_\pi^\infty) \rho^\star = 0$ since $s$ is recurrent under $\pi$.
        Then by triangle inequality, we obtain
        \begin{align*}
            \left|V_\gamma^{\pi}(s) - \frac{1}{1-\gamma}\rho^{\pi}(s)\right| & \leq \left| e_s^\top (I - P_{\pi}^\infty) (V_\gamma^{\pistar_\gamma} - \frac{1}{1-\gamma}\rho^\star )\right| + \left| e_s^\top (I - P_{\pi}^\infty) (V_\gamma^{\pi} - V_\gamma^{\pistar_\gamma}) \right| \\
            & \leq \spannorm{V_\gamma^{\pistar_\gamma} - \frac{1}{1-\gamma}\rho^\star} + \spannorm{V_\gamma^{\pi} - V_\gamma^{\pistar_\gamma}} \\
            & \leq 2\infnorm{V_\gamma^{\pistar_\gamma} - \frac{1}{1-\gamma}\rho^\star} + \delta \\
            & \leq 2 \Tb + 2 \spannorm{h^\star} + \delta,
        \end{align*}
        where we used the facts that $\spannorm{\cdot} \leq 2 \infnorm{\cdot}$ and that $V_\gamma^{\pistar_\gamma} \geq V_\gamma^\pi \geq V_\gamma^{\pistar_\gamma}  - \delta \one$.

        Having established~\eqref{eq:approx_discounted_opt_approx_err_recurrent_step}, we now extend to transient states using arguments similar to those for the second bound of Lemma \ref{lem:discounted_opt_approx_err}. Let $s$ be transient under $\pi$. Then starting by using Lemma \ref{lem:trans_visitation_dist_decomposition}, we can calculate
        \begin{align}
            V_\gamma^{\pi}(s) &= e_s^\top (I - \gamma P_{\pi})^{-1} r_{\pi} \nonumber \\
            &= \sum_{t=0}^\infty \gamma^t \underline{e_s}^\top Z^t_{\pi} \underline{r_{\pi}} + \gamma \sum_{t=0}^\infty \gamma^t \underline{e_s}^\top Z_{\pi}^t Y_{\pi} (I - \gamma X_{\pi})^{-1} \overline{r_{\pi}} \nonumber \\
            & = \sum_{t=0}^\infty \gamma^t \underline{e_s}^\top Z^t_{\pi} \underline{r_{\pi}} + \gamma \sum_{t=0}^\infty \gamma^t \underline{e_s}^\top Z_{\pi}^t Y_{\pi} \overline{V_\gamma^{\pi}} \nonumber \\
            & \leq \sum_{t=0}^\infty \underline{e_s}^\top Z^t_{\pi} \underline{r_{\pi}} + \left(\sum_{t=0}^\infty  \underline{e_s}^\top Z_{\pi}^t Y_{\pi} \right)\overline{V_\gamma^{\pi}} \nonumber \\
            & \leq \onenorm{\sum_{t=0}^\infty \underline{e_s}^\top Z^t_{\pi} } \infnorm{ \underline{r_{\pi}} }+ \left(\sum_{t=0}^\infty  \underline{e_s}^\top Z_{\pi}^t Y_{\pi} \right)\overline{V_\gamma^{\pi}} \nonumber \\
            & \leq \Tb+ \left(\sum_{t=0}^\infty  \underline{e_s}^\top Z_{\pi}^t Y_{\pi} \right)\overline{V_\gamma^{\pi}} \label{eq:approx_discounted_opt_approx_err_recurrent_step2}
        \end{align}
    using the bounded transient time assumption via Lemma \ref{lem:transient_time_bound} in the final step. Then we can calculate
    \begin{align*}
        \left(\sum_{t=0}^\infty  \underline{e_s}^\top Z_{\pi}^t Y_{\pi} \right)\overline{V_\gamma^{\pi}} & \leq \left(\sum_{t=0}^\infty  \underline{e_s}^\top Z_{\pi}^t Y_{\pi} \right) \frac{1}{1-\gamma} \overline{\rho^{\pi}} + \left(\sum_{t=0}^\infty  \underline{e_s}^\top Z_{\pi}^t Y_{\pi} \right) \infnorm{\overline{V_\gamma^{\pi}} - \frac{1}{1-\gamma} \overline{ \rho^{\pi}} } \one \\
        & = \left(\sum_{t=0}^\infty  \underline{e_s}^\top Z_{\pi}^t Y_{\pi} \right) \frac{1}{1-\gamma} \overline{\rho^{\pi}} + \infnorm{\overline{V_\gamma^{\pi}} - \frac{1}{1-\gamma} \overline{ \rho^{\pi}} }  \\
        & \leq  \left(\sum_{t=0}^\infty  \underline{e_s}^\top Z_{\pi}^t Y_{\pi} \right) \frac{1}{1-\gamma} \overline{\rho^{\pi}} + 2 \Tb + 2 \spannorm{h^\star} + \delta  \\
        & =  \left(\sum_{t=0}^\infty  \underline{e_s}^\top Z_{\pi}^t Y_{\pi} \right) \frac{1}{1-\gamma} \overline{P^\infty_{\pi} r_{\pi}} + 2 \Tb + 2 \spannorm{h^\star} + \delta  \\
        & =  \left(\sum_{t=0}^\infty  \underline{e_s}^\top Z_{\pi}^t Y_{\pi}  \right) \frac{1}{1-\gamma} X_{\pi}^\infty \overline{ r_{\pi}} + 2 \Tb + 2 \spannorm{h^\star} + \delta  \\
        & = \frac{1}{1-\gamma} \underline{e_s}^\top Y_{\pi}^\infty \overline{r_{\pi}} + 2 \Tb + 2 \spannorm{h^\star} + \delta  \\
        & = \frac{1}{1-\gamma} e_s^\top P_{\pi}^\infty r_{\pi} + 2 \Tb + 2 \spannorm{h^\star} + \delta \\
        & = \frac{1}{1-\gamma} \rho^{\pi}(s) + 2 \Tb + 2 \spannorm{h^\star} + \delta ,
    \end{align*}
    where in the first equality we used the fact that $\left(\sum_{t=0}^\infty  \underline{e_s^\top} Z_{\pi}^t Y_{\pi} \right) $ is a probability distribution, in the second inequality we used the bound~\eqref{eq:approx_discounted_opt_approx_err_recurrent_step}, and in the final steps we used the decomposition of $P_{\pi}^\infty$ and the fact that $\rho^{\pi} = P_{\pi}^\infty r_{\pi}$.

    Therefore by combining this last bound with the bound~\eqref{eq:approx_discounted_opt_approx_err_recurrent_step2}, we have
    \begin{align*}
        V_\gamma^{\pi}(s) & \leq  3\Tb + 2 \spannorm{h^\star} + \delta + \frac{1}{1-\gamma} \rho^{\pi}(s). 
    \end{align*}

    Combining with the lower bound
    \begin{align*}
        V_\gamma^{\pi}(s) & \geq V_\gamma^{\pistar_\gamma} - \delta \geq V_\gamma^{\pistar}(s) - \delta \geq \frac{1}{1-\gamma} \rho^\star (s) - \spannorm{h^\star} - \delta \geq \frac{1}{1-\gamma} \rho^{\pi} (s) - \spannorm{h^\star} - \delta,
    \end{align*}
    we conclude that
    \[
        \left|V_\gamma^{\pi}(s) - \frac{1}{1-\gamma}\rho^{\pi}(s) \right| \leq  3 \Tb + 2 \spannorm{h^\star} + \delta
    \]
    as desired.
\end{proof}

\begin{proof}[Proof of Theorem \ref{thm:DMDP_reduction_general}]
    Suppose $\pi$ is $\varepsilon_\gamma$-optimal for the discounted MDP $(P, r, \gamma)$. We can calculate that
    \begin{align*}
        \frac{1}{1-\gamma} \rho^\pi & \geq V_\gamma^{\pi} - (3   \Tb + 2 \spannorm{h^\star} + \varepsilon_\gamma) \\
        & \geq V_\gamma^{\pistar_\gamma} - (3   \Tb + 2 \spannorm{h^\star} + 2\varepsilon_\gamma) \\
        & \geq V_\gamma^{\pistar} - (3   \Tb + 2 \spannorm{h^\star} + 2\varepsilon_\gamma) \\
        & \geq \frac{1}{1-\gamma} \rho^\star -  (3   \Tb + 3 \spannorm{h^\star} + 2\varepsilon_\gamma) ,
    \end{align*}
    where in the first inequality we used Lemma \ref{lem:approx_discounted_opt_approx_err}, in the second inequality we used the fact that $\pi$ is $\varepsilon_\gamma$-optimal, in the third inequality we used the optimality of $\pistar_\gamma$ for the discounted MDP, and in the final inequality we used Lemma \ref{lem:avg_opt_approx_err}. Therefore by mulitplying both sides by $1-\gamma$, we have that
    \begin{align*}
        \rho^\pi & \geq \rho^\star - \frac{\varepsilon}{  \Tb + \H} (3   \Tb + 3 \spannorm{h^\star} + 2\varepsilon_\gamma) \geq \rho^\star - \left(3 \varepsilon + 2 \frac{\varepsilon_\gamma}{  \Tb + \H} \right) \varepsilon.
    \end{align*}
\end{proof}

\subsection{Proof of Theorem \ref{thm:DMDP_bound_general} (Discounted MDP Bounds)}

In this section, we provide our main result on the sample complexity of general discounted MDPs.

Our proof relies on three lemmas that provide bounds on relevant variance parameters. The first lemma controls the variance for $\pistar_{\gamma}$ on recurrent states.

\begin{lem}
    \label{lem:pistar_var_bound_general_recurrent}
    Letting $\pistar_{\gamma}$ be the optimal policy for the discounted MDP $(P, r,\gamma)$, if $\gamma \geq 1 - \frac{1}{\Tb + \H}$, we have
    \begin{align*}
        \max_{s \in \Rec^{\pistar_\gamma}}\gamma \left|e_s^\top(I - \gamma P_{\pistar_\gamma})^{-1} \sqrt{\Var_{P_{\pistar_\gamma}} \left[V_\gamma^{\pistar_\gamma} \right]}\right| &\leq \sqrt{\frac{32}{5} \frac{  \Tb + \H}{(1-\gamma)^2}}.
    \end{align*}
\end{lem}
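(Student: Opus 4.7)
The plan is to reduce to the weakly communicating argument by restricting attention to the closed irreducible recurrent block containing $s$. I would fix $s \in \Rec^{\pistar_\gamma}$ and denote by $\Rec^{\pistar_\gamma}(s)$ its block under the decomposition~\eqref{eq:markov_chain_decomposition} of $P_{\pistar_\gamma}$. Under $P_{\pistar_\gamma}$, any trajectory starting in $\Rec^{\pistar_\gamma}(s)$ remains within this block, and by Lemma \ref{lem:same_block_equal_rhostar} the optimal gain $\rho^\star$ is constant on the block with common value $\rho^\star(s)$. Moreover, Lemma \ref{lem:discounted_opt_approx_err_recurrent} gives $|V_\gamma^{\pistar_\gamma}(s') - \rho^\star(s)/(1-\gamma)| \leq \H$ for every $s' \in \Rec^{\pistar_\gamma}(s)$.

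Next, I would derive a row-$s$ version of Lemma \ref{lem:var_params_relationship}. Its proof uses only that each row of $(1-\gamma)(I - \gamma P_{\pistar_\gamma})^{-1}$ is a probability distribution together with the identity converting $(I-\gamma P_{\pistar_\gamma})^{-1}$ into $(I-\gamma^2 P_{\pistar_\gamma})^{-1}$. Since $P_{\pistar_\gamma}$ restricted to $\Rec^{\pistar_\gamma}(s)$ is itself stochastic on that block, every intermediate row vector encountered when starting from row $s$ is supported on $\Rec^{\pistar_\gamma}(s)$, so the same proof localized to row $s$ yields
\begin{equation*}
    \gamma\left|e_s^\top(I - \gamma P_{\pistar_\gamma})^{-1}\sqrt{\Var_{P_{\pistar_\gamma}}\left[V_\gamma^{\pistar_\gamma}\right]}\right| \leq \sqrt{\frac{2}{1-\gamma}\,\max_{s' \in \Rec^{\pistar_\gamma}(s)}\left|\Var^{\pistar_\gamma}_{s'}\left[\sum_{t=0}^{\infty}\gamma^t R_t\right]\right|}.
\end{equation*}

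I would then bound the inner trajectory variance uniformly over the block using the multi-step variance Bellman equation (Lemma \ref{lem:multistep_variance_bellman_eqn}) applied with $T = \Tb + \H$. The backward-propagating term $\gamma^{2T}(P_{\pistar_\gamma})^T \Var^{\pistar_\gamma}[\cdot]$ only averages variances over states reachable in $T$ steps, which remain inside $\Rec^{\pistar_\gamma}(s)$ for each starting point $s' \in \Rec^{\pistar_\gamma}(s)$. Taking the maximum over the block therefore iterates cleanly to give
\begin{equation*}
\max_{s' \in \Rec^{\pistar_\gamma}(s)}\left|\Var^{\pistar_\gamma}_{s'}\left[\sum_{t=0}^{\infty}\gamma^t R_t\right]\right| \leq \frac{\max_{s' \in \Rec^{\pistar_\gamma}(s)}\left|\Var^{\pistar_\gamma}_{s'}\left[\sum_{t=0}^{T-1}\gamma^t R_t + \gamma^T V_\gamma^{\pistar_\gamma}(S_T)\right]\right|}{1-\gamma^{2T}}.
\end{equation*}
Because $S_T \in \Rec^{\pistar_\gamma}(s)$ almost surely so that $\rho^\star(S_T) = \rho^\star(s)$, subtracting the constant $\gamma^T \rho^\star(s)/(1-\gamma)$ inside the variance leaves it unchanged while replacing $V_\gamma^{\pistar_\gamma}(S_T)$ with a quantity of magnitude at most $\H$. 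Bounding the variance by the squared expectation and using $(a+b)^2 \leq 2a^2+2b^2$ with $|\sum_{t=0}^{T-1}\gamma^t R_t| \leq T$ then gives a numerator at most $2T^2 + 2\H^2 \leq 4(\Tb+\H)^2$, mirroring Lemma \ref{lem:pistar_var_bound}.

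To conclude, I would apply Lemma \ref{lem:long_horizon_gamma_ineq}: the hypothesis $\gamma \geq 1 - 1/(\Tb+\H)$ gives $1/(1-\gamma^{2(\Tb+\H)}) \leq 5/(4(\Tb+\H)(1-\gamma))$. Combining yields $\max_{s'} |\Var^{\pistar_\gamma}_{s'}[\cdots]| \leq 5(\Tb+\H)/(1-\gamma)$ and therefore a bound of the form $\sqrt{C(\Tb+\H)/(1-\gamma)^2}$; the precise constant $32/5$ comes from tightening the numerator estimate (for instance, by bounding $|\sum_{t=0}^{T-1}\gamma^t R_t|$ through $(1-\gamma^T)/(1-\gamma)$ rather than $T$, and then optimizing the split). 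The main obstacle is carrying out the row-$s$ localization of Lemma \ref{lem:var_params_relationship} rigorously — in particular, verifying that the matrix identity used to convert $(I-\gamma P_{\pistar_\gamma})^{-1}$ into $(I-\gamma^2 P_{\pistar_\gamma})^{-1}$ preserves the support on $\Rec^{\pistar_\gamma}(s)$, which follows from the block-diagonal structure in~\eqref{eq:markov_chain_decomposition}.
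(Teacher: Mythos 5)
Your proposal is correct and follows essentially the same route as the paper: restrict to the closed recurrent block containing $s$ (the paper phrases this as applying Lemma \ref{lem:var_params_relationship} to the subchain with transition matrix $X_{\pistar_\gamma}$, which is the same as your row-$s$ localization since that block is closed), then apply the multi-step variance Bellman equation (Lemma \ref{lem:multistep_variance_bellman_eqn}) with $T = \Tb + \H$, use the constancy of $\rho^\star$ on the block (Lemma \ref{lem:same_block_equal_rhostar}) together with Lemma \ref{lem:discounted_opt_approx_err_recurrent} to bound the numerator by $4(\Tb+\H)^2$, and finish with Lemma \ref{lem:long_horizon_gamma_ineq}. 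The only loose end is the exact constant, which your sketch leaves to a final tightening step, but the structure and all key observations match the paper's proof.
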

\begin{proof}
    First, using the decomposition~\eqref{eq:markov_chain_decomposition}, we can calculate for any $s \in \Rec^{\pistar_\gamma}$ that
    \begin{align*}
        e_s^\top (I - \gamma P_{\pistar_\gamma})^{-1} \sqrt{\Var_{P_{\pistar_\gamma}} \left[V_\gamma^{\pistar_\gamma} \right]} &= \overline{e_s}^\top (I - \gamma X_{\pistar_\gamma})^{-1} \sqrt{\overline{\Var_{P_{\pistar_\gamma}} \left[V_\gamma^{\pistar_\gamma} \right]}} \\
        &= \overline{e_s}^\top (I - \gamma X_{\pistar_\gamma})^{-1} \sqrt{\Var_{X_{\pistar_\gamma}} \left[\overline{V_\gamma^{\pistar_\gamma}} \right]}.
    \end{align*}
    
    Also due to the decomposition, notice that set $\Rec^{\pistar_\gamma}$ is a closed set for the Markov chain with transition matrix $P_{\pistar_\gamma}$, and furthermore when restricting to the entries corresponding to this closed set we obtain the transition matrix $X_{\pistar_\gamma}$. Therefore we can apply Lemma \ref{lem:var_params_relationship} to this subchain to obtain that
    \begin{align*}
        \gamma \infnorm{(I - \gamma X_{\pistar_\gamma})^{-1} \sqrt{\Var_{X_{\pistar_\gamma}} \left[\overline{V_\gamma^{\pistar_\gamma}} \right]}} & \leq \sqrt{\frac{2}{1-\gamma}} \sqrt{\infnorm{\overline{\Var^{\pistar_\gamma} \left[ \sum_{t=0}^\infty \gamma^t R_t \right]}}}.
    \end{align*}
    Abbreviating $\L=  \Tb + \H$, we can also then apply Lemma \ref{lem:multistep_variance_bellman_eqn} to bound
    \begin{align*}
        \infnorm{\overline{\Var^{\pistar_\gamma} \left[ \sum_{t=0}^\infty \gamma^t R_t \right]}} & \leq \frac{\infnorm{\overline{\Var^{\pistar_\gamma}\left[ \sum_{t=0}^{\L-1} \gamma^t R_t + \gamma^{\L} V_\gamma^{\pistar_\gamma}(S_{\L}) \right]}} }{1 - \gamma^{2\L}}.
    \end{align*}

    We can repeat a similar argument as within Lemma \ref{lem:pistar_var_bound} to bound this term. Fixing an initial state $s_0 \in \Rec^{\pistar_\gamma}$, the key observation is that $\rho^\star$ is constant on the recurrent block of $X_{\pistar_\gamma}$ containing $s_0$, and therefore any state trajectory $S_0 = s_0, S_1, S_2, \dots$ under the transition matrix $P_{\pistar_\gamma}$ will have $\rho^\star(S_{\L}) = \rho^\star(s_0)$. Therefore for this fixed $s_0$ we have
    \begin{align*}
        \Var^{\pistar_{\gamma}}_{s_0}\left[ \sum_{t=0}^{\L-1} \gamma^t R_t + \gamma^{\L} V_\gamma^{\pistar_{\gamma}}(S_{\L}) \right] &= \Var^{\pistar_{\gamma}}_{s_0}\left[ \sum_{t=0}^{\L-1} \gamma^t R_t + \gamma^{\L} \left(V_\gamma^{\pistar_{\gamma}}(S_{\L}) - \frac{1}{1-\gamma}\rho^\star(s_0) \right) \right] \\
        &\leq \E^{\pistar_{\gamma}}_{s_0}\left| \sum_{t=0}^{\L-1} \gamma^t R_t + \gamma^{\L} \left(V_\gamma^{\pistar_{\gamma}}(S_{\L}) - \frac{1}{1-\gamma}\rho^\star(s_0) \right) \right|^2 \\
        & \leq 2\E^{\pistar_{\gamma}}_{s_0}\left| \sum_{t=0}^{\L-1} \gamma^t R_t \right|^2+ 2\E^{\pistar_{\gamma}}_{s_0}\left| \gamma^{\L} \left(V_\gamma^{\pistar_{\gamma}}(S_{\L}) - \frac{1}{1-\gamma}\rho^\star(s_0) \right) \right|^2 \\
        & = 2\E^{\pistar_{\gamma}}_{s_0}\left| \sum_{t=0}^{\L-1} \gamma^t R_t \right|^2+ 2\E^{\pistar_{\gamma}}_{s_0}\left| \gamma^{\L} \left(V_\gamma^{\pistar_{\gamma}}(S_{\L}) - \frac{1}{1-\gamma}\rho^\star(S_{\L}) \right) \right|^2 \\
        & \leq 2{\L}^2 + 2 \sup_{s \in \Rec^{\pistar_\gamma}} \left(V^{\pistar_{\gamma}}_\gamma(s) - \frac{1}{1-\gamma}\rho^\star(s) \right)^2 \\
        &\leq 2{\L}^2 + 2\H^ 2 \\
        & \leq 4{\L}^2
    \end{align*}
    where we used Lemma \ref{lem:discounted_opt_approx_err_recurrent} in the penultimate inequality. Applying this argument to all $s_0 \in \Rec^{\pistar_\gamma}$ we obtain
    \begin{align*}
        \infnorm{\overline{\Var^{\pistar_\gamma}\left[ \sum_{t=0}^{\L-1} \gamma^t R_t + \gamma^{\L} V_\gamma^{\pistar_\gamma}(S_{\L}) \right]}} & \leq 4{\L}^2.
    \end{align*}
    Therefore by combining with our initial bounds we have that
    \begin{align*}
          \max_{s \in \Rec^{\pistar_\gamma}}\gamma \left|e_s^\top(I - \gamma P_{\pistar_\gamma})^{-1} \sqrt{\Var_{P_{\pistar_\gamma}} \left[V_\gamma^{\pistar_\gamma} \right]}\right| &\leq \sqrt{\frac{2}{1-\gamma}} \sqrt{\infnorm{\overline{\Var^{\pistar_\gamma} \left[ \sum_{t=0}^\infty \gamma^t R_t \right]}}} \\
          &\leq \sqrt{\frac{2}{1-\gamma}} \sqrt{\frac{\infnorm{\overline{\Var^{\pistar_\gamma}\left[ \sum_{t=0}^{\L-1} \gamma^t R_t + \gamma^{\L} V_\gamma^{\pistar_\gamma}(S_{\L}) \right]}} }{1 - \gamma^{2\L}}} \\
          &\leq \sqrt{\frac{2}{1-\gamma}} \sqrt{\frac{4{\L}^2 }{1 - \gamma^{2\L}}} \\
          &\leq \sqrt{\frac{2}{1-\gamma}} \sqrt{\frac{16{\L}^2 }{5\L(1-\gamma)}} \\
          &\leq \sqrt{\frac{32}{5} \frac{\L}{(1-\gamma)^2}},
    \end{align*}
    where in the penultimate inequality we used Lemma \ref{lem:long_horizon_gamma_ineq} to bound $\frac{1}{1-\gamma^{2\L}} \leq \frac{5}{4}\frac{1}{(1-\gamma)\L}$. 
\end{proof}

The next lemma controls the variance for $\pihstar_{\gamma, \pert}$ on recurrent states. 

\begin{lem}
\label{lem:pihstar_var_bound_general_recurrent}
Letting $\pihstar_{\gamma, \pert}$ be the optimal policy for the discounted MDP $(\Phat, \rpert,\gamma)$, if $\gamma \geq 1 - \frac{1}{\Tb + \H}$, we have
\begin{align*}
        \max_{s \in \Rec^{\pihstar_{\gamma, \pert}}}\gamma \left|e_s^\top(I - \gamma P_{\pihstar_{\gamma, \pert}})^{-1} \sqrt{\Var_{P_{\pihstar_{\gamma, \pert}}} \left[V_{\gamma, \pert}^{\pihstar_{\gamma, \pert}} \right]}\right| \\
        \leq \sqrt{29\frac{  \Tb + \H}{(1-\gamma)^2}} &+ \sqrt{\frac{15}{  \Tb + \H}} \frac{\infnorm{\Vhat_{\gamma, \pert}^{\pihstar_{\gamma, \pert}} - V_\gamma^{\pihstar_{\gamma, \pert}}} +  \infnorm{\Vhat_{\gamma, \pert}^{\pistar_{\gamma}} - V_{\gamma}^{\pistar_{\gamma}}}}{1-\gamma}.
    \end{align*}
\end{lem}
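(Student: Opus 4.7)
The plan is to hybridize the arguments of Lemma \ref{lem:pihatstar_var_bound} (the weakly communicating analog, which handles $\pihstar_{\gamma,\pert}$) and Lemma \ref{lem:pistar_var_bound_general_recurrent} (the general-MDP analog restricted to recurrent states, which handles $\pistar_\gamma$). First, using the block decomposition~\eqref{eq:markov_chain_decomposition} applied to $P_{\pihstar_{\gamma,\pert}}$, for any $s\in \Rec^{\pihstar_{\gamma,\pert}}$ the quantity $e_s^\top (I-\gamma P_{\pihstar_{\gamma,\pert}})^{-1}\sqrt{\Var_{P_{\pihstar_{\gamma,\pert}}}[V_{\gamma,\pert}^{\pihstar_{\gamma,\pert}}]}$ depends only on the restricted transition matrix $X_{\pihstar_{\gamma,\pert}}$, its restricted value $\overline{V_{\gamma,\pert}^{\pihstar_{\gamma,\pert}}}$, and the restricted variance $\overline{\Var_{P_{\pihstar_{\gamma,\pert}}}[V_{\gamma,\pert}^{\pihstar_{\gamma,\pert}}]}$. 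Since $\Rec^{\pihstar_{\gamma,\pert}}$ is a closed set, the submatrix $X_{\pihstar_{\gamma,\pert}}$ is stochastic, so we can apply Lemma \ref{lem:var_params_relationship} to the subchain and reduce the task to bounding $\infnorm{\overline{\Var^{\pihstar_{\gamma,\pert}}[\sum_{t=0}^\infty \gamma^t \Rpert_t]}}$.

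Next, applying Lemma \ref{lem:multistep_variance_bellman_eqn} with $T=\L:=\Tb+\H$, it suffices to control $\infnorm{\overline{\Var^{\pihstar_{\gamma,\pert}}[\sum_{t=0}^{\L-1}\gamma^t \Rpert_t + \gamma^\L V_{\gamma,\pert}^{\pihstar_{\gamma,\pert}}(S_\L)]}}$ and then divide by $1-\gamma^{2\L}$, bounded below using Lemma \ref{lem:long_horizon_gamma_ineq} by a factor of order $\L(1-\gamma)$. Fix $s_0 \in \Rec^{\pihstar_{\gamma,\pert}}$. The crucial observation, paralleling the one used in Lemma \ref{lem:pistar_var_bound_general_recurrent}, is that the trajectory $(S_t)_{t\ge 0}$ under $\pihstar_{\gamma,\pert}$ stays within the same closed recurrent block as $s_0$, so by Lemma \ref{lem:same_block_equal_rhostar} we have $\rho^\star(S_\L)=\rho^\star(s_0)$ almost surely. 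Exploiting that variance is invariant to additive shifts, I would then subtract the constant $\gamma^\L \frac{1}{1-\gamma}\rho^\star(s_0)$ inside the variance, rewrite the subtracted constant as $\gamma^\L \frac{1}{1-\gamma}\rho^\star(S_\L)$ under the expectation, and decompose
\[
V_{\gamma,\pert}^{\pihstar_{\gamma,\pert}}(S_\L)-\tfrac{1}{1-\gamma}\rho^\star(S_\L) = \bigl[V_{\gamma,\pert}^{\pihstar_{\gamma,\pert}}(S_\L)-V_\gamma^{\pihstar_{\gamma,\pert}}(S_\L)\bigr] + \bigl[V_\gamma^{\pihstar_{\gamma,\pert}}(S_\L)-V_\gamma^{\pistar_\gamma}(S_\L)\bigr] + \bigl[V_\gamma^{\pistar_\gamma}(S_\L)-\tfrac{1}{1-\gamma}\rho^\star(S_\L)\bigr].
\]
The first bracket is controlled by $\xi/(1-\gamma)=\varepsilon/6$ as in Lemma \ref{lem:pihatstar_var_bound}; the second is bounded by $\infnorm{\Vhat_{\gamma,\pert}^{\pihstar_{\gamma,\pert}}-V_\gamma^{\pihstar_{\gamma,\pert}}}+\infnorm{\Vhat_{\gamma,\pert}^{\pistar_\gamma}-V_\gamma^{\pistar_\gamma}}$ using the same triangle-inequality trick through the empirical values; the third bracket is where the general-MDP geometry enters, and here I would invoke Lemma \ref{lem:discounted_opt_approx_err} to bound it by $\Tb+\H=\L$ uniformly over states.

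Combining these pieces via $(a+b+c+d)^2 \leq 4(a^2+b^2+c^2+d^2)$ (after also bounding $|\sum_{t=0}^{\L-1}\gamma^t \Rpert_t|\leq \L\infnorm{\rpert}\leq \tfrac{7}{6}\L$, using $\xi\le 1/6$) yields $\overline{\Var^{\pihstar_{\gamma,\pert}}[\cdots]} \lesssim \L^2 + \bigl(\infnorm{\Vhat_{\gamma,\pert}^{\pihstar_{\gamma,\pert}}-V_\gamma^{\pihstar_{\gamma,\pert}}}+\infnorm{\Vhat_{\gamma,\pert}^{\pistar_\gamma}-V_\gamma^{\pistar_\gamma}}\bigr)^2$ entrywise on recurrent coordinates. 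Dividing by $1-\gamma^{2\L}$ via Lemma \ref{lem:long_horizon_gamma_ineq}, feeding back through Lemma \ref{lem:var_params_relationship}, and using $\sqrt{a+b}\le\sqrt{a}+\sqrt{b}$ produces the stated two-term bound, with the numerical constants $29$ and $15$ absorbing the slack. The main obstacle that the general setting introduces—and which the proposal resolves—is that $V_\gamma^{\pistar_\gamma}(S_\L)$ need not be close to a constant multiple of $\rho^\star$ for arbitrary $S_\L$, as it was via Lemma \ref{lem:discounted_value_span_bound} in the weakly communicating case; the fix is to use Lemma \ref{lem:discounted_opt_approx_err}, which costs an extra $\Tb$ but legitimately bounds this gap on \emph{all} states, while the recurrent-block constancy of $\rho^\star$ (Lemma \ref{lem:same_block_equal_rhostar}) still provides the constant anchor $\rho^\star(s_0)$ inside the variance.
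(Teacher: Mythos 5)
Your proposal is correct and follows essentially the same route as the paper's proof: reduce to the closed recurrent subchain via Lemmas \ref{lem:var_params_relationship} and \ref{lem:multistep_variance_bellman_eqn}, anchor the variance at the constant $\gamma^{\L}\frac{1}{1-\gamma}\rho^\star(s_0)$ using block-constancy of $\rho^\star$, and split off the perturbation, empirical-error, and $V_\gamma^{\pistar_\gamma}-\frac{1}{1-\gamma}\rho^\star$ terms, with Lemma \ref{lem:discounted_opt_approx_err} supplying the $\Tb+\H$ bound on the last piece exactly as the paper does. The only (immaterial) deviation is using $(a+b+c+d)^2\le 4(a^2+b^2+c^2+d^2)$ in place of the paper's nested $3$-then-$2$ split, which still lands within the stated constants $29$ and $15$.
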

\begin{proof}
    Let $\L=  \Tb + \H$. By the same arguments as in the beginning of the proof of Lemma \ref{lem:pistar_var_bound_general_recurrent}, we have
    \begin{align*}
        \max_{s \in \Rec^{\pihstar_{\gamma, \pert}}}\gamma \left|e_s^\top(I - \gamma P_{\pihstar_{\gamma, \pert}})^{-1} \sqrt{\Var_{P_{\pihstar_{\gamma, \pert}}} \left[V_{\gamma,\pert}^{\pihstar_{\gamma, \pert}} \right]}\right| &\leq \sqrt{\frac{2}{1-\gamma}} \sqrt{\infnorm{\overline{\Var^{\pihstar_{\gamma, \pert}} \left[ \sum_{t=0}^\infty \gamma^t \Rpert_t \right]}}} \\
          &\leq \sqrt{\frac{2}{1-\gamma}} \sqrt{\frac{\infnorm{\overline{\Var^{\pihstar_{\gamma, \pert}}\left[ \sum_{t=0}^{\L-1} \gamma^t \Rpert_t + \gamma^{\L} V_{\gamma,\pert}^{\pihstar_{\gamma, \pert}}(S_{\L}) \right]}} }{1 - \gamma^{2\L}}}
    \end{align*}
    so it again suffices to bound $\overline{\Var^{\pihstar_{\gamma, \pert}}\left[ \sum_{t=0}^{\L-1} \gamma^t \Rpert_t + \gamma^{\L} V_{\gamma,\pert}^{\pihstar_{\gamma, \pert}}(S_{\L}) \right]}$. Fix $s_0 \in \Rec^{\pihstar_{\gamma, \pert}}$. Again, as observed in Lemma \ref{lem:pistar_var_bound_general_recurrent}, $\rho^\star$ is constant on the recurrent block of $X_{\pihstar_{\gamma,\pert}}$ containing $s_0$, so we will have $\rho^\star(S_L) = \rho^\star(s_0)$ with probability one. Therefore (mostly following the steps of Lemma \ref{lem:pihatstar_var_bound})
    \begin{align}
    &\quad \Var_{s_0}^{\pihstar_{\gamma, \pert}}\left[ \sum_{t=0}^{\L-1} \gamma^t {\Rpert}_t + \gamma^{\L} V_{\gamma, \pert}^{\pihstar_{\gamma, \pert}}(S_{\L}) \right] \nonumber \\
    &= \Var_{s_0}^{\pihstar_{\gamma, \pert}}\left[ \sum_{t=0}^{\L-1} \gamma^t {\Rpert}_t + \gamma^{\L} V_{\gamma, \pert}^{\pihstar_{\gamma, \pert}}(S_{\L}) -\gamma^{\L} \frac{1}{1-\gamma}\rho^\star(s_0) \right] \nonumber \\
    &\leq \E_{s_0}^{\pihstar_{\gamma, \pert}}\left( \sum_{t=0}^{\L-1} \gamma^t {\Rpert}_t + \gamma^{\L} V_{\gamma, \pert}^{\pihstar_{\gamma, \pert}}(S_{\L}) -\gamma^{\L} \frac{1}{1-\gamma}\rho^\star(s_0) \right)^2 \nonumber \\
    &= \E_{s_0}^{\pihstar_{\gamma, \pert}}\left( \sum_{t=0}^{\L-1} \gamma^t {\Rpert}_t + \gamma^{\L} \left( V_{\gamma, \pert}^{\pihstar_{\gamma, \pert}}(S_{\L}) - V_\gamma^{\pistar_\gamma}(S_{\L})  \right) + \gamma^{\L} \left(V_\gamma^{\pistar_\gamma}(S_{\L}) - \frac{1}{1-\gamma}\rho^\star(S_{\L}) \right)\right)^2 \nonumber \\
    &\leq 3\E_{s_0}^{\pihstar_{\gamma, \pert}}\left( \sum_{t=0}^{\L-1} \gamma^t {\Rpert}_t \right)^2 + 3\gamma^{2\L}\E_{s_0}^{\pihstar_{\gamma, \pert}}\left(   V_{\gamma, \pert}^{\pihstar_{\gamma, \pert}}(S_{\L}) - V_\gamma^{\pistar_\gamma}(S_{\L}) \right)^2  \nonumber \\
    & \qquad + 3\gamma^{2\L}\E_{s_0}^{\pihstar_{\gamma, \pert}} \left(V_\gamma^{\pistar_\gamma}(S_{\L}) - \frac{1}{1-\gamma}\rho^\star(S_{\L}) \right)^2 \nonumber \\
    &\leq 3\E_{s_0}^{\pihstar_{\gamma, \pert}}\left( \sum_{t=0}^{\L-1} \gamma^t {\Rpert}_t \right)^2 + 6\gamma^{2\L}\E_{s_0}^{\pihstar_{\gamma, \pert}}\left(   V_{\gamma}^{\pihstar_{\gamma, \pert}}(S_{\L}) - V_\gamma^{\pistar_\gamma}(S_{\L}) \right)^2 + 6\gamma^{2\L} \infnorm{V_{\gamma, \pert}^{\pihstar_{\gamma, \pert}} - V_{\gamma}^{\pihstar_{\gamma, \pert}}}^2\nonumber \\
    & \qquad + 3\gamma^{2\L}\E_{s_0}^{\pihstar_{\gamma, \pert}} \left(V_\gamma^{\pistar_\gamma}(S_{\L}) - \frac{1}{1-\gamma}\rho^\star(S_{\L}) \right)^2 \label{eq:general_pihstar_var_bound_step1}
\end{align}
    using the inequalities $(a+b+c)^2 \leq 3a^2 + 3b^2 + 3c^2$ and $(a+b)^2 \leq 2a^2 + 2b^2$. Now we bound each term of~\eqref{eq:general_pihstar_var_bound_step1} analogously to the steps of Lemma \ref{lem:pihatstar_var_bound}. For the first term of~\eqref{eq:general_pihstar_var_bound_step1},
\begin{align*}
    3\E_{s_0}^{\pihstar_{\gamma, \pert}}\left( \sum_{t=0}^{\L-1} \gamma^t {\Rpert}_t \right)^2 & \leq 3 \left(\L \infnorm{\rpert} \right)^2 \leq 3{\L}^2 (\infnorm{r} + \xi)^2 \leq 6 {\L}^2 \left( 1 + \left(\frac{(1-\gamma)\varepsilon}{6}\right)^2 \right) \leq 6 {\L}^2 \left(\frac{7}{6}\right)^2 ,
\end{align*}
where we had $\frac{(1-\gamma)\varepsilon}{6} \leq \frac{\varepsilon}{6\L} \leq \frac{1}{6}$ because $\frac{1}{1-\gamma} \geq \L$ and $\varepsilon \leq \L$. For the second term of~\eqref{eq:general_pihstar_var_bound_step1},
\begin{align*}
    6\gamma^{2\L}\E_{s_0}^{\pihstar_{\gamma, \pert}}\left(   V_{\gamma}^{\pihstar_{\gamma, \pert}}(S_{\L}) - V_\gamma^{\pistar_\gamma}(S_{\L}) \right)^2 &\leq 6\infnorm{V_{\gamma}^{\pihstar_{\gamma, \pert}} - V_\gamma^{\pistar_\gamma}}^2 \\
    &\leq 6 \left( \infnorm{\Vhat_{\gamma, \pert}^{\pihstar_{\gamma, \pert}} - V_\gamma^{\pihstar_{\gamma, \pert}}} +  \infnorm{\Vhat_{\gamma, \pert}^{\pistar_{\gamma}} - V_{\gamma}^{\pistar_{\gamma}}} \right)^2
\end{align*}
where we used $(a+b)^2 \leq 2a^2 + 2b^2$ and the fact that $\infnorm{V_{\gamma}^{\pihstar_{\gamma, \pert}} - V_\gamma^{\pistar_\gamma}} \leq \infnorm{\Vhat_{\gamma, \pert}^{\pihstar_{\gamma, \pert}} - V_\gamma^{\pihstar_{\gamma, \pert}}} + \infnorm{\Vhat_{\gamma, \pert}^{\pistar_{\gamma}} - V_{\gamma}^{\pistar_{\gamma}}}$ which was shown in Lemma \ref{lem:pihatstar_var_bound}.
For the third term of~\eqref{eq:general_pihstar_var_bound_step1},
\begin{align*}
    6\gamma^{2\L} \infnorm{V_{\gamma, \pert}^{\pihstar_{\gamma, \pert}} - V_{\gamma}^{\pihstar_{\gamma, \pert}}}^2 \leq 6 \infnorm{V_{\gamma, \pert}^{\pihstar_{\gamma, \pert}} - V_{\gamma}^{\pihstar_{\gamma, \pert}}}^2 \leq 6 \left(\frac{\xi}{1-\gamma} \right)^2 = 6 \left(\frac{\varepsilon}{6} \right)^2 \leq \frac{{\L}^2}{6}
\end{align*}
where the fact that $\infnorm{V_{\gamma, \pert}^{\pihstar_{\gamma, \pert}} - V_{\gamma}^{\pihstar_{\gamma, \pert}}} \leq \frac{\xi}{1-\gamma}$ is identical to the arguments used in the proof of Lemma \ref{lem:DMDP_error_bounds}, and the final inequality is due to the assumption that $\varepsilon \leq \L$.
For the fourth term of~\eqref{eq:general_pihstar_var_bound_step1},
\begin{align*}
    3\gamma^{2\L}\E_{s_0}^{\pihstar_{\gamma, \pert}} \left(V_\gamma^{\pistar_\gamma}(S_{\L}) - \frac{1}{1-\gamma}\rho^\star(S_{\L}) \right)^2 &\leq 3  \infnorm{V_\gamma^{\pistar_\gamma} - \frac{1}{1-\gamma}\rho^\star }^2 \leq 3 {\L}^2
\end{align*}
using Lemma \ref{lem:discounted_opt_approx_err} for the second inequality. Using all these bounds in~\eqref{eq:general_pihstar_var_bound_step1}, we obtain
\begin{align*}
    \Var_{s_0}^{\pihstar_{\gamma, \pert}}\left[ \sum_{t=0}^{\L-1} \gamma^t {\Rpert}_t + \gamma^{\L} V_{\gamma, \pert}^{\pihstar_{\gamma, \pert}}(S_{\L}) \right] & \leq \left(\frac{49}{6} + \frac{1}{6} + 3\right) {\L}^2 + 6 \left( \infnorm{\Vhat_{\gamma, \pert}^{\pihstar_{\gamma, \pert}} - V_\gamma^{\pihstar_{\gamma, \pert}}} +  \infnorm{\Vhat_{\gamma, \pert}^{\pistar_{\gamma}} - V_{\gamma}^{\pistar_{\gamma}}} \right)^2
\end{align*}
and so (since this holds for arbitrary $s_0 \in \Rec^{\pihstar_{\gamma,\pert}}$), we have
\begin{align*}
    \overline{\Var^{\pihstar_{\gamma, \pert}}\left[ \sum_{t=0}^{\L-1} \gamma^t \Rpert_t + \gamma^{\L} V_{\gamma,\pert}^{\pihstar_{\gamma, \pert}}(S_{\L}) \right]} & \leq \frac{68}{6} {\L}^2 + 6 \left( \infnorm{\Vhat_{\gamma, \pert}^{\pihstar_{\gamma, \pert}} - V_\gamma^{\pihstar_{\gamma, \pert}}} +  \infnorm{\Vhat_{\gamma, \pert}^{\pistar_{\gamma}} - V_{\gamma}^{\pistar_{\gamma}}} \right)^2.
\end{align*}
Therefore, combining with our initial arguments,
\begin{align*}
        \max_{s \in \Rec^{\pihstar_{\gamma, \pert}}}\gamma \bigg|e_s^\top(I - \gamma P_{\pihstar_{\gamma, \pert}})^{-1} & \sqrt{\Var_{P_{\pihstar_{\gamma, \pert}}} \left[V_{\gamma,\pert}^{\pihstar_{\gamma, \pert}} \right]}\bigg| \\
          &\leq \sqrt{\frac{2}{1-\gamma}} \sqrt{\frac{\infnorm{\overline{\Var^{\pihstar_{\gamma, \pert}}\left[ \sum_{t=0}^{\L-1} \gamma^t \Rpert_t + \gamma^{\L} V_{\gamma,\pert}^{\pihstar_{\gamma, \pert}}(S_{\L}) \right]}} }{1 - \gamma^{2\L}}} \\
          & \leq \sqrt{\frac{2}{1-\gamma}} \frac{\sqrt{\frac{68}{6} {\L}^2 + 6 \left( \infnorm{\Vhat_{\gamma, \pert}^{\pihstar_{\gamma, \pert}} - V_\gamma^{\pihstar_{\gamma, \pert}}} +  \infnorm{\Vhat_{\gamma, \pert}^{\pistar_{\gamma}} - V_{\gamma}^{\pistar_{\gamma}}} \right)^2}}{\sqrt{1-\gamma^{2 \L}}}  \\
          & \leq \sqrt{\frac{2}{1-\gamma}} \frac{\sqrt{\frac{68}{6} {\L}^2} + \sqrt{6 \left( \infnorm{\Vhat_{\gamma, \pert}^{\pihstar_{\gamma, \pert}} - V_\gamma^{\pihstar_{\gamma, \pert}}} +  \infnorm{\Vhat_{\gamma, \pert}^{\pistar_{\gamma}} - V_{\gamma}^{\pistar_{\gamma}}} \right)^2}}{\sqrt{1-\gamma^{2 \L}}}  \\
          & \leq \sqrt{\frac{2}{1-\gamma}} \frac{\sqrt{\frac{68}{6} {\L}^2} + \sqrt{6 \left( \infnorm{\Vhat_{\gamma, \pert}^{\pihstar_{\gamma, \pert}} - V_\gamma^{\pihstar_{\gamma, \pert}}} +  \infnorm{\Vhat_{\gamma, \pert}^{\pistar_{\gamma}} - V_{\gamma}^{\pistar_{\gamma}}} \right)^2}}{\sqrt{\frac{4}{5}(1-\gamma) \L} } \\
          & < \sqrt{29\frac{L}{(1-\gamma)^2}} + \sqrt{\frac{15}{L}} \frac{\infnorm{\Vhat_{\gamma, \pert}^{\pihstar_{\gamma, \pert}} - V_\gamma^{\pihstar_{\gamma, \pert}}} +  \infnorm{\Vhat_{\gamma, \pert}^{\pistar_{\gamma}} - V_{\gamma}^{\pistar_{\gamma}}}}{1-\gamma},
    \end{align*}
where we used Lemma \ref{lem:long_horizon_gamma_ineq} to bound $\frac{1}{1-\gamma^{2\L}} \leq \frac{5}{4}\frac{1}{(1-\gamma)\L}$.
\end{proof}

The next lemma controls the variance on all states.

\begin{lem}
\label{lem:var_params_bounds_general}
    Under the settings of Lemmas \ref{lem:pistar_var_bound_general_recurrent} and \ref{lem:pihstar_var_bound_general_recurrent}, we have
    \begin{align*}
        \gamma \infnorm{(I - \gamma P_{\pistar_\gamma})^{-1} \sqrt{\Var_{P_{\pistar_\gamma}} \left[V_\gamma^{\pistar_\gamma} \right]}} & \leq 4 \sqrt{\frac{  \Tb + \H}{(1-\gamma)^2}}
    \end{align*}
    and
    \begin{align*}
        \gamma \infnorm{(I - \gamma P_{\pihstar_{\gamma, \pert}})^{-1} \sqrt{\Var_{P_{\pihstar_{\gamma, \pert}}} \left[V_{\gamma, \pert}^{\pihstar_{\gamma, \pert}} \right]}} &\leq 8 \sqrt{\frac{  \Tb + \H}{(1-\gamma)^2}} + \sqrt{\frac{15}{  \Tb + \H}} \frac{\infnorm{\Vhat_{\gamma, \pert}^{\pihstar_{\gamma, \pert}} - V_\gamma^{\pihstar_{\gamma, \pert}}} +  \infnorm{\Vhat_{\gamma, \pert}^{\pistar_{\gamma}} - V_{\gamma}^{\pistar_{\gamma}}}}{1-\gamma}.
    \end{align*}
\end{lem}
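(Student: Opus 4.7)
The plan is to extend the recurrent-state bounds from Lemmas \ref{lem:pistar_var_bound_general_recurrent} and \ref{lem:pihstar_var_bound_general_recurrent} to transient states $s\in\Trans^\pi$, for $\pi\in\{\pistar_\gamma,\pihstar_{\gamma,\pert}\}$. Writing $v:=\Var_{P_\pi}[V]$ (with $V$ equal to $V_\gamma^{\pistar_\gamma}$ or $V_{\gamma,\pert}^{\pihstar_{\gamma,\pert}}$ as appropriate) and using $\sum_{k\ge 1}\gamma^k Z_\pi^{k-1}=\gamma(I-\gamma Z_\pi)^{-1}$ inside Lemma \ref{lem:trans_visitation_dist_decomposition}, I split, for each transient $s$,
\[
\gamma e_s^\top(I-\gamma P_\pi)^{-1}\sqrt{v}
= \underline{e_s}^\top\bigl[\gamma(I-\gamma Z_\pi)^{-1}Y_\pi\bigr]\cdot\bigl[\gamma(I-\gamma X_\pi)^{-1}\sqrt{\overline{v}}\bigr]
+ \gamma\underline{e_s}^\top(I-\gamma Z_\pi)^{-1}\sqrt{\underline{v}},
\]
where I use that $\overline{v}=\Var_{X_\pi}[\overline{V}]$, since from a state in a closed recurrent block of $P_\pi$ no mass escapes to $\Trans^\pi$.

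For the first ``recurrent-mixed'' term, I would apply Holder's inequality. The $\ell_\infty$-norm of $\gamma(I-\gamma X_\pi)^{-1}\sqrt{\Var_{X_\pi}[\overline{V}]}$ is exactly what the recurrent-state lemma controls. The $\ell_1$-norm of the nonnegative row $\underline{e_s}^\top\gamma(I-\gamma Z_\pi)^{-1}Y_\pi$ is at most $1$: using the row-sum identity $Y_\pi\one=(I-Z_\pi)\one$ together with $(I-\gamma Z_\pi)^{-1}Z_\pi=\gamma^{-1}[(I-\gamma Z_\pi)^{-1}-I]$, the column vector $\gamma(I-\gamma Z_\pi)^{-1}Y_\pi\one$ simplifies to $\one-(1-\gamma)(I-\gamma Z_\pi)^{-1}\one$, whose entries lie in $[0,1]$. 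So this term directly inherits the recurrent-state bound.

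For the ``transient-transient'' term, applying the Cauchy-Schwarz step used in the proof of Lemma \ref{lem:var_params_relationship} together with Lemma \ref{lem:transient_time_bound} gives
\[
\gamma\underline{e_s}^\top(I-\gamma Z_\pi)^{-1}\sqrt{\underline{v}}\;\le\;\sqrt{\Tb\cdot\infnorm{(I-\gamma Z_\pi)^{-1}\underline{v}}},
\]
and the same factorization argument as in Lemma \ref{lem:var_params_relationship} gives $\infnorm{(I-\gamma Z_\pi)^{-1}\underline{v}}\le 2\infnorm{(I-\gamma^2 Z_\pi)^{-1}\underline{v}}$. The crux is then a law-of-total-variance identification of $(I-\gamma^2 Z_\pi)^{-1}\underline{v}$ as a stopped-martingale variance. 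With $T:=\inf\{t:S_t\in\Rec^\pi\}$ and $M_t:=\sum_{k=0}^{t-1}\gamma^k R_k+\gamma^t V(S_t)$, the Bellman equation $V=r_\pi+\gamma P_\pi V$ makes $(M_t)$ a martingale; optional stopping plus the standard martingale quadratic-variation computation yields, for every $s'\in\Trans^\pi$,
\[
\gamma^2\underline{e_{s'}}^\top(I-\gamma^2 Z_\pi)^{-1}\underline{v}
= \E_{s'}^\pi\biggl[\sum_{k=0}^{T-1}\gamma^{2(k+1)}\Var_{P_\pi}[V](S_k)\biggr]
= \Var_{s'}^\pi[M_T].
\]
Since $0\le M_T\le \sum_{t=0}^{T-1}\gamma^t+\gamma^T/(1-\gamma)=1/(1-\gamma)$ deterministically (with a harmless $(1+\xi)$-factor adjustment in the perturbed case), Popoviciu's inequality gives $\Var_{s'}^\pi[M_T]\le 1/(4(1-\gamma)^2)$, producing a transient-transient contribution at most $\sqrt{\Tb/2}/(1-\gamma)\le\sqrt{L/2}/(1-\gamma)$ with $L=\Tb+\H$.

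The main obstacle is this law-of-total-variance step. A direct pointwise bound $\infnorm{\underline{v}}\le 1/(1-\gamma)^2$ would produce only $\Tb/(1-\gamma)$, which is hopelessly weak whenever $\Tb$ is comparable to $1/(1-\gamma)$. The key idea is to recognize the entire discounted transient variance sum $\underline{e_s}^\top(I-\gamma^2 Z_\pi)^{-1}\underline{v}$ as the variance of the value-martingale stopped at $\Rec^\pi$, whose global range (not its entrywise variance) is uniformly bounded by $1/(1-\gamma)$; this is what recovers the $\sqrt{L/(1-\gamma)^2}$ rate. Combining the recurrent-mixed and transient-transient bounds with the recurrent-state estimates of Lemmas \ref{lem:pistar_var_bound_general_recurrent} and \ref{lem:pihstar_var_bound_general_recurrent}, and absorbing the extra $\Vhat-V$ error terms for $\pihstar_{\gamma,\pert}$ exactly as they already appear in Lemma \ref{lem:pihstar_var_bound_general_recurrent}, both bounds in the lemma follow.
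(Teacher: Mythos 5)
Your proposal is correct and, in structure, is essentially the paper's proof: the same decomposition of $e_s^\top(I-\gamma P_\pi)^{-1}$ via Lemma \ref{lem:trans_visitation_dist_decomposition}, the same observation that the recurrent-mixed weight vector has $\ell_1$-norm at most one so that term inherits Lemmas \ref{lem:pistar_var_bound_general_recurrent} and \ref{lem:pihstar_var_bound_general_recurrent}, and the same Jensen/Cauchy--Schwarz step extracting $\sqrt{\Tb}$ from the transient--transient term via Lemma \ref{lem:transient_time_bound}. The one place you genuinely diverge is the step you call the crux: bounding $\gamma^2\,\underline{e_s}^\top(I-\gamma^2 Z_\pi)^{-1}\underline{v}$ by $O\bigl(\tfrac{1}{(1-\gamma)^2}\bigr)$. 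The paper does this by simply adding back the nonnegative recurrent contribution, so that $\gamma^2\sum_t\gamma^t\underline{e_s}^\top Z_\pi^t\underline{v}\le \gamma^2 e_s^\top(I-\gamma^2 P_\pi)^{-1}v=\Var^\pi_s\bigl[\sum_t\gamma^t R_t\bigr]\le\tfrac{1}{(1-\gamma)^2}$ via the one-step variance Bellman equation already used in Lemma \ref{lem:var_params_relationship}; your stopped-martingale identity $\gamma^2\underline{e_{s'}}^\top(I-\gamma^2 Z_\pi)^{-1}\underline{v}=\Var_{s'}^\pi[M_T]$ is a correct transient-restricted version of the same law-of-total-variance fact (optional stopping is licensed since $0\le M_{t\wedge T}\le\tfrac{1}{1-\gamma}$ uniformly and $\E^\pi_{s'}[T]\le\Tb$), and Popoviciu even buys you a factor of $2$ in that term. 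So your route costs more machinery but is sharper by a constant; with either version the totals sit comfortably under the stated $4$ and $8$, and the $\Vhat-V$ error terms carry through exactly as in Lemma \ref{lem:pihstar_var_bound_general_recurrent}.
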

\begin{proof}
    First we establish the first bound in the lemma statement. As we have already bounded the entries corresponding to the recurrent states of $\pistar_\gamma$ by Lemma \ref{lem:pistar_var_bound_general_recurrent}, it remains to bound the transient states. 
    Let $s \in \Trans^{\pistar_\gamma}$ be an arbitrary transient state.
    Using Lemma \ref{lem:trans_visitation_dist_decomposition}, we have
    \begin{align}
        e_s^\top \gamma (I - \gamma P_{\pistar_\gamma})^{-1} \sqrt{\Var_{P_{\pistar_\gamma}} \left[V_\gamma^{\pistar_\gamma} \right]} &= \gamma\underline{e_s}^\top \sum_{k=1}^\infty  \gamma^k Z_{\pistar_\gamma}^{k-1} Y_{\pistar_\gamma} (I - \gamma X_{\pistar_\gamma})^{-1} \sqrt{\overline{\Var_{P_{\pistar_\gamma}} \left[V_\gamma^{\pistar_\gamma} \right]}} \nonumber \\
        & \qquad \qquad + \gamma \underline{e_s}^\top \sum_{t = 0}^\infty \gamma^t Z_{\pistar_\gamma}^t \sqrt{\underline{\Var_{P_{\pistar_\gamma}} \left[V_\gamma^{\pistar_\gamma} \right]}}.\label{eq:transient_var_bound_decomposition}
    \end{align}
    Now we bound each of the terms in~\eqref{eq:transient_var_bound_decomposition}. For the first term, we can calculate
    \begin{align*}
        \gamma\underline{e_s}^\top \sum_{k=1}^\infty  \gamma^k Z_{\pistar_\gamma}^{k-1} Y_{\pistar_\gamma} (I - \gamma X_{\pistar_\gamma})^{-1} &  \sqrt{\overline{\Var_{P_{\pistar_\gamma}} \left[V_\gamma^{\pistar_\gamma} \right]}}\\
        & \leq \gamma \underline{e_s}^\top \sum_{k=1}^\infty  Z_{\pistar_\gamma}^{k-1} Y_{\pistar_\gamma} (I - \gamma X_{\pistar_\gamma})^{-1} \sqrt{\overline{\Var_{P_{\pistar_\gamma}} \left[V_\gamma^{\pistar_\gamma} \right]}} \\
        & \leq \onenorm{\underline{e_s}^\top \sum_{k=1}^\infty  Z_{\pistar_\gamma}^{k-1} Y_{\pistar_\gamma}}\gamma \infnorm{(I - \gamma X_{\pistar_\gamma})^{-1} \sqrt{\overline{\Var_{P_{\pistar_\gamma}} \left[V_\gamma^{\pistar_\gamma} \right]}}} \\
        & \leq \sqrt{\frac{32}{5} \frac{  \Tb + \H}{(1-\gamma)^2}}
    \end{align*}
    where we used the fact that $\underline{e_s}^\top \sum_{k=1}^\infty  Z_{\pistar_\gamma}^{k-1} Y_{\pistar_\gamma}$ is a probability distribution and Lemma \ref{lem:pistar_var_bound_general_recurrent}.

    For the second term of~\eqref{eq:transient_var_bound_decomposition}, we have
    \begin{align}
        \gamma \underline{e_s}^\top \sum_{t = 0}^\infty \gamma^t Z_{\pistar_\gamma}^t \sqrt{\underline{\Var_{P_{\pistar_\gamma}} \left[V_\gamma^{\pistar_\gamma} \right]}} 
        &= \gamma \onenorm{\underline{e_s}^\top \sum_{t = 0}^\infty \gamma^t Z_{\pistar_\gamma}^t} \sum_{t=0}^\infty \frac{\gamma^t \underline{e_s}^\top Z_{\pistar_\gamma}^t}{\onenorm{\underline{e_s}^\top \sum_{t = 0}^\infty \gamma^t Z_{\pistar_\gamma}^t}} \sqrt{\underline{\Var_{P_{\pistar_\gamma}} \left[V_\gamma^{\pistar_\gamma} \right]}} \nonumber \\
        & \leq \gamma \onenorm{\underline{e_s}^\top \sum_{t = 0}^\infty \gamma^t Z_{\pistar_\gamma}^t} \sqrt{\sum_{t=0}^\infty \frac{\gamma^t \underline{e_s}^\top Z_{\pistar_\gamma}^t}{\onenorm{\underline{e_s}^\top \sum_{t = 0}^\infty \gamma^t Z_{\pistar_\gamma}^t}} \underline{\Var_{P_{\pistar_\gamma}} \left[V_\gamma^{\pistar_\gamma} \right]}}\nonumber \\
        & =  \sqrt{\onenorm{\underline{e_s}^\top \sum_{t = 0}^\infty \gamma^t Z_{\pistar_\gamma}^t}} \sqrt{\gamma^2\sum_{t=0}^\infty \gamma^t \underline{e_s}^\top Z_{\pistar_\gamma}^t \underline{\Var_{P_{\pistar_\gamma}} \left[V_\gamma^{\pistar_\gamma} \right]}} \label{eq:transient_var_bound_term2_decomposition}
    \end{align}
    where we used Jensen's inequality since $x\mapsto\sqrt{x}$ is concave and $ \frac{\sum_{t=0}^\infty \gamma^t \underline{e_s}^\top Z_{\pistar_\gamma}^t}{\onenorm{\underline{e_s}^\top \sum_{t = 0}^\infty \gamma^t Z_{\pistar_\gamma}^t}}$ is a probability distribution (all entries of this row vector are positive and they sum to $1$ due to our normalization). Now we bound each factor in~\eqref{eq:transient_var_bound_term2_decomposition}. Using Lemma \ref{lem:transient_time_bound}, we have
    \[
        \sqrt{\onenorm{\underline{e_s}^\top \sum_{t = 0}^\infty \gamma^t Z_{\pistar_\gamma}^t}} \leq \sqrt{\onenorm{\underline{e_s}^\top \sum_{t = 0}^\infty Z_{\pistar_\gamma}^t}} \leq \sqrt{  \Tb}.
    \]
    For the second factor in~\eqref{eq:transient_var_bound_term2_decomposition}, we have
    \begin{align*}
        \sum_{t=0}^\infty \gamma^t \underline{e_s}^\top Z_{\pistar_\gamma}^t \underline{\Var_{P_{\pistar_\gamma}} \left[V_\gamma^{\pistar_\gamma} \right]} 
        & \leq \sum_{t=0}^\infty \gamma^t \underline{e_s}^\top Z_{\pistar_\gamma}^t \underline{\Var_{P_{\pistar_\gamma}} \left[V_\gamma^{\pistar_\gamma} \right]} \\
        & \qquad + \underline{e_s}^\top \sum_{k=1}^\infty  \gamma^k Z_{\pistar_\gamma}^{k-1} Y_{\pistar_\gamma} (I - \gamma X_{\pistar_\gamma})^{-1} \overline{\Var_{P_{\pistar_\gamma}} \left[V_\gamma^{\pistar_\gamma} \right]} \\
        &= e_s^\top (I - \gamma P_{\pistar_\gamma})^{-1} \Var_{P_{\pistar_\gamma}} \left[V_\gamma^{\pistar_\gamma} \right]
    \end{align*}
    where the equality step is due to Lemma \ref{lem:trans_visitation_dist_decomposition}. Now we can apply two steps which are used within Lemma \ref{lem:var_params_relationship} to obtain the desired bound on this term. Abbreviating $v=\Var_{P_{\pistar_\gamma}} \left[V_\gamma^{\pistar_\gamma} \right]$, it is shown within Lemma \ref{lem:var_params_relationship} that
    \begin{align*}
         \gamma^2 \infnorm{(I - \gamma P_{\pistar_\gamma})^{-1} v} \leq 2 \gamma^2 \infnorm{(I - \gamma^2 P_{\pistar_\gamma})^{-1}v} \leq 2\infnorm{\Var^{\pistar_\gamma} \left[\sum_{t=0}^\infty \gamma^t R_t \right]} \leq \frac{2}{(1-\gamma)^2}
    \end{align*}
    (where the final inequality is because the total discounted return is within $[0, \frac{1}{1-\gamma}]$). Therefore we can bound the second factor in~\eqref{eq:transient_var_bound_term2_decomposition} as
    \begin{align*}
        \sqrt{\gamma^2\sum_{t=0}^\infty \gamma^t \underline{e_s}^\top Z_{\pistar_\gamma}^t \underline{\Var_{P_{\pistar_\gamma}} \left[V_\gamma^{\pistar_\gamma} \right]}} &\leq \sqrt{\frac{2}{(1-\gamma)^2}} = \frac{\sqrt{2}}{1-\gamma}.
    \end{align*}

    Combining all of these bounds back into~\eqref{eq:transient_var_bound_decomposition}, we have
    \begin{align*}
        e_s^\top \gamma (I - \gamma P_{\pistar_\gamma})^{-1} \sqrt{\Var_{P_{\pistar_\gamma}} \left[V_\gamma^{\pistar_\gamma} \right]} 
        &\leq \sqrt{\frac{32}{5} \frac{  \Tb + \H}{(1-\gamma)^2}} + \sqrt{  \Tb} \frac{\sqrt{2}}{1-\gamma} \\
        & < 4 \sqrt{ \frac{  \Tb + \H}{(1-\gamma)^2}}.
    \end{align*}
    Thus we have established the first inequality from the lemma statement. 
    
    For the second inequality, the argument is entirely analogous, except that we use Lemma \ref{lem:pihstar_var_bound_general_recurrent} instead of Lemma \ref{lem:pistar_var_bound_general_recurrent}, and also in the MDP with the perturbed reward $\rpert$ we have the bound
    \begin{align*}
        \infnorm{\Var^{\pistar_\gamma} \left[\sum_{t=0}^\infty \gamma^t R_t \right]} 
        \leq \left(\frac{\infnorm{\rpert}}{1-\gamma}\right)^2
        &\leq \left(\frac{\infnorm{r} + \xi}{1-\gamma} \right)^2  \\
        &\leq \frac{1}{(1-\gamma)^2} \left(1 + \frac{(1-\gamma)\varepsilon}{6} \right)^2  
        \leq \frac{1}{(1-\gamma)^2} \left(\frac{7}{6}\right)^2,
    \end{align*}
    where we used the fact that $\frac{(1-\gamma)\varepsilon}{6} \leq \frac{\varepsilon}{6(  \Tb + \H)} \leq \frac{1}{6}$ because $\frac{1}{1-\gamma} \geq   \Tb + \H$ and $\varepsilon \leq   \Tb + \H$. Thus we can obtain the bound
    \begin{align*}
        &\gamma \infnorm{(I - \gamma P_{\pihstar_{\gamma, \pert}})^{-1} \sqrt{\Var_{P_{\pihstar_{\gamma, \pert}}} \left[V_{\gamma, \pert}^{\pihstar_{\gamma, \pert}} \right]}} \\
        &\leq \sqrt{29\frac{  \Tb + \H}{(1-\gamma)^2}} + \sqrt{\frac{15}{  \Tb + \H}} \frac{\infnorm{\Vhat_{\gamma, \pert}^{\pihstar_{\gamma, \pert}} - V_\gamma^{\pihstar_{\gamma, \pert}}} +  \infnorm{\Vhat_{\gamma, \pert}^{\pistar_{\gamma}} - V_{\gamma}^{\pistar_{\gamma}}}}{1-\gamma} \\
        & \qquad + \sqrt{  \Tb }\frac{7 \sqrt{2}}{6 (1-\gamma)} \\
        & \leq 8 \sqrt{\frac{  \Tb + \H}{(1-\gamma)^2}} + \sqrt{\frac{15}{  \Tb + \H}} \frac{\infnorm{\Vhat_{\gamma, \pert}^{\pihstar_{\gamma, \pert}} - V_\gamma^{\pihstar_{\gamma, \pert}}} +  \infnorm{\Vhat_{\gamma, \pert}^{\pistar_{\gamma}} - V_{\gamma}^{\pistar_{\gamma}}}}{1-\gamma}.
    \end{align*}
    This completes the proof of the lemma.
\end{proof}

We are now ready to prove Theorem \ref{thm:DMDP_bound_general} on the sample complexity of general discounted MDPs.

\begin{proof}[Proof of Theorem \ref{thm:DMDP_bound_general}]
To prove Theorem \ref{thm:DMDP_bound_general} we will combine our bounds of the variance parameters in Lemma \ref{lem:var_params_bounds_general} with Lemma \ref{lem:DMDP_error_bounds}. First, starting with~\eqref{eq:pistar_error} from Lemma \ref{lem:DMDP_error_bounds} and combining with the first bound from Lemma \ref{lem:var_params_bounds_general}, we have that there exist absolute constants $c_1, c_2$ such that for any $\delta \in (0,1)$, if $n \geq \frac{c_2}{1-\gamma}\log \left(\frac{S A}{(1-\gamma)\delta \varepsilon}\right)$, then with probability at least $1-\delta$
    \begin{align*}
        \infnorm{\Vhat_{\gamma, \pert}^{\pistar_\gamma} - V_\gamma^{\pistar_\gamma}} 
        & \leq  \gamma \sqrt{\frac{c_1 \log \left( \frac{S A}{(1-\gamma)\delta \varepsilon}\right)}{n}} \infnorm{(I - \gamma P_{\pistar_\gamma})^{-1} \sqrt{\Var_{P_{\pistar_\gamma}} \left[V_\gamma^{\pistar_\gamma} \right]}} \\
        &\qquad \qquad \qquad \qquad+ c_1 \gamma \frac{\log \left( \frac{S A}{(1-\gamma)\delta \varepsilon}\right)}{(1-\gamma)n} \infnorm{V_\gamma^{\pistar_\gamma}} + \frac{\varepsilon}{6} \\
        & \leq  \sqrt{\frac{c_1 \log \left( \frac{S A}{(1-\gamma)\delta \varepsilon}\right)}{n}} 4 \sqrt{\frac{  \Tb + \H}{(1-\gamma)^2}}+ c_1 \gamma \frac{\log \left( \frac{S A}{(1-\gamma)\delta \varepsilon}\right)}{(1-\gamma)n} \infnorm{V_\gamma^{\pistar_\gamma}} + \frac{\varepsilon}{6} \\
        & \leq  \sqrt{\frac{c_1 \log \left( \frac{S A}{(1-\gamma)\delta \varepsilon}\right)}{n}} 4 \sqrt{\frac{  \Tb + \H}{(1-\gamma)^2}}+ c_1  \frac{\log \left( \frac{S A}{(1-\gamma)\delta \varepsilon}\right)}{(1-\gamma)^2 n} + \frac{\varepsilon}{6} \\
        &\leq \frac{\varepsilon}{6} + \frac{1}{16\cdot 6^2}\frac{\varepsilon^2}{  \Tb + \H} + \frac{\varepsilon}{6} \\
        & \leq \frac{\varepsilon}{2},
    \end{align*}
    where the penultimate inequality is under the assumption that $n \geq 16 \cdot 6^2 c_1 \frac{  \Tb + \H}{\varepsilon^2(1-\gamma)^2} \log \left( \frac{S A}{(1-\gamma)\delta \varepsilon}\right)$, and the final inequality makes use of the fact that $\varepsilon \leq   \Tb + \H$.

    Next, still using Lemma \ref{lem:DMDP_error_bounds}, under the same event, we also have
    \begin{align*}
        &\infnorm{\Vhat_{\gamma, \pert}^{\pihstar_{\gamma, \pert}} - V_\gamma^{\pihstar_{\gamma, \pert}}} \\
        & \qquad \leq  \gamma \sqrt{\frac{c_1\log \left( \frac{S A}{(1-\gamma)\delta \varepsilon}\right)}{n}} \infnorm{(I - \gamma P_{\pihstar_{\gamma, \pert}})^{-1} \sqrt{\Var_{P_{\pihstar_{\gamma, \pert}}} \left[V_{\gamma, \pert}^{\pihstar_{\gamma, \pert}} \right]}} \\
        & \qquad \qquad + c_1 \gamma\frac{\log \left( \frac{S A}{(1-\gamma)\delta \varepsilon}\right)}{(1-\gamma)n} \infnorm{V_{\gamma, \pert}^{\pihstar_{\gamma, \pert}}} + \frac{\varepsilon}{6} \\
        & \qquad \leq  \sqrt{\frac{c_1\log \left( \frac{S A}{(1-\gamma)\delta \varepsilon}\right)}{n}} \left(8 \sqrt{\frac{  \Tb + \H}{(1-\gamma)^2}} + \sqrt{\frac{15}{  \Tb + \H}} \frac{\infnorm{\Vhat_{\gamma, \pert}^{\pihstar_{\gamma, \pert}} - V_\gamma^{\pihstar_{\gamma, \pert}}} +  \infnorm{\Vhat_{\gamma, \pert}^{\pistar_{\gamma}} - V_{\gamma}^{\pistar_{\gamma}}}}{1-\gamma}\right) \\
        & \qquad \qquad + c_1 \gamma\frac{\log \left( \frac{S A}{(1-\gamma)\delta \varepsilon}\right)}{(1-\gamma)n} \infnorm{V_{\gamma, \pert}^{\pihstar_{\gamma, \pert}}} + \frac{\varepsilon}{6} \\
        & \qquad \leq  \sqrt{\frac{c_1\log \left( \frac{S A}{(1-\gamma)\delta \varepsilon}\right)}{n}} \left(8 \sqrt{\frac{  \Tb + \H}{(1-\gamma)^2}} + \sqrt{\frac{15}{  \Tb + \H}} \frac{\infnorm{\Vhat_{\gamma, \pert}^{\pihstar_{\gamma, \pert}} - V_\gamma^{\pihstar_{\gamma, \pert}}} +  (  \Tb + \H)/2 }{1-\gamma}\right) \\
        & \qquad \qquad + c_1 \frac{\log \left( \frac{S A}{(1-\gamma)\delta \varepsilon}\right)}{(1-\gamma)n} \frac{7}{6}\frac{1}{1-\gamma} + \frac{\varepsilon}{6} \\
    \end{align*}
    using the second inequality from Lemma \ref{lem:var_params_bounds_general} for the second inequality, and then we use the fact that $\infnorm{V_{\gamma, \pert}^{\pihstar_{\gamma, \pert}}} \leq \frac{7}{6}\frac{1}{1-\gamma}$ which was argued in Lemma \ref{lem:var_params_bounds_general}, as well as the fact from above that $\infnorm{\Vhat_{\gamma, \pert}^{\pistar_\gamma} - V_\gamma^{\pistar_\gamma}} \leq \varepsilon/2 \leq (  \Tb + \H)/2$. After rearranging, we obtain that
    \begin{align}
        &\left(1 -  \sqrt{\frac{c_1\log \left( \frac{S A}{(1-\gamma)\delta \varepsilon}\right)}{n}} \sqrt{\frac{15}{  \Tb + \H}} \frac{1 }{1-\gamma}  \right) \infnorm{\Vhat_{\gamma, \pert}^{\pihstar_{\gamma, \pert}} - V_\gamma^{\pihstar_{\gamma, \pert}}} \nonumber \\
        & \qquad  \leq \sqrt{\frac{c_1\log \left( \frac{S A}{(1-\gamma)\delta \varepsilon}\right)}{n}} \left(8 \sqrt{\frac{  \Tb + \H}{(1-\gamma)^2}} + \sqrt{\frac{15}{  \Tb + \H}} \frac{  (  \Tb + \H)/2 }{1-\gamma}\right)  + c_1 \frac{\log \left( \frac{S A}{(1-\gamma)\delta \varepsilon}\right)}{(1-\gamma)^2 n} \frac{7}{6} + \frac{\varepsilon}{6} \nonumber \\
        & \qquad \leq \sqrt{\frac{c_1\log \left( \frac{S A}{(1-\gamma)\delta \varepsilon}\right)}{n}} 10 \sqrt{\frac{  \Tb + \H}{(1-\gamma)^2}}   + c_1 \frac{\log \left( \frac{S A}{(1-\gamma)\delta \varepsilon}\right)}{(1-\gamma)^2n} \frac{7}{6} + \frac{\varepsilon}{6}  .\label{eq:general_DMDP_bound_recursive_step}
    \end{align}
    If $n \geq 6^2 \cdot 10^2 c_1 \frac{  \Tb + \H}{\varepsilon^2(1-\gamma)^2} \log \left( \frac{S A}{(1-\gamma)\delta \varepsilon}\right)$, then the RHS of~\eqref{eq:general_DMDP_bound_recursive_step} is bounded by
    \begin{align*}
        \frac{\varepsilon}{6} + \frac{7}{6} \frac{\varepsilon^2}{  \Tb + \H} \frac{1}{6^2 \cdot 10^2} + \frac{\varepsilon}{6} \leq \left( \frac{1}{6}+ \frac{1}{6^2 \cdot 10^2} + \frac{1}{6}\right) \varepsilon \leq 0.4 \varepsilon
    \end{align*}
    using the assumption that $\varepsilon \leq   \Tb + \H$. Under the same condition on $n$, we also have that
    \begin{align*}
        &\left(1 -  \sqrt{\frac{c_1\log \left( \frac{S A}{(1-\gamma)\delta \varepsilon}\right)}{n}} \sqrt{\frac{15}{  \Tb + \H}} \frac{1 }{1-\gamma}  \right) \infnorm{\Vhat_{\gamma, \pert}^{\pihstar_{\gamma, \pert}} - V_\gamma^{\pihstar_{\gamma, \pert}}} \\
        &\qquad \geq \left(1 - \sqrt{\frac{\varepsilon^2}{(  \Tb + \H)^2}} \sqrt{\frac{15}{6^2 \cdot 10^2}}\right) \infnorm{\Vhat_{\gamma, \pert}^{\pihstar_{\gamma, \pert}} - V_\gamma^{\pihstar_{\gamma, \pert}}} \\
        & \qquad \geq \left(1 -  \sqrt{\frac{15}{6^2 \cdot 10^2}}\right) \infnorm{\Vhat_{\gamma, \pert}^{\pihstar_{\gamma, \pert}} - V_\gamma^{\pihstar_{\gamma, \pert}}} \\
        & \qquad \geq  0.9 \infnorm{\Vhat_{\gamma, \pert}^{\pihstar_{\gamma, \pert}} - V_\gamma^{\pihstar_{\gamma, \pert}}}
    \end{align*}
    where again we used the assumption that $\varepsilon \leq   \Tb + \H$. Combining these two bounds with the inequality~\eqref{eq:general_DMDP_bound_recursive_step}, we obtain that
    \begin{align*}
        0.9 \infnorm{\Vhat_{\gamma, \pert}^{\pihstar_{\gamma, \pert}} - V_\gamma^{\pihstar_{\gamma, \pert}}} & \leq 0.4 \varepsilon
    \end{align*}
    which implies that
    \begin{align*}
        \infnorm{\Vhat_{\gamma, \pert}^{\pihstar_{\gamma, \pert}} - V_\gamma^{\pihstar_{\gamma, \pert}}} & \leq \frac{0.4}{0.9}\varepsilon < \frac{\varepsilon}{2}.
    \end{align*}
    Since we have established that $\infnorm{\Vhat_{\gamma, \pert}^{\pistar_\gamma} - V_\gamma^{\pistar_\gamma}} \leq \frac{\varepsilon}{2}$ and that  $\infnorm{\Vhat_{\gamma, \pert}^{\pihstar_{\gamma, \pert}} - V_\gamma^{\pihstar_{\gamma, \pert}}} \leq \frac{\varepsilon}{2}$, since also $\Vhat_{\gamma, \pert}^{\pihstar_{\gamma, \pert}} \geq\Vhat_{\gamma, \pert}^{\pistar_\gamma}$, we can conclude that
    \begin{align*}
        V_\gamma^{\pistar_\gamma} - V_\gamma^{\pihstar_{\gamma, \pert}}  \leq \infnorm{\Vhat_{\gamma, \pert}^{\pistar_\gamma} - V_\gamma^{\pistar_\gamma}}\one + \infnorm{\Vhat_{\gamma, \pert}^{\pihstar_{\gamma, \pert}} - V_\gamma^{\pihstar_{\gamma, \pert}}} \one \leq \varepsilon \one,
    \end{align*}
    that is that $\pihstar_{\gamma, \pert}$ is $\varepsilon$-optimal.

    Finally, we check that all of our conditions on $n$ can be satisfied if
    \begin{align*}
        n \geq \max \left\{6^2 \cdot 10^2 c_1 \frac{  \Tb + \H}{\varepsilon^2(1-\gamma)^2} , 6^2 \cdot 16 c_1 \frac{  \Tb + \H}{\varepsilon^2(1-\gamma)^2} , \frac{c_2}{1-\gamma}\right\} \log \left( \frac{S A}{(1-\gamma)\delta \varepsilon}\right),
    \end{align*}
    and since $\frac{1}{1-\gamma} \geq   \Tb + \H$ and $  \Tb + \H \geq \varepsilon$, we have
    $\frac{  \Tb + \H}{\varepsilon^2(1-\gamma)^2} \geq \frac{(  \Tb + \H)^2}{\varepsilon^2(1-\gamma)} \geq \frac{1}{1-\gamma}$, so the above is guaranteed if we set $C_3 = \max \{6^2 \cdot 10^2 c_1, c_2\}$ and require $n \geq C_3 \frac{  \Tb + \H}{\varepsilon^2(1-\gamma)^2}\log \left( \frac{S A}{(1-\gamma)\delta \varepsilon}\right)$.
\end{proof}

\subsection{Proof of Theorem \ref{thm:main_theorem_amdp_general} (General Average-Reward MDP Bounds)}

In this section, we prove our main result on the sample complexity of general average-reward MDPs.

\begin{proof}[Proof of Theorem \ref{thm:main_theorem_amdp_general}]
We can combine our bound for discounted MDPs, Theorem \ref{thm:DMDP_bound_general}, with our reduction from average-reward MDPs to discounted MDPs, Theorem \ref{thm:DMDP_reduction_general}.

Using Theorem \ref{thm:DMDP_bound_general} with target accuracy $\Tb + \H$ and discount factor $\gammared = 1 - \frac{\varepsilon}{12(\Tb + \H)}$, we obtain a $(\Tb +\H)$-optimal policy for the discounted MDP $(P, r, \gammared)$ with probability at least $1 - \delta$ as long as
    \begin{align*}
        n &\geq C_3 \frac{\Tb +\H}{(1-\gammared)^2 (\Tb +\H)^2} \log \left( \frac{SA}{(1-\gammared)\delta \varepsilon}\right) \\
        &= 12^2 C_3 \frac{\Tb +\H}{ (\Tb +\H)^2} \frac{(\Tb +\H)^2}{\varepsilon^2} \log \left( \frac{12 (\Tb +\H)}{\varepsilon} \frac{SA}{\delta \varepsilon}\right)
    \end{align*}
    which is satisfied when $n \geq C_4 \frac{\Tb +\H}{\varepsilon^2} \log\left( \frac{SA (\Tb +\H)}{\delta \varepsilon}\right)$ for sufficiently large $C_4$.

Applying Theorem \ref{thm:DMDP_reduction_general} (with error parameter $\frac{\varepsilon}{12}$), we obtain
    \begin{align*}
        \rho^{\star} - \rho^{\pihstar} \leq \left(3 + 2\frac{  \Tb + \H}{  \Tb + \H} \right) \frac{\varepsilon}{12} \leq \varepsilon \one
    \end{align*}
    as desired.
\end{proof}

\subsection{Proof of Theorems~\ref{thm:lower_bound_general2} and~\ref{thm:lower_bound_general_discounted} (Lower Bounds)}

In this section, we prove our minimax lower bounds on the sample complexity of general average-reward MDPs (Theorem~\ref{thm:lower_bound_general2}) and discounted MDPs (Theorem~\ref{thm:lower_bound_general_discounted}).

\begin{proof}[Proof of Theorem~\ref{thm:lower_bound_general2}]  

\begin{figure}[h]
\centering
\begin{tikzpicture}[ -> , >=stealth, shorten >=2pt , line width=0.5pt, node distance =2cm, scale=0.8, every node/.style={scale=0.8}]

\node [circle, draw] (one) at (-7 , 0) {1};
\node [circle, draw] (two) at (7 , 1) {2};
\node [circle, draw] (three) at (7 , -1) {3};
\node [circle, draw] (four) at (0 , 3) {4};
\node [circle, draw, fill, inner sep=0.05cm] (dot) at (0 , 0) {};

\path[-] (one) edge [bend left] node [above] {$~~~a\in\{2, \dots, A\}, R=(1+2\varepsilon)/2$} (dot) ;
\path[dashed] (dot) edge [bend left] node [below] {$P(1\mid1,a)=1-\frac{1}{B}$} (one);
\path[dashed] (dot) edge [bend left] node [above] {$~~~~~~~P(2\mid1,a)=\frac{1-2\varepsilon}{2B}$} (two);
\path[dashed] (dot) edge [bend right] node [below] {$~~~~~~~P(3\mid1,a)=\frac{1+2\varepsilon}{2B}$} (three);
\path (one) edge [bend left] node [left] {$a=1, R=1/2~~~$} (four) ;
\path (two) edge [loop right] node [right] {$R =1$}  (two) ;
\path (three) edge [loop right] node [right] {$R =0$}  (three) ;
\path (four) edge [loop right] node [right] {$R =1/2$}  (four) ;

\end{tikzpicture}

Instance $\M_1$ \bigskip\bigskip

\centering
\begin{tikzpicture}[ -> , >=stealth, shorten >=2pt , line width=0.5pt, node distance =2cm, scale=0.8, every node/.style={scale=0.8}]

\node [circle, draw] (one) at (-7 , 0) {1};
\node [circle, draw] (two) at (7 , 1) {2};
\node [circle, draw] (three) at (7 , -1) {3};
\node [circle, draw] (four) at (0 , 3) {4};
\node [circle, draw, fill, inner sep=0.05cm] (dot) at (0 , 0) {};
\node [circle, draw, fill, inner sep=0.05cm] (dot2) at (0 , -5) {};

\path[-] (one) edge [bend left] node [above] {$~~~~~~~~~~~~~~~~a\in\{2, \dots, A\}\setminus \{a^\star\}, R=(1+2\varepsilon)/2$} (dot) ;
\path[dashed] (dot) edge [bend left] node [below] {$~~~P(1\mid1,a)=1-\frac{1}{B}$} (one);
\path[dashed] (dot) edge [bend left] node [above] {$~~~~~~~P(2\mid1,a)=\frac{1-2\varepsilon}{2B}$} (two);
\path[dashed] (dot) edge node [above] {$P(3\mid1,a)=\frac{1+2\varepsilon}{2B}$} (three);
\path (one) edge [bend left] node [left] {$a=1, R=1/2~~~$} (four) ;
\path (two) edge [loop right] node [right] {$R =1$}  (two) ;
\path (three) edge [loop right] node [right] {$R =0$}  (three) ;
\path (four) edge [loop right] node [right] {$R =1/2$}  (four) ;

\path[-] (one) edge  node [above] {~$~~~~~~~~~~~~~~~~~~~~~~~~~~~~~~~~~~a=a^\star, R=(1+2\varepsilon)/2$} (dot2) ;
\path[dashed] (dot2) edge [bend left] node [left] {$P(1\mid1,a^\star)=1-\frac{1}{B}~~~~~~~~$} (one);
\path[dashed] (dot2) edge node [below] {$~~~~~~~~~~~~~~~~~~~~~~~~~P(2\mid1,a^\star)=\frac{1+2\varepsilon}{2B}$} (two);
\path[dashed] (dot2) edge [bend right=40] node [below] {$~~~~~~~~~~~~~~~~~~~~~~P(3\mid1,a^\star)=\frac{1-2\varepsilon}{2B}$} (three);

\end{tikzpicture}

Instance $\M_{a^\star}$, for $a^\star \in \{2, \dots, A\}$

\caption{MDP Instances Used in the Proof of Lower Bound in Theorem~\ref{thm:lower_bound_general2}}
\label{fig:lower_bound_inst2}
\end{figure}
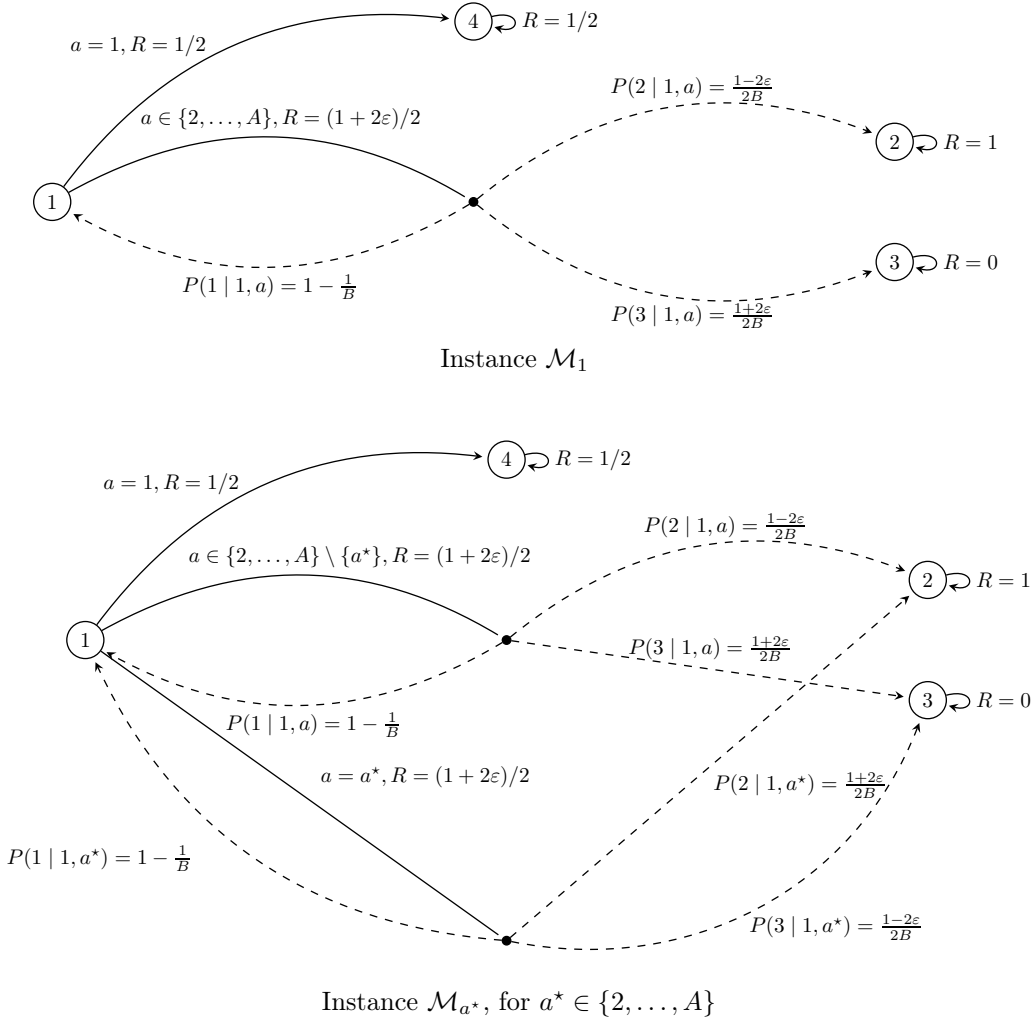

First consider the MDP instances $\M_{a^\star}$ indexed by $a^\star \in \{1,\ldots ,A\}$ shown in Figure \ref{fig:lower_bound_inst2}. 
In all instances, states $2,3$ and $4$ are absorbing states, and state $1$ is a transient state. State $1$ has $A$ actions and is the only state with multiple actions. At state $1$, taking action $a=1$ will take the agent to state $4$ deterministically; taking action 2 will take the agent back to state $1$ with probability $P(1|1,2) = 1-\frac{1}{T}$, to state $2$ with probability $P(2|1,2)$, and to state $3$ with probability $P(3|1,2) = 1-P(1|1,2)-P(2|1,2)$. The instances differ only in the values of $P(2|1,a)$ and $P(3|1,a)$, which are shown in Figure~\ref{fig:lower_bound_inst2} along with the reward $R$ for each state-action pair.  
    
For the MDP instance $\M_1$, the optimal policy is taking action $a=1$ at state $1$, leading to an average reward of $1/2$; taking any other action leads to a sub-optimal average reward of $\frac{1-2\varepsilon}{2}$. Similarly, for the instance $\M_{a^\star}$ with $a^\star \in \{2 ,\dots, A\}$, the optimal action is $a=a^\star$ with average reward $\frac{1+2\varepsilon}{2}$, the action $a=1$ has average reward $\frac{1}{2}$, and all other actions have average reward $\frac{1-2\varepsilon}{2}$. By direct calculation, we find that the span of the optimal policy is $\spannorm{h^\star} = 0$ in all instances. Moreover, by taking any action $a \neq 1$, the agent will stay in state $1$ for $B$ steps in expectation before transitioning to state $2$ or $3$, so the bounded transient time is satisfied with parameter $B.$ 

We next define $(A-1)S/4$ master MDPs $\OM_{s^\star,a^\star}$ indexed by $s^\star \in \{1, \dots, S/4\}$ and $a^\star \in \{2, \dots, A\}$ as follows. Each master MDP $\OM_{s^\star,a^\star}$ has $S/4$ copies of sub-MDPs such that the $s^\star$th sub-MDP is equal to $\M_{a^\star}$ and all other sub-MDPs are equal to $\M_1$. We rename the states so that the states of the $s$th sub-MDP has states $4s+1, 4s+2, 4s+3, 4s+4$ corresponding to states $1,2,3,4$ of the instances shown in Figure \ref{fig:lower_bound_inst2}. Note each of these master MDPs has $S$ states and $A$ actions, satisfies the bounded transient time property with parameter $B$, and has the span of the bias of its Blackwell optimal policy equal to $0$. Note that for a given policy $\pi$ to be $\varepsilon/3$-average optimal in master MDP $\OM_{s^\star,a^\star}$, it must take action $a^\star$ in state $4s^\star + 1$ with probability at least $2/3$, and it must take action $1$ in states $4s + 1$ for $s \in \{1, \dots, S/4\}\setminus \{s^\star\}$ with probability at least $2/3$.

Thus, for an algorithm $\texttt{Alg}$ to output an $\varepsilon/3$-average optimal policy $\pi$, it must identify the master MDP instance $\OM_{s^\star,a^\star}$ (equivalently, the values of $s^\star$ and $a^\star$), in the sense that there must be exactly one state $4s+1$ where an action $a \neq 1$ is taken with probability $\geq 2/3$. Therefore it suffices to lower bound the failure probability of any algorithm $\texttt{Alg}$ for this $(A-1)S/4$-way testing problem. By construction, for any two distinct index pairs $(s^\star_1, a^\star_1)$ and $(s^\star_2, a^\star_2)$, the master MDPs $\OM_{s^\star_1,a^\star_1}$ and $\OM_{s^\star_2,a^\star_2}$ differ only in the state-action pairs $(4s^\star_1, a^\star_1)$ and $(4s^\star_2, a^\star_2)$, and we have 
\begin{align*}
    P_{\OM_{s^\star_1,a^\star_1}} \left( \cdot \mid 4s^\star_1, a^\star_1 \right) 
    &= \text{Cat}\left(1-\frac{1}{B}, \frac{1-2\varepsilon}{2B}, \frac{1+2\varepsilon}{2B}\right) 
    =: Q_1, \\  
    P_{\OM_{s^\star_2,a^\star_2}} \left( \cdot \mid 4s^\star_1, a^\star_1 \right) 
    &= \text{Cat}\left(1-\frac{1}{B}, \frac{1+2\varepsilon}{2B}, \frac{1-2\varepsilon}{2B}\right) 
    =: Q_2, 
\end{align*}
where $\text{Cat}(p_1,p_2,p_3)$ denotes the categorical distribution with event probabilities $p_i$'s (and vice versa for the distributions of the state action pair $(4s^\star_2, a^\star_2)$).

Now we use Fano's method \citep{wainwright_high-dimensional_2019} to lower bound this failure probability. Choose an index $J$ uniformly at random from the set $\mathcal{J} := \{1, \dots, S/4\} \times \{2,\ldots, A\}$ and suppose that we draw $n$ iid samples $X = (X_1, \dots, X_n)$ from the master MDP $\OM_J$; note that under the generative model, each random variable $X_i$ represents an $(S\times A)$-by-$S$ transition matrix with exactly one nonzero entry in each row. Letting $\textup{I}(J; X)$ denote the mutual information between $J$ and $X$, Fano's inequality yields that the failure probability is lower bounded by
\begin{align*}
     1 - \frac{\textup{I}(J; X) + \log 2}{\log ((A-1)S/4)}.
\end{align*}
We can calculate using the fact that the $P_i$'s are i.i.d., the chain rule of mutual information, and the form of the construction that
\begin{align*}
    \textup{I}(J; X) &= n \textup{I} (J; X_1) \\
    &\leq n\max_{\substack{(s^\star_1, a^\star_1),(s^\star_2, a^\star_2) \in \mathcal{J}:\\(s^\star_1, a^\star_1)\neq  (s^\star_2, a^\star_2)}} \textup{D}_{\textup{KL}}\left(P_{\OM_{s^\star_1,a^\star_1}} \Bigm| P_{\OM_{s^\star_2,a^\star_2}}\right) \\
    &= n  \big( \textup{D}_{\textup{KL}}(Q_1 \mid Q_2) + \textup{D}_{\textup{KL}}(Q_2 \mid Q_1) \big).
\end{align*}
By direct calculation,  we have
    \begin{align*}
        \textup{D}_{\textup{KL}}(Q_1 | Q_2) 
        & =\frac{1-2\varepsilon}{2B}\log\frac{1-2\varepsilon}{1+2\varepsilon}+\frac{1+2\varepsilon}{2B}\log\frac{1+2\varepsilon}{1-2\varepsilon}\\
         & \le\frac{1-2\varepsilon}{2B}\cdot\frac{-4\varepsilon}{1+2\varepsilon}+\frac{1+2\varepsilon}{2B}\cdot\frac{4\varepsilon}{1-2\varepsilon} &  & \log(1+x)\le x,\forall x>-1\\
         & =\frac{16\varepsilon^{2}}{B(1+2\varepsilon)(1-2\varepsilon)}\\
         & \le\frac{32\varepsilon^{2}}{B} &  & \varepsilon\le\frac{1}{4}.
    \end{align*}
Also note that $\textup{D}_{\textup{KL}}(Q_2 | Q_1)  = \textup{D}_{\textup{KL}}(Q_1 | Q_2)$ in this case. Therefore the failure probability is at least
\begin{align*}
     1 - \frac{\textup{I}(J; P^n) + \log 2}{\log ((A-1)S/4)}
    & \geq  1 - \frac{n\frac{64\varepsilon^{2}}{B} + \log 2}{\log ((A-1)S/4)} \\
    & \geq  \frac{1}{2} - \frac{n\frac{64\varepsilon^{2}}{B}}{\log ((A-1)S/4)} ,
\end{align*}
where in the second inequality we assumed $A$ and $S$ are at least a sufficiently large constant. For the above RHS to be smaller than $1/4$, we therefore require $n \geq \Omega(\frac{B \log(SA)}{\varepsilon^2})$.
\end{proof}

\begin{proof}[Proof of Theorem~\ref{thm:lower_bound_general_discounted}]

    The desired DMDP lower bound follows from combining our AMDP lower bound Theorem \ref{thm:lower_bound_general2} with 
    the average-to-discount reduction in Theorem \ref{thm:DMDP_reduction_general}.
\end{proof}   

\subsection{Relationship between $\Tb$ and $\tmix$}
\begin{lem}
\label{lem:transient_time_tmix_relationship}
    In any uniformly mixing MDP, we have $\Tb \leq 4\tmix$.
\end{lem}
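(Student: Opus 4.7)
The plan is to leverage the fact that uniform mixing forces every policy's Markov chain to be unichain, which means the stationary distribution $\nu_\pi$ puts zero mass on transient states. Combined with a geometric tail bound obtained by iterating the mixing time, this will yield the claim.

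First, I would observe that uniform mixing implies every policy $\pi \in \Pi$ induces a chain with a unique stationary distribution $\nu_\pi$, hence the chain has exactly one closed recurrent class; in particular $\nu_\pi(\Trans^\pi) = 0$. Since $\|e_s^\top (P_\pi)^{\tmix} - \nu_\pi^\top\|_1 \leq 1/2$ for every $s$ by definition of $\tmix$, and the $\ell_1$ distance between probability measures is twice the total variation distance, we obtain
\[
\P_s^\pi\bigl(S_{\tmix} \in \Trans^\pi\bigr) = \bigl| \P_s^\pi(S_{\tmix} \in \Trans^\pi) - \nu_\pi(\Trans^\pi) \bigr| \leq \tfrac{1}{2} \cigl\|e_s^\top P_\pi^{\tmix} - \nu_\pi^\top\cigr\|_1 \leq \tfrac{1}{4}.
\]

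Second, I would use the absorbing nature of $\Rec^\pi$: once the chain enters a recurrent state it cannot leave $\Rec^\pi$, so $\P_{s'}^\pi(S_{\tmix} \in \Trans^\pi) = 0$ for any $s' \in \Rec^\pi$, and $\{T_{\Rec^\pi} > t\} = \{S_t \in \Trans^\pi\}$. Combining with the Markov property and the previous bound applied to the starting state at time $(k-1)\tmix$,
\[
\P_s^\pi(S_{k\tmix} \in \Trans^\pi) \leq \tfrac{1}{4} \cdot \P_s^\pi(S_{(k-1)\tmix} \in \Trans^\pi),
\]
and induction yields $\P_s^\pi(S_{k\tmix} \in \Trans^\pi) \leq (1/4)^k$ for every $k \geq 0$.

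Finally, using the identity $\E_s^\pi[T_{\Rec^\pi}] = \sum_{t \geq 0} \P_s^\pi(T_{\Rec^\pi} > t) = \sum_{t \geq 0} \P_s^\pi(S_t \in \Trans^\pi)$ and grouping terms into blocks of length $\tmix$, monotonicity of $t \mapsto \P_s^\pi(S_t \in \Trans^\pi)$ gives
\[
\E_s^\pi[T_{\Rec^\pi}] \leq \tmix \sum_{k=0}^\infty \P_s^\pi(S_{k\tmix} \in \Trans^\pi) \leq \tmix \sum_{k=0}^\infty \bigl(\tfrac{1}{4}\bigr)^k = \tfrac{4}{3}\tmix \leq 4\tmix,
\]
as desired. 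There is no real obstacle here; the only conceptual point is identifying that uniform mixing rules out multiple recurrent classes so that $\nu_\pi$ is supported on $\Rec^\pi$, which converts the mixing-time guarantee into the sub-geometric tail bound on $T_{\Rec^\pi}$.
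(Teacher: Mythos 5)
Your proof is correct. It shares the paper's overall skeleton --- write $\E_s^\pi[T_{\Rec^\pi}] = \sum_{t\ge 0}\P_s^\pi(S_t\in\Trans^\pi)$ and show the summands decay geometrically on the time scale $\tmix$ --- but the mechanism you use to get the decay is genuinely different. The paper bounds $\P_s^\pi(S_t\in\Trans^\pi) \le \onenorm{e_s^\top P_\pi^t - \nu_\pi^\top} \le 2\cdot 2^{-\lfloor t/\tmix\rfloor}$ for \emph{every} $t$, which requires the standard submultiplicativity of the distance to stationarity (cited from Levin--Peres). You instead invoke the mixing definition only once, at $t=\tmix$, to get $\P_{s'}^\pi(S_{\tmix}\in\Trans^\pi)\le \tfrac14$ uniformly over $s'$ (using $\nu_\pi(\Trans^\pi)=0$), and then bootstrap via the Markov property together with the fact that $\Rec^\pi$ is closed, giving $\P_s^\pi(S_{k\tmix}\in\Trans^\pi)\le 4^{-k}$; monotonicity of $t\mapsto\P_s^\pi(S_t\in\Trans^\pi)$ (again from closedness of $\Rec^\pi$) then handles the intermediate times. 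This is more self-contained --- no external mixing-theory lemma is needed --- and it yields the sharper constant $\tfrac43\tmix$ rather than $4\tmix$. The paper's route is slightly more generic in that it never uses the absorbing structure of $\Rec^\pi$, only the $\ell_1$ convergence to $\nu_\pi$, but for this particular statement your exploitation of that structure is the cleaner argument. All the individual steps (the unichain consequence of a unique stationary distribution, the TV-versus-$\ell_1$ factor of $\tfrac12$, the induction, and the block summation) check out.
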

\begin{proof}
    Fix a deterministic stationary policy $\pi$. Notice that since all states in the support of the stationary distribution $\nu_\pi$ are recurrent, for any $s \in \S$ we have
    \begin{align*}
        \P^\pi_s \left(\text{$S_t$ is transient} \right) &= \sum_{s' \in \Trans^\pi} \P^\pi_s \left(S_t = s' \right) \\
        &\leq \sum_{s' \in \Trans^\pi} \P^\pi_s \left(S_t = s' \right) + \sum_{s' \in \Rec^\pi} \left|\P^\pi_s \left(S_t = s' \right) - \nu^\pi(s')\right| \\
        &= \sum_{s' \in \S} \left|\P^\pi_s \left(S_t = s' \right) - \nu^\pi(s')\right| \\
        &\leq 2 \max_{s \in \S} \frac{1}{2}\onenorm{e_s^\top P_\pi^t -  \nu^\pi} \\
        & \leq 2 \cdot 2^{-\lfloor t/\tmix \rfloor }
    \end{align*}
    where the final inequality uses standard properties of mixing \cite[Chapter 4]{levin_markov_2017}. Now define $T = \inf \{t : S_t \in \Rec^\pi \}$.
    Then, using a standard formula for the expectation of nonnegative-integer-values random variables, we have for any $s \in \S$ that
    \begin{align*}
        \E_s^\pi \left[T \right] &= \sum_{t=0}^\infty \P_s^\pi(T > t) \\
        &= \sum_{t=0}^\infty \P^\pi_s \left(\text{$S_t$ is transient} \right) \\
        & \leq 2 \sum_{t=0}^\infty 2^{-\lfloor t/\tmix \rfloor } \\
        & = 2 \sum_{\ell=0}^\infty \tmix 2^{-\ell} \\
        &= 4 \tmix.
    \end{align*}
    Since this bound holds for all $s \in \S$ and all deterministic stationary policies $\pi$, we conclude that $\Tb \leq 4 \tmix$.
\end{proof}

\end{document}